\def\eqref#1{equation~\ref{#1}}
\def\1{\bm{1}}
\DeclareMathAlphabet{\mathsfit}{\encodingdefault}{\sfdefault}{m}{sl}
\SetMathAlphabet{\mathsfit}{bold}{\encodingdefault}{\sfdefault}{bx}{n}
\title{XConv: Low-memory stochastic backpropagation for convolutional layers}
\author{\name Anirudh Thatipelli \\
      \addr University of Central Florida
      \AND
      \name Jeffrey J. Sam \\
      \addr Rice University
      \AND
      \name Mathias Louboutin \\
      \addr DevitoCodes Ltd
      \AND
      \name Ali Siahkoohi \\
      \addr University of Central Florida
      \AND
      \name Rongrong Wang \\
      \addr Michigan State University
      \AND
      \name Felix J.\ Herrmann \\
      \addr Georgia Institute of Technology}
\newcommand{\B}{\mathbf}
\let\scholmdAlgorithm\algorithm
\let\endscholmdAlgorithm\endalgorithm
\let\algorithm\relax \let\endalgorithm\relax
 \global\let\scholmdAlgorithm*\algorithm*
 \global\let\endscholmdAlgorithm*\endalgorithm*
 \global\let\algorithm*\relax
 \global\let\endalgorithm*\relax
\newtheorem{theorem}{Theorem}
\newtheorem{proposition}{Proposition}
\newenvironment{manualtheorem}[1]{%
  \manualtheoreminner
}{\endmanualtheoreminner}
\newtheorem{lemma}[theorem]{Lemma}
\begin{document}

\maketitle

\begin{abstract}
Training convolutional neural networks at scale demands substantial memory, largely due to storing intermediate activations for backpropagation. Existing approaches---such as checkpointing, invertible architectures, or gradient approximation methods like randomized automatic differentiation---either incur significant computational overhead, impose architectural constraints, or require non-trivial codebase modifications. We propose XConv, a near-drop-in replacement for standard convolutional layers that addresses all three limitations: it preserves standard backpropagation, imposes no architectural constraints, and integrates into existing codebases with minimal changes. XConv exploits the algebraic structure of convolutional layer gradients, storing highly compressed activations and approximating weight gradients via multi-channel randomized trace estimation. We establish convergence guarantees and derive error bounds for the proposed estimator, showing that the variance of the resulting gradient errors is comparable to that of stochastic gradient descent. Empirically, XConv achieves performance comparable to exact gradient methods across classification, generative modeling, super-resolution, inpainting, and segmentation---with gaps that narrow as the number of probing vectors increases---while reducing activation memory---by a factor of two or more when convolutional activations dominate---and remaining computationally competitive with optimized convolution implementations at larger batch sizes. At reduced (half) precision the gradient approximation error falls to the rounding floor, so XConv adds essentially no error beyond that of low-precision arithmetic, a regime typical of finetuning and on-device training.
\end{abstract}

\section{Introduction}
\label{introduction}

While transformer-based architectures have achieved remarkable success across numerous domains, their quadratic computational complexity with respect to sequence length poses significant scalability challenges~\citep{vaswani2017attention, 10.1145/3530811}. This has renewed interest in convolutional architectures~\citep{10466766}, with recent works demonstrating that carefully designed convolutional neural networks (CNNs) can match or exceed transformer performance in certain settings~\citep{mao2021dual, liu2022convnet, pmlr-v202-cui23d, Cui_2023_ICCV, woo2023convnextv2, wang2023internimage, liu2023slak, 10531070, ding2024unireplknet}. As such, convolutional layers continue to form a key component of current neural network designs~\citep{ronneberger2015u, ding2022scaling, liu2022convnet, wang2024repvit, ma2024starnet}. However, scaling up CNNs remains challenging due to two primary factors: (i) while the computational demands during forward evaluation are relatively modest, significant computational resources are needed during training~\citep{Griewank, ronneberger2015u, chen2016training}; (ii) training requires storing intermediate activations for backpropagation, creating a memory bottleneck that becomes particularly acute when scaling to higher-dimensional data. While several recent works have addressed the computational complexity of CNN training through architectural innovations~\citep{howard2017mobilenets, liu2022convnet, chen2023vanillanet, chen2023fasternet, vasu2023mobileone}, the memory bottleneck associated with storing activations remains a critical challenge.

Existing methods for addressing this memory bottleneck include checkpointing approaches \citep{Griewank, chen2016training, ml-checkpoint, shah2021memory, feng2021optimal, MLSYS2023_8a27bb69, korthikanti2023reducing, 10.1145/3648633, hong2025gpu}, which recompute activations during the backward pass, yielding exact gradients but incurring significant computational overhead and requiring careful integration with automatic differentiation to ensure correct gradient flow; invertible network architectures \citep{gomez2017reversible, Haber_2017, jacobsen2018irevnet, hascoet2019reversible, ulidowski2023saving, OrozcoWitteLouboutinEtAl_2024, zhang2024memory, zhao2024dr2net}, which enable activations to be recovered from outputs but impose strict architectural constraints that limit representation power; and approximate-gradient methods, which substitute an inexact gradient for the exact one. Among these, randomized automatic differentiation (RAD) \citep{oktay2020randomized} is closest to our setting but intervenes in the computational graph; zeroth-order optimization \citep{malladi2023mezo} forgoes backpropagation entirely, estimating gradients from forward evaluations alone, but its estimator variance grows with the number of parameters and it is established only for fine-tuning pretrained models rather than for training from scratch; approximate and memory-sharing backpropagation \citep{yang2024approxbp} lowers the activation memory of nonlinearities and normalization layers through custom backward kernels in transformer fine-tuning, leaving convolutional layers unaddressed; and direct feedback alignment (DFA) \citep{DFA1, han2019efficient, wangacccnn, nokland19a, launay2020direct, frenkel, refinetti2021align, nakajima2024reservoir, yang2024direct} bypasses the backward pass by routing error signals through fixed random matrices directly to each layer. Collectively, each family of methods trades one constraint for another: checkpointing sacrifices computation for exact gradients, invertible architectures sacrifice design flexibility, and approximation-based approaches require non-trivial codebase modifications, specialized framework support, or changes to the training pipeline. A method that reduces memory while preserving standard backpropagation, imposing no architectural constraints, and integrating seamlessly into existing codebases remains an open challenge.

Our work is based on the premise that exact computations are often not needed---a viewpoint advocated in the field of randomized linear algebra~\citep{TroppLR,martinsson_tropp_2020}, and more recently in parametric machine learning~\citep{oktay2020randomized}, where it has been argued that spending computational resources on exact gradients is unnecessary when stochastic optimization is used. A similar argument was used earlier in the context of parameter estimation with partial differential equation constraints~\citep{haber10TRemp,Aravkin11TRridr,vanLeeuwen2014SISC3Dfds}. Building on this premise, we propose \textbf{XConv}, a memory-efficient training approach for convolutional layers that addresses all three limitations simultaneously. XConv preserves standard backpropagation---unlike DFA or zeroth-order methods---imposes no architectural constraints---unlike invertible networks---and requires no changes to the computational graph or training pipeline---unlike RAD or checkpointing. This is achieved by exploiting the specific algebraic structure of convolutional layer gradients: we store highly compressed versions of layer activations and approximate weight gradients using multi-channel randomized trace estimation, enabling XConv to serve as a near-drop-in replacement for standard 2D and 3D convolutional layers in existing architectures while remaining computationally competitive with optimized convolution implementations at larger batch sizes.

By means of relatively straightforward algebraic manipulations, we write the gradient with respect to a convolution weight in terms of the trace of a matrix formed from the outer product of the convolutional layer input and the backpropagated residual, combined with a shift operation. Next, we approximate this trace with an unbiased randomized trace estimation technique~\citep{Hutchinson, Avron, roosta-trace-bound, martinsson_tropp_2020, hutchpp} for which we prove convergence and derive theoretical error bounds by extending recent theoretical results~\citep{cortinovis2020randomized}. We show that exact convolution gradients are not strictly necessary: randomized gradient estimates yield noise comparable in scale to stochastic optimization noise while reducing memory consumption, with accuracy that improves systematically with the number of probing vectors.

\paragraph{Practical implications.} Because XConv reduces memory without altering the architecture, the optimization objective, or the backpropagation pipeline, its benefits are most pronounced where activation memory, rather than raw compute, is the binding constraint---i.e., high-resolution and volumetric training, and on-device finetuning or continual adaptation at the edge, where data parallelism does not lower the per-device memory footprint. In these regimes the convolutional activations dominate the memory budget, and XConv replaces each stored activation with a far smaller set of random projections, shrinking its footprint by the ratio of the full activation size to the number of probing vectors---one to two orders of magnitude at typical resolutions and probing budgets. Because XConv preserves standard backpropagation and leaves the remaining layers untouched, it also composes with complementary memory-saving techniques aimed at the non-convolutional parts of the network, such as low-precision activation storage and operator-specific reductions; XConv removes the convolutional bottleneck while these methods address the rest, so that their savings accumulate. This complementarity is reinforced under reduced precision: at half precision the gradient error introduced by XConv becomes negligible relative to the rounding error already incurred, making the method well suited to the low-precision arithmetic common in finetuning and edge deployment.

\paragraph{Contributions.} The main contributions of this work are:

\begin{itemize}
    \item We propose XConv, a drop-in replacement for standard convolutional layers that approximates gradients via multi-channel randomized trace estimation, enabling memory reduction with minimal implementation overhead.

    \item We establish convergence guarantees and derive theoretical error bounds for the proposed estimator, extending existing results to non-symmetric matrices.

    \item We empirically demonstrate that XConv achieves performance comparable to exact gradient methods across classification, generative modeling, super-resolution, inpainting, and segmentation, with accuracy that improves systematically with the number of probing vectors, while meaningfully reducing memory consumption.

\end{itemize}

\paragraph{Outline.} Section~\ref{sec:xconv_intuition} provides a high-level overview of XConv and the memory--accuracy tradeoff it introduces. Section~\ref{theory} reformulates the gradient of convolution layers as a trace and establishes convergence guarantees for its randomized approximation. Section~\ref{sec:rand_trace} presents the XConv algorithm and describes how it integrates into existing architectures as a drop-in replacement. Section~\ref{sec:experiments} evaluates gradient fidelity, memory savings, and downstream task performance across a range of architectures and applications.

\section{Overview of XConv}
\label{sec:xconv_intuition}

Training high-resolution convolutional networks is increasingly constrained by activation memory during backpropagation. Existing memory-reduction techniques typically achieve savings through recomputation, architectural constraints, or modifications to the training dynamics. For example, activation checkpointing reduces memory usage by discarding activations and recomputing them during the backward pass, while invertible architectures reconstruct activations from subsequent layers. Despite their differences, these approaches reduce memory without exploiting the structure of the convolution gradient itself.

Instead, we seek to reduce the memory footprint of the convolutional layers while preserving the standard network architecture, the optimization objective, and the backpropagation pipeline. In particular, we focus on the memory required to compute the convolutional weight gradients.

The primary memory bottleneck arises from storing activations required for exact convolution-gradient computation. Since the weight gradient depends on interactions between the layer activations $\B{X}$ and the backpropagated residuals $\delta \B{Y}$, standard training must retain the full activation tensor throughout the forward pass. As image resolution and channel dimensionality increase, these activations become a dominant contributor to memory consumption.

Our key observation is that each convolution weight gradient admits a trace-based representation. Rather than viewing the gradient as the result of a convolution operation, we reinterpret it as the trace of a structured matrix involving the activations $\B{X}$ and residuals $\delta \B{Y}$. This perspective is useful because traces admit unbiased randomized estimators whose accuracy can be controlled through a small number of probing vectors.

This reformulation exposes a new memory--accuracy tradeoff. By replacing exact trace computation with randomized trace estimation, XConv stores only compressed projections of the activations during the forward pass. Increasing the probing rank improves gradient fidelity and recovers the exact gradient in the limit, while smaller probing ranks provide greater memory savings. Figure~\ref{fig:multichannel_probing} illustrates this reformulation and the resulting XConv computation.

We next formalize this observation by deriving a trace representation of convolution weight gradients and introducing the corresponding randomized estimator.

\section{Theory}\label{theory}

\begin{figure}[t]
\centering
\begin{subfigure}[t]{0.60\linewidth}
\centering
\includegraphics[width=\linewidth]{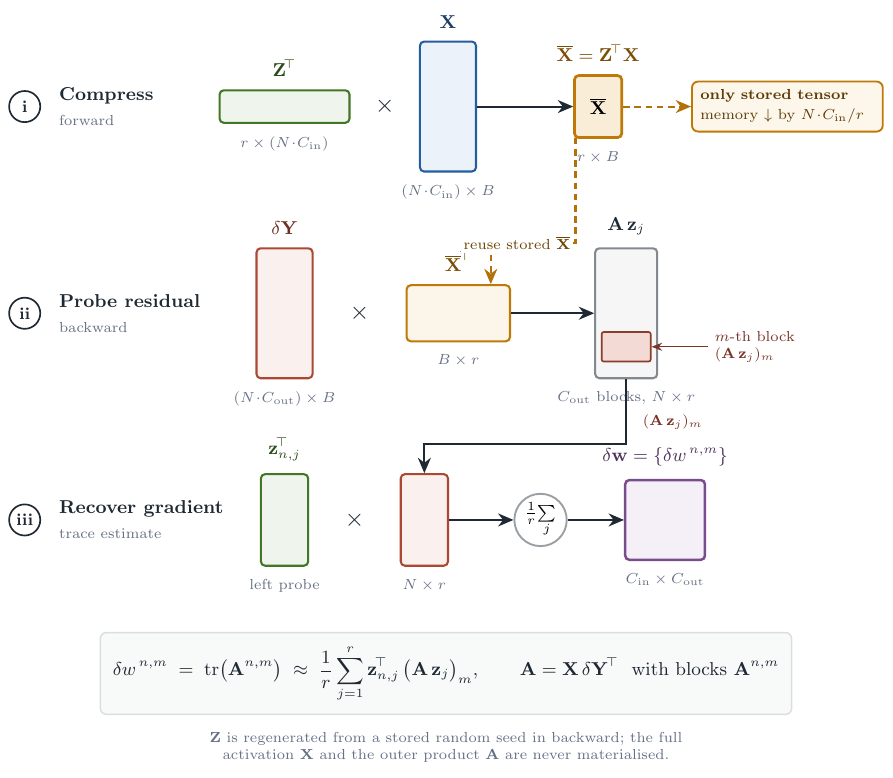}
\caption{The three-step multi-channel randomized trace estimation.}
\label{multich}
\end{subfigure}
\hfill
\begin{subfigure}[t]{0.38\linewidth}
\centering
\includegraphics[width=\linewidth]{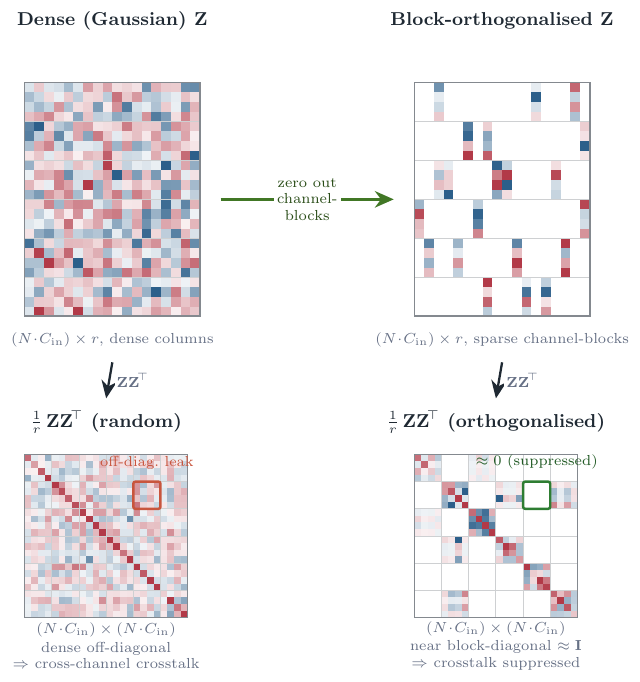}
\caption{Effect of block-orthogonalization on the probing matrices.}
\label{zortho}
\end{subfigure}
\caption{Overview of XConv. (a)~Multi-channel randomized trace estimation: the three steps estimate the trace of a sub-block of the outer product, storing only the compressed activation $\B{\overline{X}}=\B{Z}^\top\B{X}$ rather than the full input. (b)~Probing matrices $\B{Z}$ (top) and their Gram matrices $(1/r)\,\B{Z}\B{Z}^\top$ (bottom), for dense Gaussian probing (left) and block-orthogonalized probing (right); zeroing out channel blocks renders the Gram near block-diagonal, suppressing the off-diagonal cross-channel interference between channels.}
\label{fig:multichannel_probing}
\end{figure}

We start with a single-channel case and prove convergence and error bounds by extending existing results to non-symmetric matrices, and then generalize to multi-channel convolutions.

\paragraph*{Single channel case}\label{single-channel}
Let us start by writing the action of a single channel convolutional layer as follows
\begin{equation}
  \B{Y} = \B{W}\B{X} \in \mathbb{R}^{N \times B}, \quad \text{where}\quad \B{W} = \sum_{i=1}^{n_w} w_i \B{T}_{k(i)},
\end{equation}
and $N,\, B,\, n_w$ are the number of pixels, batch size, and number of convolution weights ($K^2$ for a $K$ by $K$ kernel), respectively. For the $i^\text{th}$ weight $w_i$, the convolutions themselves correspond to applying a circular shift with offset $k(i)$, denoted by $\B{T}_{k(i)}$, followed by multiplication with the weight. Given this expression for the action of a single-channel convolutional layer, expressions for the gradient with respect to weights can easily be derived by using the chain rule and standard linear algebra manipulations~\citep{Petersen2008}---i.e., we have

\begin{equation}
\begin{aligned}
\frac{\partial }{\partial w_i} f(\B{W}\B{X}) &= \operatorname{tr}\left(\left(\frac{\partial f(\B{W}\B{X})}{\partial \B{W}}\right)^\top \frac{\partial \B{W}}{\partial w_i}\right)\\ & = \operatorname{tr}\left(\left(\delta \B{Y} \B{X}^\top\right)^\top \B{T}_{k(i)}^\top\right) = \operatorname{tr}\left(\B{X} \delta \B{Y}^\top \B{T}_{-k(i)}\right)
, \quad i=1,\ldots,n_w.
\end{aligned}
\label{dwtr}
\end{equation}
This expression for the gradient with respect to the convolution weights corresponds to computing the trace---i.e., the sum of the diagonal elements denoted by $\operatorname{tr}(\B{A})=\sum_i A_{ii}$, of the outer product between the residual collected in $\delta \B{Y}$ and the layer's input $\B{X}$, after applying the shift. The latter corresponds to a right circular shift along the columns.

Computing estimates for the trace through the action of matrices---i.e., without access to entries of the diagonal, is common practice in the emerging field of randomized linear algebra~\citep{TroppLR,martinsson_tropp_2020}. Going back to the seminal work by Hutchinson~\citep{Hutchinson, hutchpp}, unbiased matrix-free estimates for the trace of a matrix exist involving probing with random vectors $\B{z}_j,\,j=1,\ldots, r$, with $r$ the number of probing vectors and $\mathbb{E}(\B{z}_j{\B{z}_j^\top})=\B{I}$ with $\B{I}$ the identity matrix. Under this assumption, unbiased randomized trace estimates can be derived from
\begin{equation}
\begin{aligned}
\operatorname{tr}(\B{A}) =\operatorname{tr}\left(\B{A} \mathbb{E}\left[\B{z} \B{z}^{\top}\right]\right)
               =\mathbb{E}\left[\B{z}^{\top} \B{A} \B{z}\right]
                \approx \frac{1}{r}\sum_{j=1}^r\left[{\B{z}_j}^{\top} \B{A} \B{z}_j\right].
\end{aligned}
\label{trdef}
\end{equation}

By combining~\eqref{dwtr} with the above unbiased estimator for the trace, we arrive at the following approximation for the gradient with respect to the convolution weights:
\begin{equation}
  \delta w_i \approx \frac{1}{r} \sum_{j=1}^r \left(\B{z}_j^\top\B{X} \right)\left(\delta \B{Y}^\top\B{T}_{-k(i)}\B{z}_j\right),\, \quad i=1,\ldots ,n_w.
  \label{grad_pr}
\end{equation}
From this expression the memory savings during the forward pass are obvious since $\B{\overline{X}}=\B{Z}^\top\B{X}$, where $\B{\overline{X}}\in\mathbb{R}^{r\times B}$ with $r\ll N$. However, convergence rate guarantees were only established under the additional assumption that $\B{A}$ is positive semi-definite (PSD, \citep{Kaperick2019DiagonalEW}). While the outer product $\B{X}\delta\B{Y}^\top\B{T}_{-k(i)}$ we aim to probe here is not necessarily PSD, improving upon recent results by \citep{cortinovis2020randomized}, we show that the condition of PSD can be relaxed to asymmetric matrices by a symmetrization procedure that does not change the trace. More precisely, we show in the following proposition that the gradient estimator in~\eqref{grad_pr} is unbiased and converges to the true gradient as $r\rightarrow\infty$ with a rate of about $r^{-1/2}$ (for details of the proof, we refer to the Appendix~\ref{trace-estimation-theory}).

\begin{proposition}
Let $\B{A} \in \mathbb{R}^{N \times N}$ be a square matrix and let the probing vectors be i.i.d. Gaussian with $0$ mean and unit variance. Then for any small number $\delta >0$, with probability $1-\delta$, we have
\[
\left| \frac{1}{r}\sum_{i=1}^r\left[\B{z}^{\top}_i \B{A} \B{z}_i\right] - \operatorname{tr}\left(\B{A}\right)\right| \leq  \frac{4\|\B{A}\|_2}{r}\log \frac{2}{\delta} + \frac{2\|\B{A}\|_F}{\sqrt r}\log^{1/2} \frac{2}{\delta}.
\]
\end{proposition}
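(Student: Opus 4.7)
The plan is to reduce to the symmetric case and then apply a sharp Bernstein-type concentration inequality for weighted chi-square sums. The first step exploits the fact that $\B{z}^\top \B{A} \B{z}$ is a scalar and therefore equals its own transpose $\B{z}^\top \B{A}^\top \B{z}$. Consequently, replacing $\B{A}$ by its symmetric part $\B{A}_s = (\B{A}+\B{A}^\top)/2$ leaves every sample of the estimator and the trace unchanged, and the triangle inequality gives $\|\B{A}_s\|_2 \le \|\B{A}\|_2$ and $\|\B{A}_s\|_F \le \|\B{A}\|_F$. This is precisely the step that extends the PSD-only analyses (such as \cite{Kaperick2019DiagonalEW}) to arbitrary square matrices without inflating either norm appearing in the bound.

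Next I would diagonalize $\B{A}_s = \B{U}\Lambda \B{U}^\top$ with eigenvalues $\lambda_1,\ldots,\lambda_N$ and use rotational invariance of the Gaussian: $\tilde{\B{z}}_i := \B{U}^\top \B{z}_i$ is again standard normal and independent across $i$. Each centered sample then satisfies
\[
\B{z}_i^\top \B{A}_s \B{z}_i - \operatorname{tr}(\B{A}_s) = \sum_{j=1}^N \lambda_j\bigl(\tilde z_{ij}^2 - 1\bigr),
\]
so the averaged error becomes a weighted sum of $rN$ independent centered $\chi^2_1$ variables whose weights $\lambda_j/r$ have $\ell_2$ norm $\|\B{A}_s\|_F/\sqrt{r}$ and $\ell_\infty$ norm $\|\B{A}_s\|_2/r$. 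Equivalently, one may skip diagonalization and apply Hanson--Wright directly to the block-diagonal matrix $\tfrac{1}{r}\operatorname{diag}(\B{A}_s,\ldots,\B{A}_s)$ acting on the stacked vector $(\B{z}_1,\ldots,\B{z}_r)\in\mathbb{R}^{rN}$.

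The last step is a Laurent--Massart tail bound on the weighted chi-square: for any $u>0$, with probability at least $1-2e^{-u}$,
\[
\left|\frac{1}{r}\sum_{i=1}^r \B{z}_i^\top \B{A}_s \B{z}_i - \operatorname{tr}(\B{A}_s)\right| \le \frac{c_1\|\B{A}_s\|_2}{r}\, u + \frac{c_2\|\B{A}_s\|_F}{\sqrt{r}}\,\sqrt{u}.
\]
Setting $u = \log(2/\delta)$ and using $\|\B{A}_s\|_2 \le \|\B{A}\|_2$, $\|\B{A}_s\|_F \le \|\B{A}\|_F$ produces an inequality of the form stated in the proposition.

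The main obstacle is the book-keeping of constants together with the justification of the symmetrization, since the latter is exactly what lets one avoid the PSD assumption of earlier results. No individual step is deep, but the argument must chain the symmetrization with the sharp sub-exponential chi-square bound rather than a coarse Chebyshev or Markov estimate; otherwise one loses either the $r^{-1/2}$ sub-Gaussian term governed by $\|\B{A}\|_F$ or the $r^{-1}$ sub-exponential refinement governed by $\|\B{A}\|_2$ that together yield the advertised bound.
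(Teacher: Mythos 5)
Your argument is essentially the paper's own: the symmetrization $\B{A}\mapsto(\B{A}+\B{A}^{\top})/2$, which leaves the estimator and the trace unchanged while not increasing $\|\cdot\|_2$ or $\|\cdot\|_F$, is exactly the step the paper uses to drop the PSD assumption, and your weighted-chi-square/Hanson--Wright tail bound is just an in-line derivation of the symmetric-case concentration result the paper imports as a lemma (Theorem 5 of \cite{cortinovis2020randomized}). The only loose end is that your generic constants $c_1,c_2$ must be pinned down to recover the stated $4$ and $2$, which is precisely what the cited lemma supplies.
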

Imposing a small probability of failure $\delta$ means the $\log\frac{2}{\delta}$ term in the upper bound is large, which implies that neither term in the upper bound is dominating for all the $r$ values. Depending on which term is dominant, the range of $r$ can be divided into two regimes, the small $r$ regime and the large $r$ regime. In the small $r$ regime, the first term dominates, and the error decays as $1/r$. In the large $r$ regime, the second term dominates and the error decays as $1/\sqrt{r}$. The phase transition happens when $r$ is about $4\|\B{A}\|^2_2/\|\B{A}\|^2_F \log (2/\delta) \equiv \frac{4}{\rho} \log (2/\delta)$, where $\rho\equiv \|\B{A}\|^2_F/\|\B{A}\|_2^2 $ is known as the effective rank, which reflects the rate of decay of the singular values of $\B{A}$. We see that as $r$ increases, the larger the effective rank is, the earlier the phase transition occurs, after which the decay rate of the error will slow down.
Before discussing details of the proposed algorithm, let us first extend the above randomized trace estimator to multi-channel convolutions.

\paragraph*{Multi-channel case}\label{multi-channel}
In general, convolutional layers involve several input and output channels. In that case, the output of the $m^\text{th}$ channel can be written as
\begin{equation}
\B{Y}^m =
\sum_{n=1}^{C_{\text{in}}} \B{W}^{n,m} \B{X}^n, \quad\text{where}\quad\B{W}^{n,m} = \sum_{i=1}^{n_w} w^{n,m}_i \B{T}_{k(i)}
\label{LAconMC}
\end{equation}
for $n=1, \ldots, C_{\text{in}},\ m=1, \ldots, C_{\text{out}}$ with $C_{\text{in}},\, C_{\text{out}}$ the number of input and output channels and $w^{n,m}_i$ the $i^{\text{th}}$ weight between the $n^{\text{th}}$ input and $m^{\text{th}}$ output channel. In this multi-channel case, the gradients consist of the single channel gradient for each input/output channel pair, i.e., $\delta w_i^{n, m} = \operatorname{tr}(\B{X}^n (\delta\B{Y}^m)^\top \B{T}_{-k(i)})$.

While randomized trace estimation can in principle be applied to each input/output channel pair independently, we propose to treat all channels simultaneously to further improve computational performance and memory use. Let the outer product of the $(n,m)^{\text{th}}$ input/output channel be $\B{A}^{n,m}$, i.e., $\B{A}^{n,m}=(\B{X}^n (\delta\B{Y}^m)^\top \B{T}_{-k(i)})^\top$, computing $\delta w_i^{n, m}$ means estimating $\operatorname{tr}(\B{A}^{n, m})$. To save memory, instead of probing each $\B{A}^{n,m}$, we probe the stacked matrix
\[
\B{A} = \left( \begin{matrix}\B{A}^{1,1} & \cdots & \B{A}^{1,C_{\text{in}}} \\ \vdots & & \vdots \\ \B{A}^{C_{\text{out}},1} & \cdots & \B{A}^{C_{\text{out}},C_{\text{in}}} \end{matrix}\right)
\]
by $r$ length $C_{\text{in}}\times N$ probing vectors stored in $\B{Z}\in \mathbb{R}^{NC_{\text{in}} \times r}$, and estimate each $\operatorname{tr}(\B{A}^{m, n})$ via the following estimators
\begin{equation}\label{eq:estimator1}
  G^{m, n}(\B{A}) : = \frac{1}{r} \sum_{j=1}^r \B{z}_{n,j}^{\top} \left(\B{A} \B{z}_j\right)_m,
\end{equation}
where $(\cdot)_m$ extracts the $m^\text{th}$ block from the input vector. That is to say, we simply stack the input and residual, yielding matrices of size $(N\times C_{\text{in}})\times B$ and $(N\times C_{\text{out}})\times B$ whose outer product $\B{X}\delta\B{Y}^\top$ (i.e., $\B{A}^\top$ of the $\B{A}$ in \eqref{eq:estimator1}) is no longer necessarily square.
To estimate the trace of each $N\times N$ sub-block, in \eqref{eq:estimator1}, we (\textit{i}) probe the full outer product from the right with $r$ probing vectors  $\B{z}_j$ of length ${(N\times C_{\text{in}})}$; (\textit{ii}) reshape the resulting matrix into a tensor of size $(N,\, C_{\text{out}},\,B)$  while the probing matrix is shaped into a tensor of size $(N,\, C_{\text{in}},\,B)$ (i.e., separate each block of $\B{A}z_j$), and  (\textit{iii}) probe each individual block again from the left. This leads to the desired gradient collected in a $C_{\text{in}} \times C_{\text{out}}$ matrix. We refer to Figure~\ref{fig:multichannel_probing}, which illustrates this multi-channel randomized trace estimation. After (\textit{i}), we only need to save $\B{\overline{X}}=\B{Z}^\top\B{X}$ in memory rather than $\B{X}$ that leads to a memory reduction by a factor of $\frac{N C_{\text{in}}}{r}$.

Unfortunately, the improved memory use and computational performance boost of the above multi-channel probing reduces the accuracy of the randomized trace estimation because of crosstalk amongst the channels. Since this crosstalk is random, the induced error can be reduced by increasing the number of probing vectors $r$, but this will go at the expense of more memory use and increased computation. To avoid this unwanted overhead, we introduce a new type of random probing vectors that minimizes the crosstalk by again imposing $\mathbb{E}(\B{zz}^\top)=\B{I}$ but now on the multi-channel probing vectors that consist of multiple blocks corresponding to the number of input/output channels.

Explicitly, we draw each $\B{z}_{n,j}$, the $n^{\textrm{th}}$ block of the $j^{\textrm{th}}$ probing vector, according to
\begin{equation}\label{eq:z}
\B{z}_{n,j} \sim \left\{\begin{aligned}&\mathcal{N}(\B{0},\B{I}_{N}) & \textrm{with probability } p_n \\ & 0  &\textrm{with probability } 1-p_n \end{aligned} \right..
\end{equation}
For different values of $(n,j)$, the $\B{z}_{n,j}$'s are drawn independently with a predefined probability $p_n$ of generating a nonzero block. Compared to conventional (Gaussian) probing vectors (see Figure~\ref{zortho} top left), these multi-channel probing vectors contain sparse non-zero blocks (see Figure~\ref{zortho} top right), which reduces the crosstalk (juxtapose with second row of Figure~\ref{zortho}). It can be shown that crosstalk becomes less when $p_n\downarrow 0$ and $r\rightarrow\infty$.

Given probing vectors drawn from \eqref{eq:z}, we have to modify the scaling factor of the multi-channel randomized trace estimator \eqref{eq:estimator1} to ensure it is unbiased,
\begin{equation}\label{eq:estimator}
  \widetilde{G}^{n, m}(\B{A}) : = \frac{1}{\textrm{nnz}(\B{Z}_n)} \sum_{j=1}^r \B{z}_{n,j}^{\top} \left(\B{A} \B{z}_j\right)_m,
\end{equation}
where $\textrm{nnz}(\B{Z}_n)$ is the number of non-zero columns in block $n$. We prove the following convergence result for this estimator (the proof can be found in Appendix~\ref{trace-estimation-theory}).

\begin{theorem}[Succinct version]Let $p=\min\limits_n p_n$, $r$ be the number of probing vectors. For any small number $\delta>0$, with probability at least $1-\delta- 3C_{\text{in}}e^{-rp^2/2}$, we have for any $n=1,\ldots, C_{\text{in}}$ and $m=1,\ldots, C_{\text{out}}$,
\[\left|\widetilde{G}^{n, m}(\B{A})-\operatorname{tr}(\B{A}^{n,m})\right| \leq c \cdot  \frac{\frac{1}{\sqrt p_n}\|\B{A}^{n,m}\|_F +\sum\limits_{j=1,j\neq n}^{C_{\text{in}}} \sqrt{\frac{p_j}{p_n}}\|\B{A}^{n,j}\|_F}{\sqrt {r}} \log^{1/2} \frac{C_{\text{out}}C_{\text{in}}}{\delta}, \]
where $c$ is an absolute constant and $C_{\text{in}}$ and $C_{\text{out}}$ are the numbers of input and output channels.
\label{theorem1}
\end{theorem}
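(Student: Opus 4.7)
The plan is to decompose $\widetilde{G}^{n,m}(\B{A})$ into a signal term (on-diagonal block, expectation equal to $\operatorname{tr}(\B{A}^{n,m})$) and $C_{\text{in}}-1$ crosstalk terms (off-diagonal blocks, mean zero), bound each piece separately using Gaussian concentration, and then union-bound over the $C_{\text{in}}C_{\text{out}}$ pairs $(n,m)$. Writing $(\B{A}\B{z}_j)_m = \sum_k \B{A}^{m,k}\B{z}_{k,j}$, the estimator splits into a Hutchinson-style quadratic form in the target block plus bilinear forms $\B{z}_{n,j}^{\top}\B{A}^{\cdot,k}\B{z}_{k,j}$ whose two Gaussian factors are independent (since the support patterns of distinct blocks are independent) and hence have zero mean.

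\textbf{Step 1: control the random normalizer.} Because $\textrm{nnz}(z_k)\sim\textrm{Binomial}(r,p_k)$, Hoeffding's inequality yields $|\textrm{nnz}(z_k)/r - p_k|\le p_k/2$ with probability at least $1-2e^{-rp_k^2/2}$. Taking a union bound over $k=1,\ldots,C_{\text{in}}$ and using $p_k\ge p$ produces exactly the $2C_{\text{in}}e^{-rp^2/2}$ loss that appears in the theorem, and on this event the denominator $\textrm{nnz}(z_m)$ can be replaced by $rp_m$ up to absolute constants. From here on, condition on the Bernoulli support pattern, so that the surviving blocks $\B{z}_{k,j}$ are i.i.d.\ standard Gaussians.

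\textbf{Step 2: signal and crosstalk bounds.} With the denominator frozen at $\Theta(rp_m)$, the signal piece is exactly a vanilla Hutchinson estimator for $\operatorname{tr}(\B{A}^{n,m})$ with effectively $rp_m$ probes, so Proposition~1 directly delivers the $\|\B{A}^{n,m}\|_F/\sqrt{rp_m}$ contribution (the spectral-norm term is absorbed into the multiplicative constant). Each crosstalk piece is a sum of approximately $rp_np_k$ independent zero-mean bilinear Gaussian forms, each of variance $\|\B{A}^{\cdot,k}\|_F^2$; a Hanson--Wright-style inequality for bilinear Gaussian chaoses, which can be obtained by polarizing $\B{z}_{n,j}^{\top}\B{A}^{\cdot,k}\B{z}_{k,j}$ into a quadratic form in the stacked vector $(\B{z}_{n,j},\B{z}_{k,j})^{\top}$ and invoking Proposition~1 again, bounds the sum by $\sqrt{rp_np_k}\,\|\B{A}^{\cdot,k}\|_F\,\log^{1/2}(1/\delta)$; dividing by $rp_m$ produces the advertised $\sqrt{p_k/p_m}\,\|\B{A}^{\cdot,k}\|_F/\sqrt{r}$ summand. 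A final union bound over the $C_{\text{in}}-1$ crosstalk indices and the $C_{\text{in}}C_{\text{out}}$ pairs $(n,m)$ replaces $\delta$ by $\delta/(C_{\text{in}}C_{\text{out}})$ and turns the $\log(1/\delta)$ factor into the $\log(C_{\text{in}}C_{\text{out}}/\delta)$ appearing in the statement.

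\textbf{Main obstacle.} The delicate point is that $\textrm{nnz}(z_m)$ and the numerator share the same Bernoulli support pattern and are not independent; conditioning on that pattern up front is precisely what decouples the two sources of randomness so that Proposition~1 and standard bilinear Gaussian concentration can be invoked with the exact constants required to match the stated bound.
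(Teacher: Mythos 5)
Your proposal is correct and follows essentially the same route as the paper's proof: the same signal-plus-crosstalk decomposition of $(\B{A}\B{z}_j)_m=\sum_k\B{A}^{m,k}\B{z}_{k,j}$, binomial concentration of the random normalizer contributing the $2C_{\text{in}}e^{-rp^2/2}$ failure term, Proposition~1 for the diagonal block, a bilinear concentration bound for the off-diagonal blocks, and a final union bound over channel pairs. The only cosmetic difference is that you obtain the bilinear (crosstalk) lemma by polarizing into a quadratic form and reusing Proposition~1, whereas the paper proves it directly via an SVD reduction and Bernstein's inequality for sub-exponential products; both yield the same $\sqrt{p_k/p_n}\,\|\B{A}^{m,k}\|_F/\sqrt{r}$ contribution.
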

Theorem 1 provides convergence guarantee for our special multi-channel simultaneous probing procedure. Similar to Proposition 1, Theorem 1 in its original form (supplementary material) also has a two-phase behaviour. So the discussion under Proposition 1 applies here. For simplification of presentation, we only presented the bound for the large $r$ regime in this succinct version. Still, we can see that the error bound for estimating $\operatorname{tr}(\B{A}^{n,m})$ not only depends on the norm of the current block $\B{A}^{n,m}$, but also other blocks in that row, which is expected since we simultaneously probe the entire row instead of each block individually for memory efficiency. Admittedly, due to technical difficulties, we cannot theoretically show that decreasing the sampling probability $p$ decreases the error. Nevertheless, we observe better performance in the numerical experiments.

\section{Stochastic optimization with multi-channel randomized trace estimation}
\label{sec:rand_trace}
Given the expressions for approximate gradient calculations and bounds on their error, we now present the XConv algorithm and demonstrate its use as a drop-in replacement in existing convolutional architectures.

\subsection{Low-memory stochastic backpropagation}\label{subsec:low-mem-algo}
The key point of the randomized trace estimator in Equation~\eqref{eq:estimator} is that it allows for on-the-fly compression of the state variables during the forward pass. For a single convolutional layer $\B{Y}=\B{conv}(\B{X},\,\B{w})$ with input $\B{X}$ and convolution weights $\B{w}$, our approximation involves three simple steps, namely \textbf{(1)} probing of the state variable $\overline{\B{X}} =\B{Z}^\top \B{X}$, \textbf{(2)} matrix-free formation of the outer product $\B{L}=\B{\overline{X}}\delta\B{Y}^\top$, and \textbf{(3)} approximation of the gradient via $\delta w_i=\frac{1}{r}\operatorname{tr}(\B{L}\B{T}_{-k(i)}\B{Z}),\, i=1,\ldots, n_w$. These three steps lead to substantial memory reductions even for a relatively small image of size $32\times 32$ ($N=1024$ pixels) and $C_{\text{in}}=16$. In that case, our approach leads to a memory reduction by a factor of $\frac{N C_{\text{in}}}{r}=2^{14-\gamma}$ for $r=2^\gamma$. For $\gamma=7$ this gives a roughly $100\times$ reduction in the stored activation \emph{of that single layer}; the network-level saving is smaller and set by the share of non-convolutional activations (Section~\ref{subsec:network-memory}). Because the probing vectors are generated on the fly, we only need to allocate memory for $\B{\overline{X}}$ during the forward pass as long as we also store the state $s$ of the random generator. During backpropagation, we initialize the state, generate the probing vectors, followed by applying a shift and product by $\B{L}$. These steps are summarized in Algorithm~\ref{ev_fwd_bck}. This simple algorithm provides a highly compressed estimate of the true gradient with respect to its weights.

\begin{scholmdAlgorithm}
\textbf{Forward~pass:}\\
1.~Forward~convolution~$\B{Y} = \B{conv}(\B{X},\, \B{w})$\\
2.~Draw~a~new~random~seed~$s$~and~probing~matrix~$\B{Z}[s]$\\
3.~Compute~and~save~$\B{\overline{X}} = \B{Z}^\top[s]\B{X} \in \mathbb{R}^{r \times B}$~\\4.~Store~$\overline{\B{X}}, s$\\
\textbf{Backward~pass:}\\
1.~Load~random~seed~$s$~and~probed~forward~$\overline{\B{X}}$\\
2.~Redraw~probing~matrix~$\B{Z}[s]$~from~$s$\\
3.~Compute~backward~probe~$\B{L} = \B{\overline{X}} \delta\B{Y}^\top$\\
4.~Compute~gradient~$\delta w_i = \frac{1}{r} \operatorname{tr}(\B{L} \B{T}_{-k(i)}\B{Z}[s])$
\caption{Low-memory approximate gradient convolutional layer. The random seed $s$ and random probing matrix $\B{Z}$ are independently redrawn for each layer and training iteration.}\label{ev_fwd_bck}
\end{scholmdAlgorithm}

\subsection{Ease of Integration}
\label{subsec:ease_integ}

Figure~\ref{fig:code_example} illustrates the ease of integrating XConv: it implements Algorithm~\ref{ev_fwd_bck} internally but fully encapsulates the complexity within the layer abstraction. We also provide an adaptive version that selectively disables XConv in deep layers with small spatial extent, where activation storage is inexpensive.

\begin{figure}[t]
    \centering
    \begin{subfigure}[t]{0.48\linewidth}
    \centering
    \caption{Directly instantiating XConv layers.}
    \begin{lstlisting}[language=Python]
    class Net(nn.Module):
        def __init__(self):
            super(Net, self).__init__()
            self.conv1 = Xconv2D(
                1, 32, 3, 32, 1, padding=1
            )

            self.conv2 = Xconv2D(
                32, 64, 3, 32, 1, padding=1
            )
            self.dropout1 = nn.Dropout(0.25)
            self.dropout2 = nn.Dropout(0.5)
            self.fc2 = nn.Linear(128, 10)
    \end{lstlisting}
    \label{subfig:xconv_define}
    \end{subfigure}
    \hfill
    \begin{subfigure}[t]{0.48\linewidth}
    \centering
    \caption{Converting an existing model in-place.}
    \begin{lstlisting}[language=Python]
    from pyxconv.utils import convert_net

    model = nn.Sequential(
        nn.Conv2d(3, 64, 3, padding=1),
        nn.BatchNorm2d(64),
        nn.ReLU()
    )

    convert_net(
        model,
        ps=256,
        xmode="independent"
    )

    \end{lstlisting}
    \label{subfig:xconv_convert}
    \end{subfigure}
    \caption{XConv can be integrated either by directly instantiating XConv layers (left) or by converting existing convolutional models using a single API call (right).}
    \label{fig:code_example}
\end{figure}

\subsection{Practical Guidance for using XConv}
\label{subsec:prac_guid}

XConv exposes two knobs that jointly set the memory--accuracy operating point: the batch size $B$ and the number of probing vectors $r$. For a single convolutional layer the only stored tensor is the compressed activation $\B{\overline{X}}=\B{Z}^\top\B{X}\in\mathbb{R}^{r\times B}$, whose size scales with the product $rB$; relative to storing the full input, this is a memory reduction by the factor $N C_{\text{in}}/r$. At the level of a single layer, $r$ and $B$ are therefore interchangeable memory knobs---i.e., dividing $r$ by a constant frees exactly as much activation memory as dividing $B$ by the same constant.

The two knobs are not equivalent at the network level. XConv compresses only convolutional activations, whereas the batch size scales the memory of every layer, convolutional or not. Reducing $r$ therefore yields diminishing returns once the non-convolutional activations (normalizations, skip connections, and pointwise operations) come to dominate the budget, while reducing $B$ continues to lower the memory of the entire network. Increasing $r$, on the other hand, improves gradient fidelity: as we show in Section~\ref{sec:experiments}, the average gradient error decreases with $r$ toward the exact-convolution floor (Figures~\ref{fig:squeezenet_avg_grad_err_graph_pv_16,128} and~\ref{fig:sips_unet_avg_grad_err_graph_pv_16,128}).

These observations yield a simple rule of thumb. To meet a fixed memory budget, use the largest $r$ the budget allows and recover the remaining memory by lowering the batch size rather than by shrinking $r$---i.e., a large $r$ keeps the probing error small, and the batch size is the more effective memory lever because it acts on the whole network. The cost of a larger $r$ is compute: it raises the per-iteration cost of the gradient estimate, as quantified by the wall-clock benchmarks in Section~\ref{sec:experiments} (Figure~\ref{fig:gpu-bench_b128,256}), so under a fixed memory budget the pair $(r,B)$ trades gradient accuracy against compute, and the practitioner can choose where to sit on this tradeoff.

Finally, the additional noise introduced by probing is unbiased (Theorem~\ref{theorem1}). Lowering the batch size raises the variance of the stochastic gradient, but because this noise is zero-mean its effect on the optimization trajectory can be averaged out over iterations---i.e., reducing the learning rate and training for correspondingly more steps suppresses it, exactly as for ordinary minibatch SGD. The volumetric finetuning experiment of Section~\ref{subsubsec:finetune_3dseg} is trained under such a recipe---a reduced learning rate and more steps than the pretrained model's default, applied identically to both the exact and the probed runs---under which even the noisiest setting ($r=4$) attains the exact-gradient accuracy.

\section{Experiments}
\label{sec:experiments}

We now systematically analyze the consequences of replacing standard convolutional layers with XConv on gradient fidelity, memory consumption, and computational overhead. We first perform a layer-by-layer analysis of gradient deviation, then introduce Average Gradient Error (AGE) as a global diagnostic metric to quantify aggregate gradient fidelity across entire models. We then relate these findings to peak memory usage and wall-clock benchmarks, enabling a comprehensive evaluation of the accuracy--memory--compute trade-offs induced by XConv.

\subsection{Minibatch versus randomized trace estimation errors}
\label{subsec:error-analysis}
Simply stated, stochastic optimization involves gradients that contain random errors known as gradient noise. As long as this noise is not too large and independent for different gradient calculations, algorithms such as stochastic gradient descent, where gradients are computed for randomly drawn minibatches, converge under certain conditions. In addition, the presence of gradient noise helps the algorithm to avoid bad local minima, which arguably leads to better generalization of the trained network~\citep{neelakantan2015adding, huang20a}. Therefore, as long as the batch size is not too large, one can expect the trained network to perform well.

We argue that the same applies to stochastic optimization with gradients approximated by (multi-channel) randomized trace estimation as long as the errors behave similarly. In a setting where memory comes at a premium this means that we can expect training to be successful for gradient noise with similar variability. To this end, we conduct an experiment where the variability of $5\times 5\times C_{\text{in}}\times C_{\text{out}}$ convolution weights is calculated for the true gradient for different randomly drawn minibatches of size $B=128$. We do this for a randomly initialized image classification network designed for the CIFAR-10 dataset (for network details, see Table~\ref{tab:CIFAR-NW} in Appendix~\ref{networks}).

For comparison, approximate gradients are also calculated for randomized trace estimates obtained by probing independently (``Indep.'' in blue), multi-channel (``Multi'' in orange), and multi-channel with orthogonalization (``Multi-Ortho'' in green). The batch sizes are for a fixed number of probing vectors of $r=2048$ selected such that the total memory use is the same as for the true gradient calculations. From the plots in Figure~\ref{c4ifarfirst}, we observe that as expected the independent probing is close to the true gradient followed by the more memory efficient multi-channel probing with and without orthogonalization. While all approximate gradients are within the 99\% confidence interval, the orthogonalization has a big effect when the gradients are small (see conv3).

To better understand the interplay between batch size $B \in \{64,\, 128,\, 256,\, 1024\}$ and probing vectors $r \in \{64,\, 256,\, 512\}$, we estimate the standard deviation from $40$ randomly drawn minibatches. As expected, the standard deviations increase for smaller batch size and fewer probing vectors. However, since XConv's smaller memory footprint allows for larger batch sizes, the variability can be controlled within a given memory budget. Figure~\ref{std-cifar10} shows that the standard deviation reduces with increasing batch size, mirroring the behaviour of stochastic gradient descent. This confirms that the approximation noise introduced by XConv does not dominate mini-batch noise, reaffirming that exact gradient computations are unnecessary for stable training.

\begin{figure}[t]
\centering
\begin{subfigure}[t]{0.47\linewidth}
    \centering
    \includegraphics[width=\linewidth]{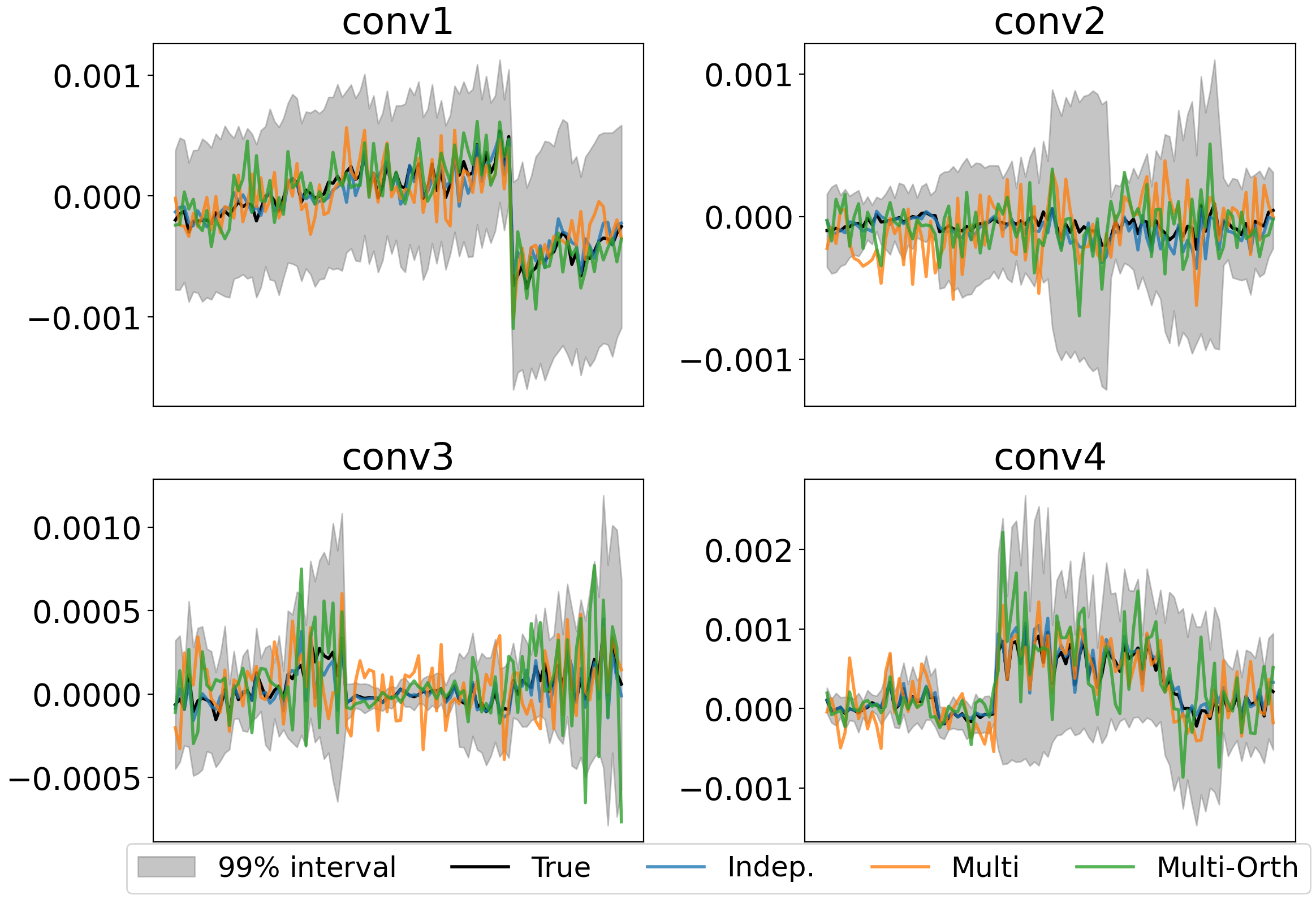}
    \caption{Randomized trace estimation of the gradient of our randomly initialized CNN for the CIFAR-10 dataset. While gradient noise is present, its magnitude is reduced by the orthogonalization when weights are small.}
    \label{c4ifarfirst}
\end{subfigure}
\hfill
\begin{subfigure}[t]{0.47\linewidth}
    \centering
    \includegraphics[width=\linewidth]{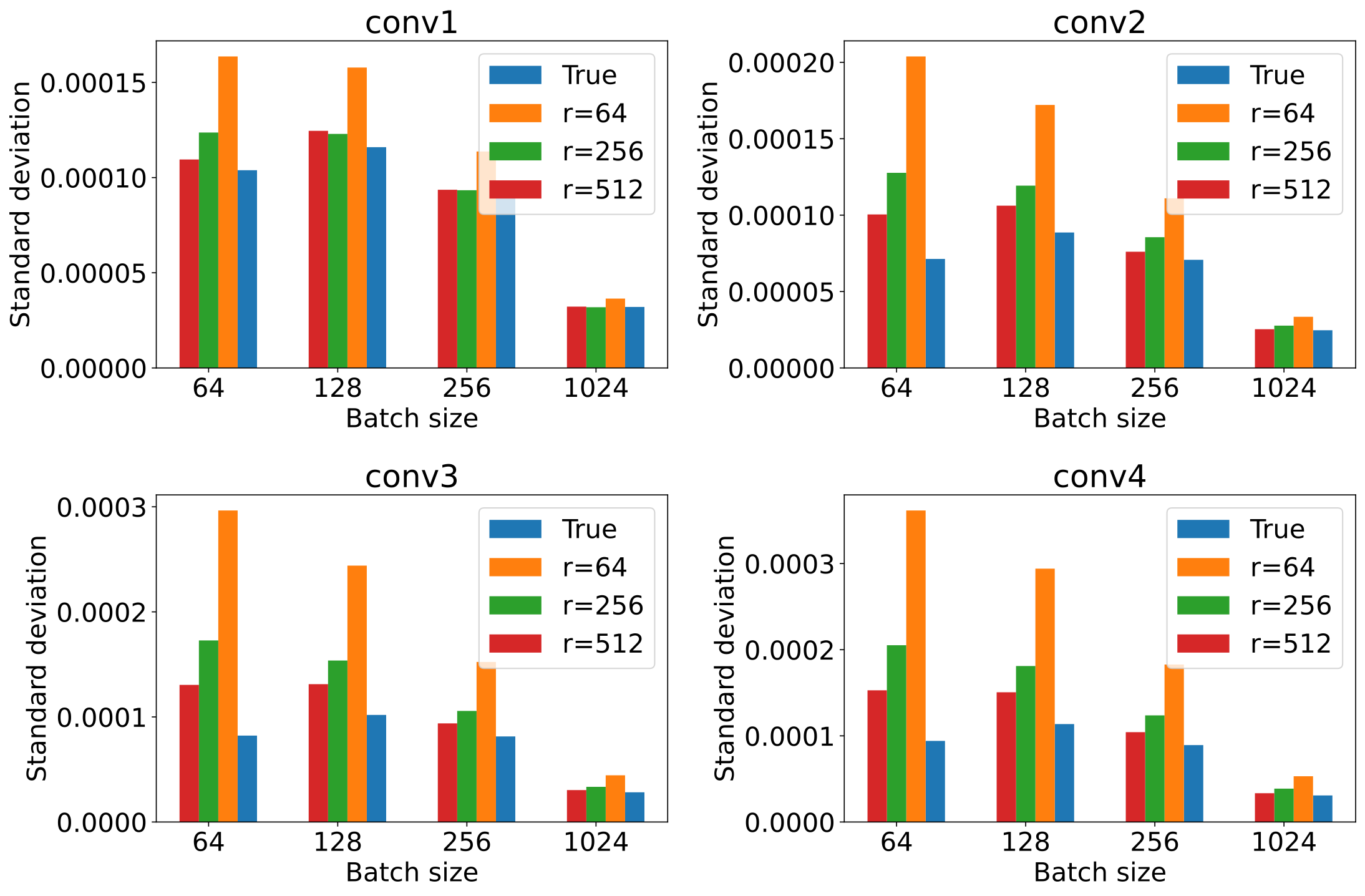}
    \caption{Standard deviation of the gradients w.r.t.\ the weights for each of the four convolutional layers in the neural network. The standard deviation is computed over 40 mini-batches randomly drawn from the CIFAR-10 dataset.}
    \label{std-cifar10}
\end{subfigure}
\caption{Gradient analysis on CIFAR-10. (a)~Per-weight gradient estimates for four convolutional layers under different probing strategies. (b)~Standard deviation of gradients across 40 mini-batches for varying batch sizes and probing vectors $r$.}
\label{fig:gradient_analysis_cifar10}
\end{figure}

\subsection{Average Gradient Error}
\label{subsec:avg_grad_err}

\begin{figure}[t]
\centering
\captionsetup[subfigure]{justification=centering,labelformat=parens}

\subfloat[Single precision (fp32)]{%
  \includegraphics[width=0.49\linewidth]{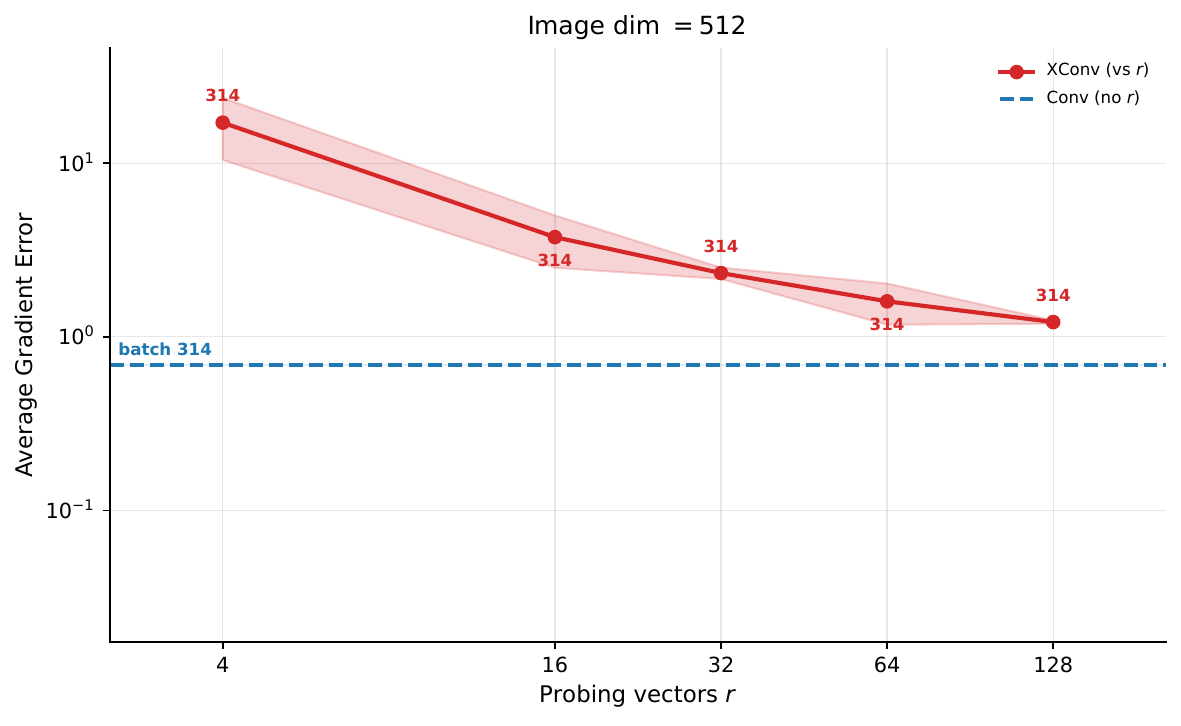}
}
\subfloat[Half precision (fp16)]{%
  \includegraphics[width=0.49\linewidth]{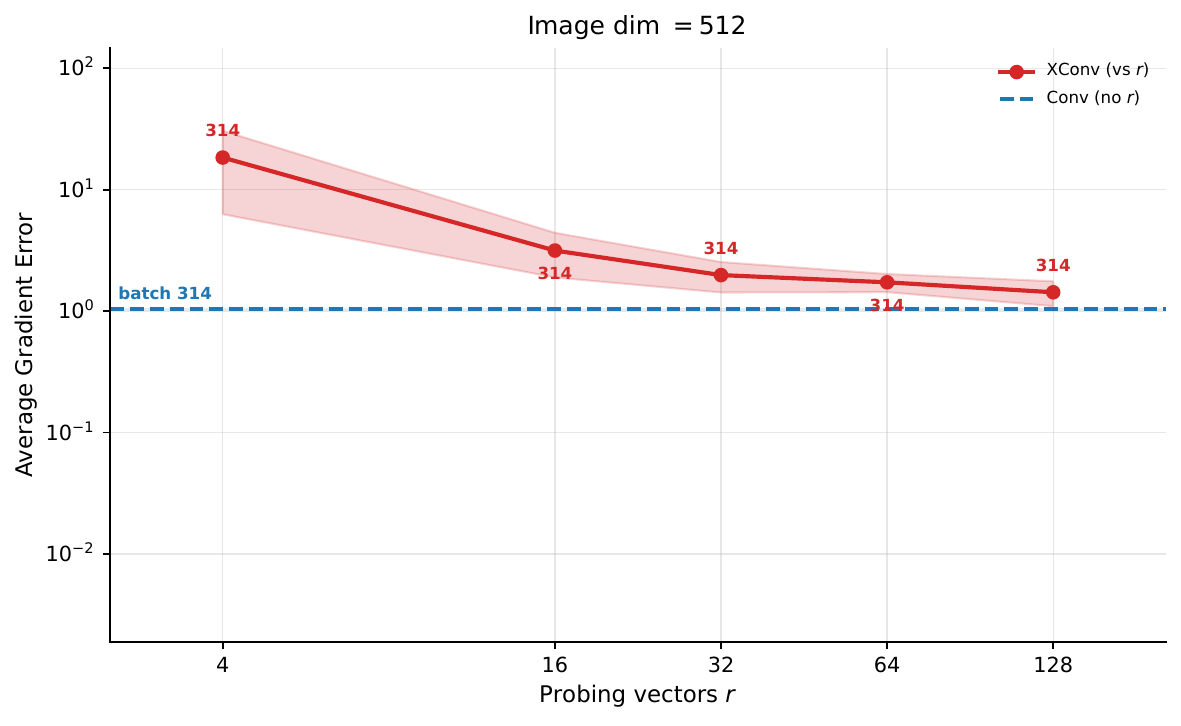}
}

\caption{Average gradient error vs.\ the number of probing vectors $r$ for SqueezeNet at image dimension $N = 512$, in (a) single precision and (b) half precision, over $10$ runs. The red curve is XConv and the blue dashed line is the standard convolution floor, which is independent of $r$. The annotated batch size at each point is the largest that fits the $80$ GB memory budget. XConv's AGE decreases with $r$ toward the convolution floor; at $r = 128$ the gap is small. At half precision the XConv gradient error is negligible and coincides with the convolution floor.}

\label{fig:squeezenet_avg_grad_err_graph_pv_16,128}
\end{figure}

Comparing approximate and exact gradients in deep networks is nontrivial: gradients vary across layers, scales, and architectures, and per-layer discrepancies do not directly reflect the cumulative effect on optimization. To enable architecture-level comparison, we introduce a global diagnostic metric---Average Gradient Error (AGE)---that quantifies the aggregate deviation between gradients estimated by standard convolution and randomized trace estimation across an entire model. AGE measures the magnitude of gradient noise induced by approximation and stochasticity, aggregated across mini-batches, and is intended as a diagnostic of gradient fidelity rather than a predictor of generalization.

Let $\mathcal{D}=\{(x_i,y_i)\}_{i=1}^{K}$ denote a dataset of $K$ samples, $\ell(f_\theta(x_i),y_i)$ a per-example loss, and $L(\theta)=\frac{1}{K}\sum_{i=1}^{K}\ell(f_\theta(x_i),y_i)$ the empirical risk with full gradient $\mathbf g(\theta)=\nabla_\theta L(\theta)$. We partition $\mathcal{D}$ into $M=\lfloor K/B\rfloor$ mini-batches $\{\mathcal B_b\}_{b=1}^{M}$ of size $B$, each yielding a mini-batch gradient $\mathbf g^{(b)}(\theta) =\frac{1}{B}\sum_{i\in\mathcal B_b}\nabla_\theta\ell(f_\theta(x_i),y_i)$ so that $\mathbf g(\theta)=\frac{1}{M}\sum_{b=1}^{M}\mathbf g^{(b)}(\theta)$. The Average Gradient Error is then defined as
\begin{equation}
\mathrm{AGE}(\theta)
=\frac{1}{M}\sum_{b=1}^{M}
\left\|\mathbf g(\theta)-\mathbf g^{(b)}(\theta)\right\|_2^2 .
\end{equation}

For the randomized trace estimation, the mini-batch gradient given by $\mathbf{g}^{(b)}$ contains the stochastic mini-batch and approximation noise. For both the standard convolution and randomized trace estimation cases, AGE is measured relative to the same reference full-gradient $\mathbf{g}(\theta)$. We use AGE to compare gradient fidelity across models and approximation strategies, isolating the contribution of randomized trace estimation from stochastic mini-batch noise.

\subsubsection{SqueezeNet}
\label{subsubsec:age_squeezenet}

We begin our analysis with SqueezeNet~\citep{squeeznet}, a lightweight convolutional architecture designed to achieve competitive accuracy with significantly fewer parameters. SqueezeNet constitutes a challenging test case for XConv: its reduced spatial aggregation limits natural averaging effects that can otherwise mitigate stochastic gradient noise, potentially amplifying the impact of gradient approximation. 

We evaluate AGE across probing vectors $r \in \{4, 16, 32, 64, 128\}$ and image dimensions $N \in \{128, 256, 512\}$. Results at $N = 512$ are reported in Figure~\ref{fig:squeezenet_avg_grad_err_graph_pv_16,128}, with the remaining image dimensions provided in Appendix~\ref{app:avg_grad_err}. Each point is annotated with the largest batch size that fits the $80$ GB memory budget, which XConv attains by storing only the probed activations rather than the full input.

Despite the compact architecture and limited spatial smoothing, we observe a consistent reduction in AGE as the number of probing vectors $r$ increases, with XConv approaching the standard convolution floor. At $r = 128$ the gap is small, indicating that even in parameter-efficient architectures the gradient deviation introduced by XConv diminishes systematically with increased probing capacity.

\subsubsection{U-Net}
\label{subsubsec:age_unet}

\begin{figure}[t]
\centering
\captionsetup[subfigure]{justification=centering,labelformat=parens}

\subfloat[Image Dimension $N{=}128$]{%
  \includegraphics[width=0.49\linewidth]{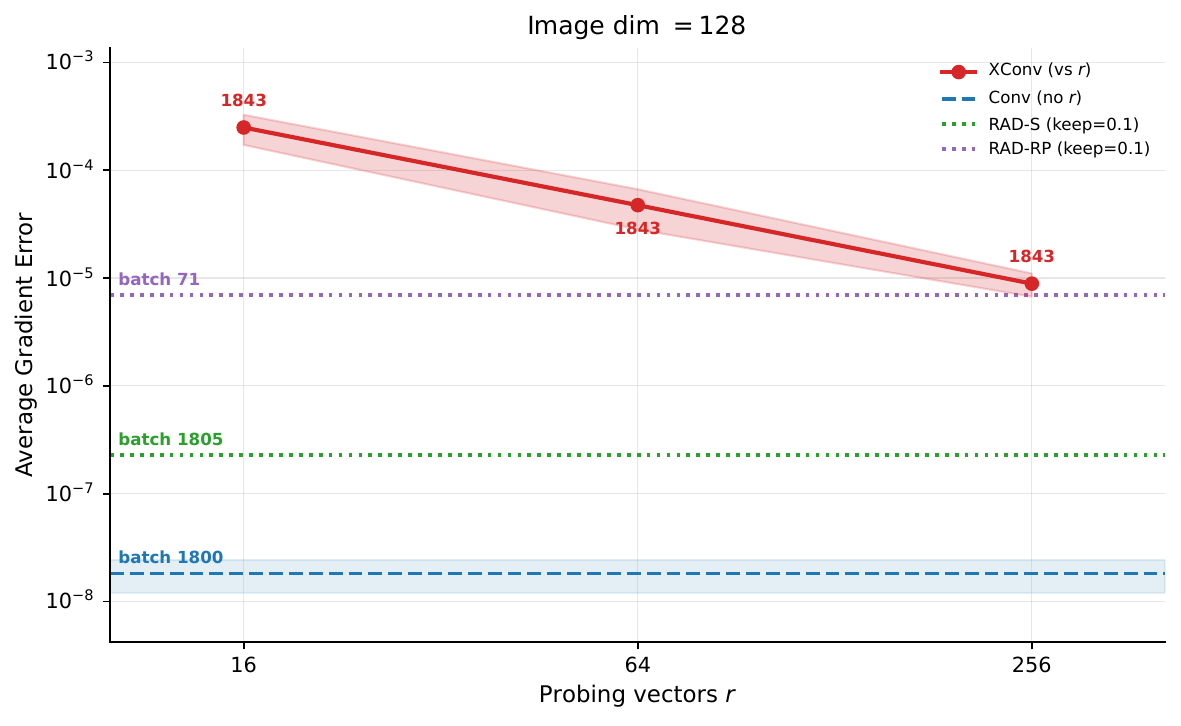}
}
\subfloat[Image Dimension $N{=}512$]{%
  \includegraphics[width=0.49\linewidth]{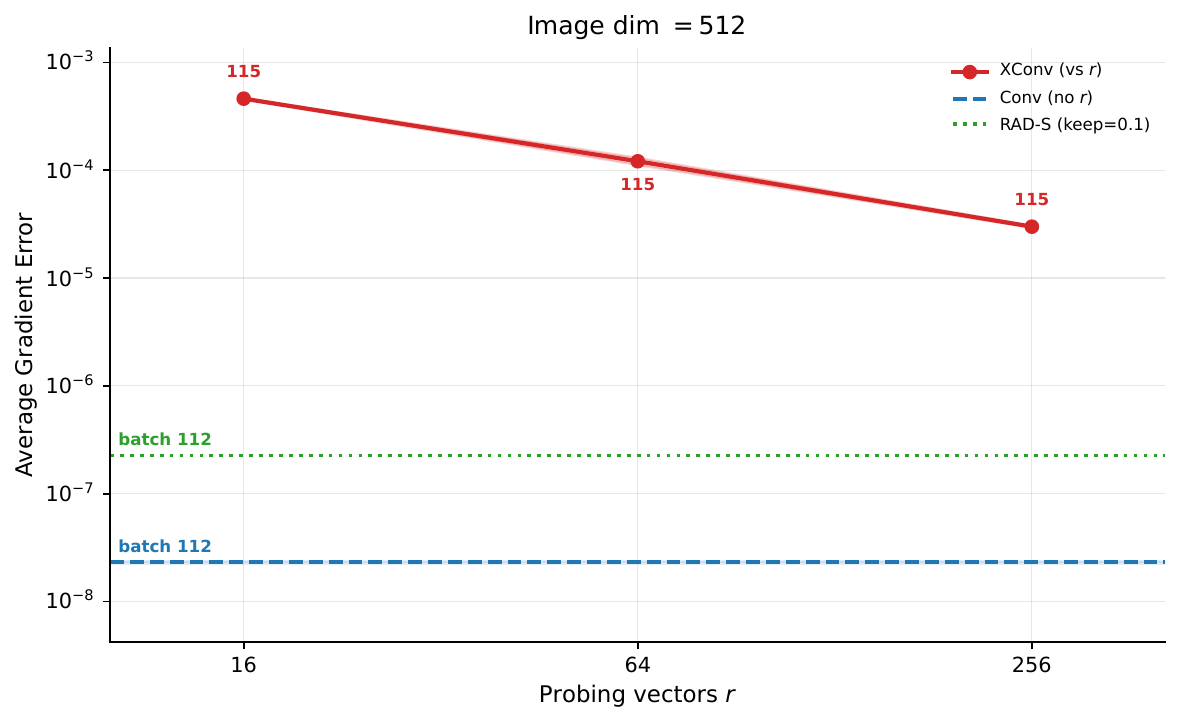}
}

\caption{Average gradient error vs.\ the number of probing vectors $r$ for U-Net in single precision, at image dimensions (a) $N = 128$ and (b) $N = 512$, over $10$ runs. The red curve is XConv, the blue dashed line the standard convolution floor, and the green dotted line the RAD-S reference at keep fraction $0.1$. The annotated batch size at each point is the largest that fits the $80$ GB memory budget. XConv increases AGE relative to standard convolution, but the gap narrows steadily as $r$ increases, indicating that the approximation noise can be controlled by adjusting the number of probing vectors. We omit the half-precision panels for U-Net because at fp16 the gradient error of every method collapses below the rounding floor, making the comparison uninformative.}

\label{fig:sips_unet_avg_grad_err_graph_pv_16,128}
\end{figure}

We continue our analysis on U-Net~\citep{ronneberger2015u}, a convolutional encoder--decoder architecture with skip-connections that has become a core building block in score-based diffusion models. Its deep hierarchical structure and large intermediate activations make it a particularly challenging and practically relevant setting for evaluating XConv.

Figure~\ref{fig:sips_unet_avg_grad_err_graph_pv_16,128} reports the AGE against the number of probing vectors $r \in \{16, 64, 256\}$ at image dimensions $N = 128$ and $N = 512$, with the intermediate resolution provided in Appendix~\ref{app:avg_grad_err}. Across both resolutions, XConv increases AGE relative to exact convolutional gradients; however, the gap narrows systematically as $r$ grows toward the standard convolution floor.

In contrast to lightweight architectures such as SqueezeNet, U-Net's extensive skip connections and large activations lead to higher gradient approximation error, though the error remains bounded and decreases with $r$. This behavior suggests that XConv can maintain adequate gradient fidelity even in deep, memory-intensive encoder--decoder architectures.

\paragraph{Comparison with randomized automatic differentiation.} The U-Net is also the natural setting for a direct comparison with randomized automatic differentiation (RAD; \citealt{oktay2020randomized}), the approximate-gradient baseline closest to our method, in both its random-projection (RAD-RP) and sparse (RAD-S) variants. We include RAD as a reference in the U-Net average-gradient-error curves (Figure~\ref{fig:sips_unet_avg_grad_err_graph_pv_16,128}) and peak-memory curves (Figure~\ref{fig:sips_unet_peak_memory_curves_dims_128,256}), with the batch size in every case taken as the largest that fits the memory budget. Two observations follow. First, on peak memory XConv is the most efficient of the compared methods---i.e., it tracks the lowest peak-memory profile and admits the largest batch size before reaching the budget, whereas RAD-RP reconstructs full-size activations in the backward pass, so its memory grows steeply with batch size; it admits only a small fraction of XConv's batch and becomes infeasible altogether at the highest resolution ($N=512$). Second, on gradient fidelity the comparison is mixed---i.e., RAD-S attains a lower average gradient error than XConv at a comparable batch size, while XConv attains a lower error than RAD-RP. XConv and RAD-S therefore occupy complementary points on the memory--fidelity frontier---i.e., RAD-S is the more accurate estimator at matched memory, whereas XConv spans the full memory--fidelity range, scales to high resolution, and, unlike RAD, requires no intervention in the computational graph, acting as a drop-in layer replacement that preserves standard backpropagation.

\subsubsection{VanillaNet}
\label{subsubsec:age_vanilla}

\begin{figure}[t]
\centering
\captionsetup[subfigure]{justification=centering,labelformat=parens}

\subfloat[Single precision (fp32)]{%
  \includegraphics[width=0.49\linewidth]{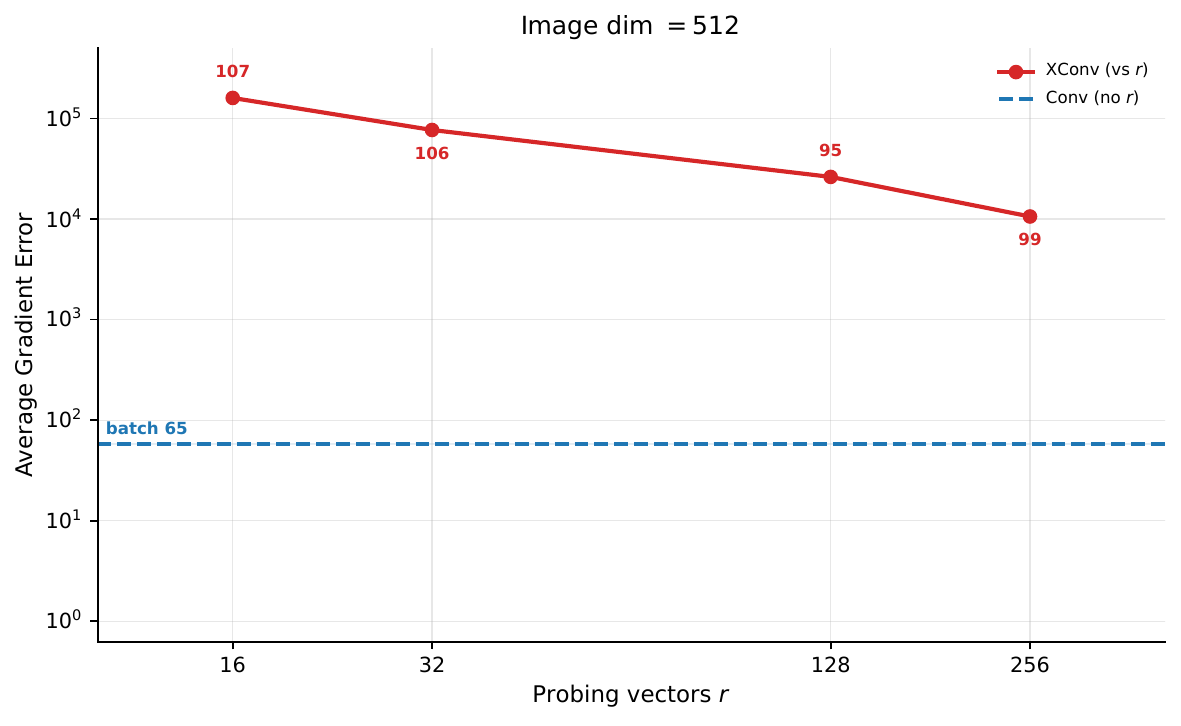}
}
\subfloat[Half precision (fp16)]{%
  \includegraphics[width=0.49\linewidth]{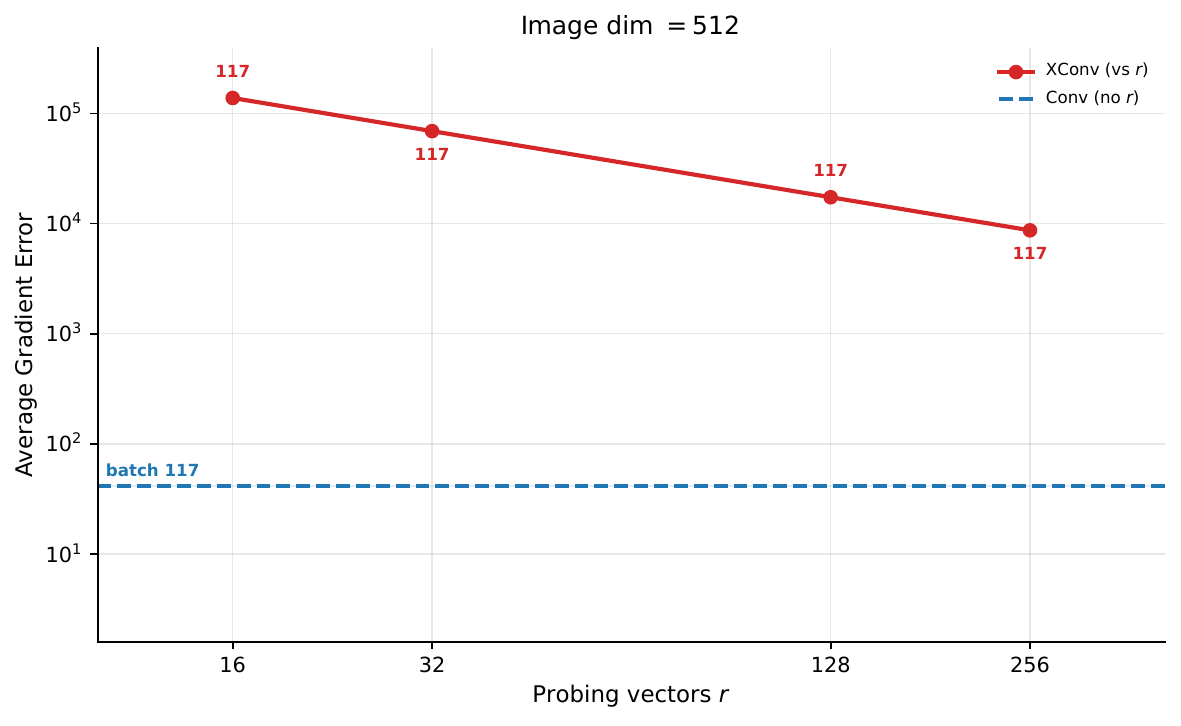}
}

\caption{Average gradient error vs.\ the number of probing vectors $r$ for VanillaNet at image dimension $N = 512$, in (a) single precision and (b) half precision, over $10$ runs. The red curve is adaptive XConv and the blue dashed line the standard convolution floor. The annotated batch size at each point is the largest that fits the $80$ GB memory budget. In single precision the AGE remains above the convolution floor because VanillaNet's wide layers ($C_\text{in} \times C_\text{out} > 10^6$) inflate the estimator variance, consistent with Theorem~\ref{theorem1}; the adaptive selection of approximated layers also makes the trend in $r$ non-monotone. At half precision the XConv gradient error becomes negligible and coincides with the convolution floor.}

\label{fig:vanillanet_avg_grad_err_graph_pv_32,256}
\end{figure}

We complete our analysis on VanillaNet~\citep{chen2023vanillanet}, a minimalistic architecture with shallow feature maps and small activation footprints. Since aggressive gradient approximation is neither necessary nor uniformly beneficial in such lightweight networks, we employ an adaptive variant of XConv that selectively applies approximation to layers with the largest activation footprints.

We evaluate AGE across probing vectors $r \in \{16, 32, 128, 256\}$ and image dimensions $N \in \{64, 128, 256, 512\}$. Results at $N = 512$ are shown in Figure~\ref{fig:vanillanet_avg_grad_err_graph_pv_32,256} with the remaining image dimensions provided in Appendix~\ref{app:avg_grad_err}.

As compared to U-Net and SqueezeNet, the single-precision AGE settles at a higher level above the convolution floor, which is consistent with the scaling behaviour provided by Theorem~\ref{theorem1}. In particular, the AGE depends on the product of the input and output channel dimensions. VanillaNet contains substantially wider convolutional layers than U-Net and SqueezeNet, with several layers for which $C_\text{in} \times C_\text{out} > 10^6$ (Table~\ref{tab:vanillanet_channels}), whereas the other architectures remain significantly smaller. Since Theorem~\ref{theorem1} predicts the estimator variance scales with channel dimensionality, VanillaNet naturally exhibits larger AGE under a fixed number of probing vectors.

Unlike the previous settings, AGE does not exhibit a strictly monotonic dependence on $r$. This behavior is expected: increasing the number of probing vectors reduces estimator variance only in layers where XConv is active, while layers computed with exact gradients contribute no approximation error. This highlights a key design principle: gradient approximation need not be applied uniformly. In lightweight networks, selectively approximating only the most memory-intensive layers suffices, while preserving exact gradients elsewhere stabilizes optimization. This stands in contrast to deeper architectures, where increasing $r$ consistently improves gradient fidelity across the network.

Across all three architectures, the gradient error induced by XConv becomes very small at half precision---panel (b) of Figures~\ref{fig:squeezenet_avg_grad_err_graph_pv_16,128} and~\ref{fig:vanillanet_avg_grad_err_graph_pv_32,256} show that XConv's AGE coincides with the convolution floor in fp16, even for the wide layers of VanillaNet that dominate the single-precision error. The probing error is therefore small relative to the rounding error already incurred by half-precision training, which makes XConv particularly favorable for high-dimensional and high-resolution training where reduced precision is standard practice.

Taken together, these observations delineate the regimes in which XConv is least accurate. The single-precision approximation error is largest for architectures with very wide convolutional layers---i.e., VanillaNet, whose large channel products dominate the variance predicted by Theorem~\ref{theorem1}--- and for generative and dense-prediction objectives, which are more sensitive to gradient noise than classification and therefore call for a larger number of probing vectors, with the marginal fidelity gain diminishing once $r$ is large. In every case the error is controlled by increasing $r$ rather than being intrinsic, and at half precision it falls to the fp16 rounding floor on all three tested architectures.

\subsection{Overall effective memory savings}
\label{subsec:network-memory}

Gradient fidelity alone does not determine practical utility---XConv must also translate into meaningful end-to-end memory savings. We compare peak memory consumption between otherwise identical models differing only in whether standard convolutional layers are replaced with XConv.

\begin{figure}[t]
\centering
\captionsetup[subfigure]{labelformat=empty}
\subfloat[]{\includegraphics[width=0.2500\linewidth]{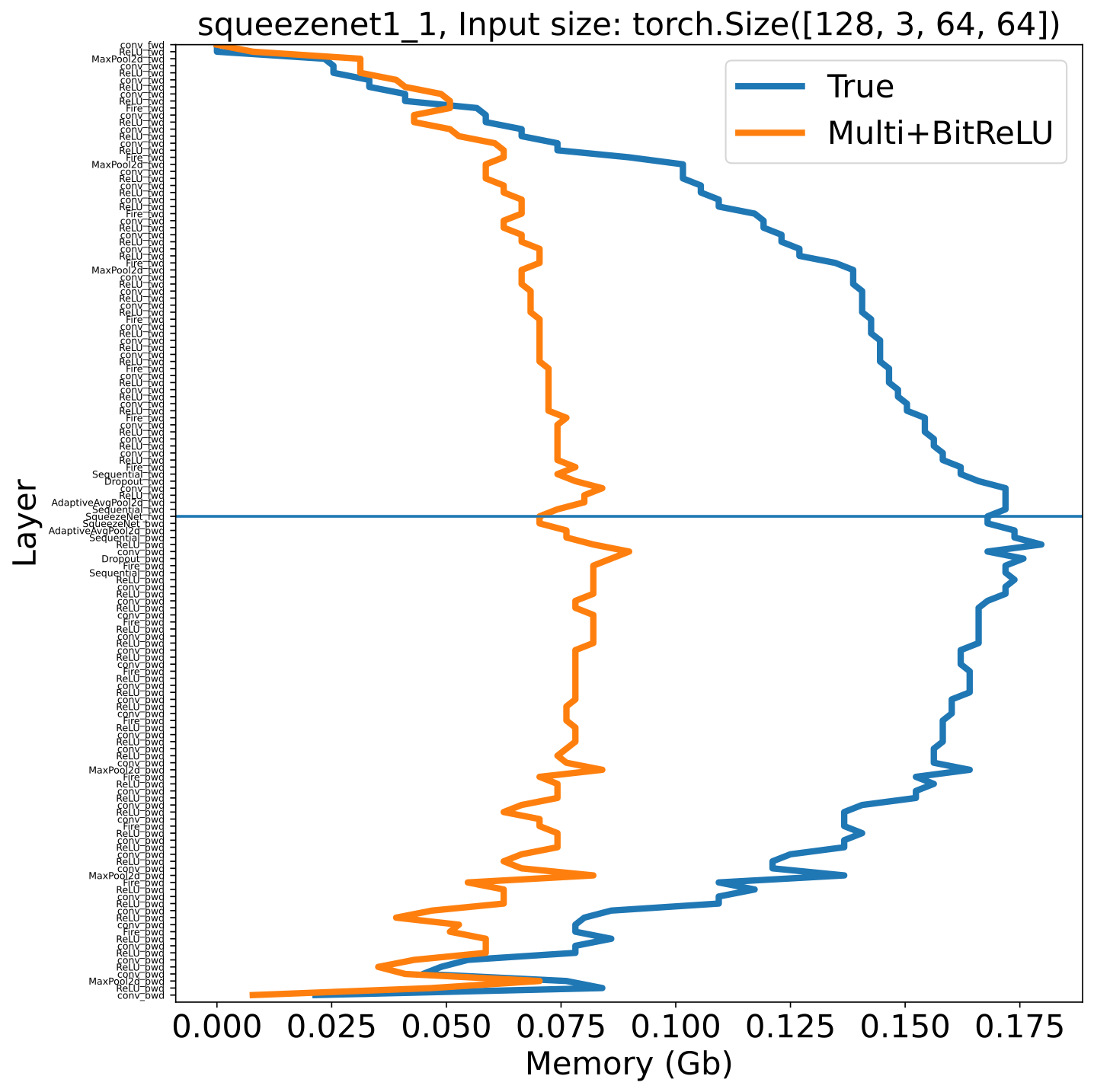}}
\subfloat[]{\includegraphics[width=0.2500\linewidth]{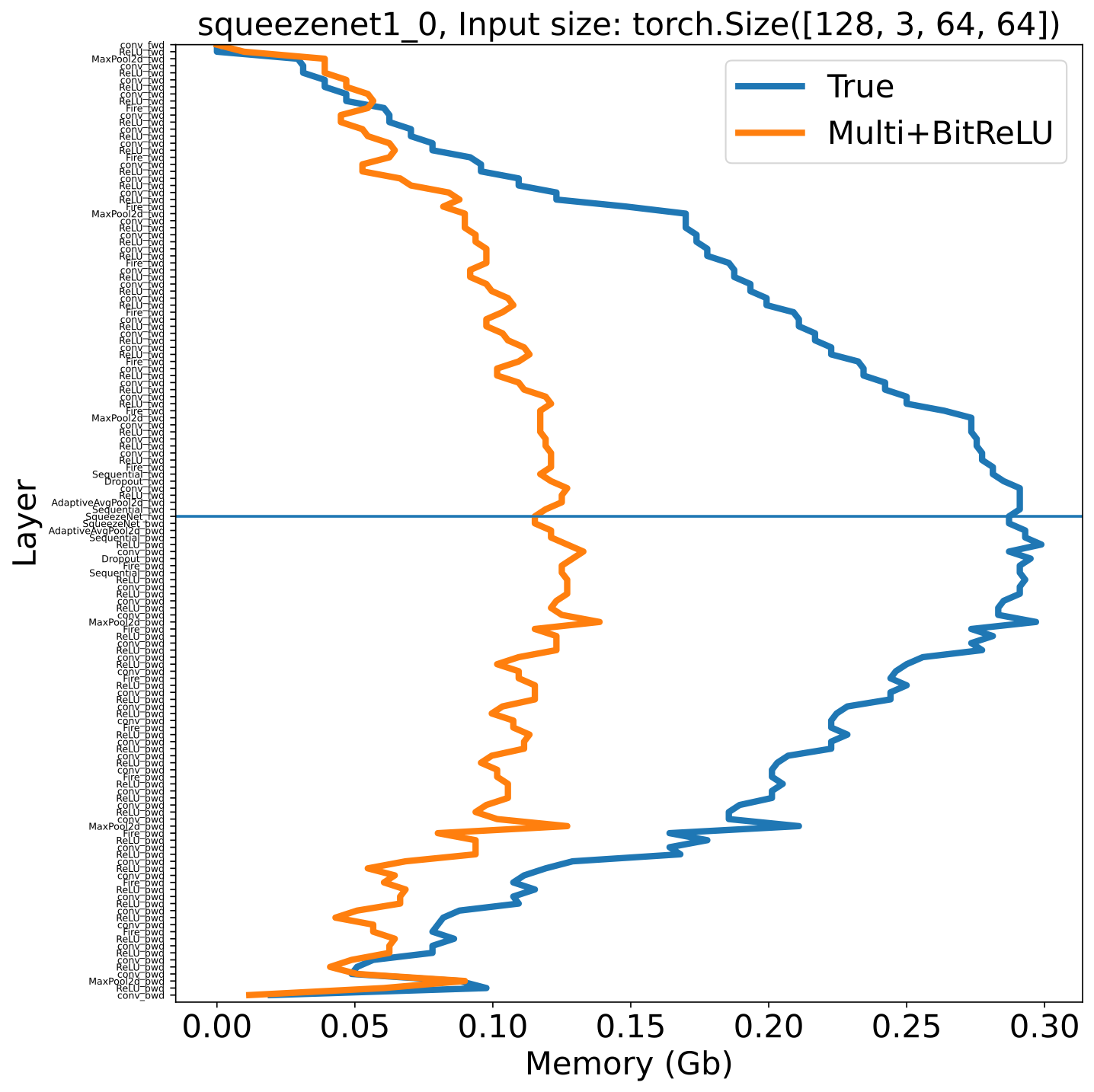}}
\subfloat[]{\includegraphics[width=0.2500\linewidth]{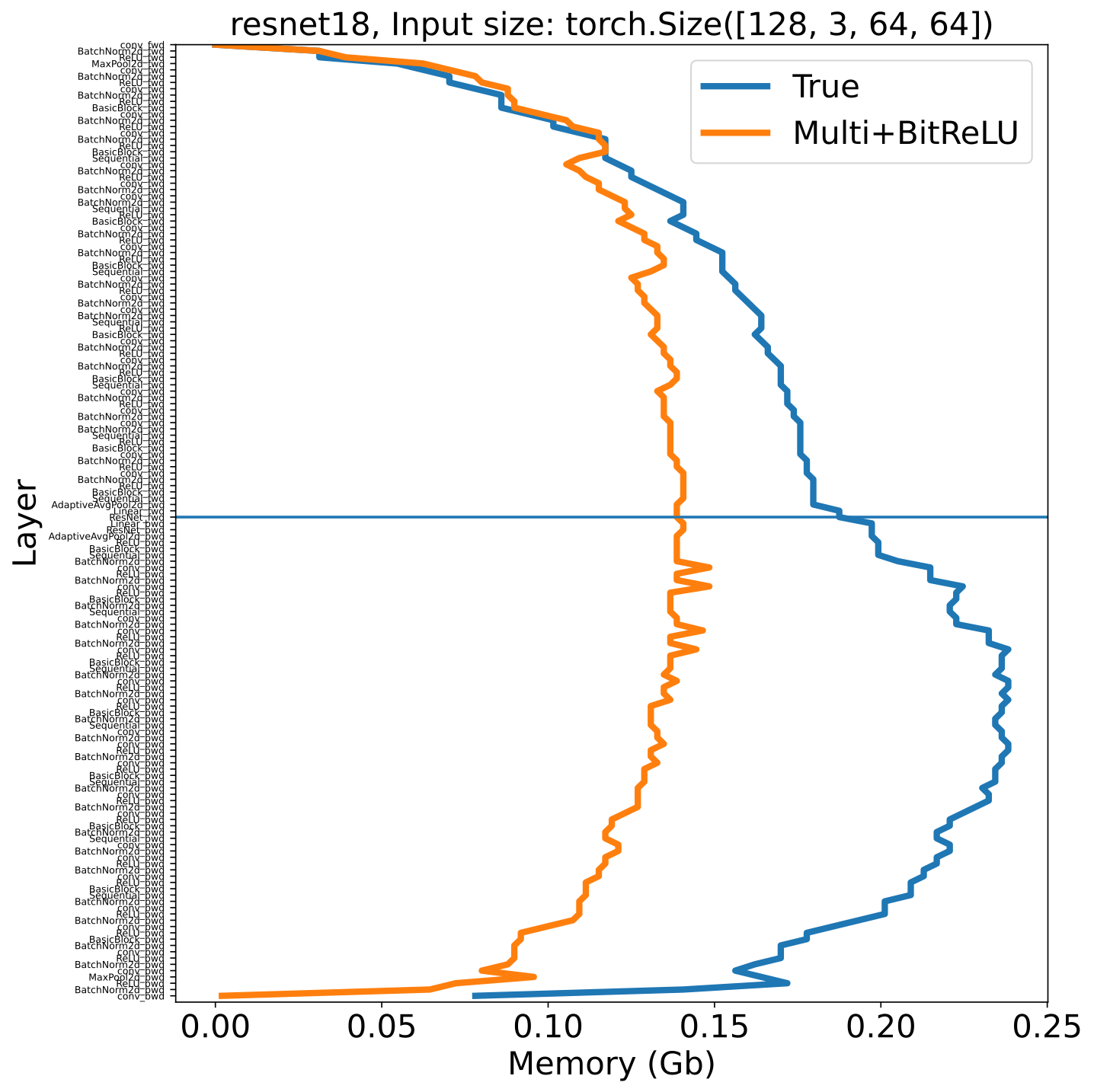}}
\subfloat[]{\includegraphics[width=0.2500\linewidth]{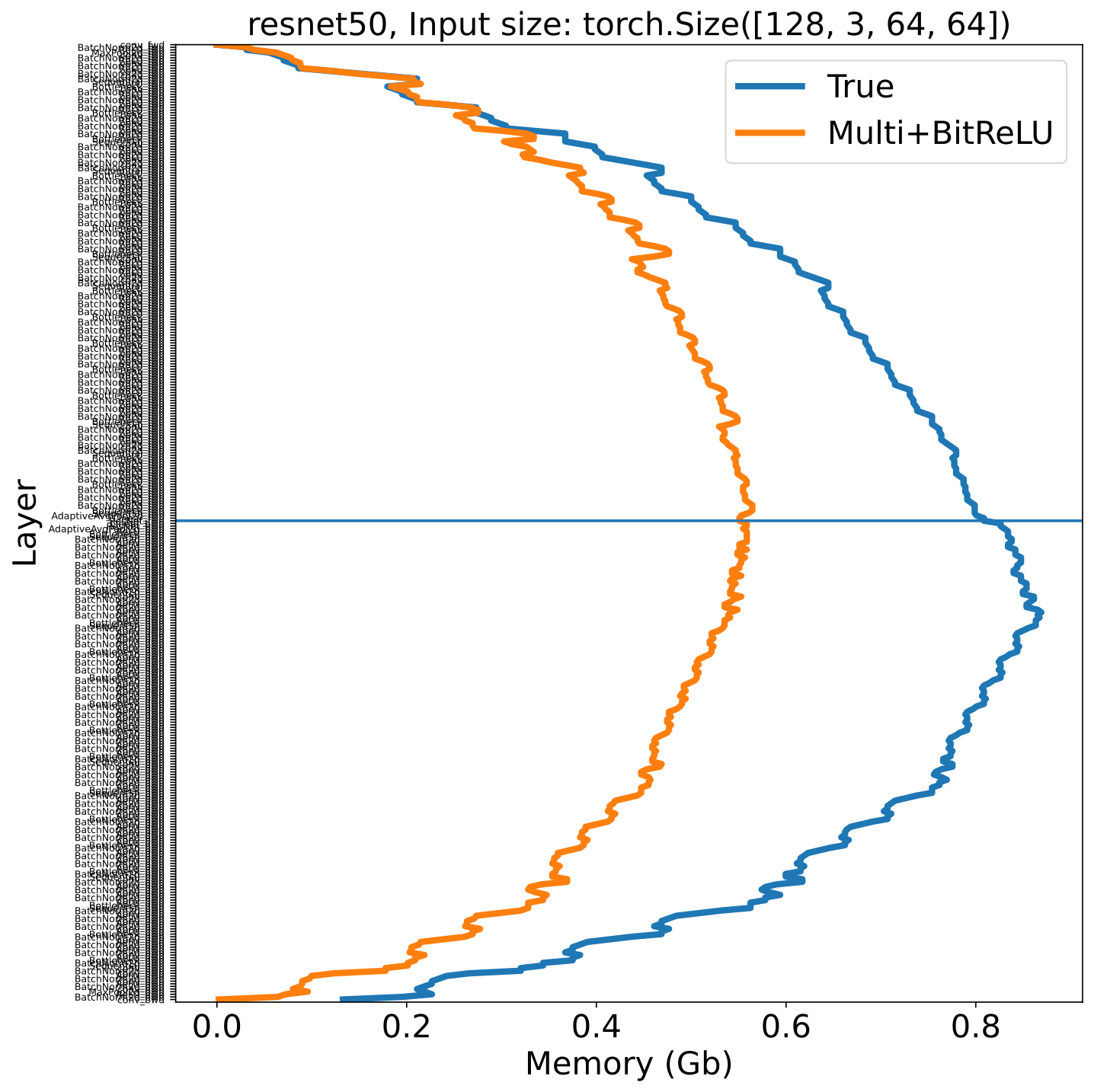}}
\caption{Layer-by-layer memory usage for a single gradient step. Standard convolution (blue) and XConv (orange) are compared for SqueezeNet~1.1, SqueezeNet~1.0, ResNet-18, and ResNet-50 (left to right) at a fixed input size. Memory savings reach $2\times$ or more when convolutional layers dominate and are smaller when non-convolutional layers do, depending on the ratio of convolutional to non-convolutional layers.}\label{nn-mem}
\end{figure}

Approximate gradient calculations with multi-channel randomized trace estimation can lead to significant memory savings within convolutional layers. Because these layers operate in conjunction with other network layers such as \texttt{ReLU} and batchnorms, the overall effective memory savings depend on the ratio of pure convolutional and other layers and on the interaction between them. This is especially important for layers such as \texttt{ReLU}, which rely on the next layer to store the state variable during backpropagation. Unfortunately, that approach no longer works because our low-memory convolutional layer does not store the state variable. However, this situation can be remedied easily by only keeping track of the signs~\citep{oktay2020randomized}.

To assess the effective memory savings of multi-channel trace estimation, we include in Figure~\ref{nn-mem} layer-by-layer comparisons of memory usage for different versions of the popular SqueezeNet~\citep{squeeznet} and ResNet~\citep{resnet}. The memory use for the conventional implementation is plotted in blue and our implementation in orange. The results indicate that memory savings by a factor of two or more are achievable, which can allow for larger batch sizes or increases in the width/depth of the network. As expected, the savings depend on the ratio of CNN versus other layers.

We extend this memory analysis to entire architectures: SqueezeNet, U-Net and VanillaNet. Peak memory is the limiting factor for feasible batch size and image resolution during training and therefore serves as the primary metric of interest.

Figure~\ref{fig:squeezenet_peak_memory_curves_dims_512_to_1024} reports the peak memory consumption of SqueezeNet across varying batch sizes and probing vectors at a fixed image resolution of $N = 512$ and $N = 1024$, under an $80$ GB memory budget. The corresponding results for lower image resolutions and for half precision are provided in Figure~\ref{fig:extra_peak_mem_results} in the appendix. In these settings, XConv enables training regimes---either higher batch sizes or higher spatial resolutions---that would otherwise be infeasible under standard convolution due to memory constraints.

\begin{figure}[t]
\centering
\captionsetup[subfigure]{justification=centering,labelformat=parens}

\subfloat[Image Dimension $N{=}512$]{%
  \includegraphics[width=0.49\linewidth]{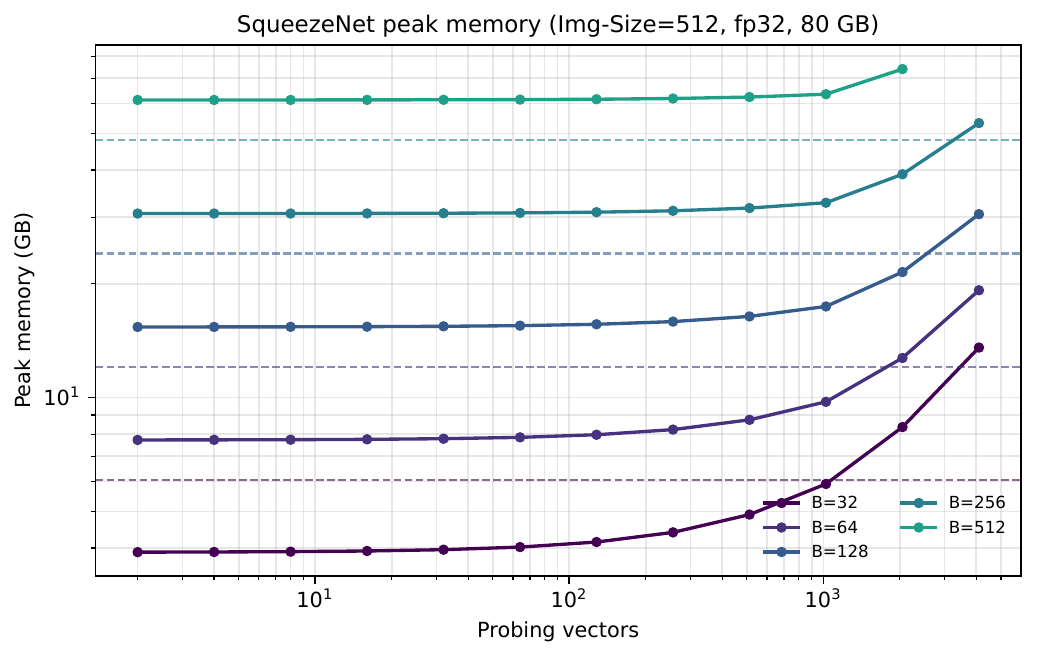}
}
\subfloat[Image Dimension $N{=}1024$]{%
  \includegraphics[width=0.49\linewidth]{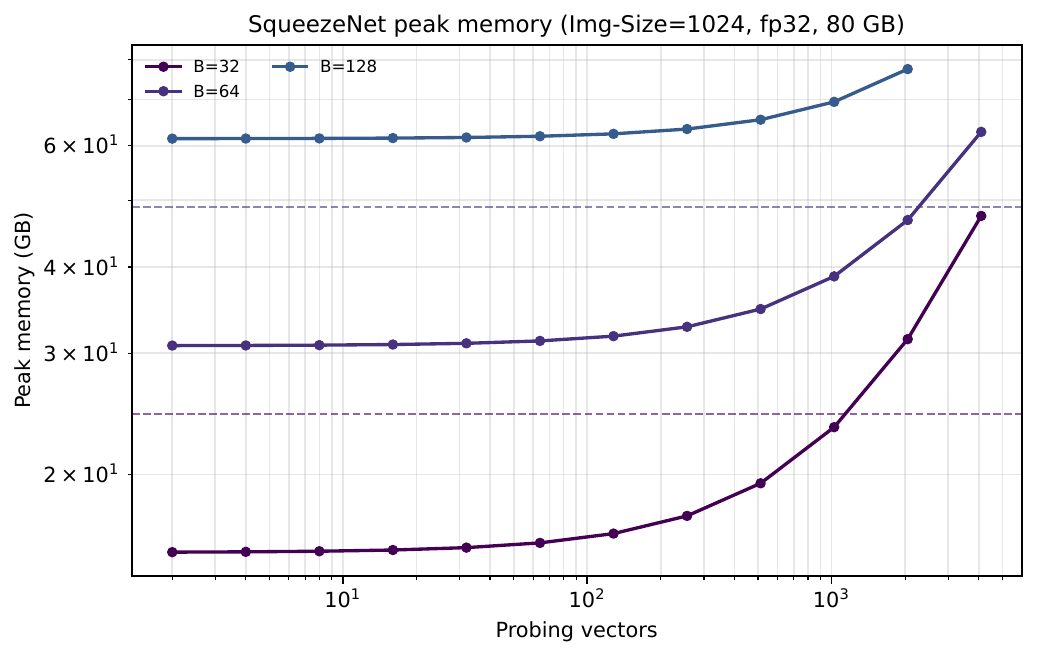}
}

\caption{Peak memory curves for SqueezeNet in single precision, as a function of the number of probing vectors, at image dimensions $N \in \{512, 1024\}$ under an $80$ GB memory budget. Each solid curve is a fixed batch size $B$ and the dashed line of the matching color is the corresponding standard convolution. Peak memory grows only mildly with the number of probing vectors and stays well below the standard convolution baseline over a broad range, so XConv supports larger batch sizes within the same budget.}
\label{fig:squeezenet_peak_memory_curves_dims_512_to_1024}
\end{figure}

Results for U-Net at image resolutions $N \in \{128, 256\}$ are shown in Figure~\ref{fig:sips_unet_peak_memory_curves_dims_128,256}. The corresponding half-precision curves and the $N = 512$ result can be found in Figure~\ref{fig:extra_peak_mem_results_unet}. Despite U-Net's extensive skip connections and deep encoder--decoder structure, XConv reduces peak memory consumption across the tested probing vectors and batch sizes.

\begin{figure}[t]
\centering
\captionsetup[subfigure]{justification=centering,labelformat=parens}

\subfloat[Image Dimension $N{=}128$]{%
  \includegraphics[width=0.49\linewidth]{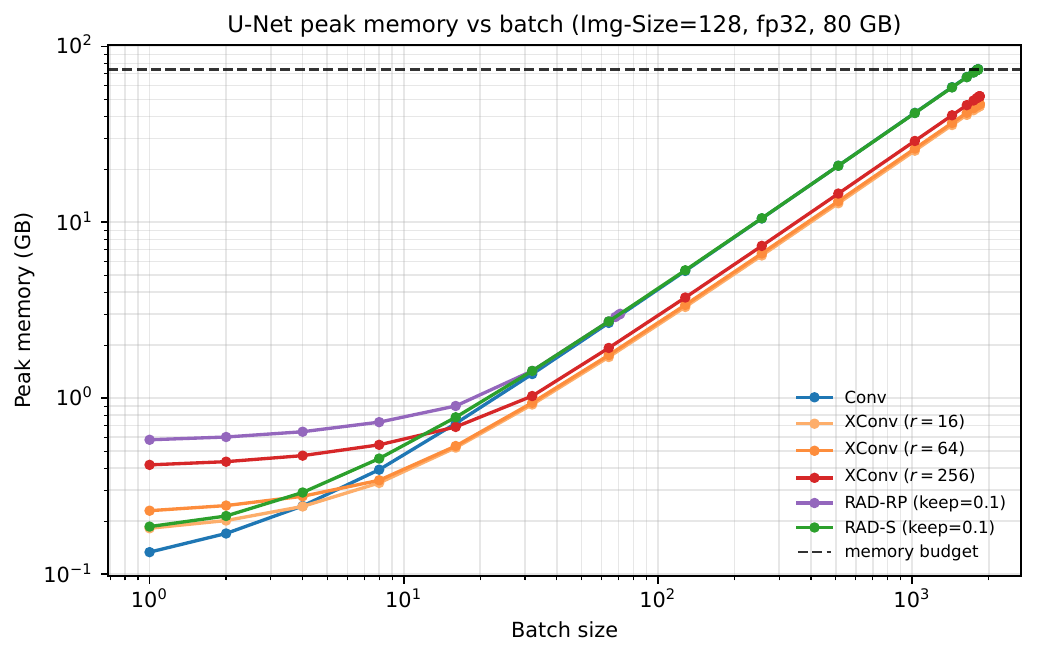}
}
\subfloat[Image Dimension $N{=}256$]{%
  \includegraphics[width=0.49\linewidth]{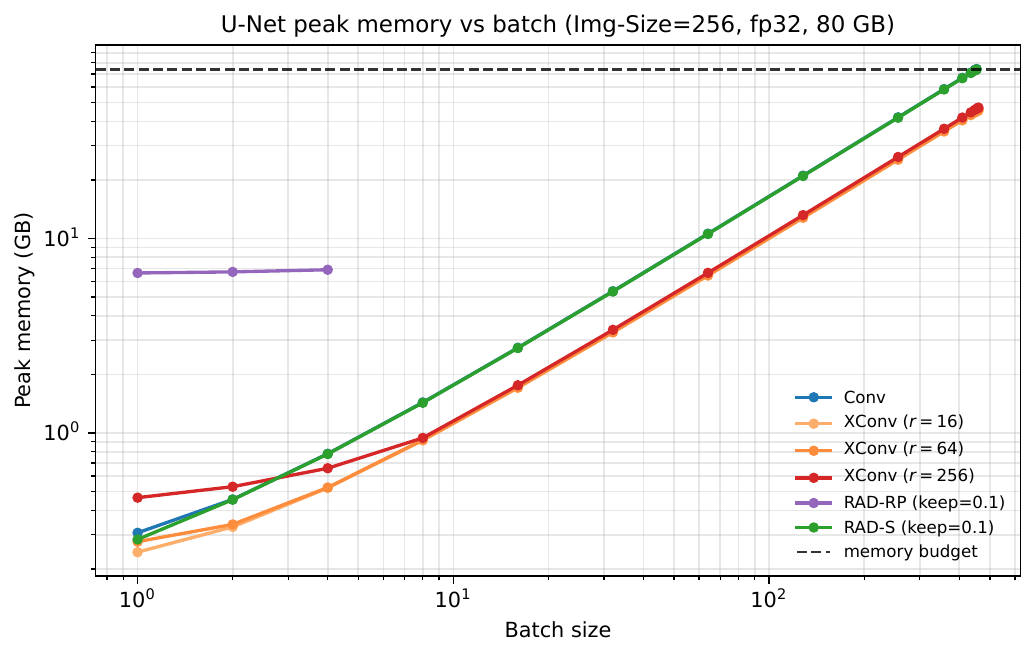}
}

\caption{Peak memory curves for U-Net in single precision, as a function of batch size, at image dimensions $N \in \{128, 256\}$. We compare standard convolution, XConv at $r \in \{16, 64, 256\}$, and the RAD-RP and RAD-S baselines at keep fraction $0.1$; the horizontal dashed line is the $80$ GB memory budget. Despite U-Net's extensive skip connections and deep encoder--decoder structure, XConv tracks the lowest memory profile and admits the largest batch size before reaching the budget.}
\label{fig:sips_unet_peak_memory_curves_dims_128,256}
\end{figure}

Finally, Figure~\ref{fig:vanillanet_peak_memory_curves_img_dims_256,512} presents results for VanillaNet. Owing to its shallow architecture and heterogeneous layer-wise memory profile, we employ an adaptive variant of XConv that selectively applies approximation to layers. As a result, peak memory does not vary monotonically with the number of probing vectors. Nevertheless, across a broad range of probing budgets, XConv remains a memory-efficient alternative, confirming that selective approximation can yield meaningful savings.

\begin{figure}[t]
\centering
\captionsetup[subfigure]{justification=centering,labelformat=parens}
\subfloat[Image Dimension $N{=}256$]{%
  \includegraphics[width=0.49\linewidth]{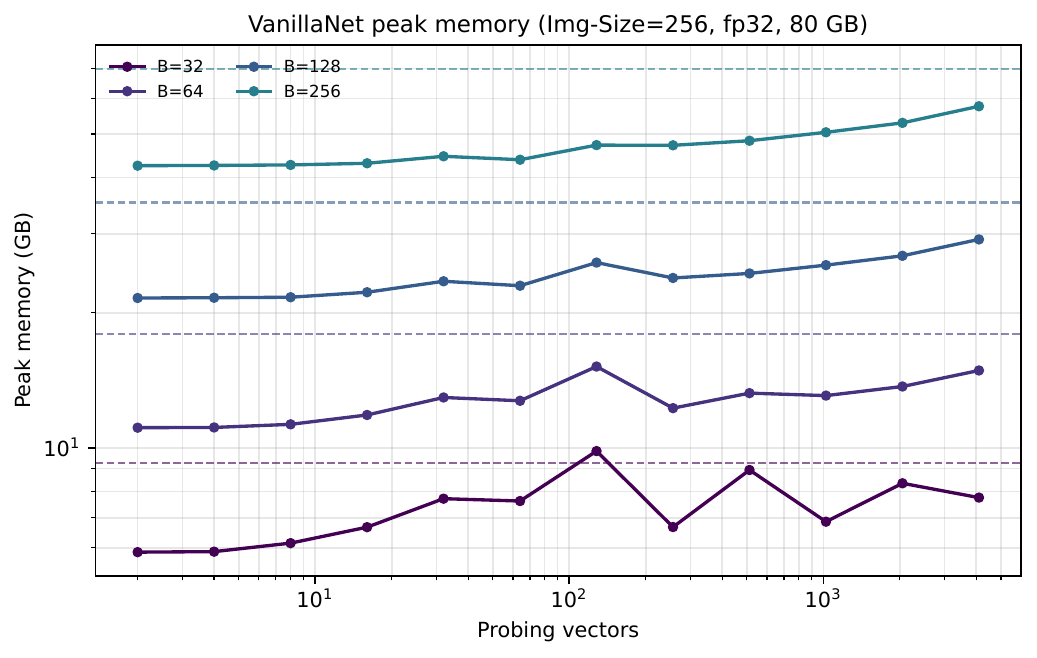}
}
\subfloat[Image Dimension $N{=}512$]{%
  \includegraphics[width=0.49\linewidth]{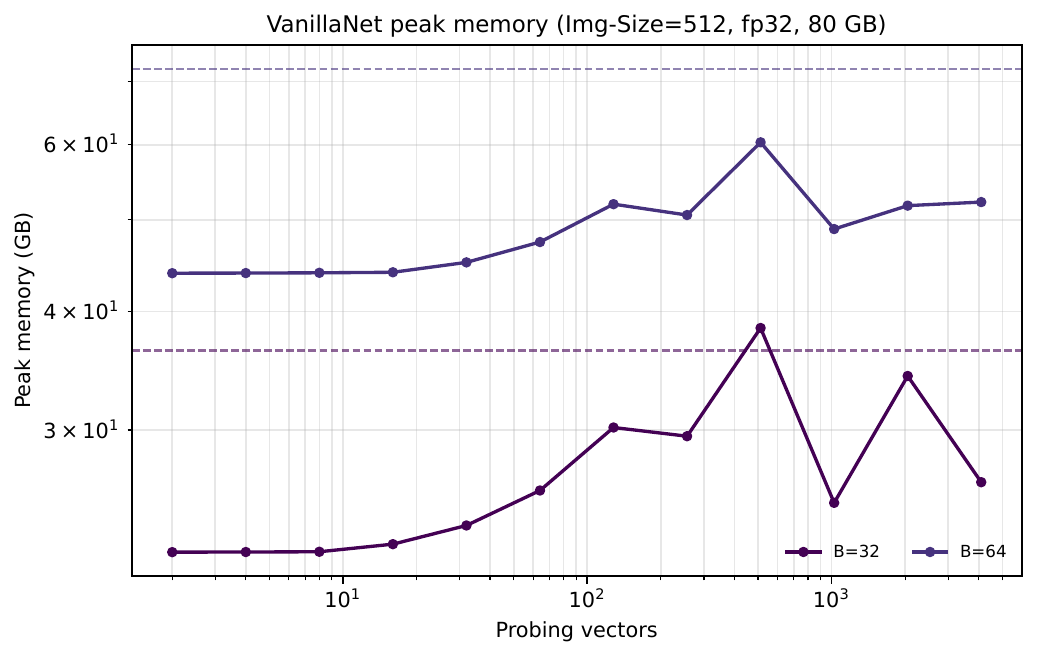}
}

\caption{Peak memory curves for VanillaNet in single precision, as a function of the number of probing vectors, at image dimensions $N \in \{256, 512\}$ under an $80$ GB memory budget. Each solid curve is a fixed batch size $B$ and the dashed line of the matching color is the corresponding standard convolution. Owing to the adaptive application of randomized trace estimation, peak memory does not exhibit a strictly monotonic dependence on the number of probing vectors $r$; nevertheless, across a broad range of probing budgets XConv remains below the standard convolution baseline.}
\label{fig:vanillanet_peak_memory_curves_img_dims_256,512}
\end{figure}

\subsection{Wall-clock benchmarks}
\label{performance}

Ideally, reducing the memory footprint during training should not come at the expense of significant computational overhead. To ensure this is indeed the case, we implemented the multi-channel randomized trace estimation optimized for CPUs in Julia~\citep{bezanson2012julia} and for GPUs in PyTorch~\citep{NEURIPS2019_9015}. Implementation details and benchmarks are included in Appendix~\ref{performance-app}.

Our benchmarking experiments show competitive performance on both CPUs, against NNLib~\citep{Flux, innes}, and GPUs, against optimized CUDA implementations. On CPUs, we observe speedups of up to $10\times$ over the standard \emph{im2col}~\citep{chellapilla:inria-00112631} implementation for large images and large batch sizes, provided the number of probing vectors remains relatively small. On GPUs, we remain competitive with CuDNN kernels~\citep{2014cudnn} and occasionally outperform them, though there is room for further optimization. In all cases, there is a slight decrease in computational throughput when the number of channels increases. Overall, approximate gradient calculations with multi-channel randomized trace estimation substitute expensive convolutions between the input and output channels with a relatively simple combination of matrix-free actions of the outer product on random probing vectors on the right and dense linear matrix operations on the left (cf.\ \eqref{eq:estimator} and Algorithm~\ref{ev_fwd_bck}).

The wall-clock benchmark results along with the hardware description can be found in Figures~\ref{fig:cpu-bench_b32,64} and~\ref{fig:gpu-bench_b128,256}.

\begin{figure}[tbp]
\centering
\captionsetup[subfigure]{labelformat=empty}
\subfloat[B=32]{\includegraphics[width=0.42\linewidth]{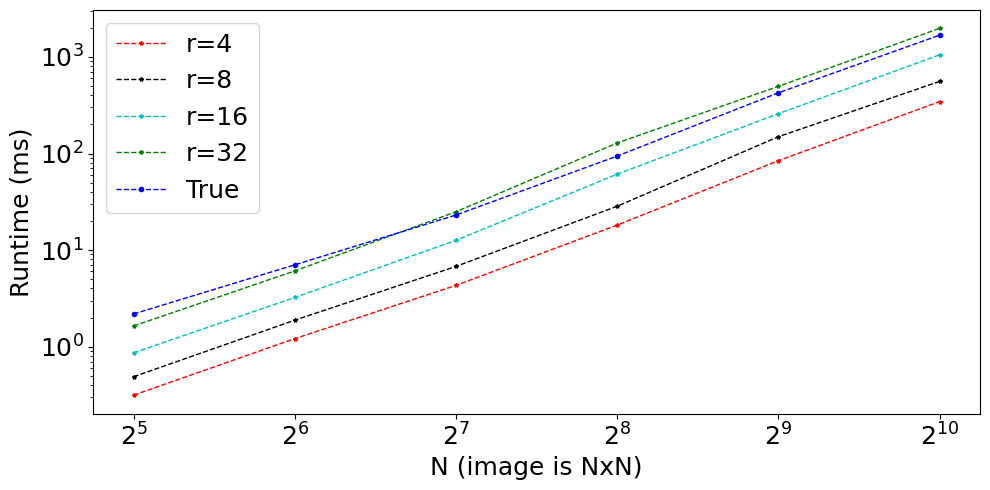}}
\subfloat[B=32]{\includegraphics[width=0.42\linewidth]{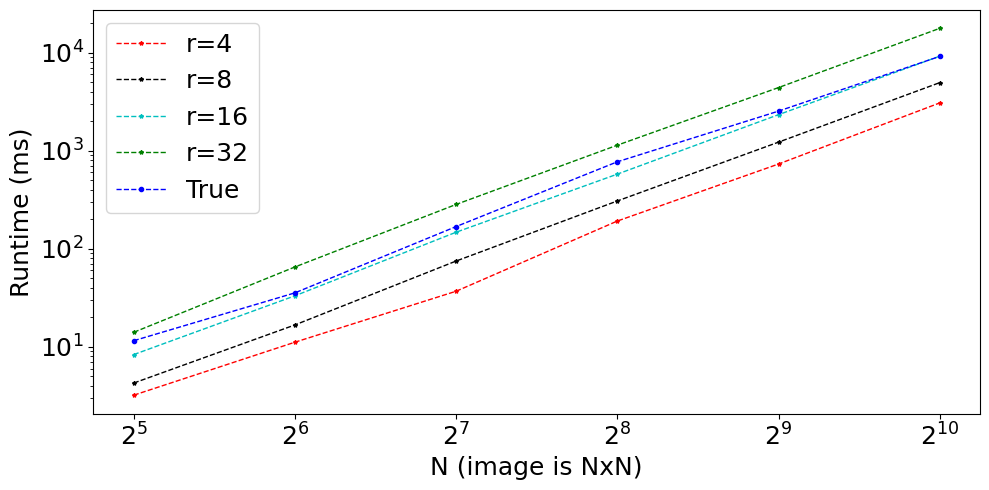}}
\\[-0.5ex]
\subfloat[B=64]{\includegraphics[width=0.42\linewidth]{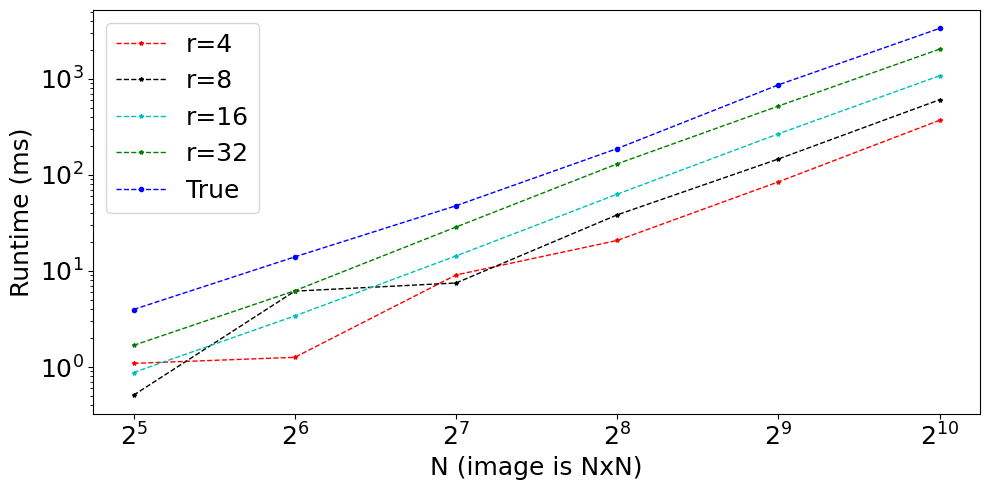}}
\subfloat[B=64]{\includegraphics[width=0.42\linewidth]{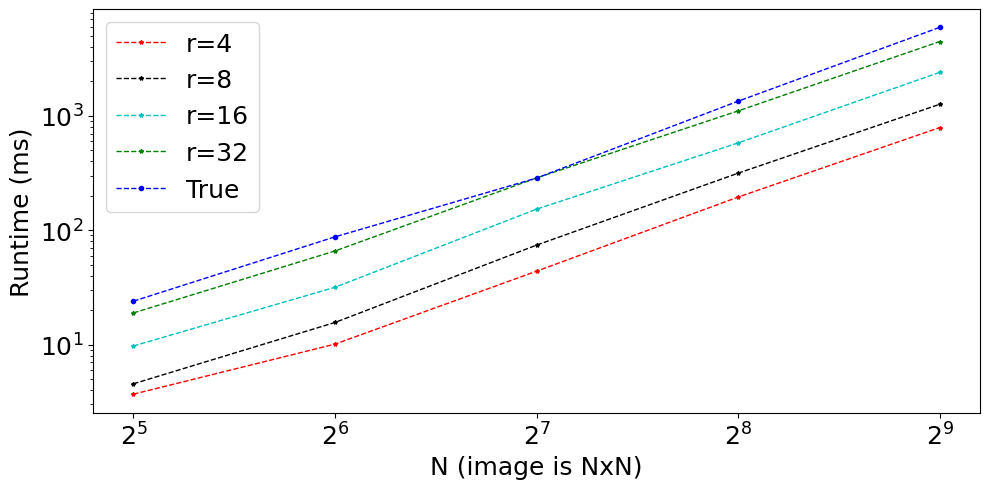}}
\\

\caption{CPU benchmark on an \emph{Intel(R) Xeon(R) CPU E3-1270 v6 @
3.80GHz} node. The left column contains the runtimes for 4 channels and the right column for 32 channels. For large images and batch sizes, our implementation achieves speedups of up to $10\times$. Additional CPU benchmarks for smaller batch sizes can be found in Figure~\ref{fig:cpu-bench_b_4_to_16}.}\label{fig:cpu-bench_b32,64}
\end{figure}

\begin{figure}[tbp]
\centering
\captionsetup[subfigure]{labelformat=empty}
\subfloat[B=128]{\includegraphics[width=0.42\linewidth]{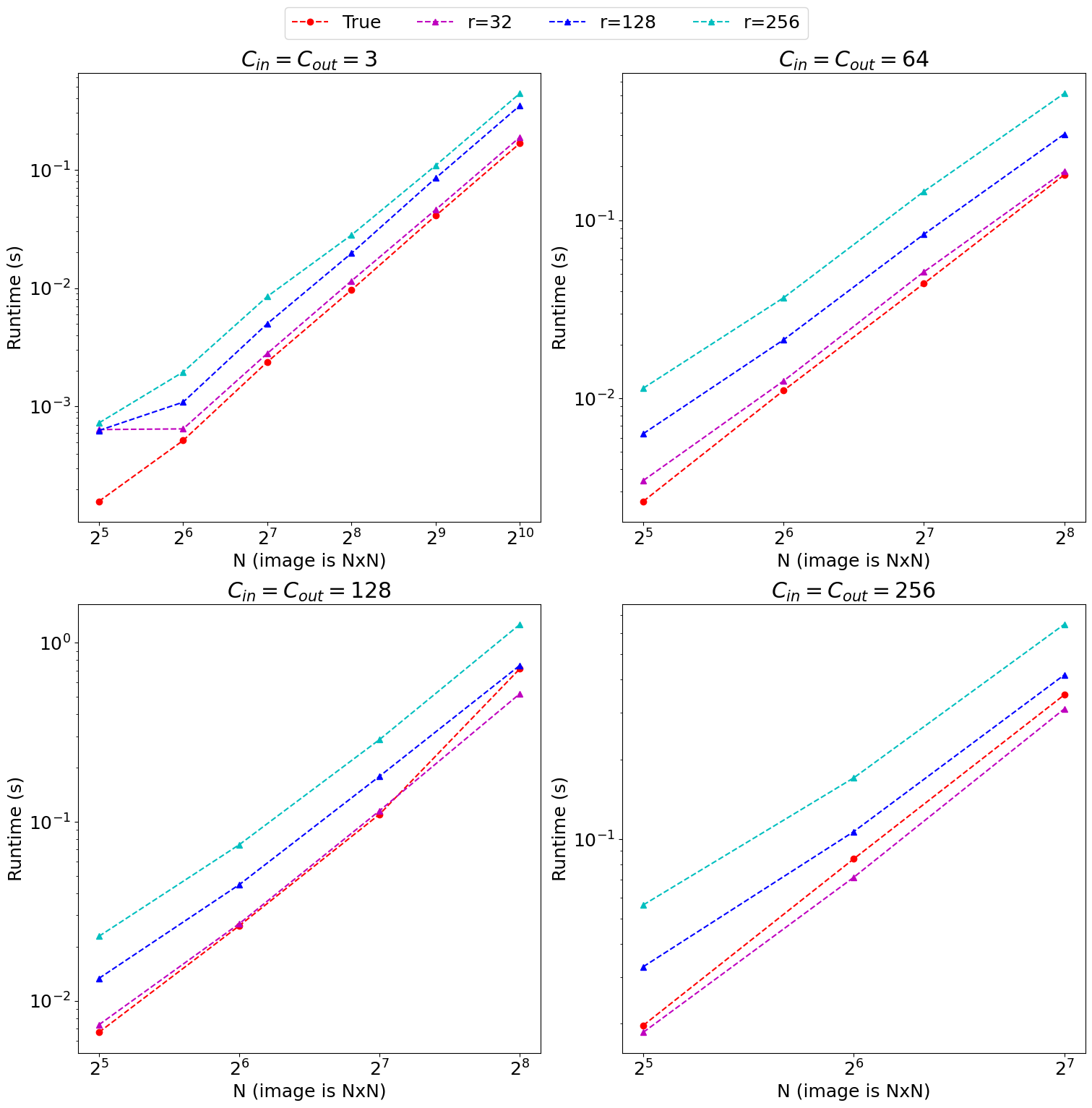}}
\subfloat[B=256]{\includegraphics[width=0.42\linewidth]{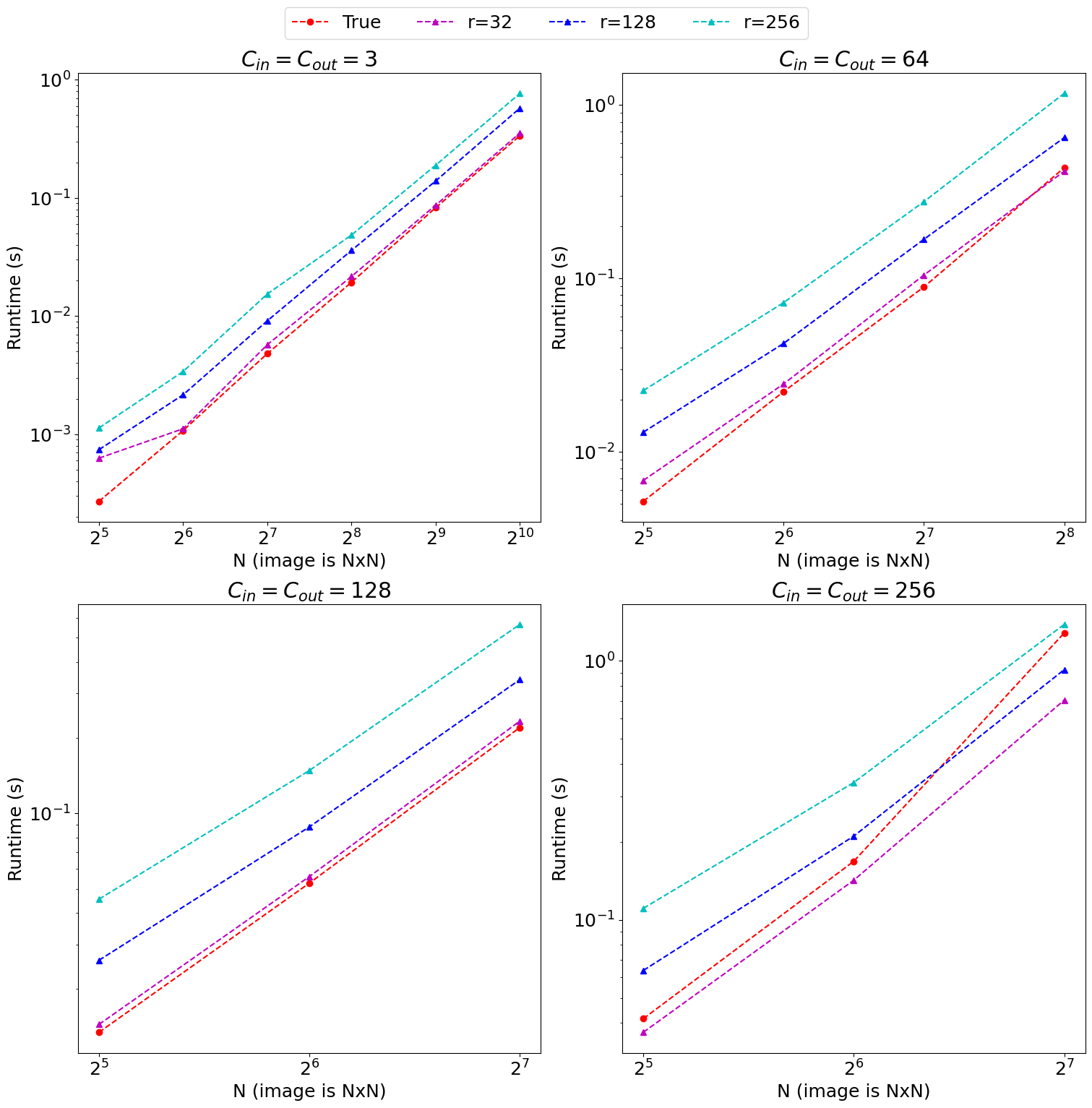}}
\\
\caption{GPU benchmark on a \emph{RTX 2000 Ada Generation}  for a single gradient for varying batch sizes $B$, image sizes $N$ and number of channels $C_{\text{in}}=C_{\text{out}}$. At larger scale---higher channel dimensions and image sizes---we match or exceed state-of-the-art CuDNN kernels; at smaller scale CuDNN remains faster. Additional GPU benchmarks for smaller batch sizes can be found in Figure~\ref{gpu-bench_b32,64}.}\label{fig:gpu-bench_b128,256}
\end{figure}

\subsection{Quantitative performance of XConv}
\label{subsec:exp_large_scale}

Having quantified XConv's memory savings and gradient fidelity, we now evaluate whether these approximations meaningfully affect end-to-end model performance. We study classification, generative modeling, super-resolution, inpainting, and segmentation to determine whether the impact is consistent across fundamentally different learning objectives.

\subsubsection{Classification}

\paragraph*{MNIST dataset}
\label{mnist}

We design a simple neural network and evaluate classification performance on the MNIST dataset, varying the batch size $B$ and the number of probing vectors $r$. Implementations in both Julia and Python are evaluated.

\begin{table}[t]
    \centering
    \caption{Test accuracy for varying batch sizes $B$ and number of probing vectors $r$ on the MNIST dataset.}
    \label{MNIST-batch}
    \small
    \begin{tabular}{cccc}
    \toprule
    & $B=64$ & $B=128$ & $B=256$\tabularnewline
    \midrule
    True & $0.9905$ & $0.9898$ & $0.9901$\tabularnewline
    $r=2$ & $0.9625$ & $0.9692$ & $0.9745$\tabularnewline
    $r=16$ & $0.9753$ & $0.9803$ & $0.9823$\tabularnewline
    $r=64$ & $0.9777$ & $0.9723$ & $0.9782$\tabularnewline
    $r=256$ & $0.9718$ & $0.9706$ & $0.9791$\tabularnewline
    \bottomrule
    \end{tabular}
\end{table}

We use the network architectures (detailed in Table~\ref{tab:MNIST-NW-jl} and \ref{tab:MNIST-NW-py} of Appendix~\ref{networks} for Julia and PyTorch, with training parameters listed in Appendix~\ref{training-parameters}) for varying batch sizes $B \in \{64,\,128,\,256\}$ and number of probing vectors $r \in \{2,\,16,\,64,\,256\}$. The network test accuracies for the Julia implementation, where the default convolutional layer implementation is replaced by \textbf{XConv.jl}, are listed in Table~\ref{MNIST-batch} for the default implementation and for our implementation where gradients of the convolutional layers are replaced by our approximations. The results show that our low-memory implementation remains competitive even for a small number of probing vectors, yielding a memory saving of about $2.5\times$. We note that accuracy does not increase monotonically with $r$; the small fluctuations across probing vector settings are within the range of training stochasticity and do not indicate systematic degradation.

We obtained the results listed in Table~\ref{MNIST-batch} with the ADAM~\citep{adam-Diederik} optimization algorithm. To further test robustness, we also train with stochastic line searches (SLS, \citep{vaswani2019painless}), which set learning rate parameters automatically at the cost of an extra gradient calculation. The full accuracy-vs.-epoch curves (Figure~\ref{mnist-sls} in Appendix~\ref{app:mnist_sls}) confirm that the induced randomness does not adversely affect the line searches: XConv achieves competitive accuracy for $r \geq 16$ across all batch sizes $B \in \{64, \ldots, 2048\}$, yielding a $2.5\times$ memory reduction.

\paragraph*{CIFAR-10 dataset}\label{cifar10}
We now compare standard stochastic gradient descent with approximate gradient methods based on multi-channel randomized trace estimation on the CIFAR-10 dataset.

To ensure a fair comparison, we fix the memory usage between the regular gradient, and the approximate gradients obtained by probing independently ("Indep." in green with $r=32$), multi-channel ("Multi." in red with $r=256$), and multi-channel with orthogonalization ("Multi-Ortho" in orange with $r=256$). The batch size for the approximate gradient examples is increased from $B=128$ to $B=256$ to reflect the smaller memory footprint. Results for the training/testing loss and accuracy are included in Figure~\ref{cifar-sgd}. The following observations can be made from these plots. First, there is a clear gap between the training/testing loss for the true and approximate gradients. This gap is also present in the training/testing accuracy, albeit relatively small. However, doubling the batch size effectively halves the training runtime.

\begin{figure}[t]
\centering
\includegraphics[width=\linewidth]{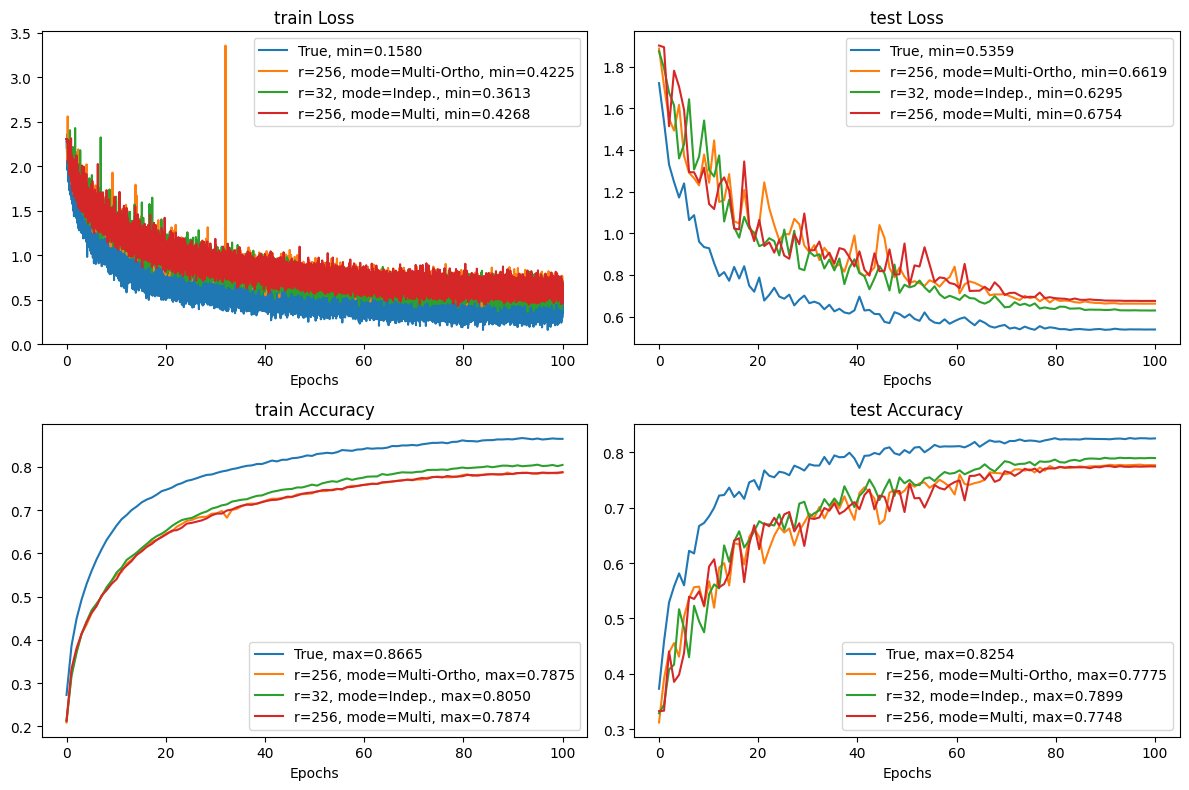}
\caption{CIFAR-10 training with equivalent memory budget comparing standard convolution ($B{=}128$) to XConv with independent probing ($r{=}32$, green), multi-channel ($r{=}256$, red), and multi-channel orthogonalized probing ($r{=}256$, orange) at $B{=}256$. Top row: training and test loss. Bottom row: training and test accuracy after 100 epochs.}\label{cifar-sgd}
\end{figure}

\paragraph{Facies classification benchmark}
\label{para:facies_clsn}

Building on the image-classification benchmarks above, we next evaluate XConv on a more demanding dense-prediction task: the seismic Facies Classification Benchmark~\citep{alaudah2019machine}, which assigns a geological-facies label to every pixel of a seismic section. The benchmark exhibits strong spatial structure, severe class imbalance, and large input dimensions, making it a challenging setting for convolutional architectures.

Our objective is to determine whether the memory savings provided by XConv can be obtained without substantially degrading dense-prediction performance. Following the protocol used throughout the paper, we fix the memory budget and vary the probing rank $r$. Figure~\ref{fig:facies_age_vs_r} reports the average gradient error as a function of $r$, which decreases as the probing rank increases; based on this tradeoff we select $r = 128$ for the downstream evaluation.

We report Pixel Accuracy (PA), Mean Class Accuracy (MCA), Frequency-Weighted IoU, and Mean IoU. Table~\ref{tab:facies_deconv} compares standard convolution and XConv: at $r = 128$, XConv attains a Mean IoU of $0.473$ against $0.538$ for the exact-gradient baseline, with the other metrics following the same trend. The residual degradation is consistent with the heightened sensitivity of dense-prediction tasks to gradient approximation discussed in our failure-regime analysis.

\begin{table}[h]
\centering
\small
\setlength{\tabcolsep}{6pt}
\caption{Facies Classification Benchmark: comparison between standard convolution and XConv $(r{=}128)$ across Pixel Accuracy (PA), Mean Class Accuracy (MCA), Frequency-Weighted IoU (FWIoU), and Mean IoU (mIoU). Higher values indicate better segmentation performance.}
\label{tab:facies_deconv}
\begin{tabular}{@{}lcccc@{}}
\toprule
 & \textbf{PA} & \textbf{MCA} & \textbf{FWIoU} & \textbf{mIoU} \\
\midrule
Convolution & 0.826 & 0.613 & 0.700 & 0.538 \\
XConv $(r{=}128)$ & 0.802 & 0.556 & 0.662 & 0.473 \\
\bottomrule
\end{tabular}
\end{table}

\begin{figure}[t]
\centering
\includegraphics[width=0.5\linewidth]{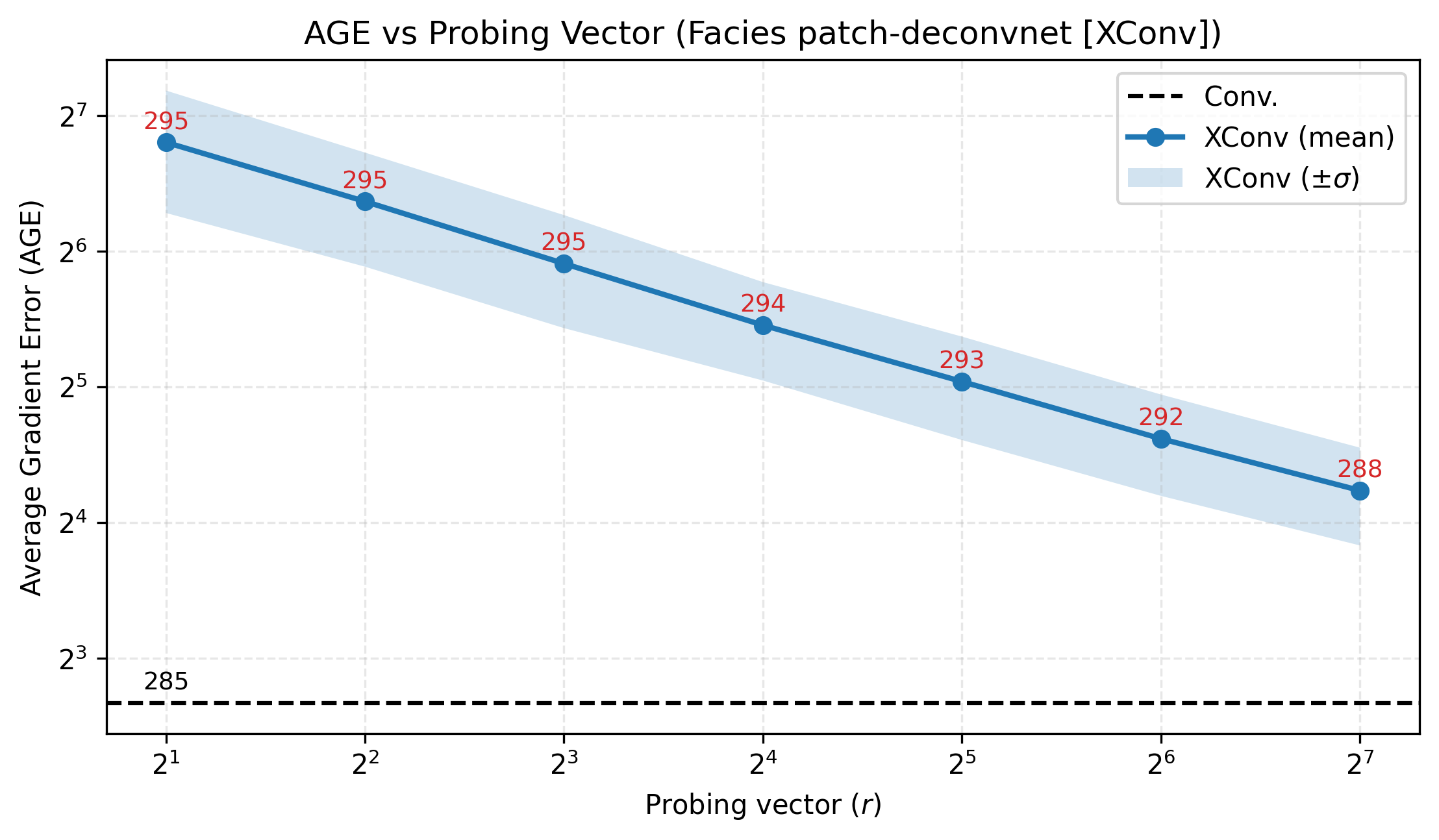}
\caption{Facies classification: average gradient error (AGE) versus the number of probing vectors $r$. Consistent with the theoretical analysis, the AGE decreases monotonically as the number of probing vectors $r$ increases, approaching the convolutional baseline. The maximum permissible batch size remains largely unchanged, because convolution layers account for only a small fraction ($25\%$) of the total memory footprint.}
\label{fig:facies_age_vs_r}
\end{figure}

\subsubsection{Generative modeling}
\label{subsubsec:generative_modeling}

While supervised classification provides a controlled setting to assess discriminative performance, generative modeling via diffusion places greater emphasis on optimization dynamics and is therefore more sensitive to gradient approximation. In this section, we evaluate whether the approximate gradients induced by XConv alter the generative performance of U-Net-based diffusion models.

\paragraph{MNIST dataset}
\label{para:gen_model_mnist}

We consider a U-Net-based diffusion model trained on the MNIST dataset and evaluate the performance under varying numbers of probing vectors $r \in \{32, 64, 128, 256\}$. All experiments use a fixed batch size of $768$, a learning rate of $10^{-3}$, and are trained for $200$ epochs.

Although the average gradient error (AGE) introduced by XConv can be controlled through the number of probing vectors, it is unclear whether these approximation errors will propagate to downstream generative performance. We therefore investigate whether the increase in AGE observed for a small number of probing vectors results in measurable degradation in sample quality.

Training and validation loss curves (Appendix~\ref{app:unet_training_results}, Figure~\ref{fig:sips_train_val_loss_plots}) show that XConv exhibits optimization dynamics similar to those of the exact convolution baseline across all tested probing ranks. Despite the use of approximate gradients, the training trajectories remain stable and closely follow those of standard convolution.

To quantitatively evaluate generation quality, we report Fr\'echet Inception Distance (FID) in Figure~\ref{fig:sip_fid_plots}. Consistent with the AGE analysis, increasing the probing rank improves generative performance and progressively reduces the gap to the convolutional baseline. At $r=256$, XConv achieves FID scores close to those obtained with exact convolution, indicating that the diffusion model remains robust to the gradient approximation introduced by XConv when sufficient probing vectors are used.

\begin{figure}[t]
\centering
\captionsetup[subfigure]{labelformat=empty}
\subfloat[standard]
{\includegraphics[width=0.18\linewidth]{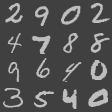}}
\hspace{0.01\linewidth}
\subfloat[$r = 32$]{\includegraphics[width=0.18\linewidth]{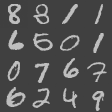}}
\hspace{0.01\linewidth}
\subfloat[$r = 64$]
{\includegraphics[width=0.18\linewidth]{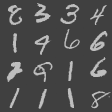}}
\hspace{0.01\linewidth}
\subfloat[$r = 128$]{\includegraphics[width=0.18\linewidth]{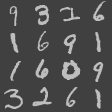}}
\hspace{0.01\linewidth}
\subfloat[$r = 256$]{\includegraphics[width=0.18\linewidth]{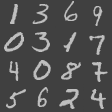}}
\caption{Generated samples from the standard network and XConv. Quantitative evaluation using Fr\'echet Inception Distance (FID) in Figure~\ref{fig:sip_fid_plots} shows that increasing the number of probing vectors progressively recovers the performance of the convolutional baseline, with $r=256$ achieving FID scores close to those of exact convolution.}\label{fig:sips_mnist_generations}
\end{figure}

Interestingly, these results hold even though the average gradient error of the diffusion U-Net rises by up to an order of magnitude relative to exact convolution (Figure~\ref{fig:ddpm_unet_age}), suggesting that U-Net-based diffusion models can tolerate moderate levels of gradient approximation error, particularly when the number of probing vectors is sufficiently large.

\begin{figure}[t]
\centering

  \includegraphics[width=0.5\linewidth]{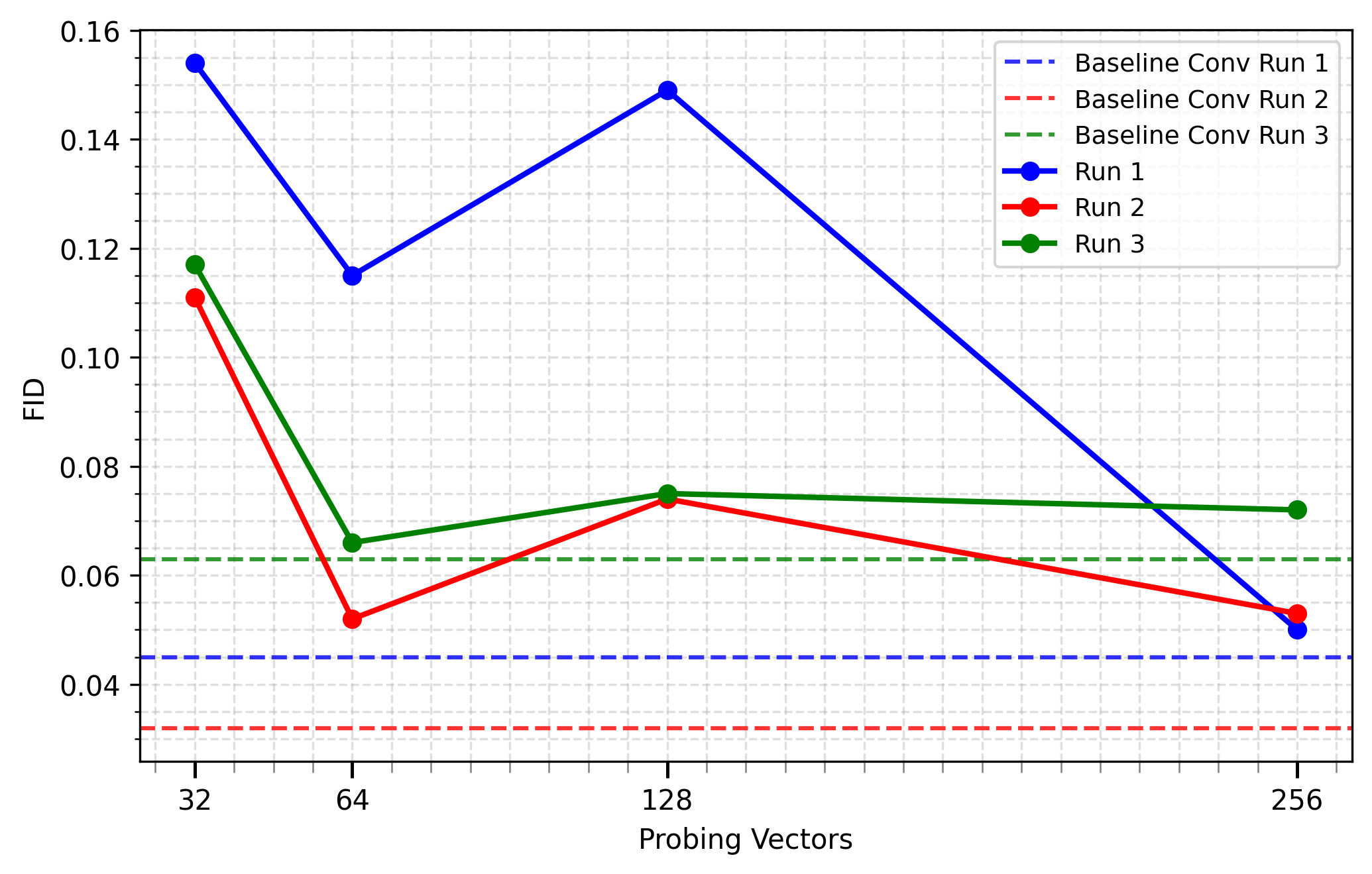}

\caption{Fr\'echet Inception Distance (FID; lower is better) across three runs for the U-Net diffusion model on MNIST. Dashed lines denote FID scores of the standard convolution baseline. At $r = 256$, XConv achieves FID comparable to standard convolution.}

\label{fig:sip_fid_plots}
\end{figure}

\paragraph{CIFAR-10 Dataset}
\label{para:gen_model_cifar10}

While MNIST provides an initial validation of XConv on generative modeling, its simplicity makes it difficult to assess the impact of gradient approximation on more realistic generative modeling tasks. We therefore evaluate XConv on CIFAR-10~\citep{krizhevsky2009learning}, which contains $60000$ RGB images of size $32 \times 32$ distributed across $10$ semantic classes.

Following the protocol used throughout the paper, we fix the memory budget and vary the number of probing vectors ($r$). Figure~\ref{fig:sip_cifar10_age_vs_r} reports the resulting average gradient error (AGE), which decreases as the number of probing vectors ($r$) increases. Based on this tradeoff, we select $r = 128$ for subsequent experiments.

\begin{figure}[t]
\centering

  \includegraphics[width=0.5\linewidth]{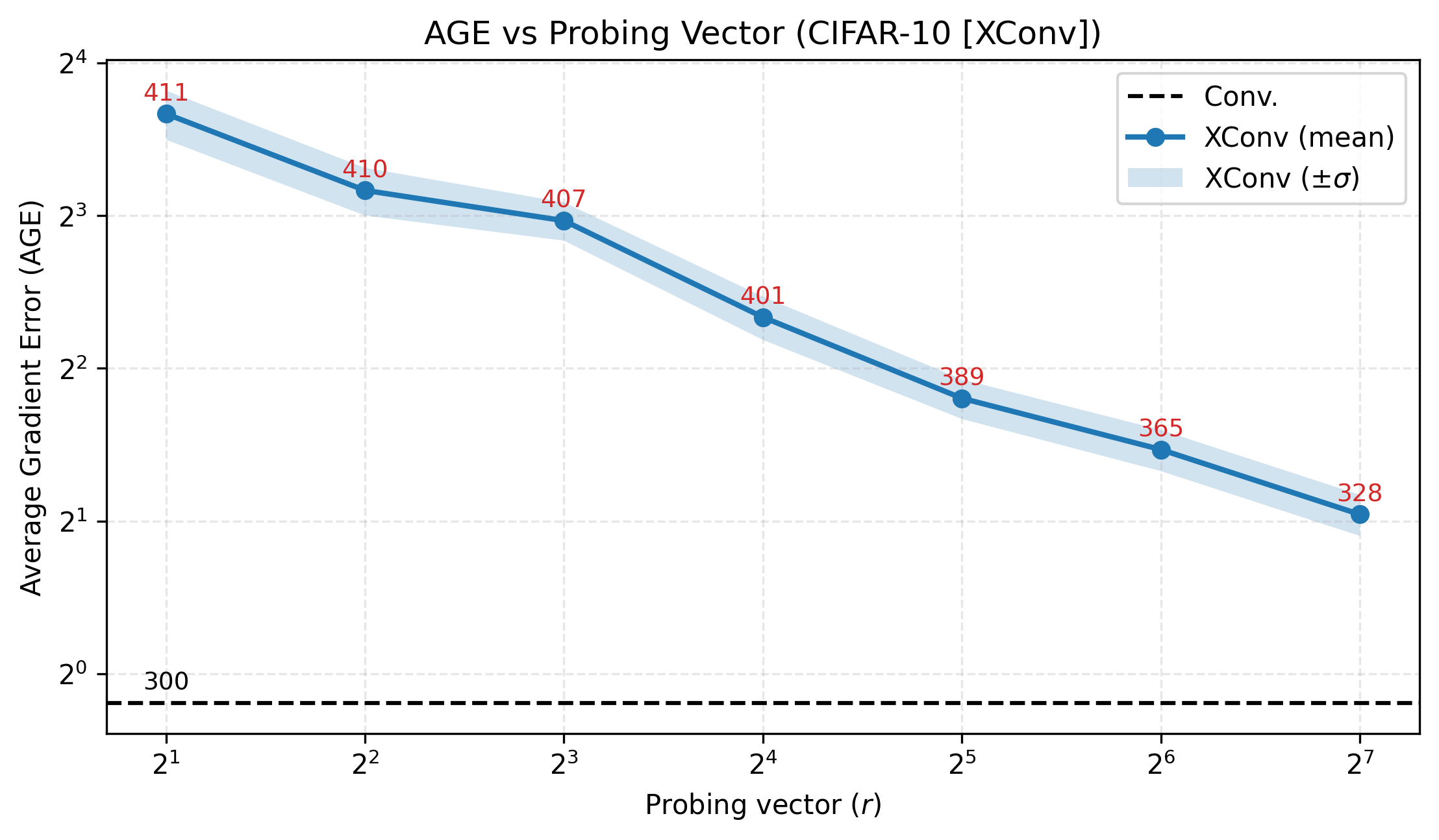}

\caption{CIFAR-10: average gradient error (AGE) vs.\ the number of probing vectors $r$. Under a fixed memory budget, the gradient fidelity of XConv increases with an increasing number of probing vectors.}

\label{fig:sip_cifar10_age_vs_r}
\end{figure}

We evaluate XConv-UNet using both qualitative samples and FID results. Figure~\ref{fig:sips_cifar10_generations} presents representative samples generated by the standard convolutional and XConv-based diffusion models. Consistent with the AGE analysis, the XConv-based model produces images of comparable visual quality. The corresponding FID scores ($38.208$ for convolution versus $40.175$ for XConv at $r = 128$) suggest that the approximation error introduced by XConv has only a limited impact on generative performance on CIFAR-10.

\begin{figure}[t]
\centering
\captionsetup[subfigure]{labelformat=empty}
\subfloat[Convolution(FID = 38.208)]
{\includegraphics[width=0.48\linewidth]{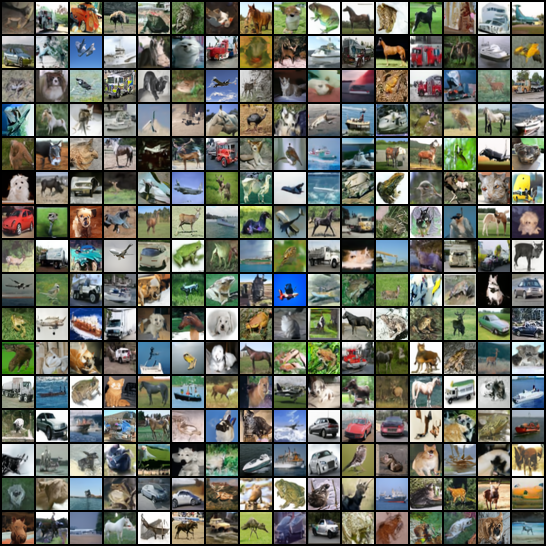}}
\hspace{0.01\linewidth}
\subfloat[XConv(r = 128)(FID = 40.175)]
{\includegraphics[width=0.48\linewidth]{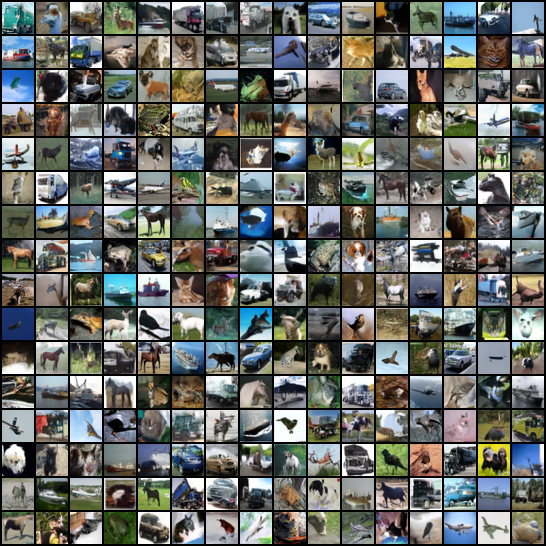}}
    \caption{Generated CIFAR-10 samples from the standard convolutional network and XConv ($r = 128$). The images generated by XConv-UNet attain an FID of $40.175$, modestly above that of the standard convolutional network ($\text{FID} = 38.208$).}\label{fig:sips_cifar10_generations}
\end{figure}

\paragraph{Seismic Dataset}
\label{para:gen_model_seismic}

Unlike the previous image-based settings, the seismic dataset offers a distinct and higher-dimensional benchmark for generative modeling. The Parihaka survey~\citep{Veritas2005, WesternGeco2012} is a three-dimensional subsurface volume imaging the seafloor and the underlying rock strata; we train a diffusion model to generate $128\times128$ seismic sections extracted from this volume, an input dimensionality well beyond that of MNIST or CIFAR-10. As in the preceding generative experiments, all convolutional layers of the U-Net are replaced by XConv.

Both the standard-convolution and XConv diffusion models generate seismic sections whose layered geological texture is visually consistent with the recorded data (Figure~\ref{fig:seismic_generations}). The gradient-fidelity behavior mirrors the other architectures---i.e., the average gradient error decreases steadily with the number of probing vectors toward the exact-convolution floor (Figure~\ref{fig:seismic_age_vs_r}), while XConv simultaneously admits a larger training batch than standard convolution under the same memory budget. These results indicate that the memory--fidelity tradeoff established on natural images carries over to higher-dimensional scientific-imaging data.

\begin{figure}[t]
\centering
\captionsetup[subfigure]{labelformat=empty}
\subfloat[True reflectivity]{\includegraphics[width=0.31\linewidth]{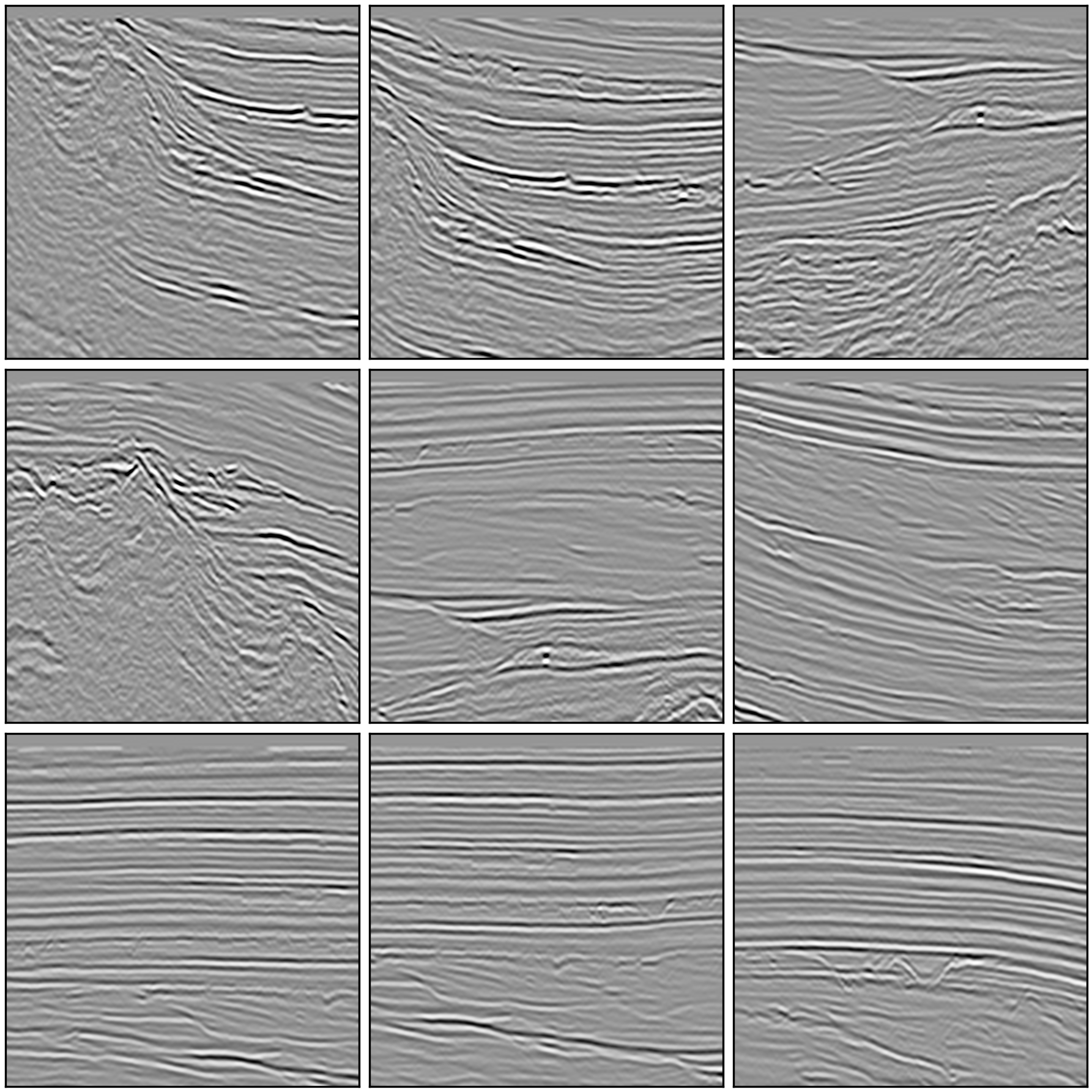}}
\hfill
\subfloat[Standard convolution]{\includegraphics[width=0.31\linewidth]{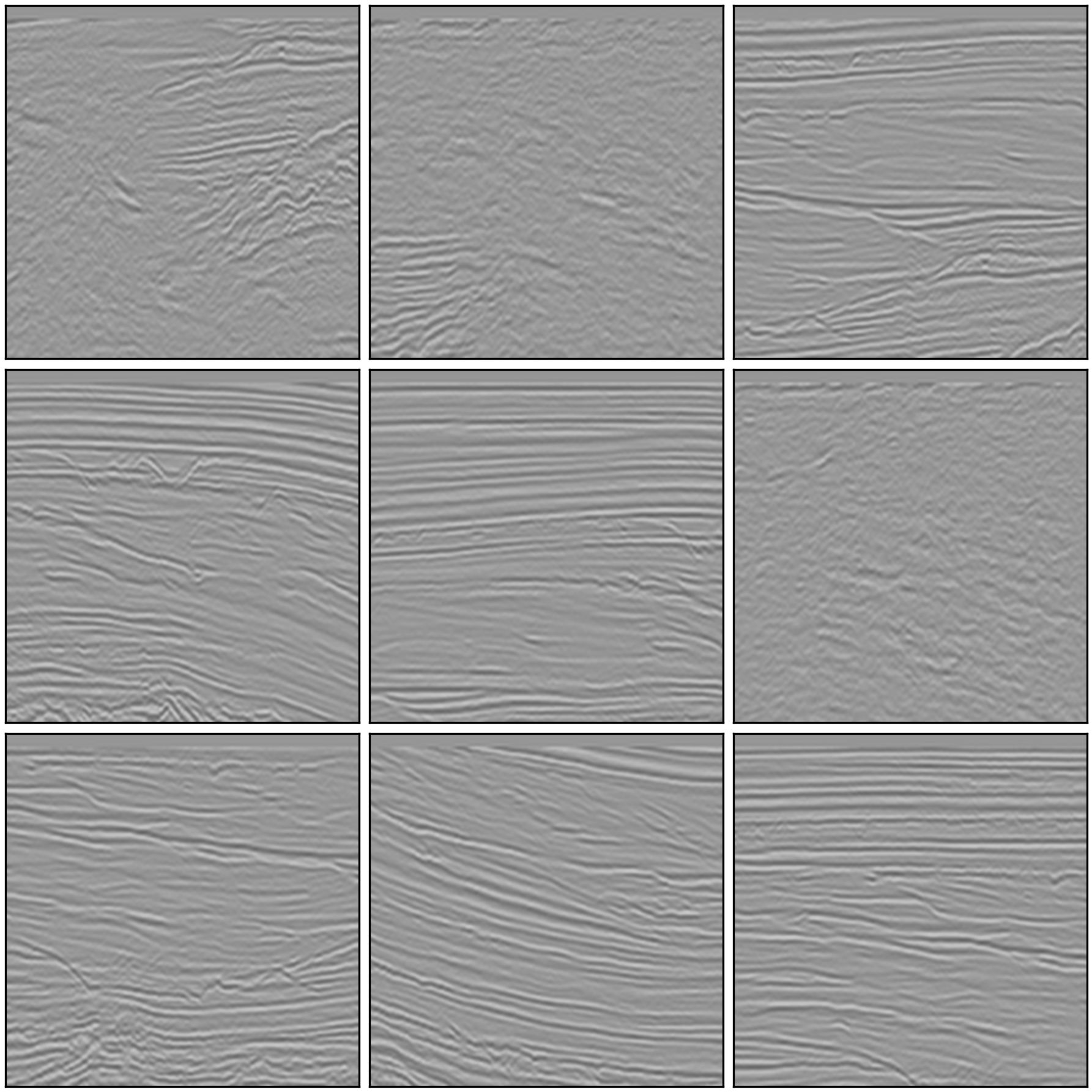}}
\hfill
\subfloat[XConv]{\includegraphics[width=0.31\linewidth]{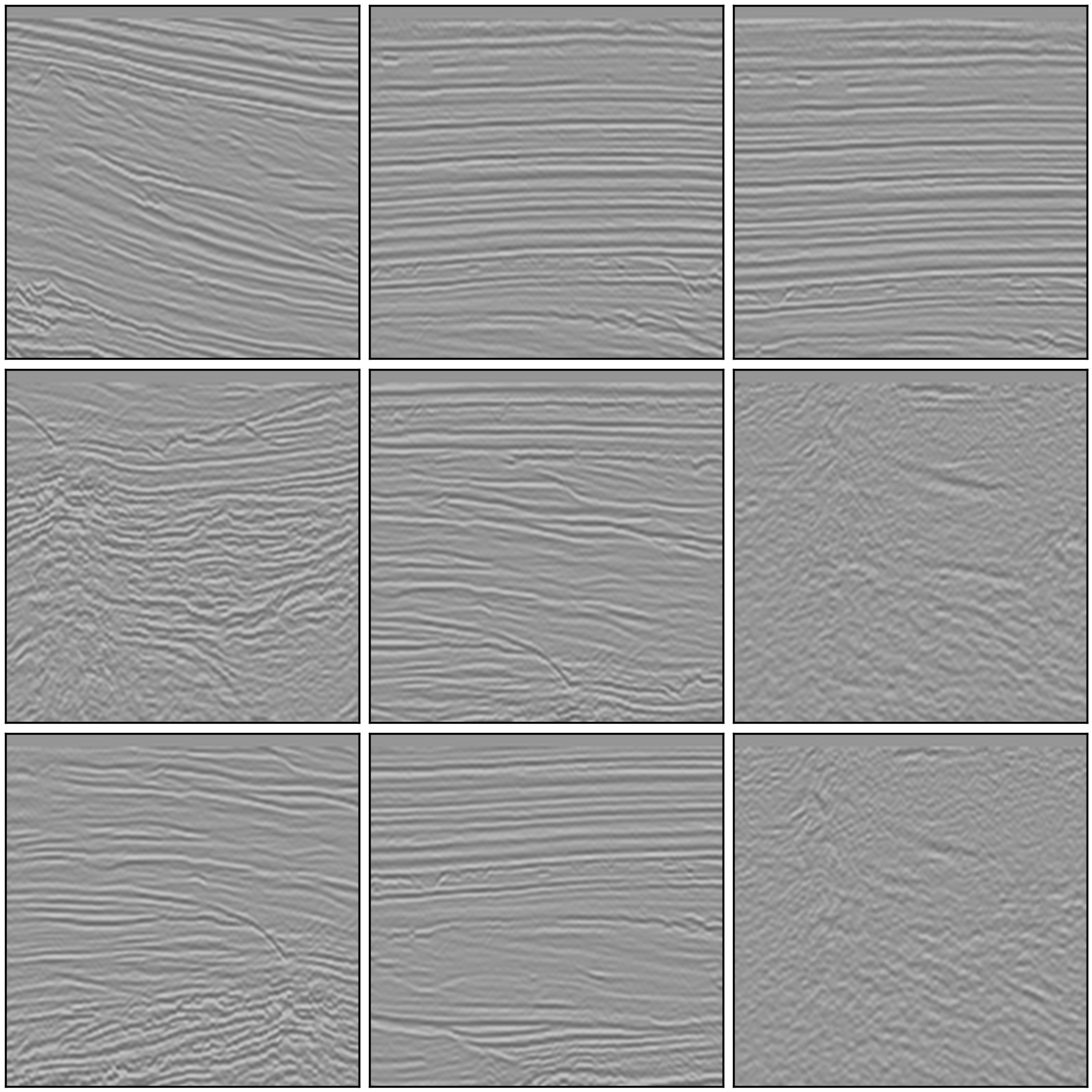}}
\caption{Recorded $128\times128$ Parihaka seismic sections (true reflectivity, left) and sections generated by the standard-convolution (center) and XConv (right) diffusion models. Both models reproduce the layered geological texture of the recorded data.}
\label{fig:seismic_generations}
\end{figure}

\begin{figure}[t]
\centering
\includegraphics[width=0.55\linewidth]{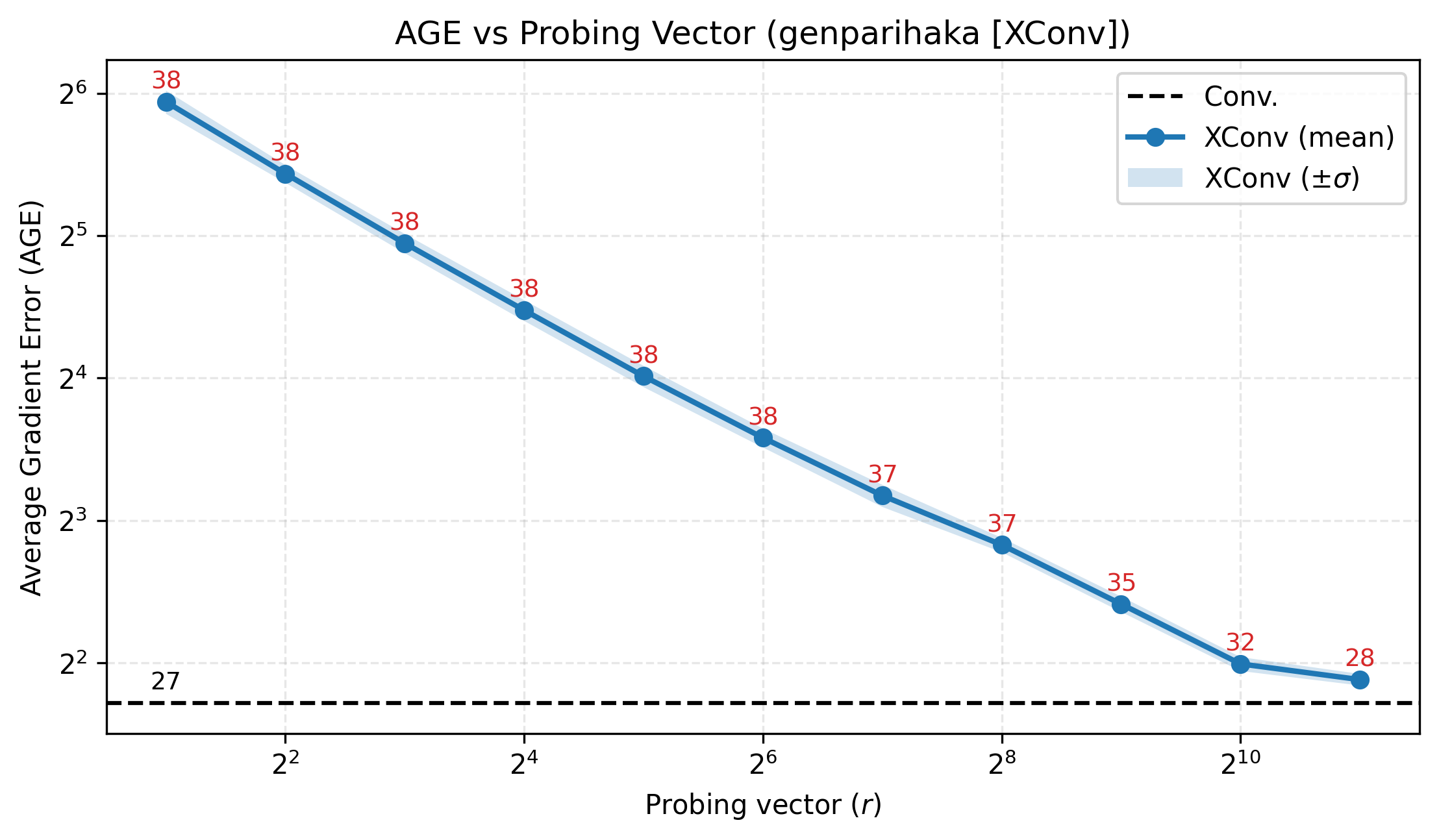}
\caption{Average gradient error versus the number of probing vectors $r$ for the seismic diffusion model. XConv (mean $\pm\,\sigma$) approaches the exact-convolution floor (dashed) as $r$ increases; the annotated values are the largest batch size that fits the memory budget, which is larger for XConv than for standard convolution. The maximum permissible batch size remains nearly unchanged across different numbers of probing vectors because convolutional layers constitute only $22.5\%$ of the overall network.}
\label{fig:seismic_age_vs_r}
\end{figure}

These experiments suggest that XConv-based diffusion models can accommodate moderate gradient approximation while largely preserving convergence and sample quality, provided the number of probing vectors is sufficiently large.

\subsubsection{Super-resolution and inpainting}
\label{subsubsec:super_resolution}

To evaluate performance on inverse problems, we adopt the Deep Image Prior (DIP) framework~\citep{ulyanov2018deep}, which leverages the inductive bias of convolutional networks to recover structure from corrupted observations without external training data. Since DIP derives its effectiveness from the structure of the convolutional network itself, it provides a stringent test of whether XConv preserves the optimization behavior of standard convolutions. 

Our goal is to determine whether the memory savings offered by XConv can be achieved without substantially degrading reconstruction quality in DIP-based inverse problems. We consider both super-resolution and inpainting under fixed memory budgets.

Unlike classification or segmentation, DIP relies entirely on the optimization trajectory of a randomly initialized network. Consequently, the stochastic gradient approximation introduced by XConv may alter the implicit regularization that enables successful image reconstruction. It is therefore unclear whether the reconstruction quality achieved by standard convolutions can be maintained under approximate gradient computation.

Figure~\ref{subfig:dip_sr_peak_mem} in the appendix illustrates the tradeoff between reconstruction quality and peak memory consumption for super-resolution. Increasing the number of probing vectors reduces gradient approximation error and improves reconstruction fidelity, while also increasing memory consumption. Based on this tradeoff, we select $r = 256$ for super-resolution experiments. Quantitative PSNR results in Table~\ref{tab:psnr_super_res} and qualitative examples in Figure~\ref{fig:dip_super_resolution} show that XConv achieves reconstruction quality close to that of standard convolution.

\begin{table}[t]
\centering

\begin{subtable}[t]{0.48\linewidth}
\centering
\small
\caption{Super-resolution PSNR (dB).}
\label{tab:psnr_super_res}
\begin{tabular}{lcccc}
\toprule
Image & Conv & XConv($r=256$) & Bicubic & Bilinear \\
\midrule
Bird & 29.895 & 28.951 & 28.848 & 27.354 \\
Head & 29.464 & 29.495 & 28.436 & 28.471 \\
\bottomrule
\end{tabular}
\end{subtable}
\hfill
\begin{subtable}[t]{0.48\linewidth}
\centering
\small
\caption{Inpainting PSNR (dB).}
\label{tab:psnr_inpaint}
\begin{tabular}{lccc}
\toprule
Method & Kate & Vase & Memory(MiB) $\downarrow$ \\
\midrule
Conv & 38.090 & 28.356 & 3351.5 \\
XConv ($r=512$) & 38.921 & 28.788 & 2719.2 (-18.9\%) \\
\bottomrule
\end{tabular}
\end{subtable}

\caption{
Quantitative Deep Image Prior results.
(a) Super-resolution performance measured using PSNR (dB).  
(b) Inpainting results on the Kate and Vase images. XConv ($r=512$) reduces peak memory consumption by  $\approx19\%$ relative to standard convolution while maintaining competitive reconstruction quality. These results demonstrate that XConv can provide meaningful memory savings in image restoration tasks while maintaining competitive or better PSNR.}
\label{tab:dip_results}
\end{table}

\begin{figure}[t]
\centering
\captionsetup[subfigure]{labelformat=empty}
\subfloat[]{\includegraphics[width=0.2\linewidth]{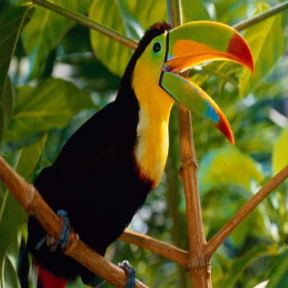}}
\hspace{0.01\linewidth}
\subfloat[]{\includegraphics[width=0.2\linewidth]{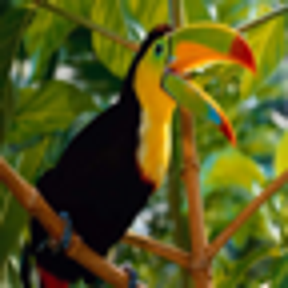}}
\hspace{0.01\linewidth}
\subfloat[]{\includegraphics[width=0.2\linewidth]{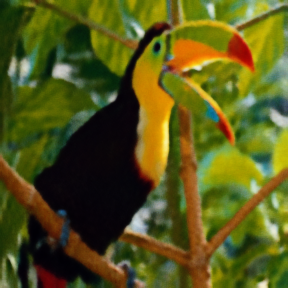}}
\hspace{0.01\linewidth}
\subfloat[]{\includegraphics[width=0.2\linewidth]{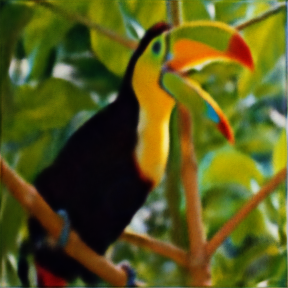}}
\\
\subfloat[Original Image]{\includegraphics[width=0.2\linewidth]{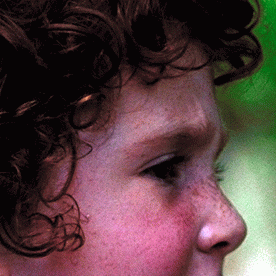}}
\hspace{0.01\linewidth}
\subfloat[Low-Res Image]{\includegraphics[width=0.2\linewidth]{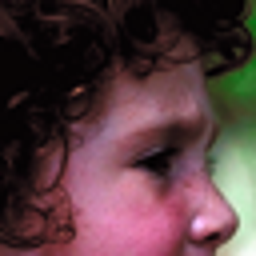}}
\hspace{0.01\linewidth}
\subfloat[DIP]{\includegraphics[width=0.2\linewidth]{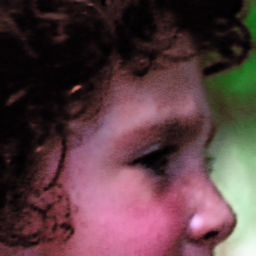}}
\hspace{0.01\linewidth}
\subfloat[DIP $(r = 256)$]{\includegraphics[width=0.2\linewidth]{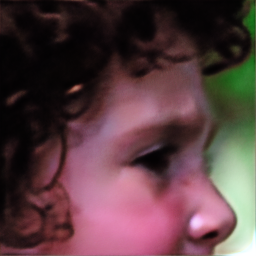}
}
\caption{Qualitative super-resolution results comparing Conv and XConv. For both \textit{head} and \textit{bird}, XConv (r = 256) reconstructed images preserve the dominant edges, textures, and fine-scale structures recovered by the convolutional baseline. \label{fig:dip_super_resolution}}
\end{figure}

For inpainting, Figure~\ref{subfig:dip_inpainting} in the appendix shows that XConv remains substantially more memory-efficient than standard convolution across a broad range of probing vectors. We therefore select $r = 512$, the largest number of probing vectors that remains within the convolutional memory budget. The qualitative inpainting results in Figure~\ref{fig:dip_inpainting} and the quantitative results in Table~\ref{tab:psnr_inpaint} demonstrate that XConv maintains competitive reconstruction quality while reducing peak memory consumption by approximately 19\%.

Across both inverse problems, larger numbers of probing vectors consistently improve reconstruction fidelity, mirroring the reduction in average gradient error observed throughout the paper. These results suggest that XConv preserves the optimization behavior required for DIP-based image restoration while providing meaningful memory savings relative to exact convolution.

\begin{figure}[t]
\centering
\captionsetup[subfigure]{labelformat=empty}
\subfloat[]{\includegraphics[width=0.2\linewidth]{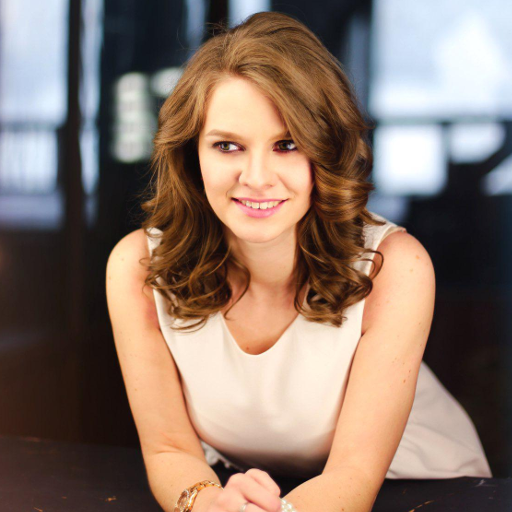}}
\hspace{0.01\linewidth}
\subfloat[]{\includegraphics[width=0.2\linewidth]{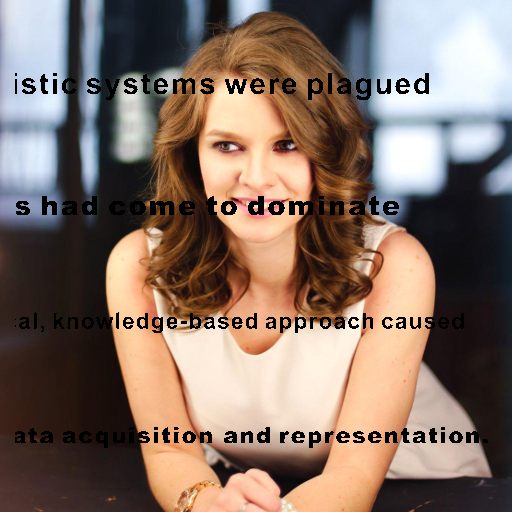}}
\hspace{0.01\linewidth}
\subfloat[]{\includegraphics[width=0.2\linewidth]{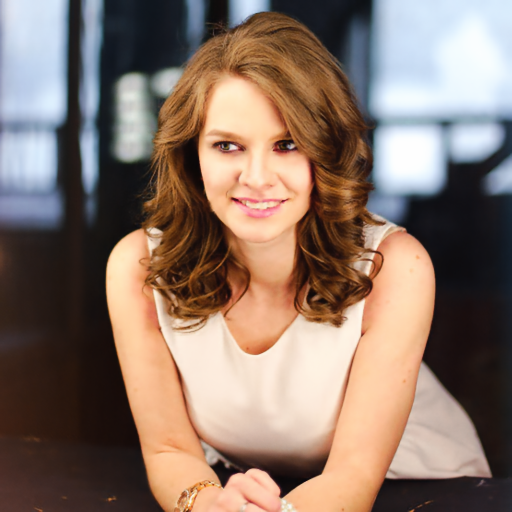}}
\hspace{0.01\linewidth}
\subfloat[]{\includegraphics[width=0.2\linewidth]{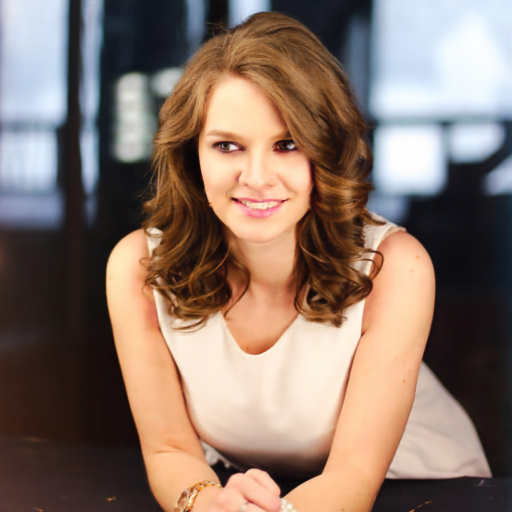}}
\\
\subfloat[Original Image]{\includegraphics[width=0.2\linewidth]{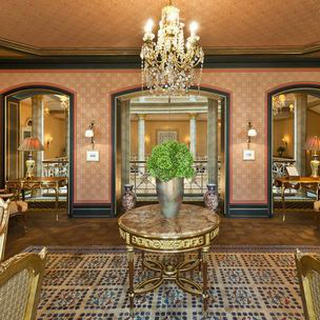}}
\hspace{0.01\linewidth}
\subfloat[Corrupted Image]{\includegraphics[width=0.2\linewidth]{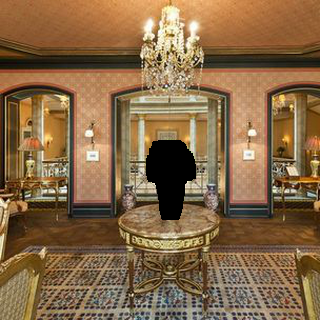}}
\hspace{0.01\linewidth}
\subfloat[DIP]{\includegraphics[width=0.2\linewidth]{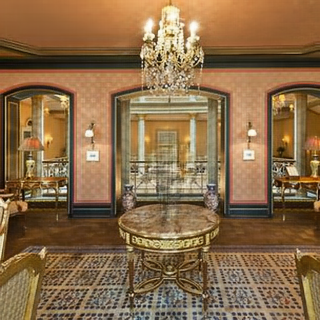}}
\hspace{0.01\linewidth}
\subfloat[DIP $(r = 512)$]{\includegraphics[width=0.2\linewidth]{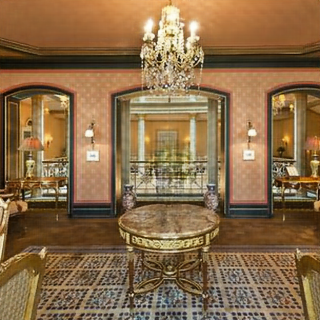}
}
\caption{Qualitative inpainting results comparing Conv and XConv. XConv $(r=512)$ reconstructs the missing regions while preserving the dominant image structure recovered by standard convolution.}\label{fig:dip_inpainting}
\end{figure}

\subsubsection{Segmentation}
\label{subsubsec:segmentation}

Beyond image-level prediction and generative modeling, segmentation requires accurate dense prediction at every pixel and is therefore substantially more sensitive to gradient perturbations. To assess XConv in this setting, we consider gland segmentation on the GlaS dataset~\citep{sirinukunwattana2017gland}, which contains 165 histological images at a resolution of $256\times 256$.

Our objective is to determine whether the memory savings offered by XConv can be achieved without substantially degrading segmentation performance. We adopt TriConvUNeXt~\citep{Ma2024TriConvUNeXtAP}, a lightweight segmentation architecture combining dilated and deformable convolutions, and replace all standard convolution layers with XConv while keeping all hyperparameters fixed.

As discussed throughout the paper, reducing the probing rank $r$ decreases memory consumption but increases the average gradient error (AGE). While the resulting approximation error can be quantified analytically, it is unclear whether these gradient perturbations will propagate to downstream dense-prediction performance. Figure~\ref{subfig:triconv_age} reports AGE as a function of the number of probing vectors $r$. Consistent with the theoretical analysis, increasing $r$ reduces both the magnitude and variance of the gradient approximation error. The maximum permissible batch size remains unchanged across varying probing vectors, as the convolutional layers constitute 23.63\% of the network and therefore contribute a relatively small fraction of the total memory consumption. Figure~\ref{subfig:triconv_peak_mem} shows the corresponding peak memory consumption. While larger probing ranks improve gradient fidelity, they also increase memory requirements, resulting in a clear accuracy--memory tradeoff. Notably, the marginal reduction in AGE diminishes beyond $r=1024$, whereas memory consumption continues to increase.

\begin{figure}[t]
    \centering

    \begin{subfigure}[t]{0.48\linewidth}
        \centering
        \includegraphics[width=\linewidth]{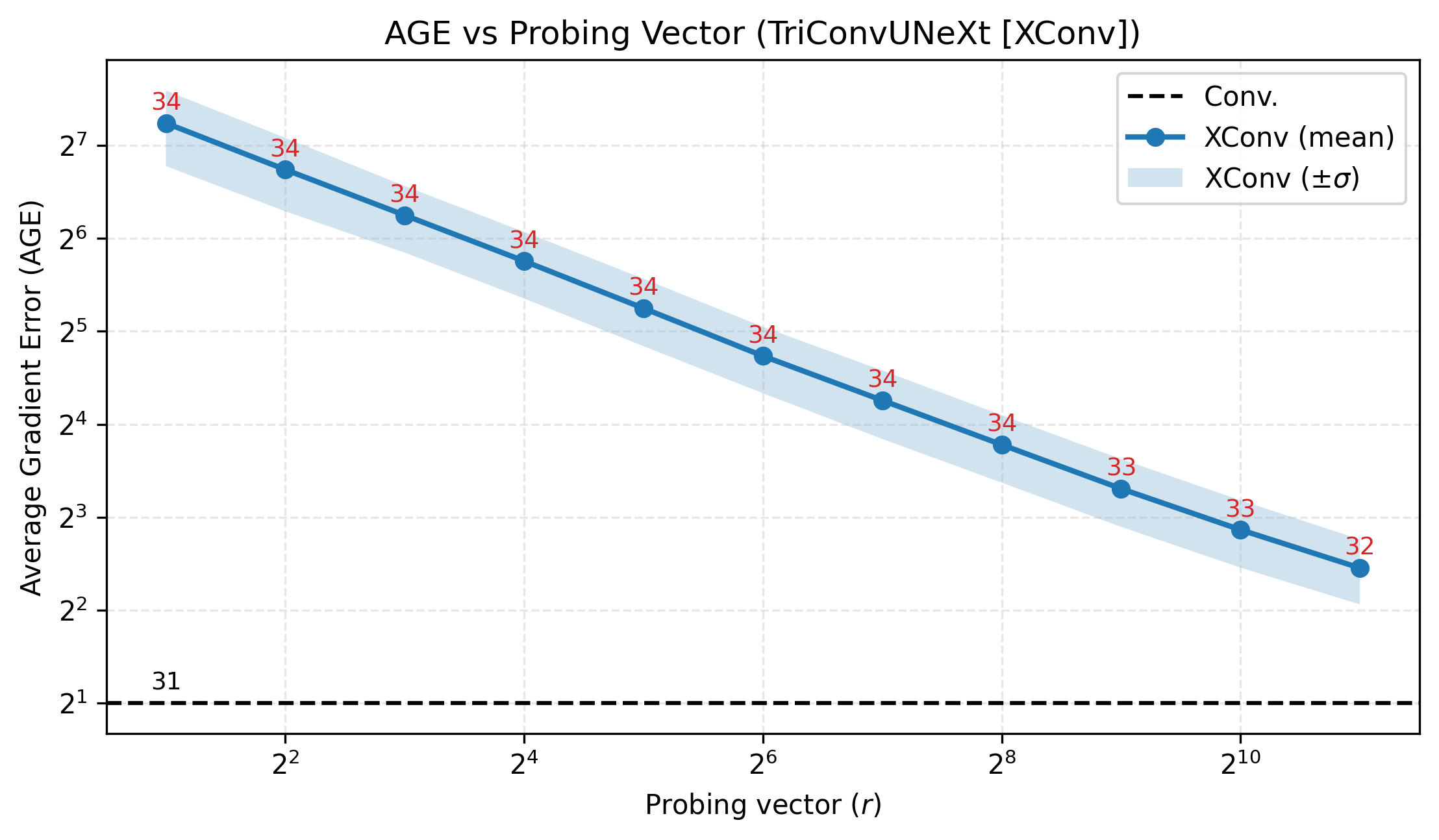}
        \caption{AGE vs.\ probing vectors $(r)$}
        \label{subfig:triconv_age}

    \end{subfigure}
        \begin{subfigure}[t]{0.48\linewidth}
        \centering
        \includegraphics[width=\linewidth]{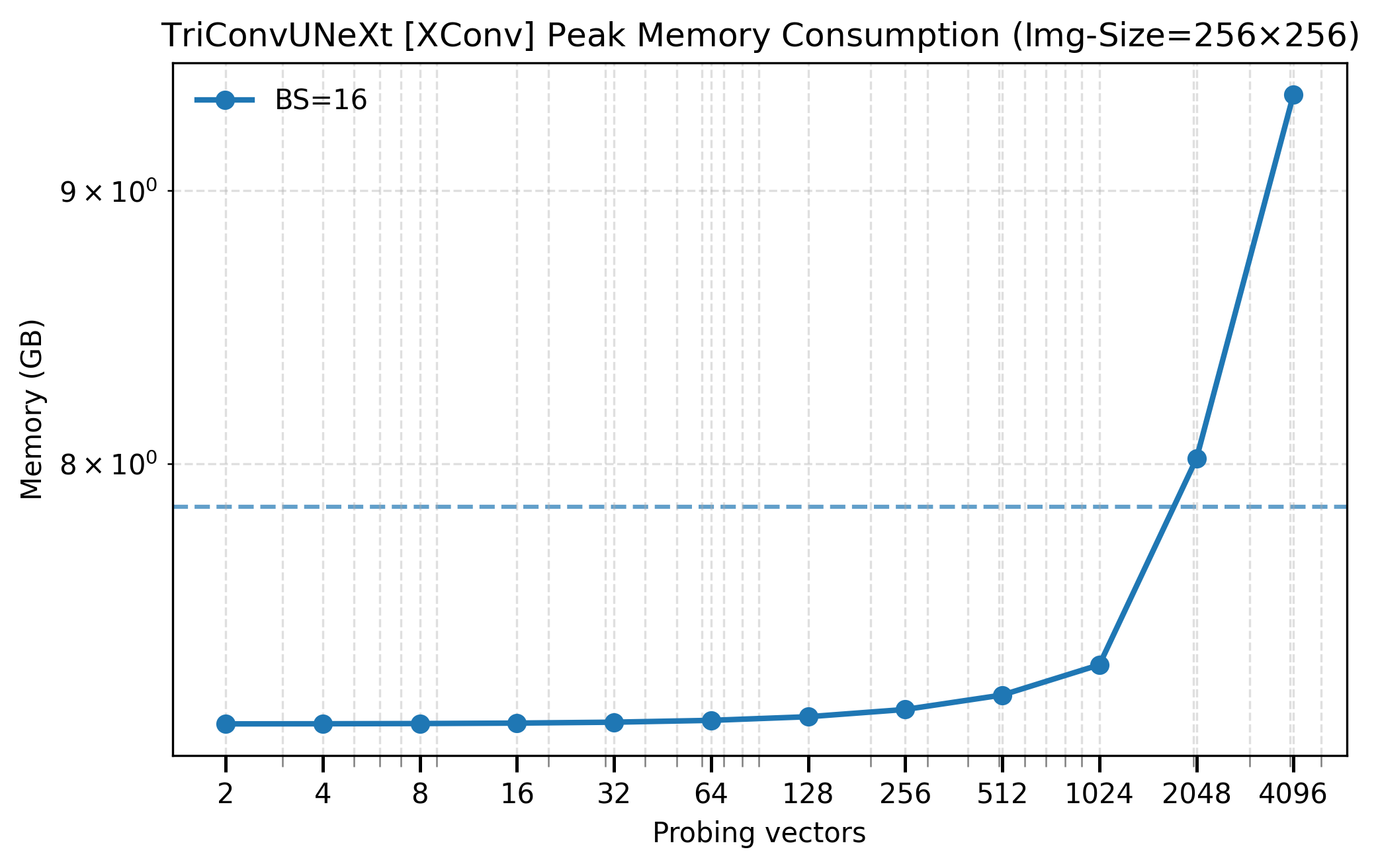}
        \caption{Peak memory vs.\ probing vectors $(r)$.}
        \label{subfig:triconv_peak_mem}
    \end{subfigure}
    \vspace{0.6em}

    \begin{subfigure}[t]{\linewidth}
        \centering
        \includegraphics[width=0.48\linewidth]{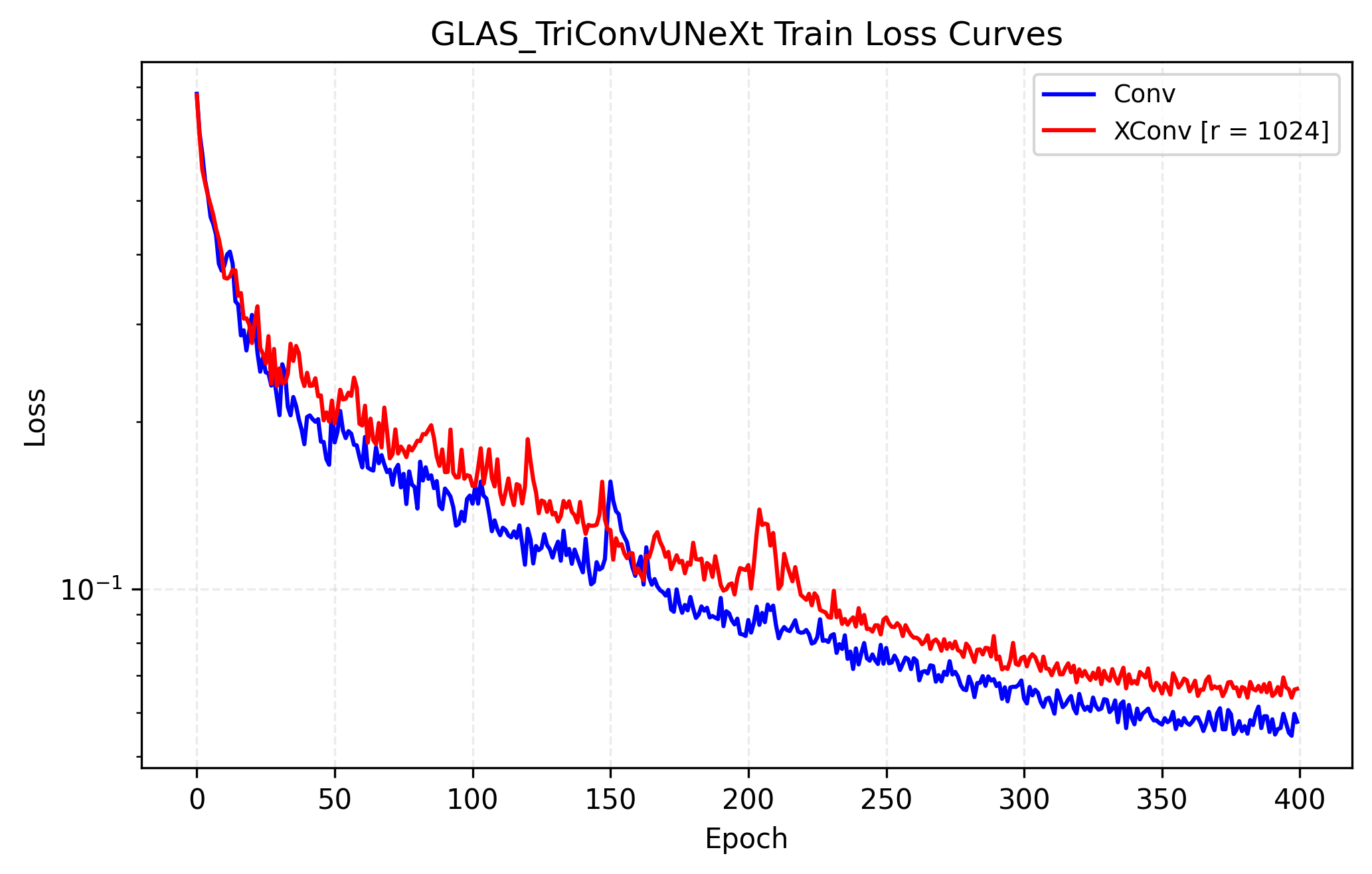}
        \hfill
        \includegraphics[width=0.48\linewidth]{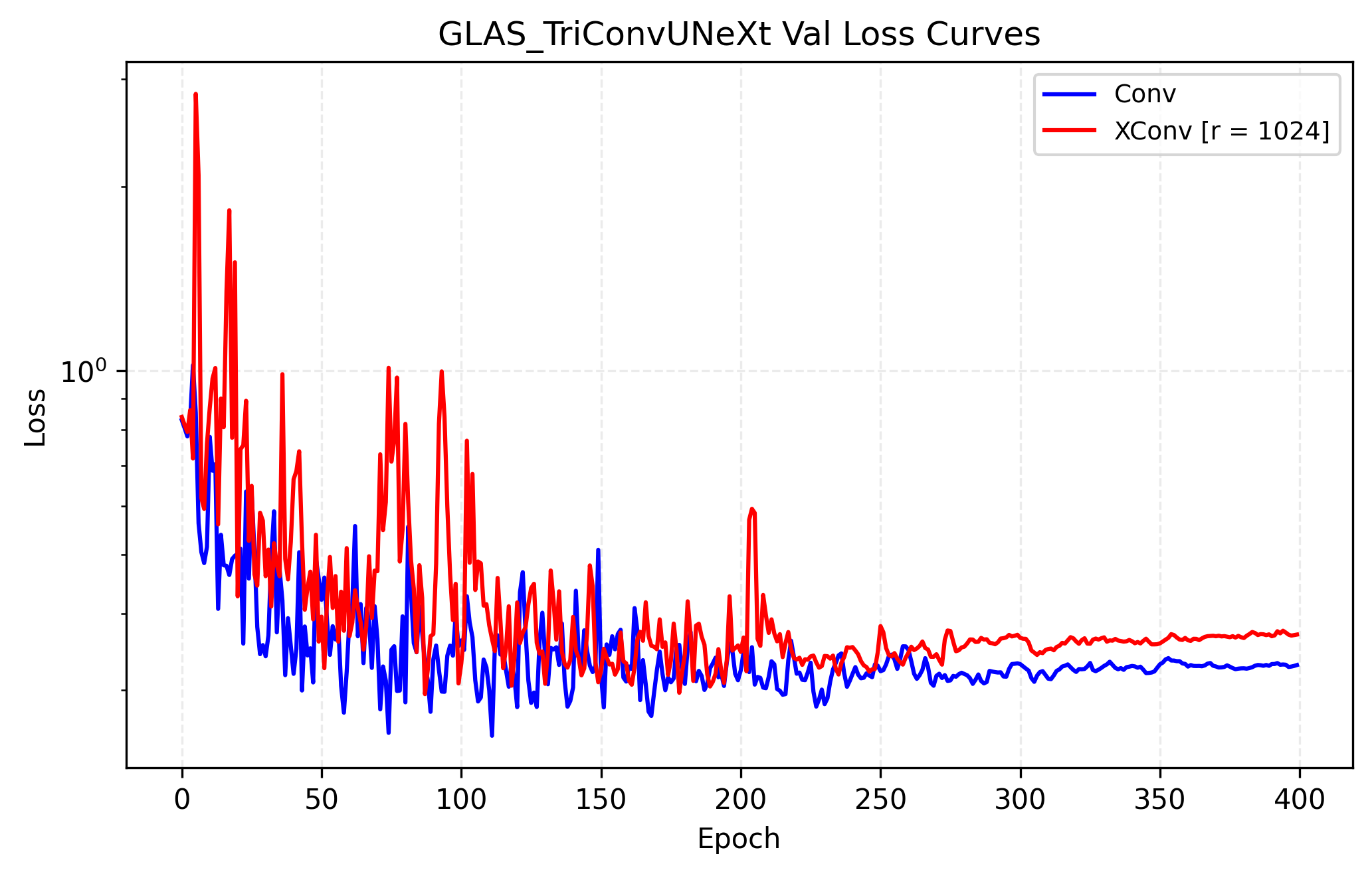}
        \caption{Train (left) and validation (right) loss curves}
        \label{subfig:triconv_train_val_loss_curves}
    \end{subfigure}

    \caption{AGE (top-left), peak memory consumption (top-right), and train-validation loss curves of the TriConvUNeXt model. The horizontal dashed line in both top graphs denotes standard convolution. Increasing the number of probing vectors $r$ reduces AGE while increasing memory consumption. The train and validation losses closely follow those of standard convolution, indicating that XConv preserves the optimization dynamics of the baseline model.}
    \label{fig:triconv_age_peak_mem_results}
\end{figure}

Based on this tradeoff, we select $r = 1024$ for subsequent segmentation experiments. Quantitative results in Table~\ref{tab:triconv_dice_results} show that XConv achieves a Dice score of $0.900$ compared to $0.905$ for standard convolution, while classification accuracy decreases only marginally from $0.904$ to $0.898$. These differences are below $1\%$, indicating that XConv largely preserves segmentation performance despite replacing exact convolution gradients with randomized trace-based approximations.

Figure~\ref{fig:triconv_segmentation} provides representative qualitative predictions. The segmentation masks produced by XConv closely match those obtained using standard convolution and accurately recover the dominant gland structures present in the ground-truth annotations.

The GlaS experiments demonstrate that XConv can be successfully applied to dense prediction tasks without substantial loss in segmentation accuracy. More importantly, the downstream performance trend mirrors the AGE analysis: larger numbers of probing vectors reduce gradient approximation error and recover performance closer to the convolutional baseline. These results provide empirical evidence that the gradient fidelity captured by AGE translates directly into improved segmentation performance while retaining the memory benefits of XConv.

\begin{figure}[t]
\centering
\setlength{\tabcolsep}{1.5pt}

\resizebox{\linewidth}{!}{%
\begin{tabular}{c c c c c}

& Original & GT Segmentation & Convolution & XConv ($r{=}1024$) \\

\rotatebox{90}{\quad testA\_11} &
\includegraphics{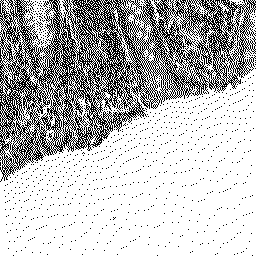} &
\includegraphics{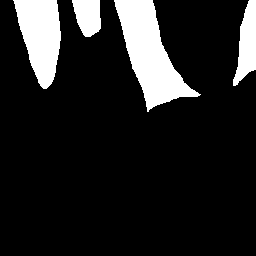} &
\includegraphics{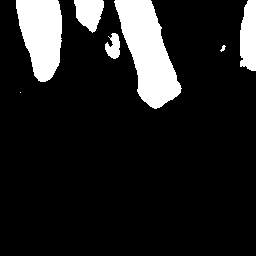} &
\includegraphics{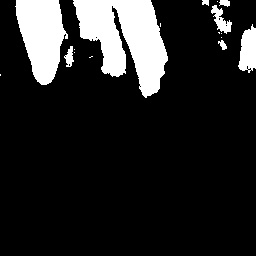} \\

\rotatebox{90}{\quad testA\_47} &
\includegraphics{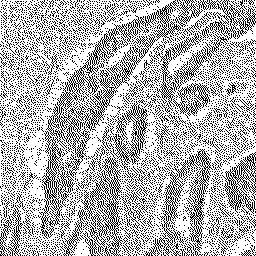} &
\includegraphics{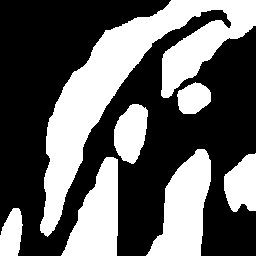} &
\includegraphics{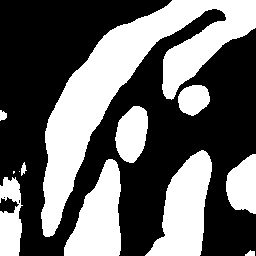} &
\includegraphics{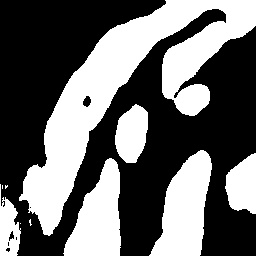} \\

\rotatebox{90}{\quad testB\_13} &
\includegraphics{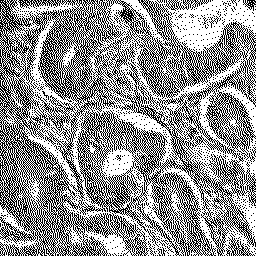} &
\includegraphics{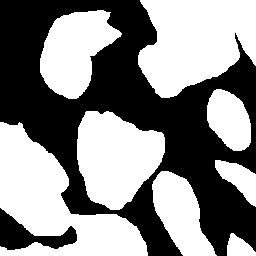} &
\includegraphics{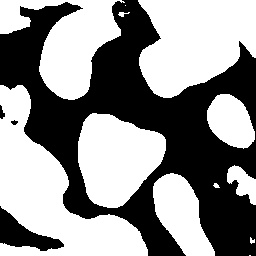} &
\includegraphics{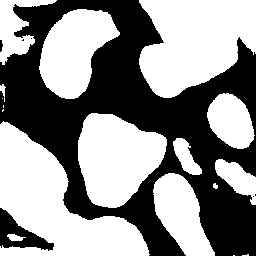} \\

\end{tabular}
}

\caption{Qualitative segmentation results on the GlaS dataset comparing standard convolution and XConv.
Each row corresponds to a different test image, while columns show the original image, ground-truth segmentation map, convolution,
and XConv ($r{=}1024$) predictions. XConv $(r=1024)$ achieves a Dice score and classification accuracy within $1\%$ of the standard convolution baseline, demonstrating that dense-prediction performance is largely preserved despite the use of approximate gradients.}
\label{fig:triconv_segmentation}
\end{figure}

\begin{table}[t]
    \centering
    \begin{tabular}{ccc}
    \toprule
    \textbf{Gradient} & \textbf{Dice} & \textbf{ACC} \\
    \midrule
     standard convolution &  0.905 & 0.904\\
     XConv $(r{=}1024)$ &  0.900 & 0.898 \\
    \bottomrule
    \end{tabular}
    \caption{Quantitative segmentation results on the GlaS dataset with TriConvUNeXt. Dice measures the overlap between the ground-truth and predicted segmentation masks; higher values indicate better performance. Results comparing standard convolution to XConv with probing vectors $(r=1024)$. Despite approximate gradients, XConv achieves segmentation accuracy within $1\%$ of Conv, consistent with the qualitative results in Figure~\ref{fig:triconv_segmentation}.}
    \label{tab:triconv_dice_results}
\end{table}

These results suggest that, even for dense prediction tasks at high spatial resolution, XConv can maintain adequate optimization fidelity while reducing memory requirements.

\subsubsection{Volumetric segmentation under finetuning}
\label{subsubsec:finetune_3dseg}

The experiments above train from scratch. A complementary and increasingly common regime is the finetuning of a pretrained model under a fixed memory budget---i.e., adapting a released checkpoint to a new dataset or on device, where activation memory rather than data throughput is the binding constraint. To evaluate XConv in this regime, we finetune the MONAI~\citep{cardoso2022monai} \texttt{spleen\_ct\_segmentation} model---a fully three-dimensional U-Net (a convolutional encoder--decoder with skip connections and feature widths $16$--$256$ across five resolution levels, $3\times3\times3$ convolutions, PReLU activations, and three-dimensional batch normalization) that segments the spleen from abdominal CT volumes---on the Medical Segmentation Decathlon spleen task~\citep{antonelli2022medical}.
Starting from the released pretrained weights, we continue training on $96^3$ voxel patches with the model's own recipe (Novograd optimizer, Dice--cross-entropy loss, and step learning-rate schedule), replacing every three-dimensional convolution with its XConv counterpart so that only the convolutional weight-gradient computation changes while every other component of the training pipeline is left unchanged.

Under an identical finetuning recipe---i.e., the same optimizer, learning rate, batch size, and number of training steps, so that only the gradient computation differs---even a small probing rank ($r{=}4$) finetunes the network to the same foreground Dice as the exact gradient ($0.9622$ versus $0.9620$) while reducing the peak memory of the finetuning step by $17.5\%$ ($9775$ versus $11854$~MiB; Table~\ref{tab:finetune_3dseg}). The magnitude of the reduction is governed by the share of the peak held by the convolutional activations---i.e., in this encoder--decoder the skip-connection working set, which XConv does not compress, sets a floor on the achievable saving, consistent with the layer-composition analysis of Section~\ref{subsec:network-memory}. The experiment nonetheless demonstrates that XConv supports memory-reduced finetuning of pretrained three-dimensional models without loss of accuracy. Beyond the aggregate Dice, the segmentations produced by the probed gradient are visually indistinguishable from those of the exact gradient on held-out volumes (Figure~\ref{fig:finetune_3dseg_seg}), with the per-volume foreground Dice of the two methods agreeing to within $\pm0.001$.

\begin{table}[t]
    \centering
    \begin{tabular}{lcc}
    \toprule
    \textbf{Gradient} & \textbf{Peak memory (MiB)} & \textbf{Foreground Dice} \\
    \midrule
    standard convolution & 11854 & 0.9620 \\
    XConv $(r{=}4)$ & 9775 \;($-17.5\%$) & 0.9622 \\
    \bottomrule
    \end{tabular}
    \caption{Finetuning a pretrained volumetric U-Net (MONAI \texttt{spleen\_ct\_segmentation}, Medical Segmentation Decathlon spleen CT) under an identical recipe (batch size $64$, $600$ steps, learning rate $2\times10^{-4}$); only the gradient computation differs. At a small probing rank $(r{=}4)$, the XConv gradient matches the exact-gradient foreground Dice while reducing the peak memory of the finetuning step by $17.5\%$.}
    \label{tab:finetune_3dseg}
\end{table}

\begin{figure}[t]
\centering
\setlength{\tabcolsep}{1.5pt}
\renewcommand{\arraystretch}{0.7}
\begin{tabular}{ccc}
\footnotesize Ground truth & \footnotesize Convolution (exact) & \footnotesize XConv $(r{=}4)$ \\
\includegraphics[width=0.31\linewidth]{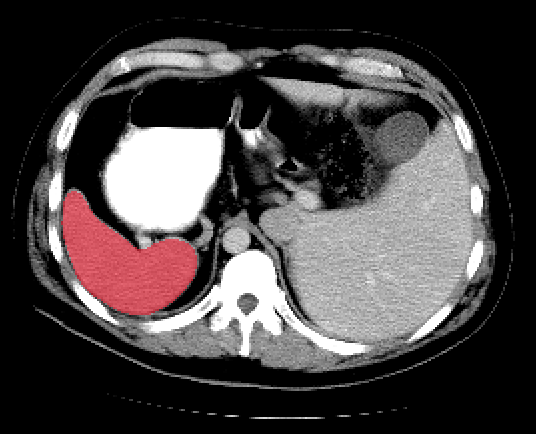} &
\includegraphics[width=0.31\linewidth]{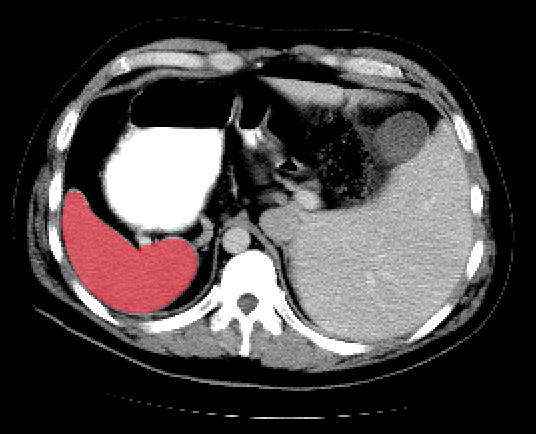} &
\includegraphics[width=0.31\linewidth]{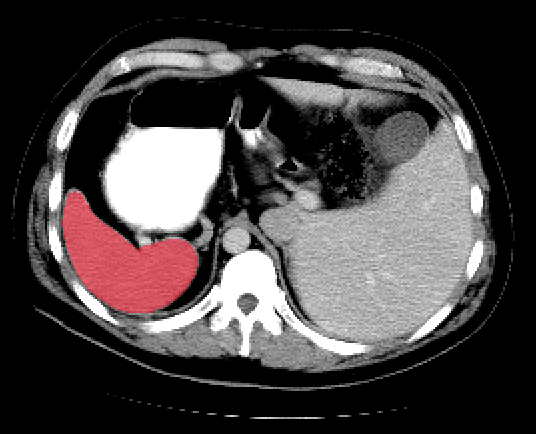} \\
\includegraphics[width=0.31\linewidth]{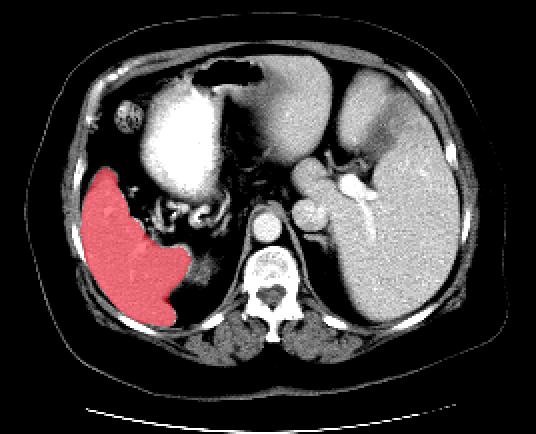} &
\includegraphics[width=0.31\linewidth]{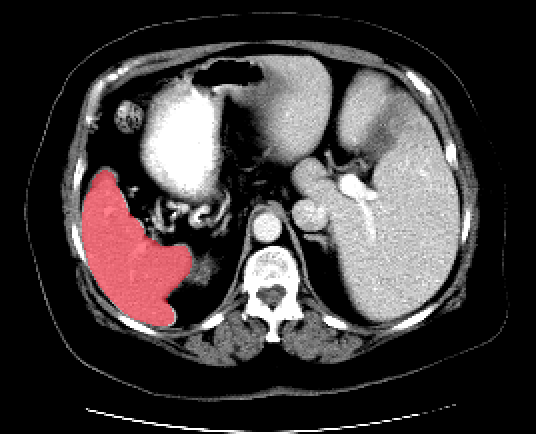} &
\includegraphics[width=0.31\linewidth]{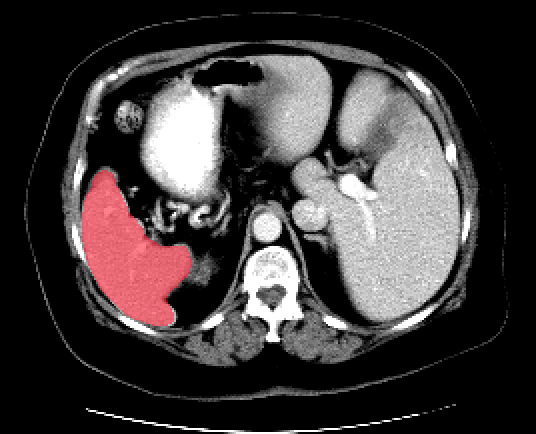} \\
\includegraphics[width=0.31\linewidth]{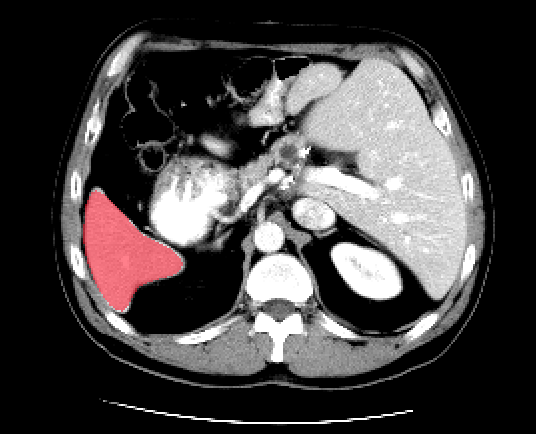} &
\includegraphics[width=0.31\linewidth]{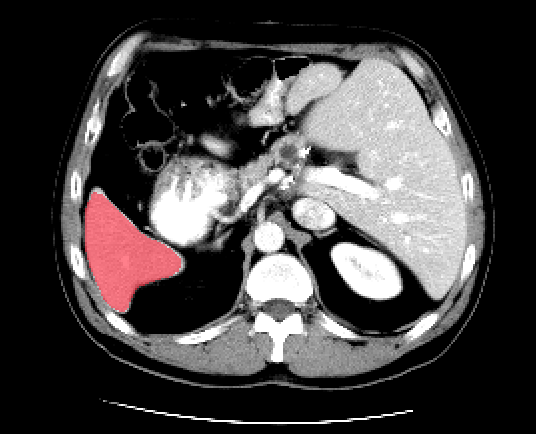} &
\includegraphics[width=0.31\linewidth]{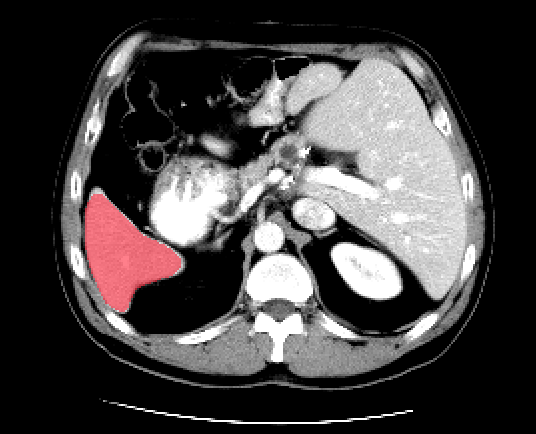} \\
\end{tabular}
\caption{Qualitative spleen segmentation on three held-out CT volumes (one axial slice per volume): the ground-truth annotation (left), the exact-gradient standard-convolution prediction (center), and the XConv $(r{=}4)$ prediction (right), where only the convolutional weight gradient differs between the two models. The probed-gradient segmentations are visually indistinguishable from the exact-gradient ones, with the per-volume foreground Dice of the two methods agreeing to within $\pm0.001$ on every held-out volume.}
\label{fig:finetune_3dseg_seg}
\end{figure}

\section{Related work}\label{related-work}
The memory pressure of backpropagation has motivated several lines of work. Checkpointing approaches~\citep{Griewank, ml-checkpoint, korthikanti2023reducing} recompute activations during the backward pass, yielding exact gradients at the cost of additional computation. Invertible neural networks~\citep{Haber_2017, jacobsen2018irevnet, hascoet2019reversible, OrozcoWitteLouboutinEtAl_2024} recover activations from outputs but impose architectural constraints that may limit expressiveness. Approximate arithmetic~\citep{Gupta-single, xi2023training4bit} and memory-efficient optimizers~\citep{zhao2024galore} reduce memory through lower numerical precision or projected gradient updates, while local learning methods~\citep{saikumar2024neuroflux} break inter-layer dependencies but require non-trivial architectural modifications. Direct feedback alignment~\citep{DFA1, han2019efficient, frenkel} replaces backpropagated gradients with random projections. Randomized linear algebra techniques~\citep{Avron, martinsson_tropp_2020, hutchpp} provide unbiased gradient approximations---a desirable property for stochastic optimization~\citep{neelakantan2015adding}. Among these, randomized automatic differentiation (RAD)~\citep{oktay2020randomized} is closest to our work but requires intervention in the computational graph. In contrast, XConv exploits the specific algebraic structure of convolutional layer gradients, enabling a simpler approach that acts as a drop-in replacement for 2D and 3D convolutional layers in existing frameworks.

\section{Conclusion and future work}\label{conclusions}
We introduced XConv, a memory-efficient convolutional layer that approximates gradients via multi-channel randomized trace estimation. By storing only compressed activations, XConv reduces the memory footprint while remaining computationally competitive with standard implementations. The approach comes with convergence guarantees and error bounds, and our experiments show that networks trained with XConv achieve performance close to exact-gradient methods on most tasks---with larger gaps on the most gradient-sensitive dense-prediction settings and very wide architectures, where a larger $r$ is needed---and accuracy that improves systematically as the number of probing vectors increases. The degree of memory savings and accuracy depends on the architecture and task, ranging from sub-$2\times$ when non-convolutional activations dominate to substantially larger savings for convolution-dominated, high-resolution workloads. These properties, combined with recent architectural innovations that reduce computational complexity~\citep{howard2017mobilenets, liu2022convnet, chen2023vanillanet, chen2023fasternet} and advances in specialized photonic hardware for randomized probing~\citep{lightonproj}, open directions for scaling CNNs to higher-dimensional data such as video representation learning and other 3D applications. More broadly, the principle of approximating weight gradients via randomized trace estimation is not limited to convolutions---extending this approach to attention layers, where the memory footprint of stored activations is similarly prohibitive, is a promising direction for future work.

\paragraph{Declaration of AI usage.} Claude (Anthropic) was used to improve the academic tone and technical clarity of the writing, correct grammatical errors and enhance readability, and ensure consistent formatting of bibliographic entries. All scientific content, including methodology, theoretical results, and experimental design, was produced entirely by the authors.

\bibliography{main}

@inproceedings{ronneberger2015u,
  title = {{U-Net}: Convolutional Networks for Biomedical Image Segmentation},
  author = {Ronneberger, Olaf and Fischer, Philipp and Brox, Thomas},
  booktitle = {International Conference on Medical Image Computing and Computer-Assisted Intervention},
  pages = {234--241},
  year = {2015}
}

@inproceedings{ding2022scaling,
  title = {Scaling Up Your Kernels to 31x31: Revisiting Large Kernel Design in {CNN}s},
  author = {Ding, Xiaohan and Zhang, Xiangyu and Han, Jungong and Ding, Guiguang},
  booktitle = {Proceedings of the IEEE/CVF Conference on Computer Vision and Pattern Recognition},
  pages = {11963--11975},
  year = {2022}
}

@inproceedings{mao2021dual,
  title = {Dual-Stream Network for Visual Recognition},
  author = {Mao, Mingyuan and Zhang, Renrui and Zheng, Honghui and Ma, Teli and Peng, Yan and others},
  booktitle = {Advances in Neural Information Processing Systems},
  volume = {34},
  pages = {25346--25358},
  year = {2021}
}

@inproceedings{Cui_2023_ICCV,
  author = {Cui, Yuning and Ren, Wenqi and Cao, Xiaochun and Knoll, Alois},
  title = {Focal Network for Image Restoration},
  booktitle = {Proceedings of the IEEE/CVF International Conference on Computer Vision},
  year = {2023},
  pages = {13001--13011}
}

@article{10531070,
  author = {Hou, Qibin and Lu, Cheng-Ze and Cheng, Ming-Ming and Feng, Jiashi},
  journal = {IEEE Transactions on Pattern Analysis and Machine Intelligence},
  title = {{Conv2Former}: A Simple Transformer-Style {ConvNet} for Visual Recognition},
  year = {2024},
  volume = {46},
  number = {12},
  pages = {8274--8283}
}

@inproceedings{pmlr-v202-cui23d,
  title = {{IRNeXt}: Rethinking Convolutional Network Design for Image Restoration},
  author = {Cui, Yuning and Ren, Wenqi and Yang, Sining and Cao, Xiaochun and Knoll, Alois},
  booktitle = {International Conference on Machine Learning},
  pages = {6545--6564},
  year = {2023}
}

@inproceedings{liu2022convnet,
  title = {A {ConvNet} for the 2020s},
  author = {Liu, Zhuang and Mao, Hanzi and Wu, Chao-Yuan and Feichtenhofer, Christoph and Darrell, Trevor and others},
  booktitle = {Proceedings of the IEEE/CVF Conference on Computer Vision and Pattern Recognition},
  pages = {11976--11986},
  year = {2022}
}

@article{10466766,
  author = {Younesi, Abolfazl and Ansari, Mohsen and Fazli, Mohammadamin and Ejlali, Alireza and Shafique, Muhammad and others},
  journal = {IEEE Access},
  title = {A Comprehensive Survey of Convolutions in Deep Learning: Applications, Challenges, and Future Trends},
  year = {2024},
  volume = {12},
  pages = {41180--41218}
}

@article{10.1145/3530811,
  author = {Tay, Yi and Dehghani, Mostafa and Bahri, Dara and Metzler, Donald},
  title = {Efficient Transformers: A Survey},
  year = {2022},
  volume = {55},
  number = {6},
  journal = {ACM Computing Surveys},
  pages = {109:1--109:28}
}

@inproceedings{vaswani2017attention,
  title = {Attention Is All You Need},
  author = {Vaswani, Ashish and Shazeer, Noam and Parmar, Niki and Uszkoreit, Jakob and Jones, Llion and others},
  booktitle = {Advances in Neural Information Processing Systems},
  volume = {30},
  year = {2017}
}

@article{sirinukunwattana2017gland,
  title = {Gland Segmentation in Colon Histology Images: The {GlaS} Challenge Contest},
  author = {Sirinukunwattana, Korsuk and Pluim, Josien P. W. and Chen, Hao and Qi, Xiaojuan and Heng, Pheng-Ann and others},
  journal = {Medical Image Analysis},
  volume = {35},
  pages = {489--502},
  year = {2017}
}

@article{Ma2024TriConvUNeXtAP,
  title = {{TriConvUNeXt}: A Pure {CNN}-Based Lightweight Symmetrical Network for Biomedical Image Segmentation},
  author = {Ma, Chao and Gu, Yuan and Wang, Ziyang},
  journal = {Journal of Imaging Informatics in Medicine},
  year = {2024},
  volume = {37},
  pages = {2311--2323}
}

@inproceedings{ulyanov2018deep,
  title = {Deep Image Prior},
  author = {Ulyanov, Dmitry and Vedaldi, Andrea and Lempitsky, Victor},
  booktitle = {Proceedings of the IEEE/CVF Conference on Computer Vision and Pattern Recognition},
  pages = {9446--9454},
  year = {2018}
}

@book{vershynin2018high,
  title = {{High-Dimensional Probability: An Introduction with Applications in Data Science}},
  author = {Vershynin, Roman},
  year = {2018},
  publisher = {Cambridge University Press}
}

@inproceedings{launay2020direct,
  title = {Direct Feedback Alignment Scales to Modern Deep Learning Tasks and Architectures},
  author = {Launay, Julien and Poli, Iacopo and Boniface, Fran{\c{c}}ois and Krzakala, Florent},
  booktitle = {Advances in Neural Information Processing Systems},
  volume = {33},
  pages = {9346--9360},
  year = {2020}
}

@article{yang2024direct,
  title = {Direct Feedback Learning with Local Alignment Support},
  author = {Yang, Heesung and Lee, Soha and Park, Hyeyoung},
  journal = {IEEE Access},
  volume = {12},
  pages = {81388--81397},
  year = {2024}
}

@article{nakajima2024reservoir,
  title = {Reservoir Direct Feedback Alignment: Deep Learning by Physical Dynamics},
  author = {Nakajima, Mitsumasa and Zhang, Yongbo and Inoue, Katsuma and Kuniyoshi, Yasuo and Hashimoto, Toshikazu and others},
  journal = {Communications Physics},
  volume = {7},
  number = {1},
  pages = {411},
  year = {2024}
}

@inproceedings{refinetti2021align,
  title = {Align, Then Memorise: The Dynamics of Learning with Feedback Alignment},
  author = {Refinetti, Maria and d'Ascoli, St{\'e}phane and Ohana, Ruben and Goldt, Sebastian},
  booktitle = {International Conference on Machine Learning},
  pages = {8925--8935},
  year = {2021}
}

@article{alaudah2019machine,
author = {Yazeed Alaudah and Patrycja Michałowicz and Motaz Alfarraj and Ghassan AlRegib},
title = {A machine-learning benchmark for facies classification},
journal = {Interpretation},
volume = {7},
number = {3},
pages = {SE175-SE187},
year = {2019},
doi = {10.1190/INT-2018-0249.1},
URL = {https://doi.org/10.1190/INT-2018-0249.1},
eprint = {https://doi.org/10.1190/INT-2018-0249.1}
}

@article{krizhevsky2009learning,
  title={Learning multiple layers of features from tiny images},
  author={Krizhevsky, Alex and Hinton, Geoffrey and others},
  year={2009},
  publisher={Toronto, ON, Canada}
}

@article{frenkel,
  author = {Frenkel, Charlotte and Lefebvre, Martin and Bol, David},
  title = {Learning Without Feedback: Fixed Random Learning Signals Allow for Feedforward Training of Deep Neural Networks},
  journal = {Frontiers in Neuroscience},
  volume = {15},
  pages = {629892},
  year = {2021}
}

@article{TroppLR,
  author = {Tropp, Joel A. and Yurtsever, Alp and Udell, Madeleine and Cevher, Volkan},
  title = {Streaming Low-Rank Matrix Approximation with an Application to Scientific Simulation},
  journal = {SIAM Journal on Scientific Computing},
  volume = {41},
  number = {4},
  pages = {A2430--A2463},
  year = {2019}
}

@inproceedings{wangacccnn,
  author = {Wang, Ziheng and Nelaturu, Sree Harsha and Amarasinghe, Saman},
  booktitle = {2nd Workshop on Energy Efficient Machine Learning and Cognitive Computing for Embedded Applications},
  title = {Accelerated {CNN} Training through Gradient Approximation},
  year = {2019},
  pages = {31--35}
}

@inproceedings{nokland19a,
  title = {Training Neural Networks with Local Error Signals},
  author = {N{\o}kland, Arild and Eidnes, Lars Hiller},
  booktitle = {International Conference on Machine Learning},
  pages = {4839--4850},
  year = {2019}
}

@phdthesis{Kaperick2019DiagonalEW,
  title = {Diagonal Estimation with Probing Methods},
  author = {Kaperick, Bryan},
  school = {Virginia Polytechnic Institute and State University},
  year = {2019}
}

@article{cortinovis2020randomized,
  title = {On Randomized Trace Estimates for Indefinite Matrices with an Application to Determinants},
  author = {Cortinovis, Alice and Kressner, Daniel},
  journal = {Foundations of Computational Mathematics},
  volume = {22},
  pages = {875--903},
  year = {2022}
}

@inproceedings{hutchpp,
  author = {Meyer, Raphael A. and Musco, Cameron and Musco, Christopher and Woodruff, David P.},
  title = {{Hutch++}: Optimal Stochastic Trace Estimation},
  booktitle = {Symposium on Simplicity in Algorithms},
  pages = {142--155},
  year = {2021}
}

@article{Avron,
  author = {Avron, Haim and Toledo, Sivan},
  title = {Randomized Algorithms for Estimating the Trace of an Implicit Symmetric Positive Semi-Definite Matrix},
  year = {2011},
  volume = {58},
  number = {2},
  journal = {Journal of the ACM},
  pages = {8:1--8:34}
}

@article{2014cudnn,
  author = {Chetlur, Sharan and Woolley, Cliff and Vandermersch, Philippe and Cohen, Jonathan and Tran, John and others},
  title = {{cuDNN}: Efficient Primitives for Deep Learning},
  journal = {arXiv preprint arXiv:1410.0759},
  year = {2014}
}

@inproceedings{DFA1,
  author = {N{\o}kland, Arild},
  booktitle = {Advances in Neural Information Processing Systems},
  title = {Direct Feedback Alignment Provides Learning in Deep Neural Networks},
  volume = {29},
  year = {2016}
}

@article{han2019efficient,
  title = {Efficient Convolutional Neural Network Training with Direct Feedback Alignment},
  author = {Han, Donghyeon and Yoo, Hoi-Jun},
  journal = {arXiv preprint arXiv:1901.01986},
  year = {2019}
}

@inproceedings{oktay2020randomized,
  title = {Randomized Automatic Differentiation},
  author = {Oktay, Deniz and McGreivy, Nick and Aduol, Joshua and Beatson, Alex and Adams, Ryan P.},
  booktitle = {International Conference on Learning Representations},
  year = {2021}
}

@inproceedings{resnet,
  author = {He, Kaiming and Zhang, Xiangyu and Ren, Shaoqing and Sun, Jian},
  booktitle = {Proceedings of the IEEE/CVF Conference on Computer Vision and Pattern Recognition},
  title = {Deep Residual Learning for Image Recognition},
  year = {2016},
  pages = {770--778}
}

@article{squeeznet,
  author = {Iandola, Forrest N. and Moskewicz, Matthew W. and Ashraf, Khalid and Han, Song and Dally, William J. and others},
  title = {{SqueezeNet}: {AlexNet}-Level Accuracy with 50x Fewer Parameters and {$<$0.5MB} Model Size},
  journal = {arXiv preprint arXiv:1602.07360},
  year = {2016}
}

@article{ml-checkpoint,
  author = {Beaumont, Olivier and Eyraud-Dubois, Lionel and Herrmann, Julien and Joly, Alexis and Shilova, Alena},
  title = {Optimal Checkpointing for Heterogeneous Chains: How to Train Deep Neural Networks with Limited Memory},
  journal = {arXiv preprint arXiv:1911.13214},
  year = {2019}
}

@inproceedings{NEURIPS2019_9015,
  title = {{PyTorch}: An Imperative Style, High-Performance Deep Learning Library},
  author = {Paszke, Adam and Gross, Sam and Massa, Francisco and Lerer, Adam and Bradbury, James and others},
  booktitle = {Advances in Neural Information Processing Systems},
  volume = {32},
  pages = {8024--8035},
  year = {2019}
}

@inproceedings{adam-Diederik,
  author = {Kingma, Diederik P. and Ba, Jimmy},
  title = {{Adam}: A Method for Stochastic Optimization},
  booktitle = {International Conference on Learning Representations},
  year = {2015}
}

@article{Flux,
  author = {Innes, Michael and Saba, Elliot and Fischer, Keno and Gandhi, Dhairya and Rudilosso, Marco Concetto and others},
  title = {Fashionable Modelling with {Flux}},
  journal = {arXiv preprint arXiv:1811.01457},
  year = {2018}
}

@article{innes,
  author = {Innes, Mike},
  title = {{Flux}: Elegant Machine Learning with {Julia}},
  journal = {Journal of Open Source Software},
  volume = {3},
  number = {25},
  pages = {602},
  year = {2018}
}

@inproceedings{lightonproj,
  author = {Saade, A. and Caltagirone, F. and Carron, I. and Daudet, L. and Dr{\'e}meau, A. and others},
  booktitle = {IEEE International Conference on Acoustics, Speech and Signal Processing},
  title = {Random Projections through Multiple Optical Scattering: Approximating Kernels at the Speed of Light},
  year = {2016},
  pages = {6215--6219}
}

@inproceedings{vaswani2019painless,
  author = {Vaswani, Sharan and Mishkin, Aaron and Laradji, Issam and Schmidt, Mark and Gidel, Gauthier and others},
  booktitle = {Advances in Neural Information Processing Systems},
  title = {Painless Stochastic Gradient: Interpolation, Line-Search, and Convergence Rates},
  volume = {32},
  year = {2019}
}

@article{bezanson2012julia,
  title = {{Julia}: A Fresh Approach to Numerical Computing},
  author = {Bezanson, Jeff and Edelman, Alan and Karpinski, Stefan and Shah, Viral B.},
  pages = {65--98},
  volume = {59},
  number = {1},
  journal = {SIAM Review},
  year = {2017}
}

@misc{Petersen2008,
  author = {Petersen, K. B. and Pedersen, M. S.},
  title = {{The Matrix Cookbook}},
  year = {2008},
  note = {Version 20081110}
}

@article{haber10TRemp,
  title = {An Effective Method for Parameter Estimation with {PDE} Constraints with Multiple Right Hand Sides},
  journal = {SIAM Journal on Optimization},
  volume = {22},
  number = {3},
  year = {2012},
  pages = {739--757},
  author = {Haber, Eldad and Chung, Matthias and Herrmann, Felix J.}
}

@inproceedings{chellapilla:inria-00112631,
  title = {High Performance Convolutional Neural Networks for Document Processing},
  author = {Chellapilla, Kumar and Puri, Sidd and Simard, Patrice},
  booktitle = {Tenth International Workshop on Frontiers in Handwriting Recognition},
  year = {2006}
}

@article{Griewank,
  author = {Griewank, Andreas and Walther, Andrea},
  title = {Algorithm 799: {Revolve}: An Implementation of Checkpointing for the Reverse or Adjoint Mode of Computational Differentiation},
  year = {2000},
  volume = {26},
  number = {1},
  journal = {ACM Transactions on Mathematical Software},
  pages = {19--45}
}

@inproceedings{zhao2024dr2net,
  title = {{Dr2Net}: Dynamic Reversible Dual-Residual Networks for Memory-Efficient Finetuning},
  author = {Zhao, Chen and Liu, Shuming and Mangalam, Karttikeya and Qian, Guocheng and Zohra, Fatimah and others},
  booktitle = {Proceedings of the IEEE/CVF Conference on Computer Vision and Pattern Recognition},
  pages = {15835--15844},
  year = {2024}
}

@inproceedings{ulidowski2023saving,
  title = {Saving Memory Space in Deep Neural Networks by Recomputing: A Survey},
  author = {Ulidowski, Irek},
  booktitle = {International Conference on Reversible Computation},
  pages = {89--105},
  year = {2023}
}

@inproceedings{zhang2024memory,
  title = {Memory-Efficient Reversible Spiking Neural Networks},
  author = {Zhang, Hong and Zhang, Yu},
  booktitle = {Proceedings of the AAAI Conference on Artificial Intelligence},
  volume = {38},
  pages = {16759--16767},
  year = {2024}
}

@article{Haber_2017,
  year = {2017},
  volume = {34},
  number = {1},
  pages = {014004},
  author = {Haber, Eldad and Ruthotto, Lars},
  title = {Stable Architectures for Deep Neural Networks},
  journal = {Inverse Problems}
}

@inproceedings{chen2023vanillanet,
  title = {{VanillaNet}: The Power of Minimalism in Deep Learning},
  author = {Chen, Hanting and Wang, Yunhe and Guo, Jianyuan and Tao, Dacheng},
  booktitle = {Advances in Neural Information Processing Systems},
  volume = {36},
  pages = {7050--7064},
  year = {2023}
}

@inproceedings{jacobsen2018irevnet,
  title = {{i-RevNet}: Deep Invertible Networks},
  author = {Jacobsen, J{\"o}rn-Henrik and Smeulders, Arnold W. M. and Oyallon, Edouard},
  booktitle = {International Conference on Learning Representations},
  year = {2018}
}

@article{hascoet2019reversible,
  title = {Reversible Designs for Extreme Memory Cost Reduction of {CNN} Training},
  author = {Hascoet, Tristan and Febvre, Quentin and Ariki, Yasuo and Takiguchi, Tetsuya},
  journal = {EURASIP Journal on Image and Video Processing},
  volume = {2023},
  pages = {1},
  year = {2023}
}

@inproceedings{gomez2017reversible,
  title = {The Reversible Residual Network: Backpropagation Without Storing Activations},
  author = {Gomez, Aidan N. and Ren, Mengye and Urtasun, Raquel and Grosse, Roger B.},
  booktitle = {Advances in Neural Information Processing Systems},
  volume = {30},
  year = {2017}
}

@article{Aravkin11TRridr,
  title = {Robust Inversion, Dimensionality Reduction, and Randomized Sampling},
  journal = {Mathematical Programming},
  volume = {134},
  number = {1},
  year = {2012},
  pages = {101--125},
  author = {Aravkin, Aleksandr Y. and Friedlander, Michael P. and Herrmann, Felix J. and van Leeuwen, Tristan}
}

@article{martinsson_tropp_2020,
  title = {Randomized Numerical Linear Algebra: Foundations and Algorithms},
  volume = {29},
  journal = {Acta Numerica},
  author = {Martinsson, Per-Gunnar and Tropp, Joel A.},
  year = {2020},
  pages = {403--572}
}

@inproceedings{Gupta-single,
  author = {Gupta, Suyog and Agrawal, Ankur and Gopalakrishnan, Kailash and Narayanan, Pritish},
  title = {Deep Learning with Limited Numerical Precision},
  booktitle = {International Conference on Machine Learning},
  pages = {1737--1746},
  year = {2015}
}

@article{neelakantan2015adding,
  title = {Adding Gradient Noise Improves Learning for Very Deep Networks},
  author = {Neelakantan, Arvind and Vilnis, Luke and Le, Quoc V. and Sutskever, Ilya and Kaiser, Lukasz and others},
  journal = {arXiv preprint arXiv:1511.06807},
  year = {2015}
}

@article{roosta-trace-bound,
  author = {Roosta-Khorasani, Farbod and Ascher, Uri},
  title = {Improved Bounds on Sample Size for Implicit Matrix Trace Estimators},
  year = {2015},
  volume = {15},
  number = {5},
  journal = {Foundations of Computational Mathematics},
  pages = {1187--1212}
}

@inproceedings{MLSYS2023_8a27bb69,
  author = {He, Horace and Yu, Shangdi},
  booktitle = {Proceedings of Machine Learning and Systems},
  pages = {414--427},
  title = {Transcending Runtime-Memory Tradeoffs in Checkpointing by Being Fusion Aware},
  volume = {5},
  year = {2023}
}

@article{hong2025gpu,
  title = {{GPU} Memory Usage Optimization for Backward Propagation in Deep Network Training},
  author = {Hong, Ding-Yong and Tsai, Tzu-Hsien and Wang, Ning and Liu, Pangfeng and Wu, Jan-Jan},
  journal = {Journal of Parallel and Distributed Computing},
  volume = {199},
  pages = {105053},
  year = {2025}
}

@inproceedings{feng2021optimal,
  title = {Optimal Gradient Checkpoint Search for Arbitrary Computation Graphs},
  author = {Feng, Jianwei and Huang, Dong},
  booktitle = {Proceedings of the IEEE/CVF Conference on Computer Vision and Pattern Recognition},
  pages = {11433--11442},
  year = {2021}
}

@article{10.1145/3648633,
  author = {Beaumont, Olivier and Eyraud-Dubois, Lionel and Herrmann, Julien and Joly, Alexis and Shilova, Alena},
  title = {Optimal Re-Materialization Strategies for Heterogeneous Chains: How to Train Deep Neural Networks with Limited Memory},
  year = {2024},
  volume = {50},
  number = {2},
  journal = {ACM Transactions on Mathematical Software},
  pages = {10:1--10:38}
}

@inproceedings{shah2021memory,
  title = {Memory Optimization for Deep Networks},
  author = {Shah, Aashaka and Wu, Chao-Yuan and Mohan, Jayashree and Chidambaram, Vijay and Kraehenbuehl, Philipp},
  booktitle = {International Conference on Learning Representations},
  year = {2021}
}

@article{howard2017mobilenets,
  title = {{MobileNets}: Efficient Convolutional Neural Networks for Mobile Vision Applications},
  author = {Howard, Andrew G. and Zhu, Menglong and Chen, Bo and Kalenichenko, Dmitry and Wang, Weijun and others},
  journal = {arXiv preprint arXiv:1704.04861},
  year = {2017}
}

@article{chen2016training,
  title = {Training Deep Nets with Sublinear Memory Cost},
  author = {Chen, Tianqi and Xu, Bing and Zhang, Chiyuan and Guestrin, Carlos},
  journal = {arXiv preprint arXiv:1604.06174},
  year = {2016}
}

@article{vanLeeuwen2014SISC3Dfds,
  title = {{3D} Frequency-Domain Seismic Inversion with Controlled Sloppiness},
  journal = {SIAM Journal on Scientific Computing},
  volume = {36},
  number = {5},
  year = {2014},
  pages = {S192--S217},
  author = {van Leeuwen, Tristan and Herrmann, Felix J.}
}

@article{Hutchinson,
  author = {Hutchinson, M. F.},
  title = {A Stochastic Estimator of the Trace of the Influence Matrix for {Laplacian} Smoothing Splines},
  journal = {Communications in Statistics -- Simulation and Computation},
  volume = {18},
  number = {3},
  pages = {1059--1076},
  year = {1989}
}

@inproceedings{huang20a,
  title = {Understanding Generalization through Visualizations},
  author = {Huang, W. Ronny and Emam, Zeyad and Goldblum, Micah and Fowl, Liam and Terry, Justin K. and others},
  booktitle = {NeurIPS Workshop on ``I Can't Believe It's Not Better!''},
  pages = {87--97},
  year = {2020}
}

@inproceedings{woo2023convnextv2,
  title = {{ConvNeXt} V2: Co-designing and Scaling {ConvNets} with Masked Autoencoders},
  author = {Woo, Sanghyun and Debnath, Shoubhik and Hu, Ronghang and Chen, Xinlei and Liu, Zhuang and others},
  booktitle = {Proceedings of the IEEE/CVF Conference on Computer Vision and Pattern Recognition},
  pages = {16133--16142},
  year = {2023}
}

@inproceedings{wang2023internimage,
  title = {{InternImage}: Exploring Large-Scale Vision Foundation Models with Deformable Convolutions},
  author = {Wang, Wenhai and Dai, Jifeng and Chen, Zhe and Huang, Zhenhang and Li, Zhiqi and others},
  booktitle = {Proceedings of the IEEE/CVF Conference on Computer Vision and Pattern Recognition},
  pages = {14408--14419},
  year = {2023}
}

@inproceedings{ding2024unireplknet,
  title = {{UniRepLKNet}: A Universal Perception Large-Kernel {ConvNet} for Audio, Video, Point Cloud, Time-Series and Image Recognition},
  author = {Ding, Xiaohan and Zhang, Yiyuan and Ge, Yixiao and Zhao, Sijie and Song, Lin and others},
  booktitle = {Proceedings of the IEEE/CVF Conference on Computer Vision and Pattern Recognition},
  pages = {5513--5524},
  year = {2024}
}

@inproceedings{liu2023slak,
  title = {More {ConvNets} in the 2020s: Scaling up Kernels Beyond 51x51 using Sparsity},
  author = {Liu, Shiwei and Chen, Tianlong and Chen, Xiaohan and Chen, Xuxi and Xiao, Qiao and others},
  booktitle = {International Conference on Learning Representations},
  year = {2023}
}

@inproceedings{wang2024repvit,
  title = {{RepViT}: Revisiting Mobile {CNN} From {ViT} Perspective},
  author = {Wang, Ao and Chen, Hui and Lin, Zijia and Han, Jungong and Ding, Guiguang},
  booktitle = {Proceedings of the IEEE/CVF Conference on Computer Vision and Pattern Recognition},
  pages = {15909--15920},
  year = {2024}
}

@inproceedings{chen2023fasternet,
  title = {Run, Don't Walk: Chasing Higher {FLOPS} for Faster Neural Networks},
  author = {Chen, Jierun and Kao, Shiu-hong and He, Hao and Zhuo, Weipeng and Wen, Song and others},
  booktitle = {Proceedings of the IEEE/CVF Conference on Computer Vision and Pattern Recognition},
  pages = {12021--12031},
  year = {2023}
}

@inproceedings{vasu2023mobileone,
  title = {{MobileOne}: An Improved One Millisecond Mobile Backbone},
  author = {Vasu, Pavan Kumar Anasosalu and Gabriel, James and Zhu, Jeff and Tuzel, Oncel and Ranjan, Anurag},
  booktitle = {Proceedings of the IEEE/CVF Conference on Computer Vision and Pattern Recognition},
  pages = {7907--7917},
  year = {2023}
}

@inproceedings{ma2024starnet,
  title = {Rewrite the Stars},
  author = {Ma, Xu and Dai, Xiyang and Bai, Yue and Wang, Yizhou and Fu, Yun},
  booktitle = {Proceedings of the IEEE/CVF Conference on Computer Vision and Pattern Recognition},
  pages = {5694--5703},
  year = {2024}
}

@inproceedings{korthikanti2023reducing,
  title = {Reducing Activation Recomputation in Large Transformer Models},
  author = {Korthikanti, Vijay and Casper, Jared and Lym, Sangkug and McAfee, Lawrence and Andersch, Michael and others},
  booktitle = {Proceedings of Machine Learning and Systems},
  volume = {5},
  year = {2023}
}

@inproceedings{zhao2024galore,
  title = {{GaLore}: Memory-Efficient {LLM} Training by Gradient Low-Rank Projection},
  author = {Zhao, Jiawei and Zhang, Zhenyu and Chen, Beidi and Wang, Zhangyang and Anandkumar, Anima and others},
  booktitle = {International Conference on Machine Learning},
  year = {2024}
}

@inproceedings{malladi2023mezo,
  title = {Fine-Tuning Language Models with Just Forward Passes},
  author = {Malladi, Sadhika and Gao, Tianyu and Nichani, Eshaan and Damian, Alex and Lee, Jason D. and others},
  booktitle = {Advances in Neural Information Processing Systems},
  volume = {36},
  year = {2023}
}

@inproceedings{yang2024approxbp,
  title = {Reducing Fine-Tuning Memory Overhead by Approximate and Memory-Sharing Backpropagation},
  author = {Yang, Yuchen and Shi, Yingdong and Wang, Cheems and Zhen, Xiantong and Shi, Yuxuan and others},
  booktitle = {International Conference on Machine Learning},
  year = {2024}
}

@inproceedings{xi2023training4bit,
  title = {Training Transformers with 4-bit Integers},
  author = {Xi, Haocheng and Li, Changhao and Chen, Jianfei and Zhu, Jun},
  booktitle = {Advances in Neural Information Processing Systems},
  volume = {36},
  year = {2023}
}

@article{OrozcoWitteLouboutinEtAl_2024,
  title = {{InvertibleNetworks.jl}: A {Julia} Package for Scalable Normalizing Flows},
  author = {Orozco, Rafael and Witte, Philipp and Louboutin, Mathias and Siahkoohi, Ali and Rizzuti, Gabrio and others},
  year = {2024},
  journal = {Journal of Open Source Software},
  volume = {9},
  number = {99},
  pages = {6554}
}

@inproceedings{saikumar2024neuroflux,
  title = {{NeuroFlux}: Memory-Efficient {CNN} Training Using Adaptive Local Learning},
  author = {Saikumar, Dhananjay and Varghese, Blesson},
  booktitle = {Proceedings of the Nineteenth European Conference on Computer Systems},
  pages = {999--1015},
  year = {2024}
}

@article{antonelli2022medical,
  title = {The Medical Segmentation Decathlon},
  author = {Antonelli, Michela and Reinke, Annika and Bakas, Spyridon and others},
  journal = {Nature Communications},
  volume = {13},
  number = {1},
  pages = {4128},
  year = {2022}
}

@article{cardoso2022monai,
  title = {{MONAI}: An open-source framework for deep learning in healthcare},
  author = {Cardoso, M. Jorge and Li, Wenqi and Brown, Richard and others},
  journal = {arXiv preprint arXiv:2211.02701},
  year = {2022}
}

@techreport{Veritas2005,
  title       = {{Parihaka 3D Marine Seismic Survey - Acquisition and Processing Report}},
  number      = {New Zealand Petroleum Report 3460},
  year        = {2005},
  institution = {New Zealand Petroleum \& Minerals, Wellington},
  author      = {Veritas}
}

@techreport{WesternGeco2012,
  title       = {{Parihaka 3D PSTM Final Processing Report}},
  number      = {New Zealand Petroleum Report 4582},
  year        = {2012},
  institution = {New Zealand Petroleum \& Minerals, Wellington},
  author      = {WesternGeco.}
}
\bibliographystyle{tmlr}

\appendix
\section{Appendix}\label{annexes}

\subsection{Implementation and code
availability}\label{implementation-and-code-availability}

Our probing algorithm is implemented both in Julia, using \texttt{LinearAlgebra.BLAS} on CPU and \texttt{CUDA.CUBLAS} on GPU for the linear algebra computations, and in PyTorch using standard linear algebra utilities. The Julia interface is designed so that preexisting networks can be reused as we are overloading \texttt{rrule} (see \href{https://github.com/JuliaDiff/ChainRulesCore.jl}{ChainRulesCore.jl})
to switch easily between the conventional true gradient (\href{https://github.com/FluxML/NNlib.jl}{NNlib.jl}) and ours. The PyTorch implementation defines a new layer that can be swapped for the conventional convolutional layer, \texttt{torch.nn.Conv2d} or \texttt{torch.nn.Conv3d}, in any network using the \texttt{convert\_net} utility function. Both implementations support 2D and 3D convolutions, making XConv applicable to volumetric data such as medical imaging (demonstrated here) and, prospectively, video.

The software and examples will be made available under an MIT license upon publication.

\subsection{Proofs of Proposition 1 and Theorem 1 }\label{trace-estimation-theory}

For a square matrix $\B{A}$, let $G(\B{A})$ be the trace estimator:
\[
G(\B{A}) = \frac{1}{r}\sum_{j=1}^r \B{z}_j^{\top} \B{A} \B{z}_j
\]
where $\B{z}_j\sim \mathcal{N}(\B{0},\B{I}_N)$ are i.i.d. Gaussian vectors. We now prove the proposition and theorem stated in Section~\ref{theory}.

\subsubsection{Proof of Proposition 1}
We restate Proposition 1 here.
\begin{proposition}[]\label{pro:A}Let $\B{A} \in \mathbb{R}^{N \times N}$ be a square matrix. Then for any small number $\delta >0$, with probability $1-\delta$,
\[
| G(\B{A}) - \operatorname{tr}(\B{A})| \leq \left( \frac{4\|\B{A}\|_2}{r} \log \frac{2}{\delta}+ \frac{2\|\B{A}\|_F}{\sqrt r}\log^{1/2} \frac{2}{\delta}\right).
\]
\end{proposition}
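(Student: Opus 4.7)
The plan is to reduce the concentration question for an arbitrary square $\B{A}$ to concentration of a weighted sum of i.i.d.\ centered chi-squared-one variables, and then invoke a Bernstein-type inequality (specifically the Laurent--Massart bound). The key enabling observation, which is precisely the departure from PSD-only prior work flagged in the main text, is that the quadratic form $\B{z}^\top \B{A}\B{z}$ only sees the symmetric part $\B{A}_s := \tfrac{1}{2}(\B{A}+\B{A}^\top)$: pointwise in $\B{z}$ one has $\B{z}^\top \B{A}\B{z} = \B{z}^\top \B{A}_s \B{z}$ because the anti-symmetric part contributes zero, and trivially $\operatorname{tr}(\B{A}) = \operatorname{tr}(\B{A}_s)$. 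The triangle inequality gives $\|\B{A}_s\|_2 \leq \|\B{A}\|_2$ and $\|\B{A}_s\|_F \leq \|\B{A}\|_F$, so after this symmetrization step I may, and will, assume that $\B{A}$ is symmetric and pay only in the constants when I transfer the final bound back.

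Next I would spectrally decompose $\B{A}_s = \B{U}\Lambda \B{U}^\top$ with $\Lambda = \operatorname{diag}(\lambda_1,\ldots,\lambda_N)$ and use rotational invariance of the standard Gaussian to write $\B{y}_j := \B{U}^\top \B{z}_j$, which is again i.i.d.\ $\mathcal{N}(\B{0},\B{I}_N)$. This diagonalizes the estimator error into an independent weighted sum of $rN$ centered chi-squared-one summands,
\[
r\bigl(G(\B{A}) - \operatorname{tr}(\B{A})\bigr) \;=\; \sum_{j=1}^r \sum_{i=1}^N \lambda_i\bigl(y_{ij}^2 - 1\bigr).
\]
The Laurent--Massart tail bound applied to the length-$rN$ coefficient vector (whose $\ell_2$-norm is $\sqrt r\,\|\B{A}_s\|_F$ and whose $\ell_\infty$-norm is $\|\B{A}_s\|_2$) then yields a two-sided estimate of the shape
\[
\Pr\bigl(|r(G - \operatorname{tr})| \geq 2\sqrt r\,\|\B{A}_s\|_F\sqrt t + 2\|\B{A}_s\|_2\, t\bigr) \leq 2 e^{-t}.
\]
Dividing through by $r$, setting $2e^{-t} = \delta$ so that $t = \log(2/\delta)$, and relaxing $\|\B{A}_s\|_2 \leq \|\B{A}\|_2$ and $\|\B{A}_s\|_F \leq \|\B{A}\|_F$, produces the claimed two-regime bound.

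The main obstacle I expect is simply bookkeeping the constants. The Laurent--Massart inequality in its sharpest form gives coefficient $2$ on both terms for a symmetric matrix, so the factor $4$ multiplying $\|\B{A}\|_2/r$ in the statement is a little slack; it most naturally arises either from a one-sided-to-two-sided union-bound doubling of the linear (large-deviation) term, or from a lossy triangle-inequality translation $\|\B{A}_s\|_2 \leq \tfrac{1}{2}(\|\B{A}\|_2 + \|\B{A}^\top\|_2)$ handled slightly crudely. A secondary point worth articulating in the write-up is that the Gaussian hypothesis on the probes is genuinely used: rotational invariance is what lets the spectral decomposition commute with the randomness and deliver $\|\B{A}\|_F$-dependence in the variance-dominated regime together with $\|\B{A}\|_2$-dependence in the large-deviation regime. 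For other isotropic distributions (e.g.\ Rademacher), the same scheme goes through but with distribution-specific sub-exponential constants that would have to be tracked separately.
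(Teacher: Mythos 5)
Your proof is correct, and its decisive step is exactly the paper's: symmetrize via $\B{A}_s=\tfrac12(\B{A}+\B{A}^\top)$, observe that the quadratic form, the trace, and (by the triangle inequality) both norms only improve under this replacement, and thereby reduce the general case to the symmetric one. The only divergence is in how the symmetric case is then dispatched: the paper invokes Theorem~5 of Cortinovis--Kressner as a black box (their tail bound $2\exp\bigl(-r\epsilon^2/(4\|\B{B}\|_F^2+4\epsilon\|\B{B}\|_2)\bigr)$ is what produces the constants $4$ and $2$ in the statement), whereas you re-derive that bound from scratch by diagonalizing $\B{A}_s$, using rotation invariance to reduce to a weighted sum of centered $\chi^2_1$ variables, and applying Laurent--Massart. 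Since that is essentially how the cited theorem is itself proved, your route is a self-contained inlining rather than a genuinely different argument; it buys independence from the reference and, as you note, slightly better constants. The one point to tighten in a write-up is that Laurent--Massart is usually stated for nonnegative coefficients, while the eigenvalues of $\B{A}_s$ are signed; you need either the standard signed extension (split the index set by sign, use independence of the two groups, and union bound) or a generic sub-exponential Bernstein inequality, either of which only perturbs absolute constants and is comfortably absorbed by the slack you already identified in the factor $4$.
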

The proof uses the following result on trace estimation of symmetric matrices.
\begin{lemma}[Theorem 5 of \citep{cortinovis2020randomized}]\label{lm:sym} Let $\B{B} \in \mathbb{R}^{N \times N}$ be symmetric. Then
\[ P(| G(\B{B}) -\operatorname{tr}(\B{B})| \geq \epsilon) \leq 2\exp\left(-\frac{r\epsilon^2}{4\|\B{B}\|_F^2+4\epsilon \|\B{B}\|_2}\right)\]
for all $ \epsilon > 0$.
\end{lemma}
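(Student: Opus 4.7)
The plan is to reduce the problem, via spectral decomposition of $\B{B}$, to concentration of a weighted sum of independent centered $\chi^2_1$ random variables, and then run a Chernoff argument on a standard sub-exponential MGF estimate.

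First, since $\B{B}$ is symmetric, I would diagonalize $\B{B} = \B{U}\B{D}\B{U}^\top$ with $\B{D}=\operatorname{diag}(\lambda_1,\dots,\lambda_N)$ and $\B{U}$ orthogonal. Rotational invariance of the standard Gaussian implies $\B{y}_j := \B{U}^\top\B{z}_j$ is still i.i.d.\ $\mathcal{N}(\B{0},\B{I}_N)$, so
\[
\B{z}_j^\top \B{B} \B{z}_j = \B{y}_j^\top \B{D}\B{y}_j = \sum_{i=1}^N \lambda_i\, y_{j,i}^2.
\]
Since $\mathbb{E}[\B{z}_j^\top \B{B}\B{z}_j] = \operatorname{tr}(\B{B})$, centering gives
\[
S \;:=\; r\bigl(G(\B{B})-\operatorname{tr}(\B{B})\bigr) \;=\; \sum_{j=1}^r\sum_{i=1}^N \lambda_i (y_{j,i}^2 - 1),
\]
a sum of $rN$ independent, centered, weighted $\chi^2_1$ random variables, with weight $\lambda_i$ appearing $r$ times.

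Second, I would derive a sub-exponential MGF bound per term. From the explicit formula $\mathbb{E}[e^{s(g^2-1)}] = e^{-s}/\sqrt{1-2s}$ for $|s|<1/2$ and the Taylor expansion $-s-\tfrac12\log(1-2s)=\sum_{k\geq 2}(2s)^k/(2k)$, a term-wise absolute-value bound yields
\[
\log \mathbb{E}[e^{s(g^2-1)}] \;\leq\; \frac{s^2}{1-2|s|}, \qquad |s|<\tfrac12.
\]
Substituting $s = t\lambda_i$ with $t>0$, using $|\lambda_i|\leq \|\B{B}\|_2$, and invoking independence across $(i,j)$ gives
\[
\log \mathbb{E}[e^{tS}] \;\leq\; \sum_{j=1}^r\sum_{i=1}^N \frac{t^2\lambda_i^2}{1-2t|\lambda_i|} \;\leq\; \frac{r\,t^2\|\B{B}\|_F^2}{1-2t\|\B{B}\|_2}, \qquad 0<t<\tfrac{1}{2\|\B{B}\|_2}.
\]

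Third, I would apply the Chernoff method. The right-hand side has the standard Bernstein form $\tfrac{t^2 v}{2(1-ct)}$ with $v=2r\|\B{B}\|_F^2$ and $c=2\|\B{B}\|_2$. The choice $t=u/(v+cu)$ lies in the admissible range and, upon substitution, yields
\[
P(S\geq u) \;\leq\; \exp\!\left(-\frac{u^2}{2(v+cu)}\right) \;=\; \exp\!\left(-\frac{u^2}{4r\|\B{B}\|_F^2 + 4\|\B{B}\|_2\, u}\right).
\]
Because $-\B{B}$ is symmetric with the same $\|\cdot\|_F$ and $\|\cdot\|_2$, the identical tail bound holds for $-S$; a union bound supplies the factor of $2$. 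Finally, setting $u=r\epsilon$ so that $\{|S|\geq u\}=\{|G(\B{B})-\operatorname{tr}(\B{B})|\geq \epsilon\}$ and canceling one power of $r$ from numerator and denominator produces the stated inequality.

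The main obstacle is bookkeeping: getting the constants $4$ in the denominator exactly right requires the sharp MGF estimate above and the balanced Bernstein choice $t=u/(v+cu)$; a looser MGF bound or a suboptimal $t$ would inflate them. A minor subtlety is that $\lambda_i$ may be negative, which is why the MGF estimate is stated with $|s|$ and then bounded uniformly by $\|\B{B}\|_2$ after summing. Everything else is routine.
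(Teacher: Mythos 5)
Your proof is correct, and the constants come out exactly right. Note that the paper itself contains no proof of this lemma---it is imported verbatim as Theorem~5 of the cited reference \cite{cortinovis2020randomized}---so there is no internal argument to compare against; what you have produced is a valid self-contained derivation, and it follows essentially the same route as the cited source (diagonalize, reduce to weighted centered $\chi^2_1$ variables, Chernoff with a sub-gamma MGF estimate). The key steps all check out: the expansion $-s-\tfrac12\log(1-2s)=\sum_{k\geq 2}(2s)^k/(2k)$ with the term-wise bound $2k\geq 4$ gives $\log\mathbb{E}[e^{s(g^2-1)}]\leq s^2/(1-2|s|)$ for $|s|<1/2$; substituting $s=t\lambda_i$ and summing yields $\log\mathbb{E}[e^{tS}]\leq rt^2\|\B{B}\|_F^2/(1-2t\|\B{B}\|_2)$, which is the Bernstein form with $v=2r\|\B{B}\|_F^2$ and $c=2\|\B{B}\|_2$; the choice $t=u/(v+cu)$ is admissible (it is automatically below $1/c$ since $v>0$) and gives $P(S\geq u)\leq\exp\bigl(-u^2/(2(v+cu))\bigr)$, and with $u=r\epsilon$ this is precisely the stated bound after the $\B{B}\mapsto-\B{B}$ union bound, which is legitimate because the one-sided argument used only $t>0$ and $|\lambda_i|\leq\|\B{B}\|_2$, both invariant under negation. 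The only caveats are trivial: the case $\B{B}=\B{0}$ should be dispatched separately (the admissible $t$-range is otherwise vacuously constrained), and one should say explicitly that the $y_{j,i}$ are jointly independent across both indices, which rotation invariance indeed provides since the $\B{z}_j$ are independent.
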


\begin{proof}[Proof of Proposition \ref{pro:A}]
For a symmetric matrix $\B{B}$, Lemma \ref{lm:sym} immediately implies that for any small number $\delta >0$, with probability $1-\delta$,
\[
| G(\B{B}) - \operatorname{tr}(\B{B})| \leq \frac{4\|\B{B}\|_2}{r}\log \frac{2}{\delta} + \frac{2\|\B{B}\|_F}{\sqrt r}\log^{1/2} \frac{2}{\delta}.
\]
Now for our asymmetric $\B{A}$, let $\B{B} = \frac{\B{A}+\B{A}^{\top}}{2}$. Then $G(\B{A}) = G(\B{B})$, $\operatorname{tr}(\B{A})=\operatorname{tr}(\B{B})$, $\|\B{B}\|_2 \leq \|\B{A}\|_2$, and $\|\B{B}\|_F\leq \|\B{A}\|_F$, then the proposition follows.
\end{proof}

\subsubsection{Preparation lemmas for Theorem 1}
\begin{lemma}\label{lm:off} Let $\B{A}\in \mathbb{R}^{N \times N}$ be a square matrix, $\B{z}_j, \B{x}_j\sim \mathcal{N}(\B{0},\B{I}_N)$ be random Gaussian vectors for $j=1,\ldots,r$, and all the $\B{x}_j$ and $\B{z}_j$ are independent of each other. Then for any $\delta > 0$, with probability $1-\delta$,
\[
\left|\frac{1}{r} \sum\limits_{j=1}^{r} \B{z}_j^{\top}\B{A}\B{x}_j \right| \leq c \left( \frac{\|\B{A}\|_2}{r}\log \frac{2}{\delta} + \frac{\|\B{A}\|_F}{\sqrt r}\log^{1/2} \frac{2}{\delta}\right)
\]
where $c$ is some absolute constant independent of $r$.
\end{lemma}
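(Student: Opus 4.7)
The plan is to reduce the bilinear (off-diagonal) Gaussian sum to the symmetric Hutchinson bound already established in Lemma~\ref{lm:sym} via a standard symmetric dilation. I would introduce the $2N\times 2N$ symmetric block matrix
\[
\tilde{\B{A}} = \begin{pmatrix} \B{0} & \B{A} \\ \B{A}^{\top} & \B{0}\end{pmatrix}
\]
together with the stacked probes $\tilde{\B{z}}_j = (\B{z}_j^{\top},\ \B{x}_j^{\top})^{\top}\in\mathbb{R}^{2N}$. Because $\B{z}_j$ and $\B{x}_j$ are independent standard Gaussians, each $\tilde{\B{z}}_j$ is $\mathcal{N}(\B{0},\B{I}_{2N})$, and the $\tilde{\B{z}}_j$'s are i.i.d., so they satisfy exactly the hypotheses required by the symmetric trace estimator.

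Next I would expand the block quadratic form and observe that the cross-terms collapse:
\[
\tilde{\B{z}}_j^{\top}\tilde{\B{A}}\tilde{\B{z}}_j = \B{z}_j^{\top}\B{A}\B{x}_j + \B{x}_j^{\top}\B{A}^{\top}\B{z}_j = 2\,\B{z}_j^{\top}\B{A}\B{x}_j.
\]
Hence the quantity of interest is precisely half of the symmetric Hutchinson estimator applied to $\tilde{\B{A}}$, i.e.\ $\tfrac{1}{r}\sum_j \B{z}_j^{\top}\B{A}\B{x}_j = \tfrac{1}{2}\,G(\tilde{\B{A}})$. Crucially, $\operatorname{tr}(\tilde{\B{A}})=0$ because the diagonal blocks vanish, so the desired quantity equals $\tfrac{1}{2}\bigl(G(\tilde{\B{A}})-\operatorname{tr}(\tilde{\B{A}})\bigr)$, which is exactly the object controlled by Lemma~\ref{lm:sym}.

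At this point I would invoke Lemma~\ref{lm:sym} applied to the symmetric matrix $\tilde{\B{A}}$ with failure probability $\delta$, using the straightforward block-structure identities $\|\tilde{\B{A}}\|_2 = \|\B{A}\|_2$ and $\|\tilde{\B{A}}\|_F^2 = 2\|\B{A}\|_F^2$. Dividing the resulting deviation inequality by $2$ and absorbing the extra $\sqrt{2}$ into an absolute constant yields, with probability at least $1-\delta$,
\[
\left|\tfrac{1}{r}\sum_{j=1}^{r}\B{z}_j^{\top}\B{A}\B{x}_j\right| \;\leq\; c\left(\frac{\|\B{A}\|_2}{r}\log\frac{2}{\delta} + \frac{\|\B{A}\|_F}{\sqrt{r}}\log^{1/2}\frac{2}{\delta}\right),
\]
which is the claim.

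The only genuinely non-routine step is spotting the dilation; everything afterward is bookkeeping of two norms and a scalar factor. A natural alternative would be to apply a Hanson--Wright-type inequality directly to the quadratic form $\sum_j \tilde{\B{z}}_j^{\top}\tilde{\B{A}}\tilde{\B{z}}_j$ from scratch, but this would essentially re-derive Lemma~\ref{lm:sym}. The dilation is the shortest route and piggybacks cleanly on the already-cited bound of Cortinovis and Kressner, so I expect this to be the write-up's main (and essentially only) conceptual step.
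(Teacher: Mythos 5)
Your proof is correct, but it takes a genuinely different route from the paper's. The paper proves Lemma~\ref{lm:off} from scratch: it writes $\B{z}_j^{\top}\B{A}\B{x}_j$ via the SVD $\B{A}=\B{U}\B{S}\B{V}^{\top}$ as $\sum_t s_t\,\tilde{\B{z}}_j[t]\tilde{\B{x}}_j[t]$, notes that the products $\tilde{\B{z}}_j[t]\tilde{\B{x}}_j[t]$ are independent sub-exponential variables by rotation invariance, and applies Bernstein's inequality to the weighted sum, which yields the same two-term (spectral-norm/Frobenius-norm) tail. You instead reduce to the already-established symmetric case via the Jordan--Wielandt dilation
\[
\tilde{\B{A}} = \begin{pmatrix} \B{0} & \B{A} \\ \B{A}^{\top} & \B{0}\end{pmatrix},\qquad \tilde{\B{z}}_j=(\B{z}_j^{\top},\B{x}_j^{\top})^{\top},
\]
and all the ingredients check out: $\tilde{\B{z}}_j\sim\mathcal{N}(\B{0},\B{I}_{2N})$ i.i.d., $\tilde{\B{z}}_j^{\top}\tilde{\B{A}}\tilde{\B{z}}_j=2\B{z}_j^{\top}\B{A}\B{x}_j$, $\operatorname{tr}(\tilde{\B{A}})=0$, $\|\tilde{\B{A}}\|_2=\|\B{A}\|_2$, and $\|\tilde{\B{A}}\|_F^2=2\|\B{A}\|_F^2$, so Lemma~\ref{lm:sym} (which is dimension-free, so applying it at size $2N$ is harmless) delivers the claim with an explicit constant (e.g.\ $c=2$ suffices), whereas the paper's $c$ is left implicit inside the Bernstein constant. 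What the paper's heavier route buys is a template: the identical SVD-plus-Bernstein computation is reused almost verbatim in Lemma~\ref{lm:off2}, where the Bernoulli masking $g_j$ of the $\B{x}_j$'s must be threaded through the concentration step by conditioning. Your dilation would also adapt there (condition on the mask and apply the symmetric bound with the random number $\sum_j g_j$ of effective probes), so nothing is lost, but be aware that choosing your route for Lemma~\ref{lm:off} alone means the argument for Lemma~\ref{lm:off2} would still have to be supplied separately rather than falling out of the same display.
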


\begin{proof}[Proof of Lemma \ref{lm:off}]
Set $T:= \frac{1}{r} \sum\limits_{j=1}^{r} \B{z}_j^{\top}\B{A}\B{x}_j $. For each summand, we have

\begin{align}
\B{z}_j^{\top}\B{A}\B{x}_j = \B{z}_j^{\top}\B{U}\B{S}\B{V}^{\top}\B{x}_j = \tilde{\B{z}}_j^{\top}\B{S}\tilde{\B{x}}_j = \sum_t s_t \tilde{\B{z}}_j[t]\tilde{\B{x}}_j[t] \equiv \sum_t s_tf_{j,t},
\end{align}

where the first equality used the singular value decomposition $\B{A}=\B{U}\B{S}\B{V}^{\top}$, in the second equality, we defined $\tilde{\B{z}}_j = \B{U}^{\top}\B{z}_j$ and $\tilde{\B{x}}_j = \B{V}^{\top} \B{x}_j$, which are still Gaussian. In the third equality, we used $s_t$ to denote the $t^\text{th}$ diagonal entry of $\B{S}$ and $\tilde{\B{z}}_j[t]$ and $\tilde{\B{x}}_j[t]$ to denote the $t^\text{th}$ entry of $\tilde{\B{z}}_j$ and $\tilde{\B{x}}_j$, respectively. In the last equality, we defined $f_{j,t}:=\tilde{\B{z}}_j[t]\tilde{\B{x}}_j[t]$.
Since $\B{z}_j$ and $\B{x}_j$ are i.i.d., so are $f_{j,t}$. And since $f_{j,t}$ are products of independent sub-Gaussian random variables, they obey the sub-exponential distribution, i.e.,
\[
\|f_{j,t}\|_{\psi_1} \leq \|\tilde{\B{z}}_j[t]\|_{\psi_2}\|\tilde{\B{x}}_j[t]\|_{\psi_2} =c^2,
\]
where $\|\cdot\|_{\psi_1} $ denotes the sub-exponential norm and $\|\cdot\|_{\psi_2} $ the sub-Gaussian norm. We also used the property that there is a constant $c$, such that for any $\sigma$, a Gaussian variable $a\sim \mathcal{N}(0,\sigma^2)$ has a sub-Gaussian norm $\|a\|_{\psi_2}\leq c\sigma$, and this property is applied on $\tilde{\B{z}}_j[t]$ and $\tilde{\B{x}}_j[t]$ who are both $\mathcal{N}(0,1)$ variables due to the rotation invariance of Gaussian vectors.

Applying the Bernstein inequality \citep{vershynin2018high} to $T= \frac{1}{r}\sum_{j,t} s_t f_{j,t} $, we obtain
\[
 P(| T| \geq \tilde{t}) \leq e^{-c'\min\{ \frac{r\tilde{t}^2}{4\|\B{A}\|_F^2}, 4\frac{r\tilde{t}}{ \|\B{A}\|_2}\}},
\]
where $c'$ is some absolute constant.
Letting $\delta$ be the right-hand-side probability, the above implies
\[
 P\left(| T| \geq c \left( \frac{\|\B{A}\|_2}{r}\log \frac{2}{\delta} + \frac{\|\B{A}\|_F}{\sqrt r}\log^{1/2} \frac{2}{\delta}\right)\right) \leq \delta,
\]
with some constant $c$. Then the lemma is proved.
\end{proof}

\begin{lemma}\label{lm:off2} Let $\B{A}\in \mathbb{R}^{N \times N}$ be a square matrix, $\B{z}_j, \B{x}_j\sim \mathcal{N}(\B{0},\B{I}_N)$ be random Gaussian vectors for $j=1,\ldots,r$, and all the $\B{x}_j$'s and $\B{z}_j$'s are independent of each other. Let $\B{y}_j$ be the random vector that equals $\B{x}_j$ with probability $p$ and equals $0$ with probability $1-p$. Then for any $\delta>0$ with probability over $1-\delta-2e^{-rp^2/2}$,
\[
\left|\frac{1}{r} \sum\limits_{j=1}^{r} \B{z}_j^{\top}\B{A}\B{y}_j \right| \leq c \left( \frac{\|\B{A}\|_2}{r}\log \frac{2}{\delta} + \frac{\sqrt p \|\B{A}\|_F}{\sqrt r}\log^{1/2} \frac{2}{\delta}\right),
\]
where $c$ is some absolute constant independent of $r$.
\end{lemma}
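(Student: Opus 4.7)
The plan is to condition on the random support of the mask and reduce to Lemma \ref{lm:off}. Write $\B{y}_j = \xi_j\B{x}_j$ where the $\xi_j \sim \mathrm{Bernoulli}(p)$ are i.i.d.\ and independent of every $(\B{z}_j,\B{x}_j)$, and set $J := \{j : \xi_j = 1\}$ and $s := |J|$. Since the $\xi_j$ are independent of the Gaussians, conditional on $J$ the active summands $\{\B{z}_j^\top \B{A}\B{x}_j : j\in J\}$ are i.i.d.\ with exactly the distribution appearing in Lemma \ref{lm:off}, and
\[
\frac{1}{r}\sum_{j=1}^{r}\B{z}_j^\top \B{A}\B{y}_j \;=\; \frac{s}{r}\cdot\frac{1}{s}\sum_{j\in J}\B{z}_j^\top \B{A}\B{x}_j.
\]

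I would then apply Lemma \ref{lm:off} with $r$ replaced by $s$ (uniformly in the realisation of $J$), which gives conditional on $J$, with probability at least $1-\delta$,
\[
\left|\tfrac{1}{s}\sum_{j\in J}\B{z}_j^\top \B{A}\B{x}_j\right| \;\le\; c\Bigl(\tfrac{\|\B{A}\|_2}{s}\log\tfrac{2}{\delta} + \tfrac{\|\B{A}\|_F}{\sqrt s}\log^{1/2}\tfrac{2}{\delta}\Bigr).
\]
Multiplying by $s/r$ converts the first term into $\|\B{A}\|_2/r \cdot \log(2/\delta)$ and the second into $(\sqrt s/r)\|\B{A}\|_F\log^{1/2}(2/\delta)$, which already has the correct leading $1/r$ and $1/\sqrt r$ scaling. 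The only thing left to remove from the bound is the random factor $\sqrt s$.

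To replace $\sqrt s$ by $\sqrt{rp}$ up to a constant, I would invoke a Chernoff/Hoeffding estimate on the Bernoulli sum: with probability at least $1 - 2e^{-rp^2/2}$, $s \le \tfrac{3}{2} rp$ (Hoeffding with $t=rp/2$), so $\sqrt s \le c''\sqrt{rp}$ and the second term is bounded by $c'\sqrt p\,\|\B{A}\|_F/\sqrt r \cdot \log^{1/2}(2/\delta)$. A union bound over the Lemma \ref{lm:off} failure event and the Bernoulli concentration failure event gives total failure probability at most $\delta + 2e^{-rp^2/2}$, which is the claim after absorbing absolute constants into $c$.

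No serious obstacle is expected: the main care point is the measure-theoretic step of applying Lemma \ref{lm:off} conditionally on a random index set. This is legitimate because the lemma's bound is uniform in the number of summands, so averaging the conditional probability statement over $J$ preserves the $1-\delta$ guarantee marginally. Everything else is routine concentration bookkeeping, and the exponent $rp^2/2$ in the statement is precisely what one-sided Hoeffding delivers.
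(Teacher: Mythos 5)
Your argument is correct and follows essentially the same route as the paper: condition on the Bernoulli mask, apply the single-block concentration bound (Lemma \ref{lm:off}, whose content is exactly the Bernstein step the paper redoes inline) to the normalized sum over the $s$ active indices, rescale by $s/r$, and control $s$ by binomial concentration with the same exponent $rp^2/2$. Your version is marginally cleaner in that only the upper tail $s\le \tfrac32 rp$ is needed after the rescaling, whereas the paper first rewrites the conditional bound in terms of $pr$ using the lower tail as well, but this is a cosmetic difference.
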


\begin{proof}
The proof is very similar to that of Lemma \ref{lm:off}. Set $T:= \frac{1}{r} \sum\limits_{j=1}^{r} \B{z}_j^{\top}\B{A}\B{y}_j $. Let $g_j=1_{\{\B{y}_j \neq \B{0}\}}$ be the indicator function of whether $\B{y}_j$ is non-zero. Then clearly $\B{y}_j=\B{x}_j g_j$. For each summand, we have
\begin{align}
\B{z}_j^{\top}\B{A}\B{y}_j = \B{z}_j^{\top}\B{U}\B{S}\B{V}^{\top}\B{x}_j g_j = \tilde{\B{z}}_j^{\top}\B{S}\tilde{\B{x}}_j g_j = \sum_t s_t \tilde{\B{z}}_j[t]\tilde{\B{x}}_j[t] g_j\equiv g_j\sum_t s_t f_{j,t},
\end{align}

where in the first equality, we used the singular value decomposition, $\B{A}=\B{U}\B{S}\B{V}^{\top}$, and in the second equality, we defined $\tilde{\B{z}}_j = \B{U}^{\top}\B{z}_j$ and $\tilde{\B{x}}_j = \B{V}^{\top} \B{x}_j$. In the third equality, we used $s_t$ to denote the $t^\text{th}$ diagonal entry of $\B{S}$ with $\tilde{\B{z}}_j[t]$ and $\tilde{\B{x}}_j[t]$ denoting the $t^\text{th}$ entry of $\tilde{\B{z}}_j$ and $\tilde{\B{x}}_j$, respectively. In the last equality, we defined $f_{j,t}:=\tilde{\B{z}}_j[t]\tilde{\B{x}}_j[t]$.
From Lemma \ref{lm:off}, $f_{j,t}$ follow sub-exponential distributions, i.e.,
$\|f_{j,t}\|_{\psi_1} \leq c^2$.

Conditional on $g_j$, applying the Bernstein inequality to $\tilde{T}:= \frac{1}{\sum_j 1_{\{g_j\neq 0\}} }\sum_{j,t} s_t f_{j,t} g_j$, we obtain
\[
 P(| \tilde{T}| \geq \tilde{t}) \leq e^{-c'\min\left\{ \frac{\tilde{t}^2\sum_i 1_{\{g_i\neq 0\}}}{4\|\B{A}\|_F^2}, 4\frac{\tilde{t}\sum_j 1_{\{g_j\neq 0\}} }{ \|\B{A}\|_2}\right\}}.
\]
Letting $\delta$ be the right-hand-side probability, the above implies
\[
 P\left(| \tilde{T}| \geq c \left( \frac{\|\B{A}\|_2}{\sum_j 1_{\{g_j\neq 0\}} }\log \frac{2}{\delta} + \frac{\|\B{A}\|_F}{\sqrt {\sum_j 1_{\{g_j\neq 0\}} }}\log^{1/2} \frac{2}{\delta}\right)\right) \leq \delta.
\]
 Since $\sum_j 1_{\{g_j\neq 0\}}\sim \B{B}(r,p)$, we then have with probability at least $1-2e^{-rp^2/2}$, $3rp/2\geq \sum_j 1_{\{g_j\neq 0\}} \geq rp/2$. Plugging this estimate into the above, we have
 \[
 P\left(| \tilde{T}| \geq c \left( \frac{\|\B{A}\|_2}{pr }\log \frac{2}{\delta} + \frac{\|\B{A}\|_F}{\sqrt {pr }}\log^{1/2} \frac{2}{\delta}\right)\right) \leq \delta.
\]
Then, with this bound of $|\tilde{T}|$, we have
\[
|T| = \left|\frac{1}{r} \sum\limits_{j=1}^{r} \B{z}_j^{\top}\B{A}\B{y}_j \right| = \left|\frac{ \sum_j 1_{\{g_j\neq 0\}}}{r} \tilde{T}\right|\leq c\left( \frac{\|\B{A}\|_2}{r }\log \frac{2}{\delta} + \frac{\sqrt p \|\B{A}\|_F}{\sqrt {r }}\log^{1/2} \frac{2}{\delta}\right),
\]
which is the statement of this lemma.
\end{proof}

\subsubsection{Multi-channel result: Proof of Theorem 1}
\textbf{Index convention.} In this proof, $G^{m,n}$ denotes the estimator for $\operatorname{tr}(\B{A}^{m,n})$, with the first superscript indexing output channels and the second indexing input channels. In the main text (Theorem~1, succinct version), the superscript order is reversed: $\widetilde{G}^{n,m}$ with the first index for input channels and the second for output channels. The two conventions are equivalent up to relabeling.

For simplicity, we assume the number of input and output channels are the same and both equal to $C$. Let
\[
\B{A} = \left( \begin{matrix}\B{A}^{1,1} & \cdots& \B{A}^{1,C} \\ \vdots & & \vdots \\ \B{A}^{C,1} & \cdots & \B{A}^{C,C} \end{matrix}\right)
\]
the goal is to estimate $\operatorname{tr}(\B{A}^{m,n})$, for $m,n=1,\ldots,C$.

Let $\B{Z}\in \mathbb{R}^{NC\times r}$ be the ``orthogonalized'' matrix of $r$ probing vectors. We further denote by $\B{z}_{n,\cdot}$ the $n^\text{th}$ row block of $\B{Z}$, which is the block containing the $((n-1)N+1)^{th}$ to the $(nN)^{th}$ rows of $\B{Z}$. We also denote by $\B{z}_{j}\in \mathbb{R}^{NC}$, $j=1,\ldots,r$ the $j^\text{th}$ column of $\B{Z}$ (i.e., the $j^\text{th}$ probing vector), and by $\B{z}_{n, j}$ the $n^\text{th}$ block of $\B{z}_{j}$. For any $n=1,\ldots,C$, $j=1,\ldots,r$, we define $\B{z}_{n,j}$ as

\begin{equation}\label{eq:z2}
\B{z}_{n,j} \sim \left\{\begin{aligned}&\mathcal{N}(\B{0},\B{I}_{N}), & \textrm{with probability } p_{n} \\ & 0, & \ \ \textrm{with probability } 1-p_{n} \end{aligned} \right..
\end{equation}

For different values of $(n,j)$, $\B{z}_{n,j}$ are independent of each other. Here $p_n$ is a predefined probability for randomly generating each nonzero block.

With these probing vectors, we define the following estimator for $\operatorname{tr}(\B{A}^{m,n})$
\[
G^{m, n}(\B{A}) : = \frac{1}{\textrm{nnz}(\B{z}_{n,\cdot})} \sum_{j=1}^r \B{z}_{n,j}^{\top} (\B{A} \B{z}_j)_m ,
\]
where $\textrm{nnz}(\B{z}_{n,\cdot})$ is the number of nonzeros columns of $\B{z}_{n,\cdot}$, which is also a random variable.

\begin{manualtheorem}{1}[] Let $p=\min\limits_n p_n$, $r$ be the number of probing vectors. For any small number $\delta>0$, with probability at least $1-\delta- 3Ce^{-rp^2/2}$, we have for any $n,m=1,\ldots,C$,
\[
|G^{m, n}(\B{A})-\operatorname{tr}(\B{A}^{m,n})| \leq c \left(\frac{\sum\limits_{k=1}^C \|\B{A}^{m,k}\|_2}{p_n r}\log\frac{C^2}{\delta} + \frac{\frac{1}{\sqrt p_n}\|\B{A}^{m,n}\|_F +\sum\limits_{j=1,j\neq n}^C\sqrt{\frac{p_j}{p_n}} \|\B{A}^{m,j}\|_F}{\sqrt {r}} \log^{1/2}\frac{C^2}{\delta} \right),
\]

where $c$ is an absolute constant and $C$ is the number of channels.
For sufficiently large number of probing vectors, the above bound reduces to

\[ |G^{m, n}(\B{A})-\operatorname{tr}(\B{A}^{m,n})| \leq c \cdot  \frac{\frac{1}{\sqrt p_n}\|\B{A}^{m,n}\|_F +\sum\limits_{j=1, j\neq n}^C \sqrt{\frac{p_j}{p_n}}\|\B{A}^{m,j}\|_F}{\sqrt {r}} \log^{1/2} \frac{C^2}{\delta} .\]
\end{manualtheorem}
\begin{proof} First we show the estimator is unbiased. For simplicity of notation, let $g_{n,l}=1_{\{\B{z}_{n,l} \neq \B{0}\}}$ be the random variable that indicates whether $\B{z}_{n,l}$ is non-zero. By definition, conditional on $g_{n,l}=1$, $\B{z}_{n,l}$ is Gaussian, and this Gaussian distribution is independent of $g_{n,l}$. Also $\sum_l g_{n,l}\sim \B{B}(r,p_n)$ is Binomial distribution with probability $p_n$.  Then the estimator can be written as
\[
G^{m, n}(\B{A}) : = \frac{1}{\sum_l g_{n,l}} \sum_{l=1}^r \B{z}_{n,l}^{\top} (\B{A} \B{z}_l)_m g_{n,l}.
\]
Taking the expectation, we have
\begin{align*}
\mathbb{E} (G^{m, n}(\B{A}))& = \mathbb{E}_g \left [ \mathbb{E}_{\B{Z}\mid g} \left (\frac{1}{\sum_l g_{n,l}} \sum_{l=1}^r \B{z}_{n,l}^{\top} (\B{A} \B{z}_l)_m g_{n,l} \right ) \right] \\
& = \mathbb{E}_g \left [\frac{1}{\sum_l g_{n,l}}\mathbb{E}_{\B{Z}\mid g} \left ( \sum_{l=1}^r \sum_{k=1}^C \B{z}_{n,l}^{\top} \B{A}^{m,k} \B{z}_{k,l}  g_{n,l} \right ) \right]\\
& = \mathbb{E}_g \left [\frac{1}{\sum_l g_{n,l}} \sum_{l=1}^r \sum_{k=1}^C\mathbb{E}_{\B{Z}\mid g} \left ( \operatorname{tr}( \B{A}^{m,k} \B{z}_{k,l} \B{z}_{n,l}^{\top}) g_{n,l} \right )\right]\\
& = \mathbb{E}_g \left [\frac{1}{\sum_l g_{n,l}} \sum_{l=1}^r \sum_{k=1}^C \operatorname{tr}( \B{A}^{m,k} \mathbb{E}_{\B{Z}\mid g}(\B{z}_{k,l} \B{z}_{n,l}^{\top})) g_{n,l} \right]\\
& = \mathbb{E}_g \left [\frac{1}{\sum_l g_{n,l}} \sum_{l=1}^r \operatorname{tr}(\B{A}^{m,n})g_{n,l} \right]\\
& = \operatorname{tr}(\B{A}^{m,n}),
\end{align*}
where the second to last equality used $\mathbb{E}_{\B{Z}\mid g} (\B{z}_{n,l}\B{z}_{n,l}^{\top}) = g_{n,l} \B{I}_N$ and $\mathbb{E}_{\B{Z}\mid g}( \B{z}_{k,l}\B{z}_{n,l}^{\top}) = 0$ for $k\neq n$. Then we estimate the large deviation,
\begin{align*}
& G^{m, n}(\B{A})-\operatorname{tr}(\B{A}^{m,n}) = \frac{1}{\sum_l g_{n,l}} \sum_{l=1}^r \sum_{k=1}^C \B{z}_{n,l}^{\top} \B{A}^{m,k} \B{z}_{k,l} g_{n,l}  - \operatorname{tr}(\B{A}^{m,n}) \\
& = \frac{1}{\sum_l g_{n,l}} \sum_{l=1}^r \B{z}_{n,l}^{\top} \B{A}^{m,n} \B{z}_{n,l}g_{n,l} - \operatorname{tr}(\B{A}^{m,n}) +\sum_{k=1, k\neq n}^C \left( \frac{1}{\sum_l g_{n,l}}  \sum_{l=1}^r \B{z}_{n,l}^{\top} \B{A}^{m,k} \B{z}_{k,l}g_{n,l} \right).
\end{align*}
The first term is bounded by Proposition \ref{pro:A}, and the second term is bounded by Lemma \ref{lm:off2}. Explicitly, $\sum_l g_{n,l}$ is the number of probing vectors we are using in probing $G^{m, n}(\B{A})$, so conditional on $g_{n,l}$, Proposition \ref{pro:A} yields an upper bound on the first term in the above right hand side, with probability $1-\delta'$
\begin{equation}\label{eq:1st}
\left| \frac{1}{\sum_l g_{n,l}} \sum_{l=1}^r \B{z}_{n,l}^{\top} \B{A}^{m,n} \B{z}_{n,l}g_{n,l} - \operatorname{tr}(\B{A}^{m,n})\right| \leq \frac{4\|\B{A}^{m,n}\|_2}{\sum_l g_{n,l}}\log \frac{2}{\delta'} + \frac{2\|\B{A}^{m,n}\|_F}{\sqrt {\sum_l g_{n,l}}}\log^{1/2} \frac{2}{\delta'}.
\end{equation}
By Lemma \ref{lm:off2}, the bound on the second term is, with probability $1-\delta'-2Ce^{-rp^2/2}$,
\begin{align}\label{eq:2nd}
& \left|\sum_{k=1, k\neq n}^C \left( \frac{1}{\sum_l g_{n,l}}  \sum_{l=1}^r \B{z}_{n,l}^{\top} \B{A}^{m,k} \B{z}_{k,l}g_{n,l} \right) \right|  \leq \sum_{k=1, k\neq n}^C\left| \frac{1}{\sum_l g_{n,l}}  \sum_{l=1}^r \B{z}_{n,l}^{\top} \B{A}^{m,k} \B{z}_{k,l} g_{n,l}\right| \notag \\
& \leq c\sum_{k=1, k\neq n}^C \left( \frac{\|\B{A}^{m,k}\|_2}{\sum_l g_{n,l}} \log \frac{2}{\delta'} + \frac{\sqrt{p_k}\|\B{A}^{m,k}\|_F}{\sqrt {\sum_l g_{n,l}}} \log^{1/2} \frac{2}{\delta'} \right).
\end{align}
Since $\sum_l g_{n,l}\sim \B{B}(r,p_n)$, so with probability at least $1-Ce^{-r p_n^2/2}$, we have $\sum_l g_{n,l}>p_n r/2$. Combining this, \eqref{eq:1st}, and \eqref{eq:2nd} gives
\begin{align*}
|G^{m, n}(\B{A})-\operatorname{tr}(\B{A}^{m,n})| & \leq c \left(\frac{\sum\limits_{k=1}^C \|\B{A}^{m,k}\|_2}{p_n r}\log\frac{2}{\delta'} + \frac{\frac{1}{\sqrt p_n}\|\B{A}^{m,n}\|_F +\sum\limits_{k=1,k\neq n}^C\sqrt{\frac{p_k}{p_n}} \|\B{A}^{m,k}\|_F}{\sqrt {r}} \log^{1/2}\frac{2}{\delta'} \right).
\end{align*}
For sufficiently large $r$, the second term is dominant and the bound reduces to
\[
|G^{m, n}(\B{A})-\operatorname{tr}(\B{A}^{m,n})| \leq c'  \frac{\frac{1}{\sqrt p_n}\|\B{A}^{m,n}\|_F +\sum_{k=1, k\neq n}^C \sqrt{\frac{p_k}{p_n}}\|\B{A}^{m,k}\|_F}{\sqrt {r}} \log^{1/2} \frac{1}{\delta'} .
\]

So far the result is for a given pair of $m,n$. By the union bound of probability, the probability of failure for any $m,n$ is $\delta = \delta' C^2$. Then with probability at least $1-\delta- 3Ce^{-rp^2/2}$, we have
\[ |G^{m, n}(\B{A})-\operatorname{tr}(\B{A}^{m,n})| \leq c \cdot  \frac{\frac{1}{\sqrt p_n}\|\B{A}^{m,n}\|_F +\sum_{k=1, k\neq n}^C \sqrt{\frac{p_k}{p_n}}\|\B{A}^{m,k}\|_F}{\sqrt {r}} \log^{1/2} \frac{C^2}{\delta}. \]

\end{proof}

\subsection{Average Gradient Error}
\label{app:avg_grad_err}

We report the full set of average gradient error curves versus the number of probing vectors $r$ for all three models, in both single (fp32) and half (fp16) precision, under the $80$ GB memory budget. SqueezeNet is shown in Figure~\ref{fig:extra_age_results_squeezenet}, VanillaNet in Figure~\ref{fig:extra_age_results_vanillanet}, and U-Net in Figure~\ref{fig:extra_age_results_unet}. The half-precision panels are omitted for U-Net because at fp16 the gradient error of every method collapses below the rounding floor, which makes the comparison uninformative.

\begin{table}[h]
\centering
\small
\caption{Convolutional channel widths in VanillaNet. The per-layer product $C_{\text{in}}\times C_{\text{out}}$ grows to $1.7\times10^7$ in the deepest stages, far exceeding the corresponding products in SqueezeNet and the U-Net; by Theorem~\ref{theorem1} this larger channel product drives VanillaNet's higher single-precision average gradient error under a fixed probing budget.}
\label{tab:vanillanet_channels}
\begin{tabular}{lccc}
\toprule
Layer group & $C_{\text{in}}$ & $C_{\text{out}}$ & $C_{\text{in}}\times C_{\text{out}}$ \\
\midrule
stem & $512$ & $512$ & $2.6\times10^5$ \\
stages 0--1 & $512$ & $512$--$1024$ & $2.6$--$5.2\times10^5$ \\
stage 2 & $1024$ & $1024$--$2048$ & $1.0$--$2.1\times10^6$ \\
stages 3--5 & $2048$ & $2048$ & $4.2\times10^6$ \\
stage 6 & $2048$ & $4096$ & $8.4\times10^6$ \\
stage 7 & $4096$ & $4096$ & $1.7\times10^7$ \\
\bottomrule
\end{tabular}
\end{table}

\begin{figure}[t]
    \centering

    \begin{subfigure}[t]{0.48\linewidth}
        \centering
        \includegraphics[width=\linewidth]{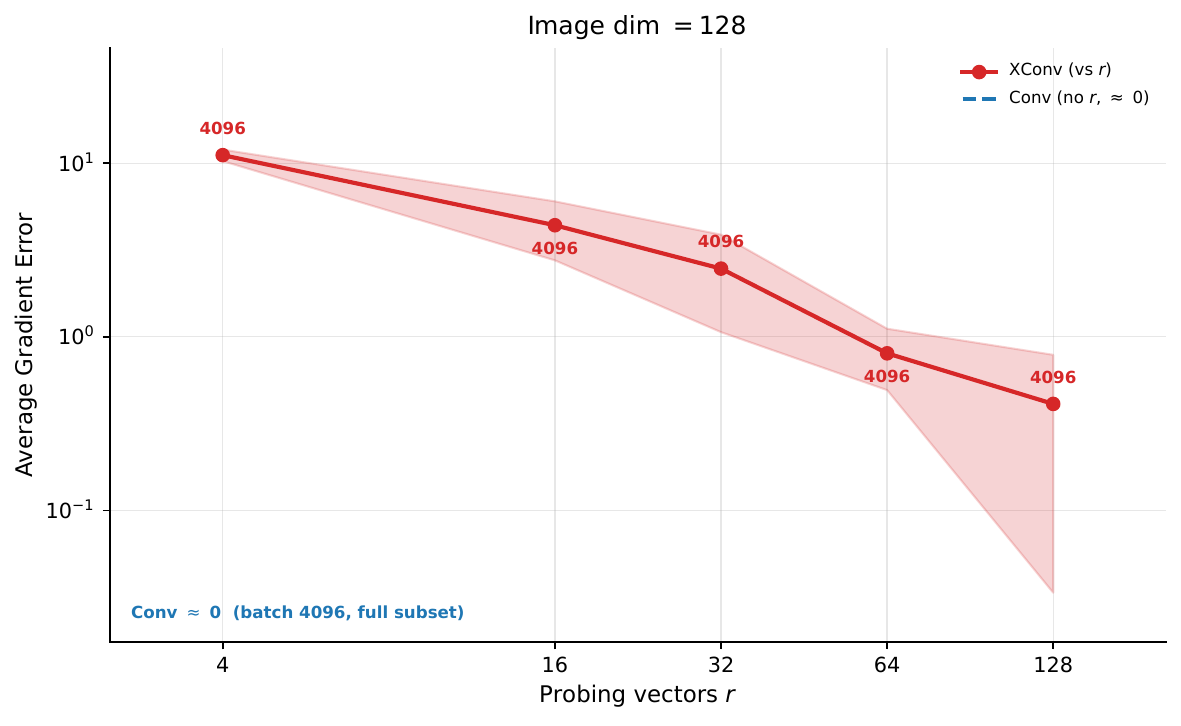}
        \caption{$N = 128$, fp32}
        \label{subfig:squeezenet_age_img128_fp32}
    \end{subfigure}
    \hfill
    \begin{subfigure}[t]{0.48\linewidth}
        \centering
        \includegraphics[width=\linewidth]{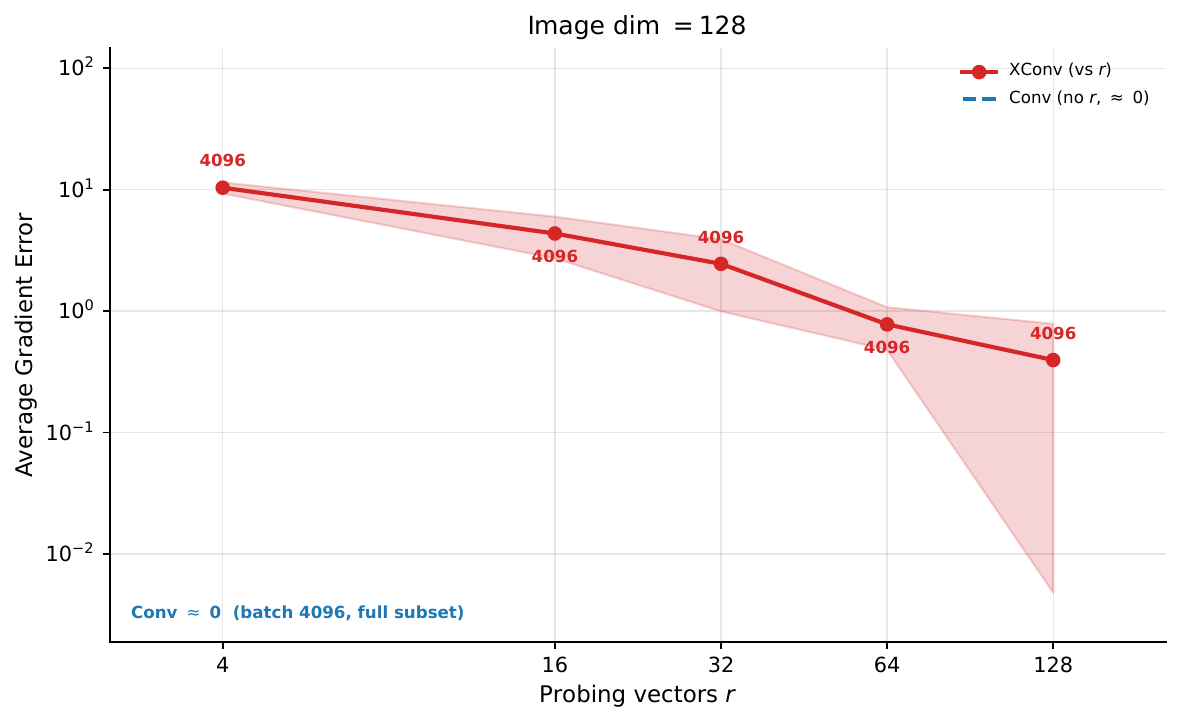}
        \caption{$N = 128$, fp16}
        \label{subfig:squeezenet_age_img128_fp16}
    \end{subfigure}
    \vspace{0.6em}

    \begin{subfigure}[t]{0.48\linewidth}
        \centering
        \includegraphics[width=\linewidth]{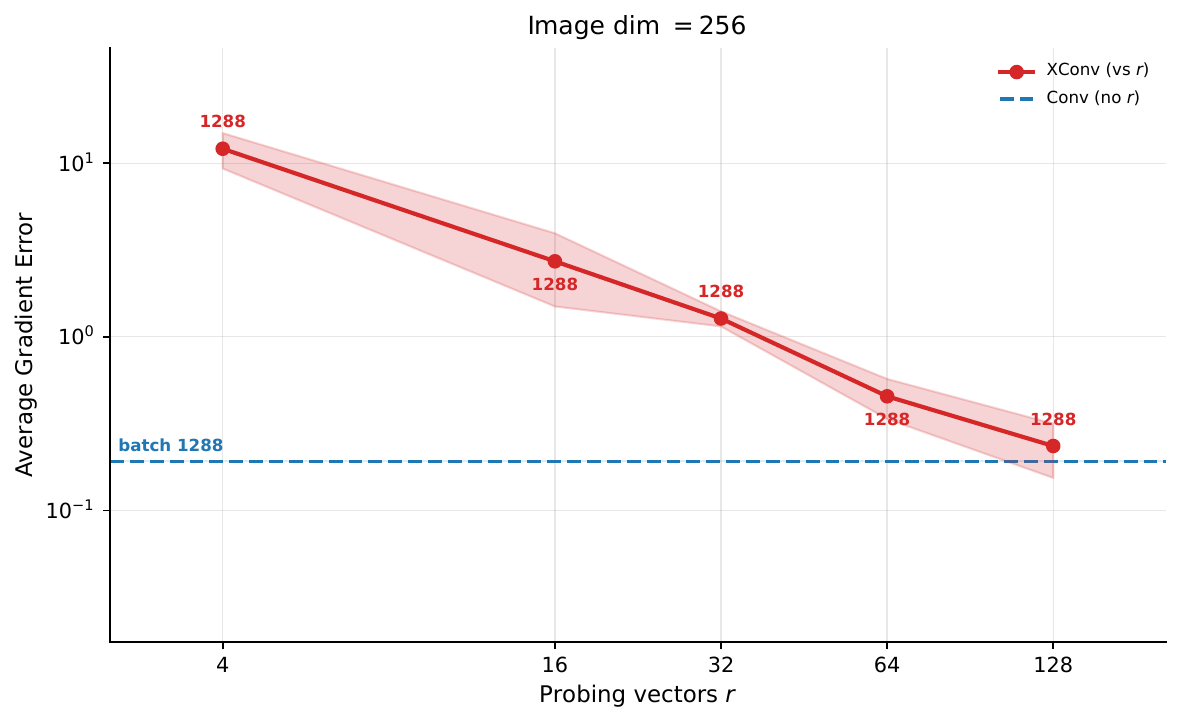}
        \caption{$N = 256$, fp32}
        \label{subfig:squeezenet_age_img256_fp32}
    \end{subfigure}
    \hfill
    \begin{subfigure}[t]{0.48\linewidth}
        \centering
        \includegraphics[width=\linewidth]{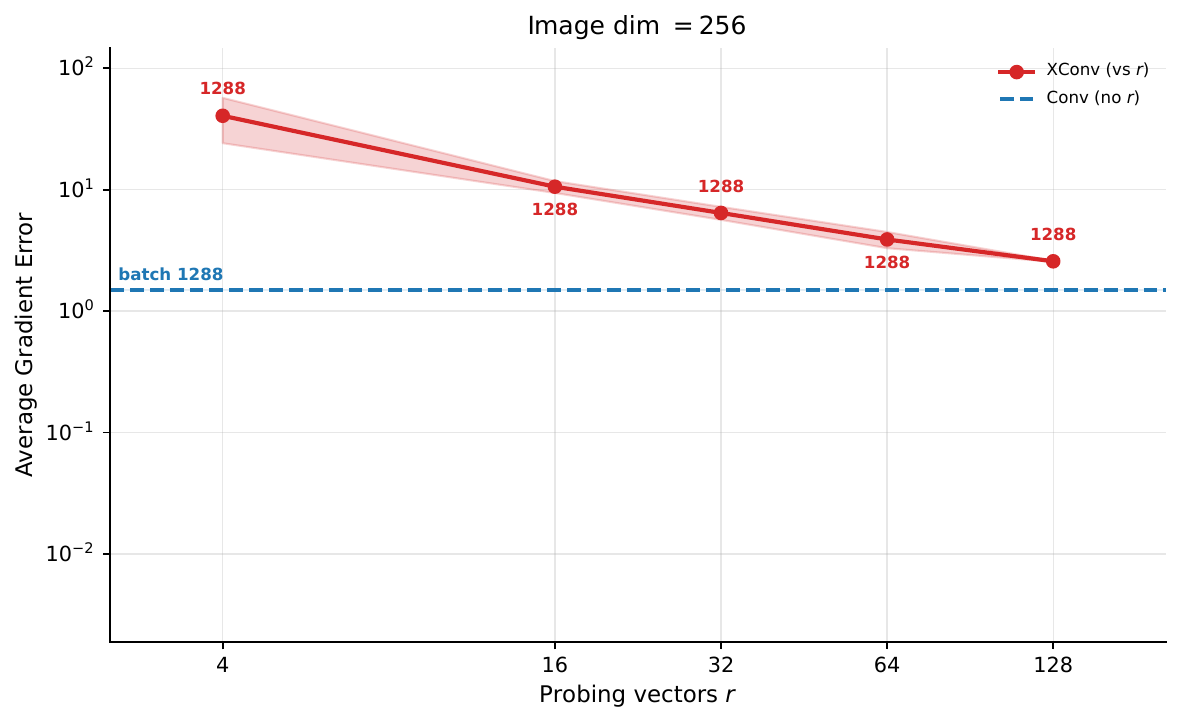}
        \caption{$N = 256$, fp16}
        \label{subfig:squeezenet_age_img256_fp16}
    \end{subfigure}
    \vspace{0.6em}

    \begin{subfigure}[t]{0.48\linewidth}
        \centering
        \includegraphics[width=\linewidth]{new_figures/80gb/age_vs_r/squeezenet_age_vs_r_img512_fp32.pdf}
        \caption{$N = 512$, fp32}
        \label{subfig:squeezenet_age_img512_fp32}
    \end{subfigure}
    \hfill
    \begin{subfigure}[t]{0.48\linewidth}
        \centering
        \includegraphics[width=\linewidth]{new_figures/80gb/age_vs_r/squeezenet_age_vs_r_img512_fp16.pdf}
        \caption{$N = 512$, fp16}
        \label{subfig:squeezenet_age_img512_fp16}
    \end{subfigure}

    \caption{Average gradient error versus the number of probing vectors $r$ for SqueezeNet at image dimensions $N \in \{128, 256, 512\}$, in single precision (left) and half precision (right), under the $80$ GB memory budget. The red curve is XConv and the blue dashed line the standard convolution floor. The gap closes with $r$ in single precision, and at half precision the XConv error coincides with the convolution floor.}
    \label{fig:extra_age_results_squeezenet}
    \vspace{-1.0em}
\end{figure}

\begin{figure}[t]
    \centering

    \begin{subfigure}[t]{0.48\linewidth}
        \centering
        \includegraphics[width=\linewidth,height=0.16\textheight,keepaspectratio]{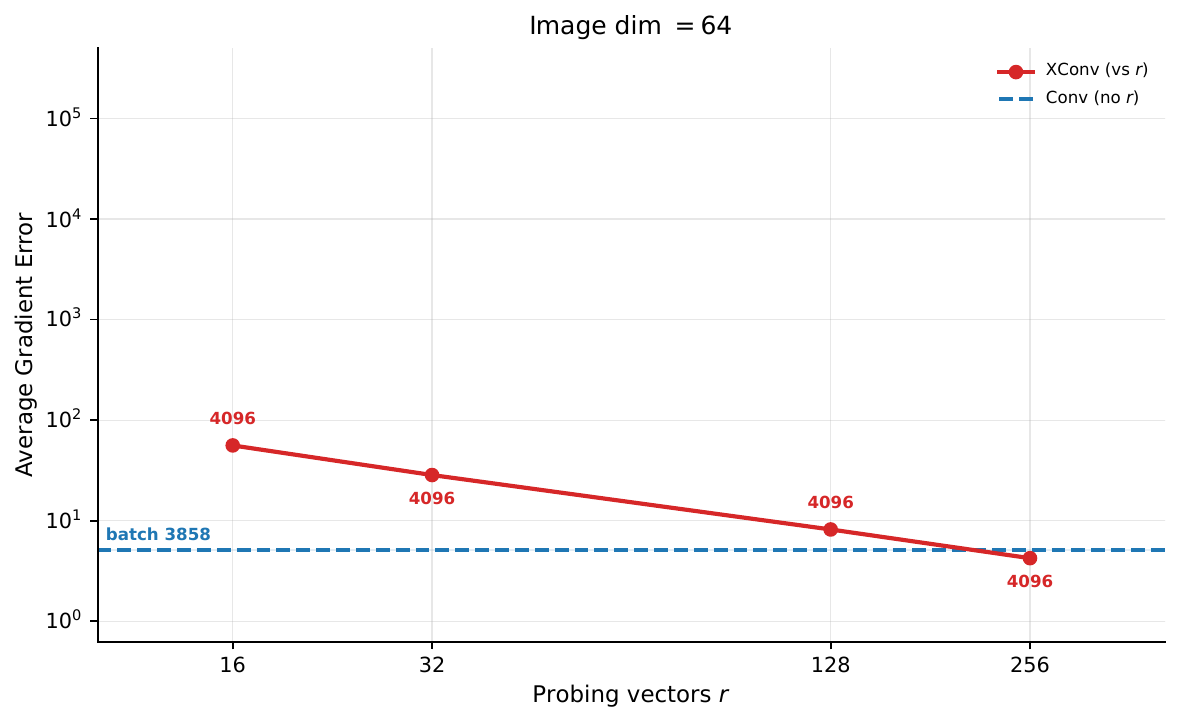}
        \caption{$N = 64$, fp32}
        \label{subfig:vanillanet_age_img64_fp32}
    \end{subfigure}
    \hfill
    \begin{subfigure}[t]{0.48\linewidth}
        \centering
        \includegraphics[width=\linewidth,height=0.16\textheight,keepaspectratio]{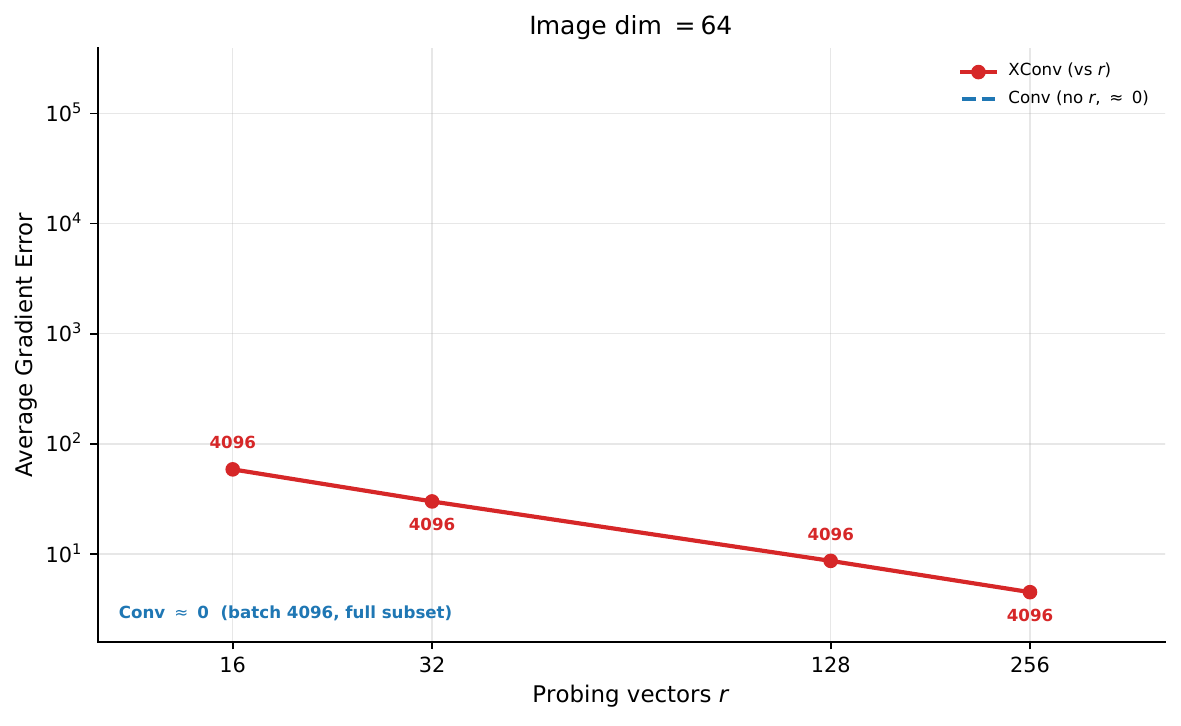}
        \caption{$N = 64$, fp16}
        \label{subfig:vanillanet_age_img64_fp16}
    \end{subfigure}
    \vspace{0.6em}

    \begin{subfigure}[t]{0.48\linewidth}
        \centering
        \includegraphics[width=\linewidth,height=0.16\textheight,keepaspectratio]{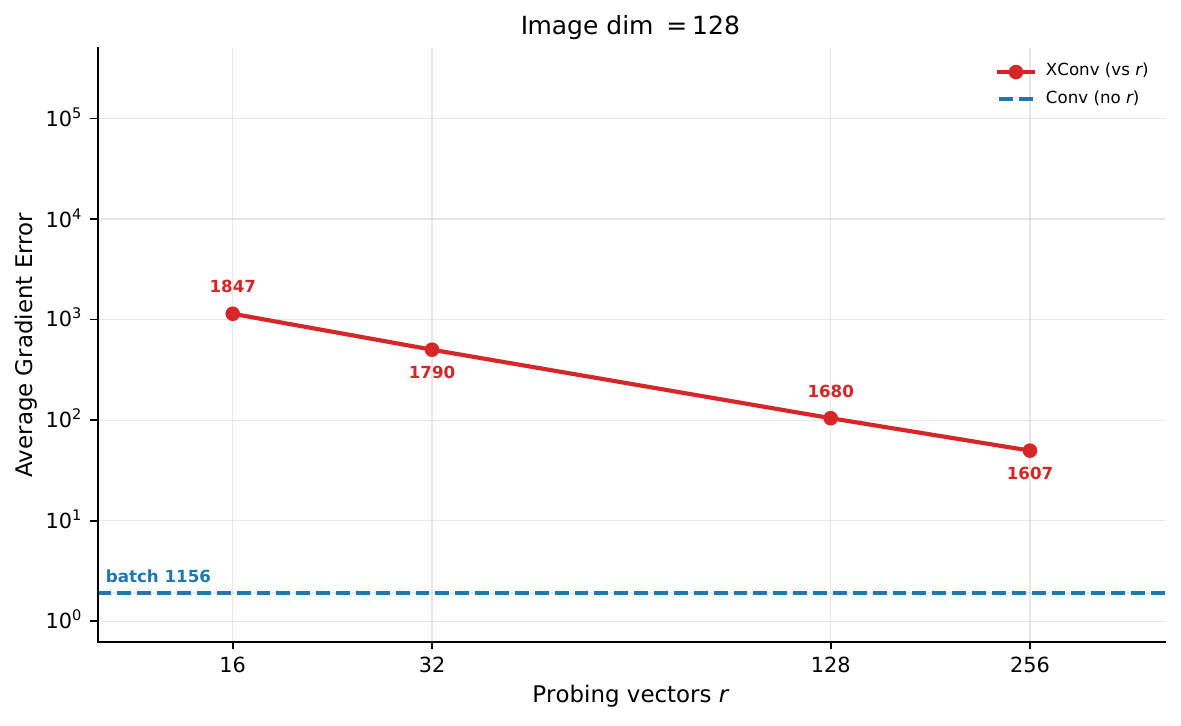}
        \caption{$N = 128$, fp32}
        \label{subfig:vanillanet_age_img128_fp32}
    \end{subfigure}
    \hfill
    \begin{subfigure}[t]{0.48\linewidth}
        \centering
        \includegraphics[width=\linewidth,height=0.16\textheight,keepaspectratio]{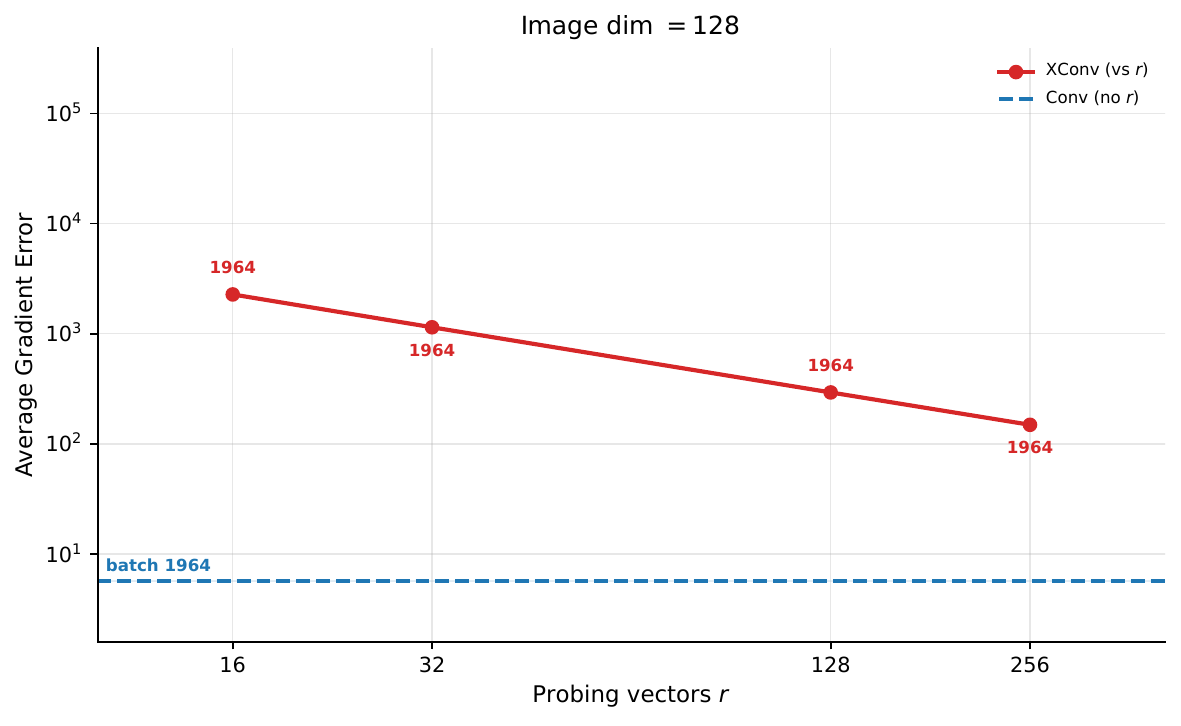}
        \caption{$N = 128$, fp16}
        \label{subfig:vanillanet_age_img128_fp16}
    \end{subfigure}
    \vspace{0.6em}

    \begin{subfigure}[t]{0.48\linewidth}
        \centering
        \includegraphics[width=\linewidth,height=0.16\textheight,keepaspectratio]{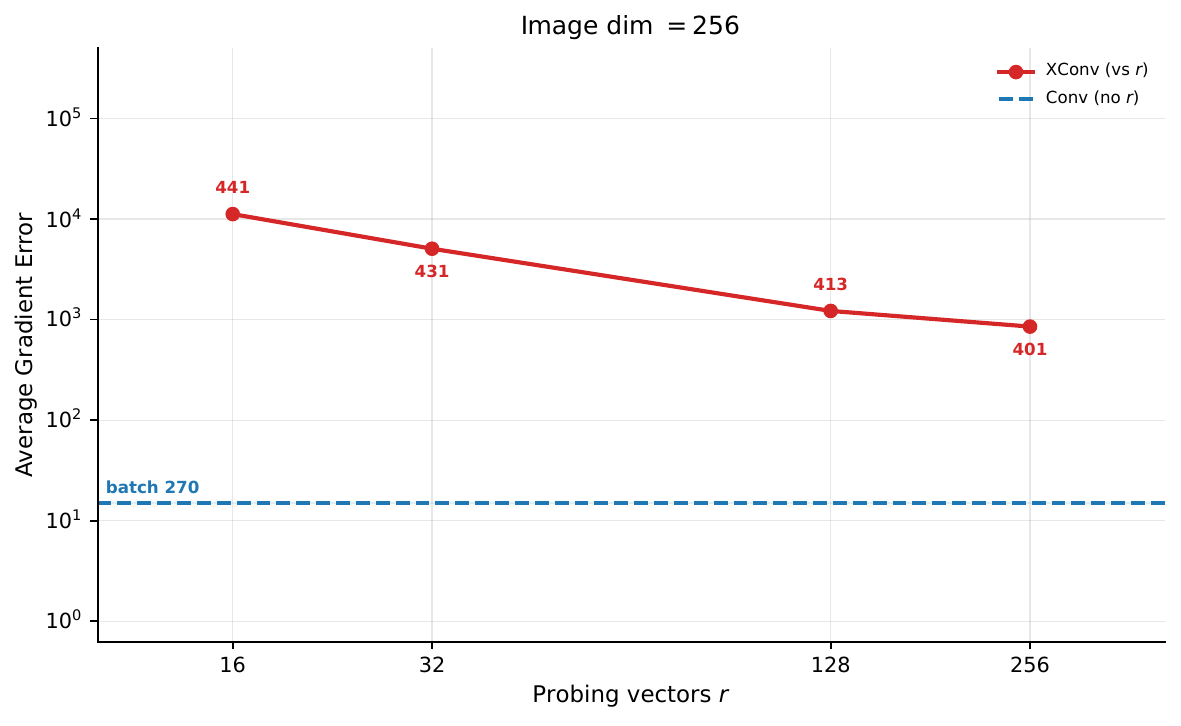}
        \caption{$N = 256$, fp32}
        \label{subfig:vanillanet_age_img256_fp32}
    \end{subfigure}
    \hfill
    \begin{subfigure}[t]{0.48\linewidth}
        \centering
        \includegraphics[width=\linewidth,height=0.16\textheight,keepaspectratio]{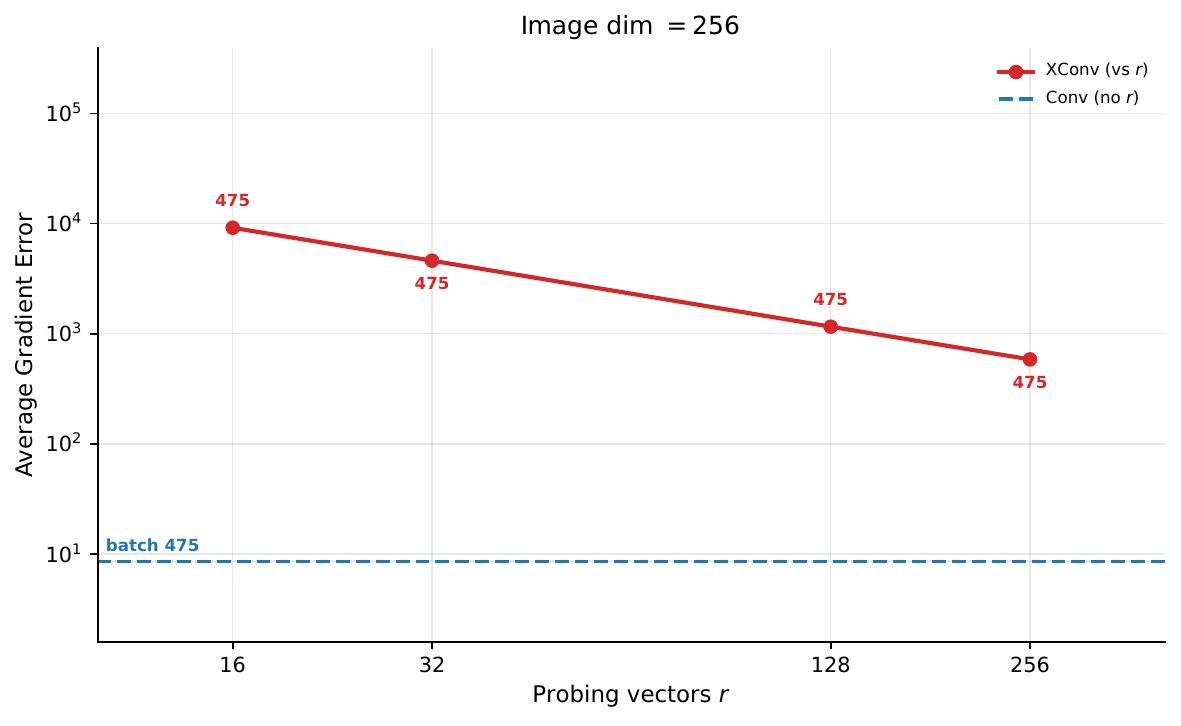}
        \caption{$N = 256$, fp16}
        \label{subfig:vanillanet_age_img256_fp16}
    \end{subfigure}
    \vspace{0.6em}

    \begin{subfigure}[t]{0.48\linewidth}
        \centering
        \includegraphics[width=\linewidth,height=0.16\textheight,keepaspectratio]{new_figures/80gb/age_vs_r/vanillanet_age_vs_r_img512_fp32.pdf}
        \caption{$N = 512$, fp32}
        \label{subfig:vanillanet_age_img512_fp32}
    \end{subfigure}
    \hfill
    \begin{subfigure}[t]{0.48\linewidth}
        \centering
        \includegraphics[width=\linewidth,height=0.16\textheight,keepaspectratio]{new_figures/80gb/age_vs_r/vanillanet_age_vs_r_img512_fp16.pdf}
        \caption{$N = 512$, fp16}
        \label{subfig:vanillanet_age_img512_fp16}
    \end{subfigure}

    \caption{Average gradient error versus the number of probing vectors $r$ for VanillaNet at image dimensions $N \in \{64, 128, 256, 512\}$, in single precision (left) and half precision (right), under the $80$ GB memory budget. The red curve is adaptive XConv and the blue dashed line the standard convolution floor. The single-precision error sits above the floor, consistent with VanillaNet's wide layers and Theorem~\ref{theorem1}, and is non-monotone in $r$ owing to the adaptive layer selection; at half precision the XConv error coincides with the convolution floor.}
    \label{fig:extra_age_results_vanillanet}
    \vspace{-1.0em}
\end{figure}

\begin{figure}[t]
    \centering

    \begin{subfigure}[t]{0.48\linewidth}
        \centering
        \includegraphics[width=\linewidth]{new_figures/80gb/age_vs_r/unet_age_vs_r_img128_fp32.pdf}
        \caption{$N = 128$, fp32}
        \label{subfig:unet_age_img128_fp32}
    \end{subfigure}
    \hfill
    \begin{subfigure}[t]{0.48\linewidth}
        \centering
        \includegraphics[width=\linewidth]{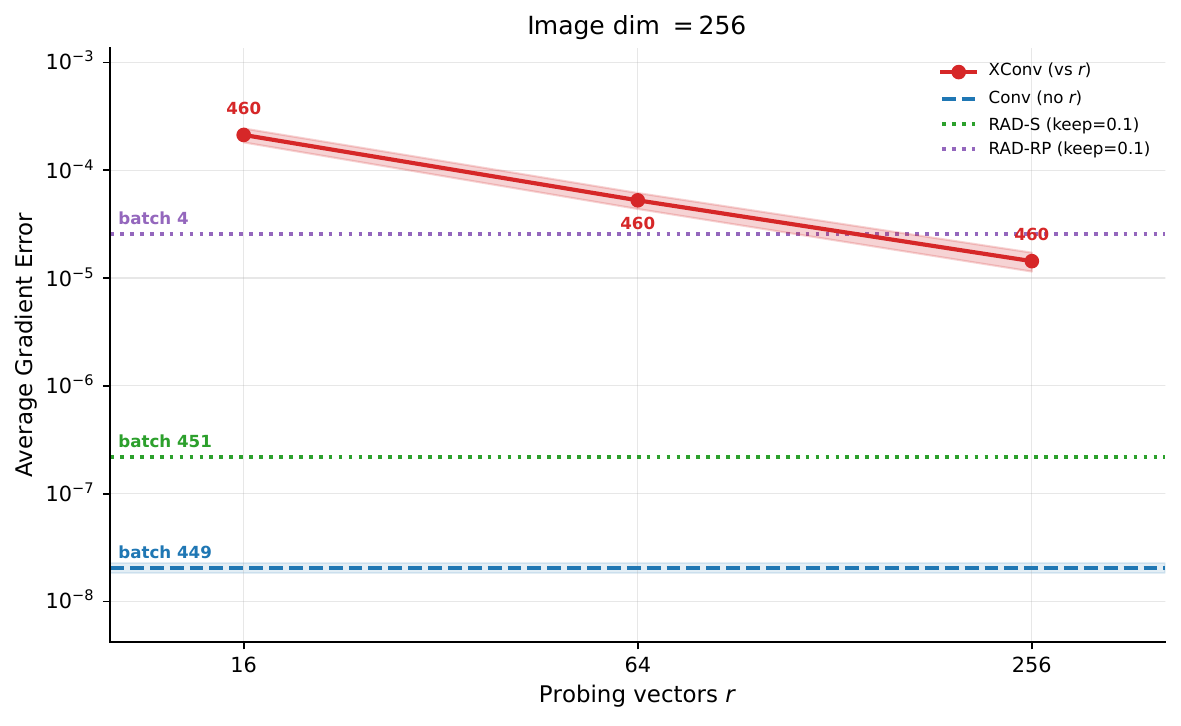}
        \caption{$N = 256$, fp32}
        \label{subfig:unet_age_img256_fp32}
    \end{subfigure}
    \vspace{0.6em}

    \begin{subfigure}[t]{0.48\linewidth}
        \centering
        \includegraphics[width=\linewidth]{new_figures/80gb/age_vs_r/unet_age_vs_r_img512_fp32.pdf}
        \caption{$N = 512$, fp32}
        \label{subfig:unet_age_img512_fp32}
    \end{subfigure}

    \caption{Average gradient error versus the number of probing vectors $r$ for U-Net at image dimensions $N \in \{128, 256, 512\}$ in single precision, under the $80$ GB memory budget. The red curve is XConv, the blue dashed line the standard convolution floor, and the green dotted line the RAD-S reference at keep fraction $0.1$. The half-precision panels are omitted because at fp16 the gradient error of every method collapses below the rounding floor.}
    \label{fig:extra_age_results_unet}
    \vspace{-1.0em}
\end{figure}

\subsection{Computational Overhead}
\label{performance-app}

\begin{figure}[t]
\centering
\captionsetup[subfigure]{labelformat=empty}
\subfloat[B=4]{\includegraphics[width=0.5\linewidth]{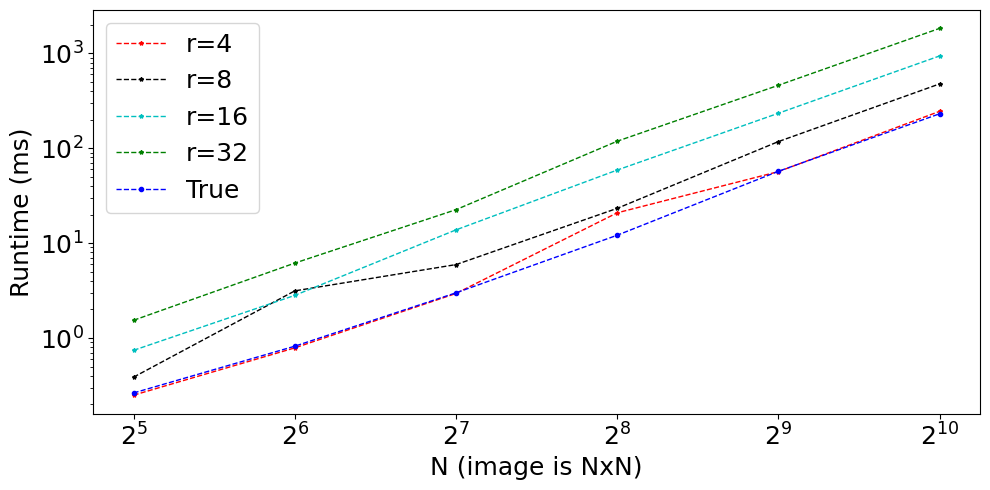}}
\subfloat[B=4]{\includegraphics[width=0.5\linewidth]{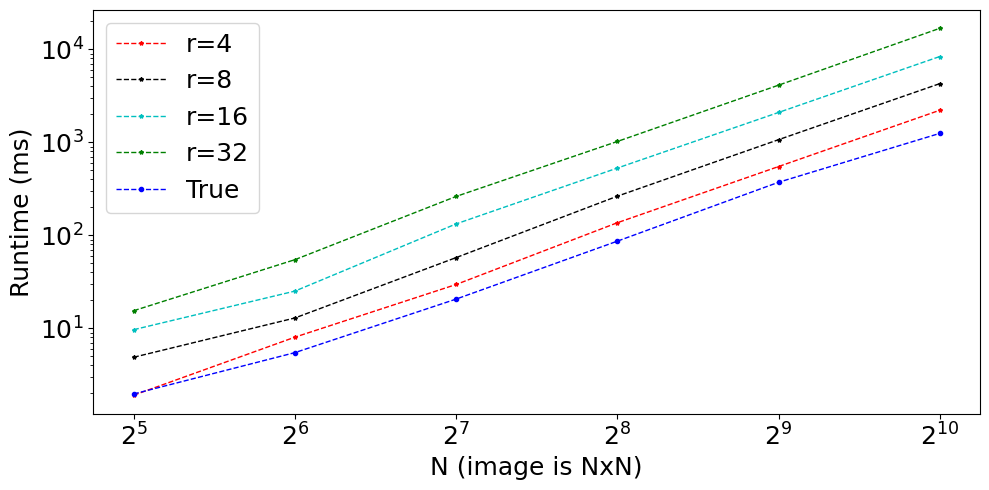}}
\\
\subfloat[B=8]{\includegraphics[width=0.5\linewidth]{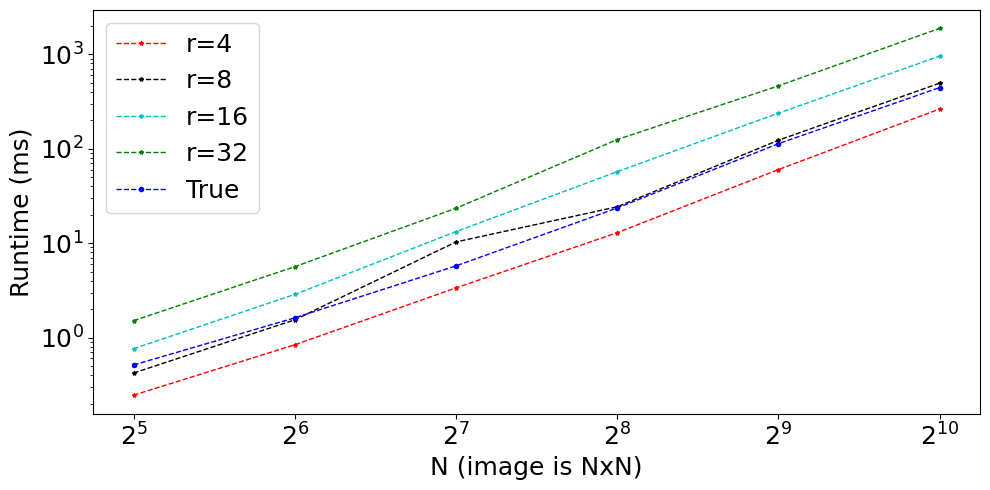}}
\subfloat[B=8]{\includegraphics[width=0.5\linewidth]{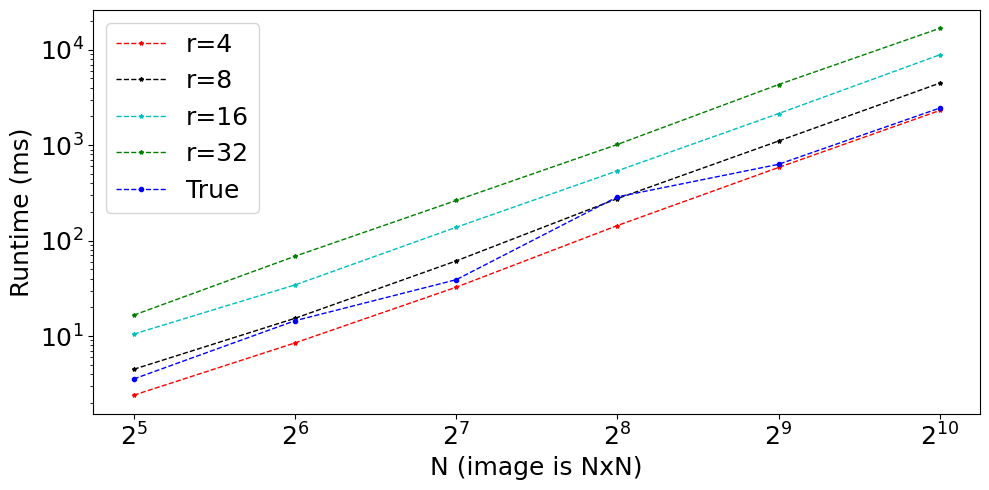}}
\\
\subfloat[B=16]{\includegraphics[width=0.5\linewidth]{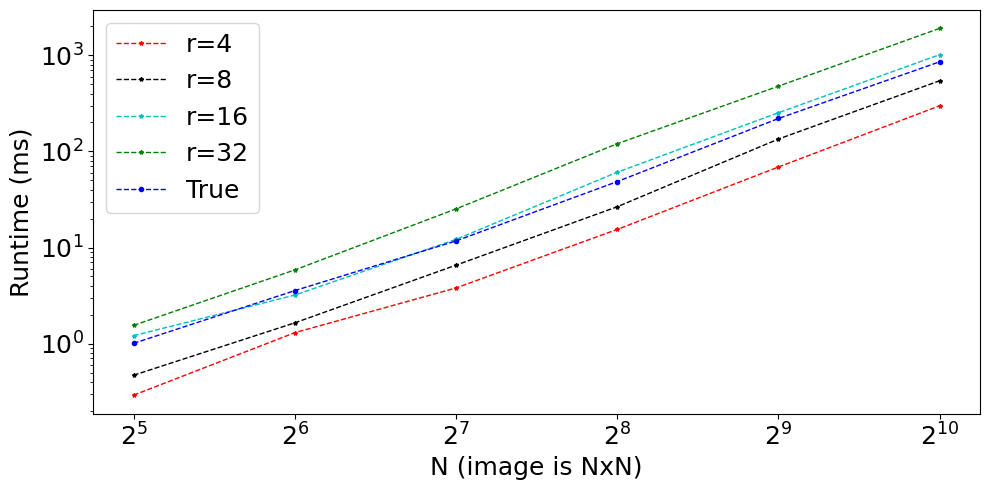}}
\subfloat[B=16]{\includegraphics[width=0.5\linewidth]{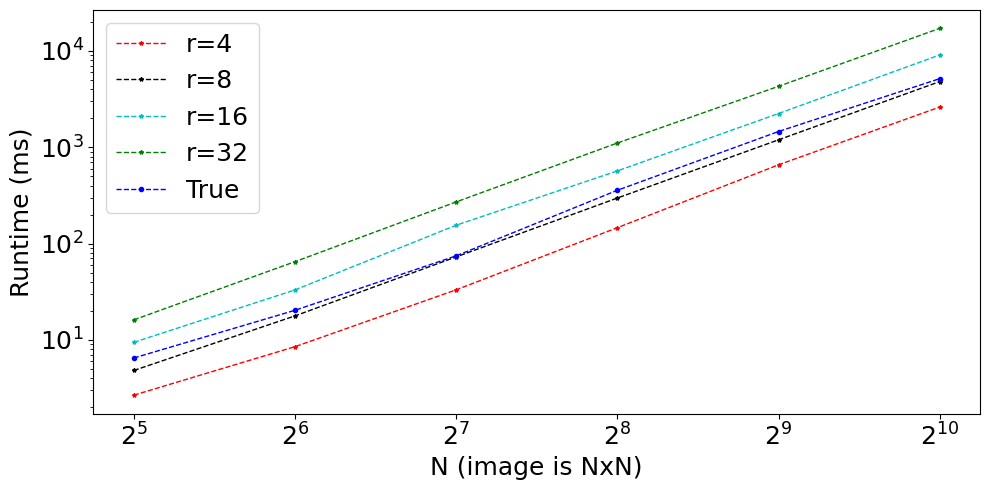}}
\\
\caption{CPU benchmark on an \emph{Intel(R) Xeon(R) CPU E3-1270 v6 @
3.80GHz} node. The left column contains the runtimes for 4 channels and the right column for 32 channels. For batch sizes: B = 4 to 16.
}\label{fig:cpu-bench_b_4_to_16}
\end{figure}

\begin{figure}[t]
\centering
\captionsetup[subfigure]{labelformat=empty}
\subfloat[B=32]{\includegraphics[width=.5\linewidth]{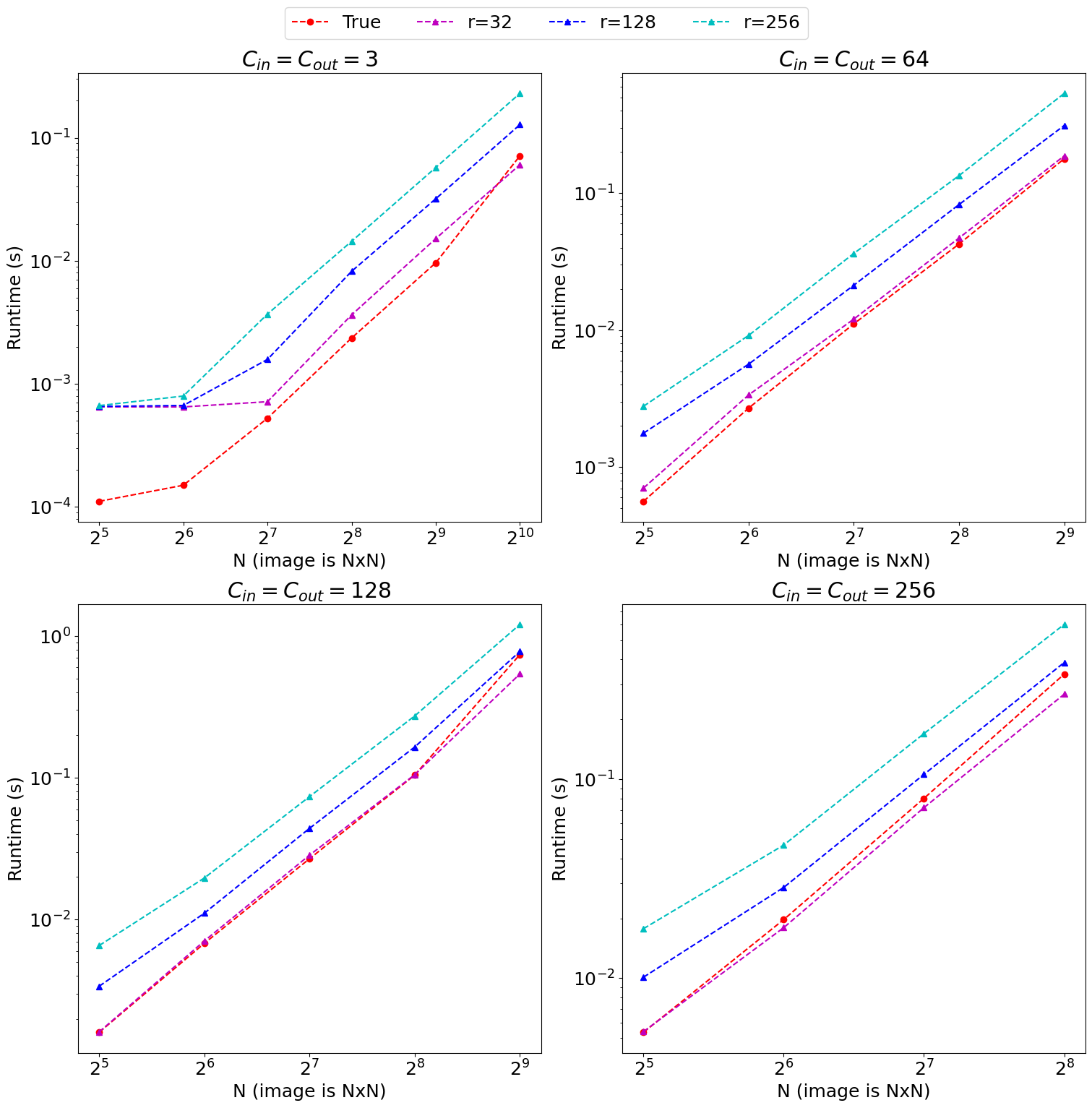}}
\subfloat[B=64]{\includegraphics[width=.5\linewidth]{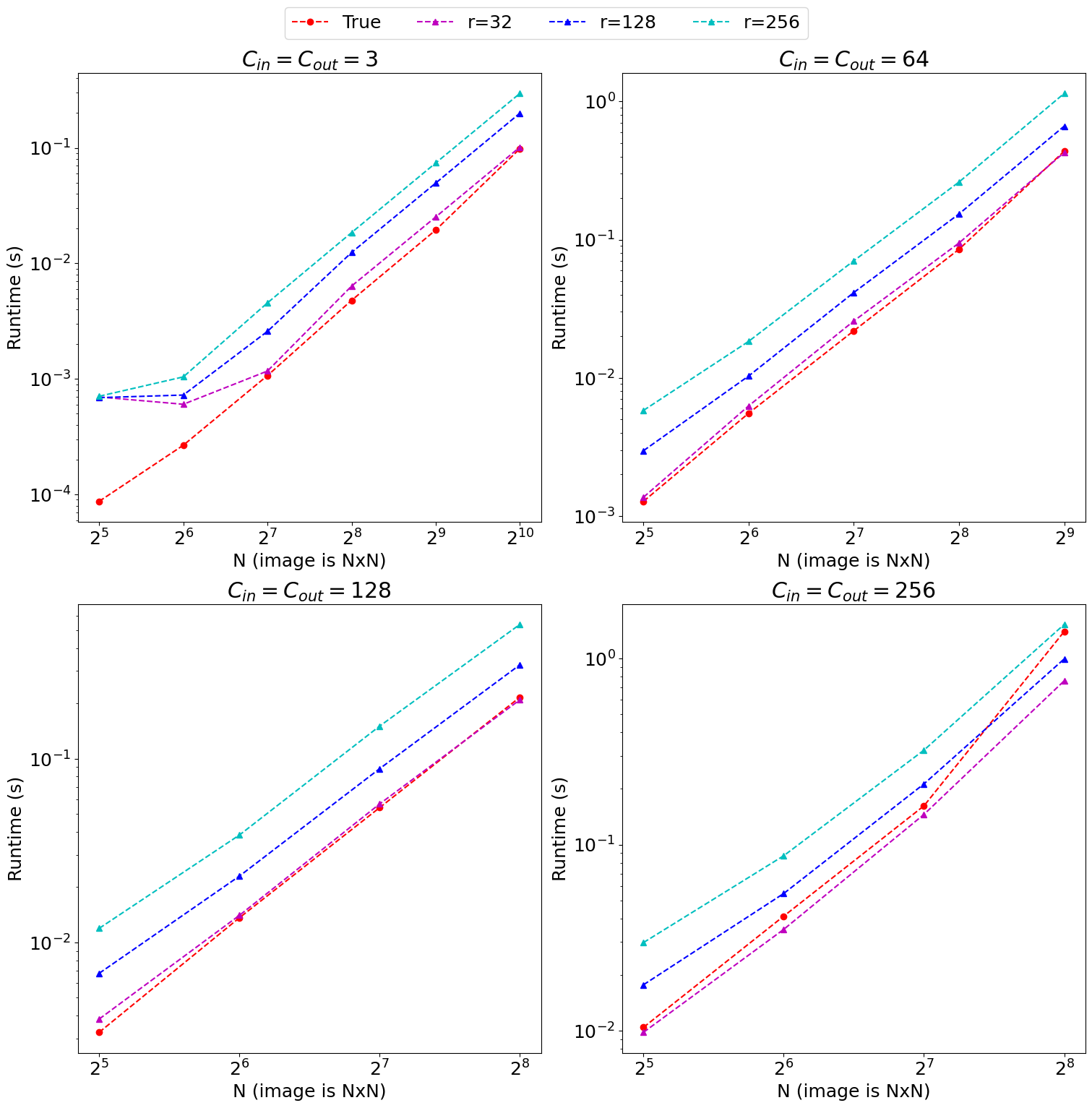}}
\\
\caption{GPU benchmark on a \emph{RTX 2000 Ada Generation} for a
single gradient for smaller batch sizes $B=32$ and $B=64$, with varying image sizes $N$ and number of
channels $C_{\text{in}}=C_{\text{out}}$. Results for larger batch sizes ($B=128, 256$) are shown in Figure~\ref{fig:gpu-bench_b128,256}.}\label{gpu-bench_b32,64}
\end{figure}

\subsection{Model Architectures}\label{networks}

We describe here the network architectures used in our experiments. The architectures used in the MNIST experiments are standard architectures inspired by existing networks for this dataset. The CIFAR-10 architecture is intentionally chosen to be mostly convolutional and obtained from \citep{oktay2020randomized}.

\begin{table}[t]
\caption{MNIST network and sizes for training with our Julia implementation for a
batch size $B$.}\label{tab:MNIST-NW-jl}
\centering
\begin{tabular}{cccc}
\toprule\addlinespace
Layer & kernel size & Input size ($C_{\text{in}} \times N_x \times N_y$) & Output
size ($C_{\text{out}} \times N_x \times N_y$)\tabularnewline
\midrule
Conv2d & (3, 3) & B$\times$1$\times$28$\times$28 & B$\times$16$\times$28$\times$28\tabularnewline
ReLU & -- & B$\times$16$\times$28$\times$28 & B$\times$16$\times$28$\times$28\tabularnewline
MaxPool & (2, 2) & B$\times$16$\times$28$\times$28 & B$\times$16$\times$14$\times$14\tabularnewline
Conv2d & (3, 3) & B$\times$16$\times$14$\times$14 & B$\times$32$\times$14$\times$14\tabularnewline
ReLU & -- & B$\times$32$\times$14$\times$14 & B$\times$32$\times$14$\times$14\tabularnewline
MaxPool & (2, 2) & B$\times$32$\times$14$\times$14 & B$\times$32$\times$7$\times$7\tabularnewline
Conv2d & (3, 3) & B$\times$32$\times$7$\times$7 & B$\times$32$\times$7$\times$7\tabularnewline
ReLU & -- & B$\times$32$\times$7$\times$7 & B$\times$32$\times$7$\times$7\tabularnewline
MaxPool & (2, 2) & B$\times$32$\times$7$\times$7 & B$\times$32$\times$3$\times$3\tabularnewline
Flatten & -- & B$\times$32$\times$3$\times$3 & B$\times$288\tabularnewline
Dense & -- & B$\times$288 & B$\times$10\tabularnewline
\bottomrule
\end{tabular}
\end{table}

\begin{table}[t]
\caption{MNIST network and sizes for training with PyTorch on the
MNIST dataset for a batch size $B$.}\label{tab:MNIST-NW-py}
\centering
\begin{tabular}{cccc}
\toprule\addlinespace
Layer & kernel size & Input size ($C_{\text{in}} \times N_x \times N_y$) & Output
size ($C_{\text{out}} \times N_x \times N_y$)\tabularnewline
\midrule
Conv2d & (3, 3) & B$\times$1$\times$28$\times$28 & B$\times$32$\times$28$\times$28\tabularnewline
ReLU & -- & B$\times$32$\times$28$\times$28 & B$\times$32$\times$28$\times$28\tabularnewline
Conv2d & (3, 3) & B$\times$32$\times$28$\times$28 & B$\times$64$\times$28$\times$28\tabularnewline
ReLU & -- & B$\times$64$\times$28$\times$28 & B$\times$64$\times$28$\times$28\tabularnewline
MaxPool & (2, 2) & B$\times$64$\times$28$\times$28 & B$\times$64$\times$14$\times$14\tabularnewline
Dropout & -- & B$\times$64$\times$14$\times$14 & B$\times$64$\times$14$\times$14\tabularnewline
Flatten & -- & B$\times$64$\times$14$\times$14 & B$\times$12544\tabularnewline
Dense & -- & B$\times$12544 & B$\times$128\tabularnewline
ReLU & -- & B$\times$128 & B$\times$128\tabularnewline
Dropout & -- & B$\times$128 & B$\times$128\tabularnewline
Dense & -- & B$\times$128 & B$\times$10\tabularnewline
Log Softmax & -- & B$\times$10 & B$\times$10\tabularnewline
\bottomrule
\end{tabular}
\end{table}

\begin{table}[t]
\caption{CIFAR-10 network and sizes for training with PyTorch on the
CIFAR-10 dataset for a batch size $B$.}\label{tab:CIFAR-NW}
\centering
\begin{tabular}{cccc}
\toprule\addlinespace
Layer & kernel size & Input size ($C_{\text{in}} \times N_x \times N_y$) & Output
size ($C_{\text{out}} \times N_x \times N_y$)\tabularnewline
\midrule
Conv2d & (5, 5) & B$\times$3$\times$32$\times$32 & B$\times$16$\times$32$\times$32\tabularnewline
ReLU & -- & B$\times$16$\times$32$\times$32 & B$\times$16$\times$32$\times$32\tabularnewline
Conv2d & (5, 5) & B$\times$16$\times$32$\times$32 & B$\times$32$\times$32$\times$32\tabularnewline
ReLU & -- & B$\times$32$\times$32$\times$32 & B$\times$32$\times$32$\times$32\tabularnewline
AvgPool & (2, 2) & B$\times$32$\times$32$\times$32 & B$\times$32$\times$16$\times$16\tabularnewline
Conv2d & (5, 5) & B$\times$32$\times$16$\times$16 & B$\times$32$\times$16$\times$16\tabularnewline
ReLU & -- & B$\times$32$\times$16$\times$16 & B$\times$32$\times$16$\times$16\tabularnewline
Conv2d & (5, 5) & B$\times$32$\times$16$\times$16 & B$\times$32$\times$16$\times$16\tabularnewline
ReLU & -- & B$\times$32$\times$16$\times$16 & B$\times$32$\times$16$\times$16\tabularnewline
AvgPool & (2, 2) & B$\times$32$\times$16$\times$16 & B$\times$32$\times$8$\times$8\tabularnewline
Flatten & -- & B$\times$32$\times$8$\times$8 & B$\times$2048\tabularnewline
Dense & -- & B$\times$2048 & B$\times$10\tabularnewline
Log Softmax & -- & B$\times$10 & B$\times$10\tabularnewline
\bottomrule
\end{tabular}
\end{table}

\subsection{Training parameters}\label{training-parameters}

We now detail the training hyperparameters for the results presented in Section~\ref{sec:experiments}.

\subsubsection{MNIST with Julia}

\begin{itemize}
\itemsep1pt\parskip0pt\parsep0pt
\item
 NVIDIA Quadro P1000 GPU
\item
 20 epochs
\item
 Adam with initial learning rate of $0.003$
\item
 MNIST dataset for varying batch size $B$ and number of probing vectors $r$
\item
 Julia implementation
\end{itemize}

\subsubsection{MNIST with PyTorch}\label{mnist-with-pyxconv}

\begin{itemize}
\itemsep1pt\parskip0pt\parsep0pt
\item
 NVIDIA Tesla K80 (Azure NC24 4xK80, one K80 per case) GPU
\item
 $50$ epochs
\item
 Stochastic Line Search (SLS~\citep{vaswani2019painless}) with initial learning rate of $1.0$ and default SLS parameters
\item
 MNIST dataset for varying batch size and number of probing vectors
\item PyTorch implementation
\end{itemize}

\subsubsection{CIFAR-10 with PyTorch}\label{cifar-hyper}

\begin{itemize}
\itemsep1pt\parskip0pt\parsep0pt
\item
 NVIDIA Tesla K80 (Azure NC6) GPU
\item
 $100$ epochs
\item
 Stochastic Gradient Descent optimizer
\item Initial learning rate of $0.001$ with cosine annealing scheduler. When using probing, the learning rate is scaled by a factor of $1.5$.
\item
 CIFAR-10 dataset
\item PyTorch implementation
\end{itemize}

\subsection{DIP Super-Resolution and Inpainting}

\begin{figure}[t]
\centering

    \begin{subfigure}[t]{0.48\linewidth}
        \centering
        \includegraphics[width=\linewidth]{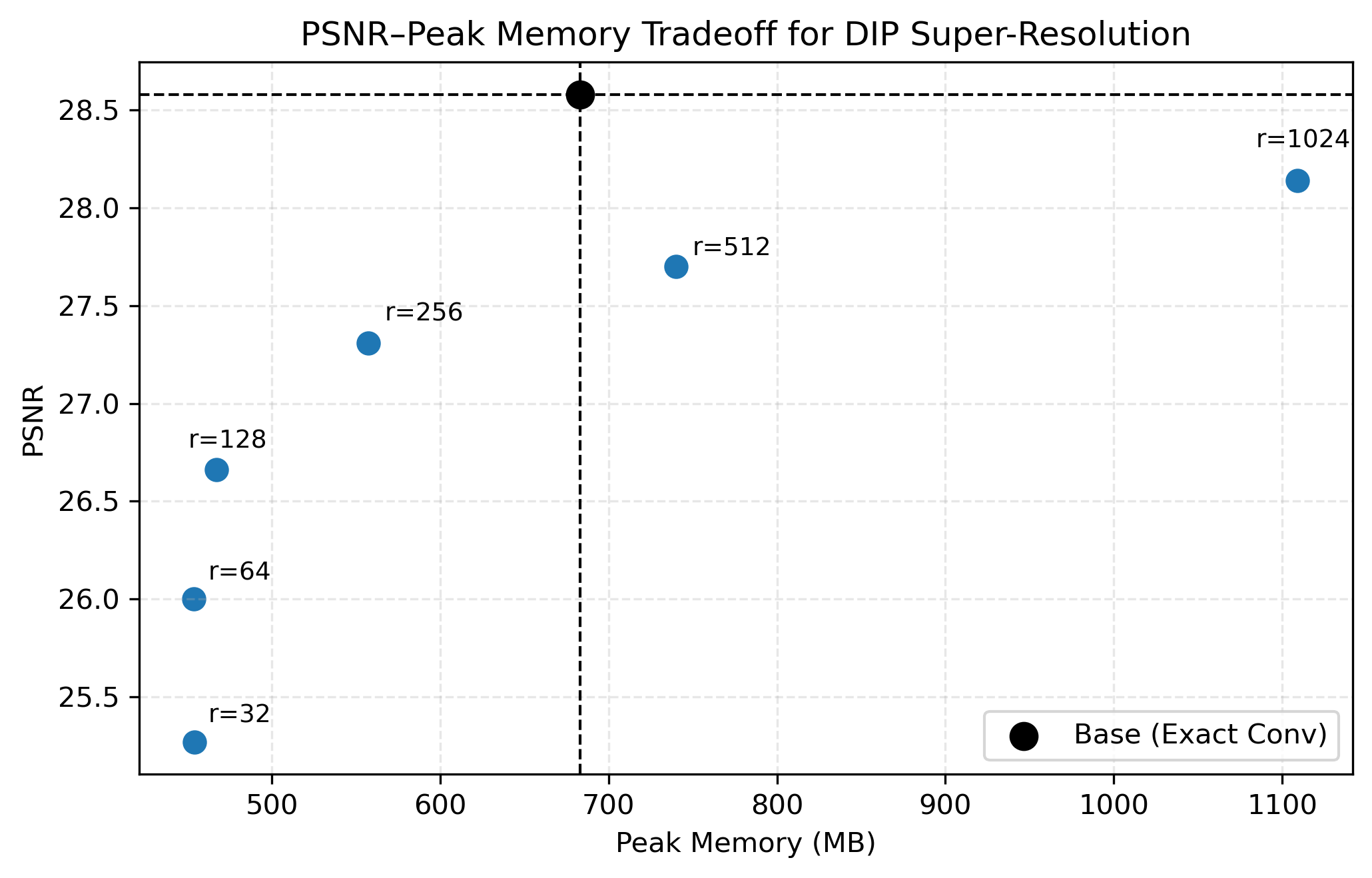}
        \caption{Peak signal-to-noise ratio (PSNR) vs.\ peak memory tradeoff for DIP (super-resolution) on Set5.}
        \label{subfig:dip_sr_peak_mem}
    \end{subfigure}
    \hfill
    \begin{subfigure}[t]{0.48\linewidth}
        \centering
        \includegraphics[width=\linewidth]{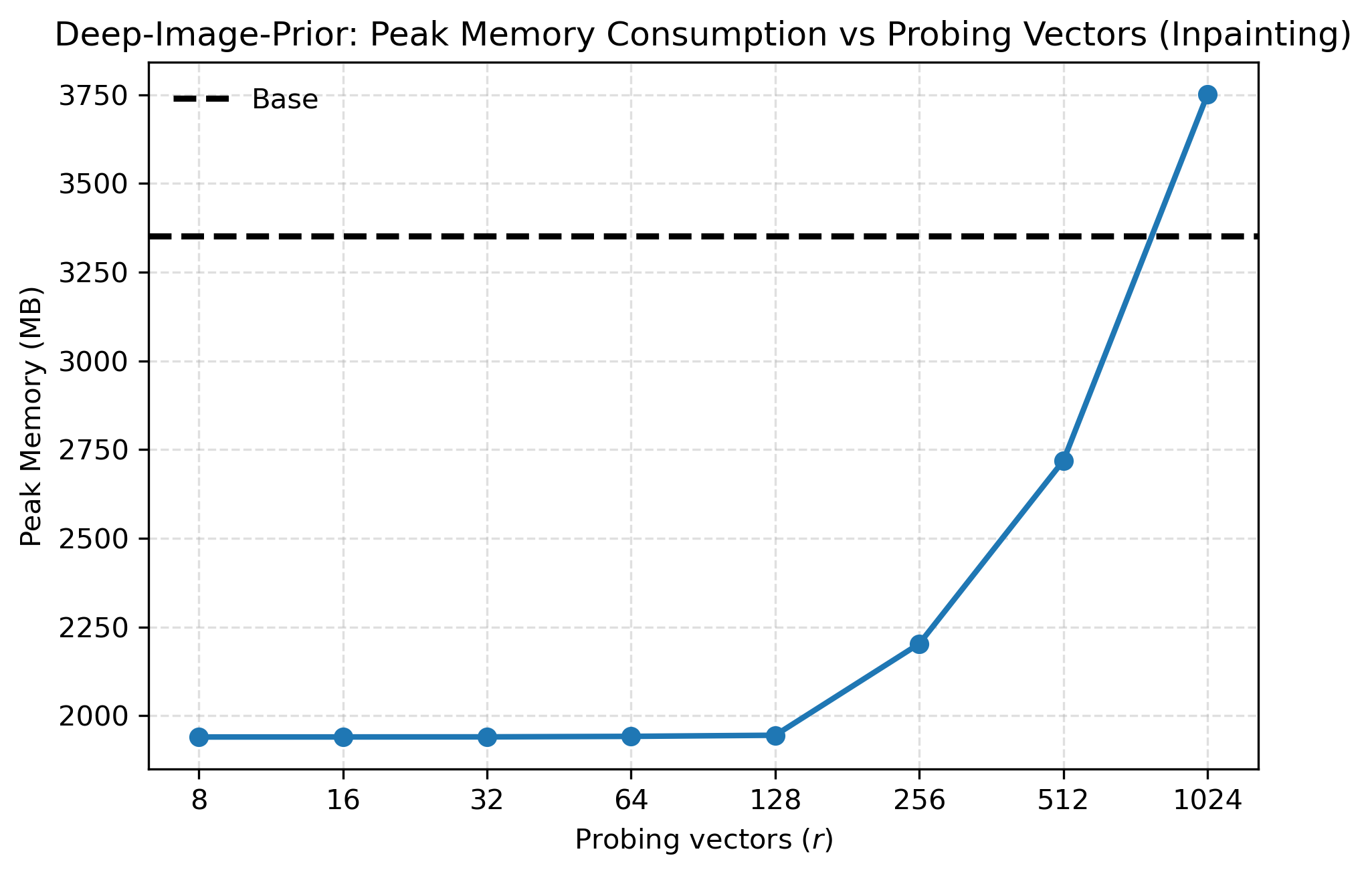}
        \caption{DIP inpainting peak memory consumption as a function of the number of probing vectors $r$.}
        \label{subfig:dip_inpainting}
        \vspace{0.6em}
    \end{subfigure}
    \caption{DIP peak memory results on super-resolution and inpainting. The horizontal dashed line indicates the memory consumption of exact convolution. PSNR measures the quality of the reconstructed signal relative to the original; higher values indicate better reconstruction. In both super-resolution (left) and inpainting (right), XConv attains memory savings.}
    \label{app:dip_figs}
\end{figure}
\subsection{Peak Memory Curves}
\label{app:peak_mem_curves}

We present the full set of peak memory curves for all three models across image resolutions ($N$) and both precisions, under the $80$ GB memory budget. SqueezeNet is shown in Figure~\ref{fig:extra_peak_mem_results}, VanillaNet in Figure~\ref{fig:extra_peak_mem_results_vanillanet}, and U-Net in Figure~\ref{fig:extra_peak_mem_results_unet}. For SqueezeNet and VanillaNet each solid curve is a fixed batch size and the dashed line of the matching color is the corresponding standard convolution; for U-Net each curve is a method (standard convolution, XConv at $r \in \{16, 64, 256\}$, RAD-RP and RAD-S at keep fraction $0.1$) and the single dashed line is the memory budget.

\begin{figure}[t]
    \centering

    \begin{subfigure}[t]{0.48\linewidth}
        \centering
        \includegraphics[width=\linewidth,height=0.17\textheight,keepaspectratio]{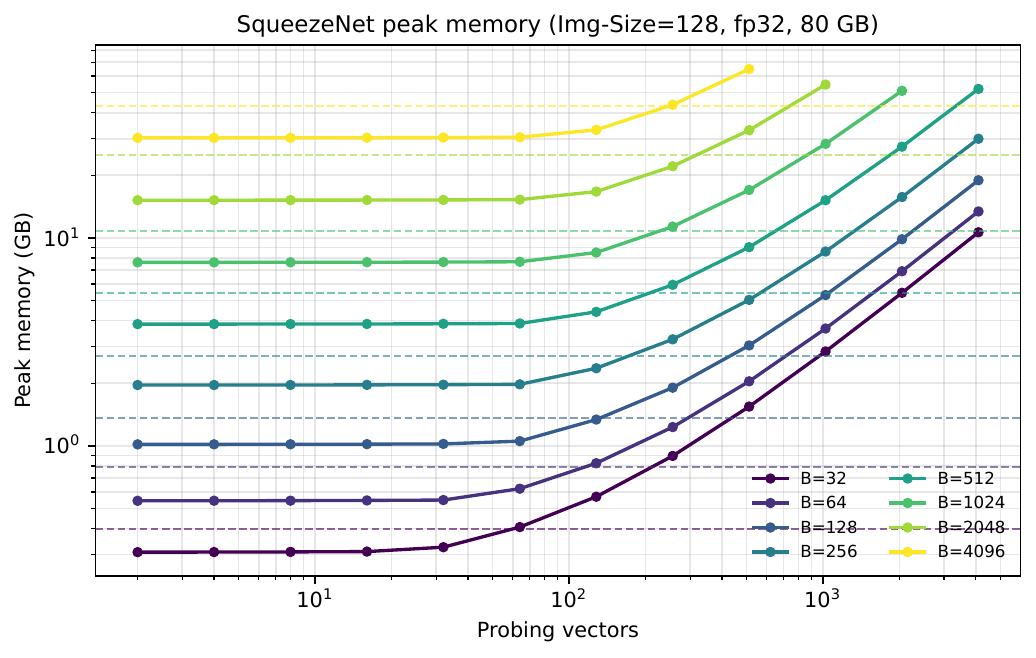}
        \caption{$N = 128$, fp32}
        \label{subfig:squeezenet_peak_mem_img128_fp32}
    \end{subfigure}
    \hfill
    \begin{subfigure}[t]{0.48\linewidth}
        \centering
        \includegraphics[width=\linewidth,height=0.17\textheight,keepaspectratio]{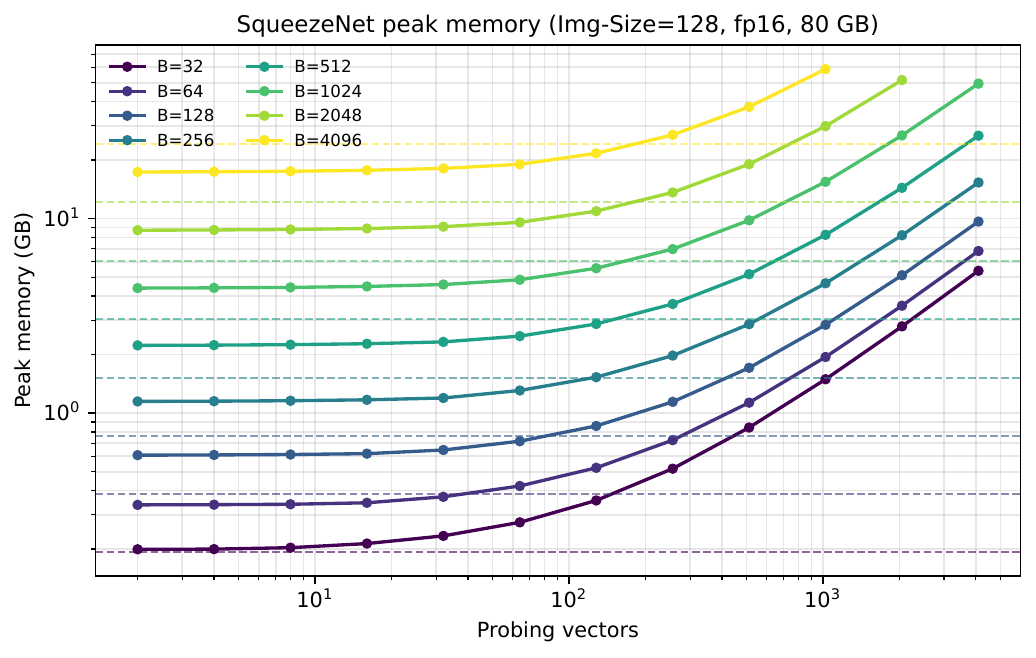}
        \caption{$N = 128$, fp16}
        \label{subfig:squeezenet_peak_mem_img128_fp16}
    \end{subfigure}
    \vspace{0.6em}

    \begin{subfigure}[t]{0.48\linewidth}
        \centering
        \includegraphics[width=\linewidth,height=0.17\textheight,keepaspectratio]{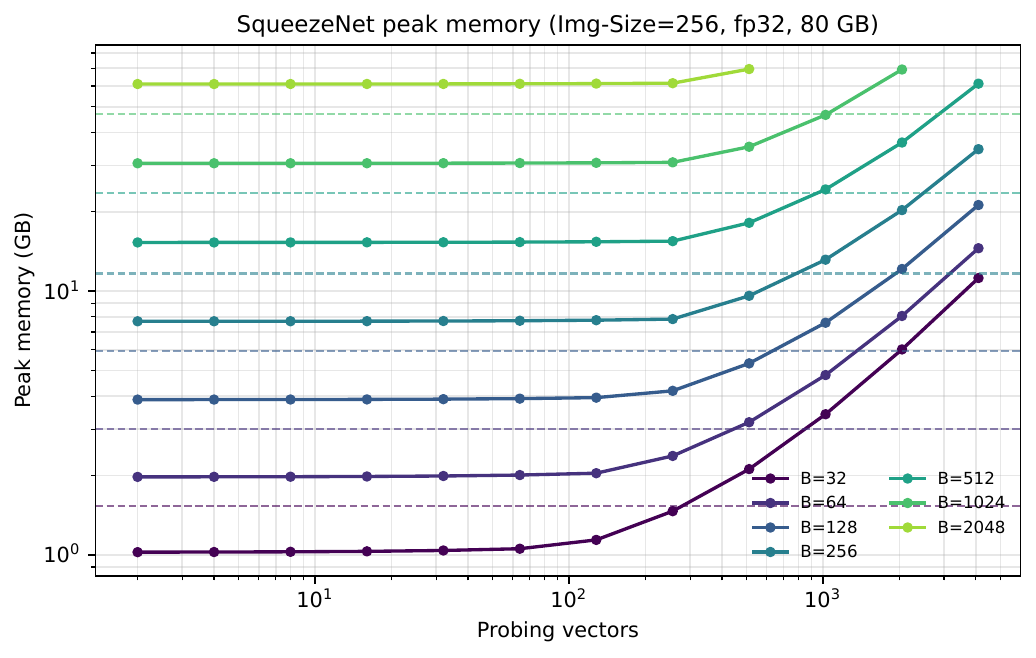}
        \caption{$N = 256$, fp32}
        \label{subfig:squeezenet_peak_mem_img256_fp32}
    \end{subfigure}
    \hfill
    \begin{subfigure}[t]{0.48\linewidth}
        \centering
        \includegraphics[width=\linewidth,height=0.17\textheight,keepaspectratio]{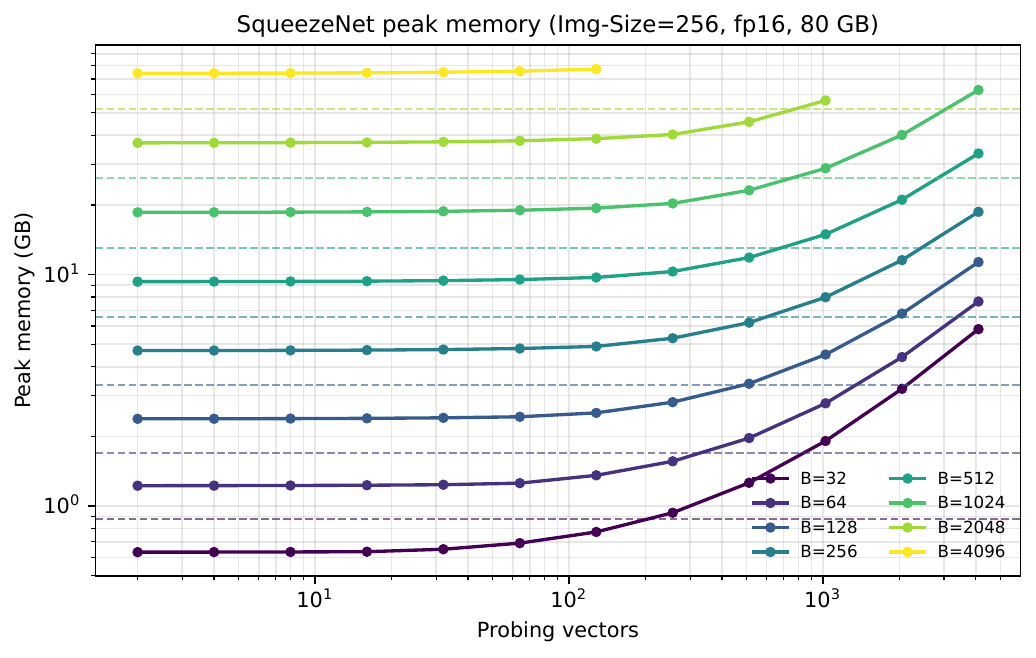}
        \caption{$N = 256$, fp16}
        \label{subfig:squeezenet_peak_mem_img256_fp16}
    \end{subfigure}
    \vspace{0.6em}

    \begin{subfigure}[t]{0.48\linewidth}
        \centering
        \includegraphics[width=\linewidth,height=0.17\textheight,keepaspectratio]{new_figures/80gb/peak/squeezenet_peak_memory_img512_fp32.pdf}
        \caption{$N = 512$, fp32}
        \label{subfig:squeezenet_peak_mem_img512_fp32}
    \end{subfigure}
    \hfill
    \begin{subfigure}[t]{0.48\linewidth}
        \centering
        \includegraphics[width=\linewidth,height=0.17\textheight,keepaspectratio]{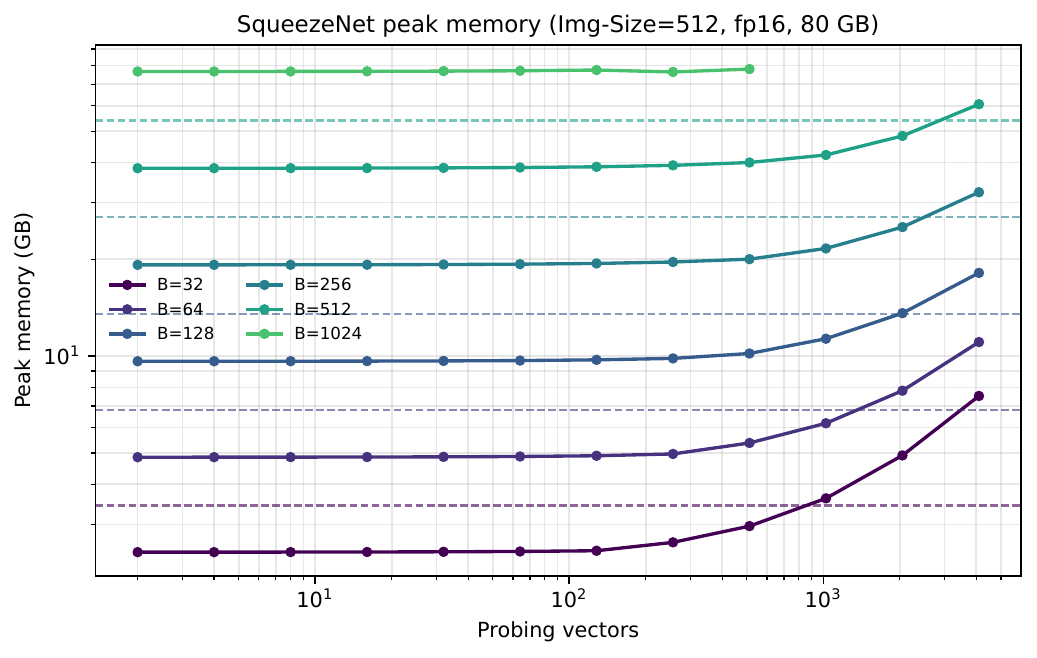}
        \caption{$N = 512$, fp16}
        \label{subfig:squeezenet_peak_mem_img512_fp16}
    \end{subfigure}
    \vspace{0.6em}

    \begin{subfigure}[t]{0.48\linewidth}
        \centering
        \includegraphics[width=\linewidth,height=0.17\textheight,keepaspectratio]{new_figures/80gb/peak/squeezenet_peak_memory_img1024_fp32.pdf}
        \caption{$N = 1024$, fp32}
        \label{subfig:squeezenet_peak_mem_img1024_fp32}
    \end{subfigure}
    \hfill
    \begin{subfigure}[t]{0.48\linewidth}
        \centering
        \includegraphics[width=\linewidth,height=0.17\textheight,keepaspectratio]{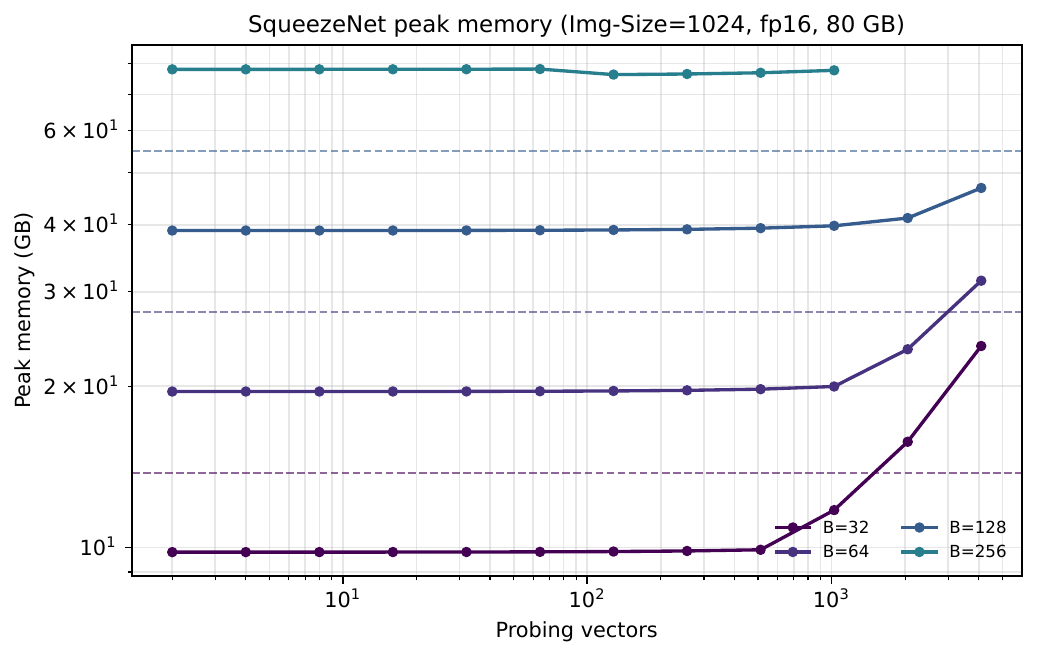}
        \caption{$N = 1024$, fp16}
        \label{subfig:squeezenet_peak_mem_img1024_fp16}
    \end{subfigure}
    \vspace{-0.5em}
    \caption{Peak memory curves for SqueezeNet at image dimensions $N \in \{128, 256, 512, 1024\}$, in single precision (left) and half precision (right), under the $80$ GB memory budget.}
    \label{fig:extra_peak_mem_results}
\end{figure}

\begin{figure}[t]
    \centering

    \begin{subfigure}[t]{0.48\linewidth}
        \centering
        \includegraphics[width=\linewidth,height=0.17\textheight,keepaspectratio]{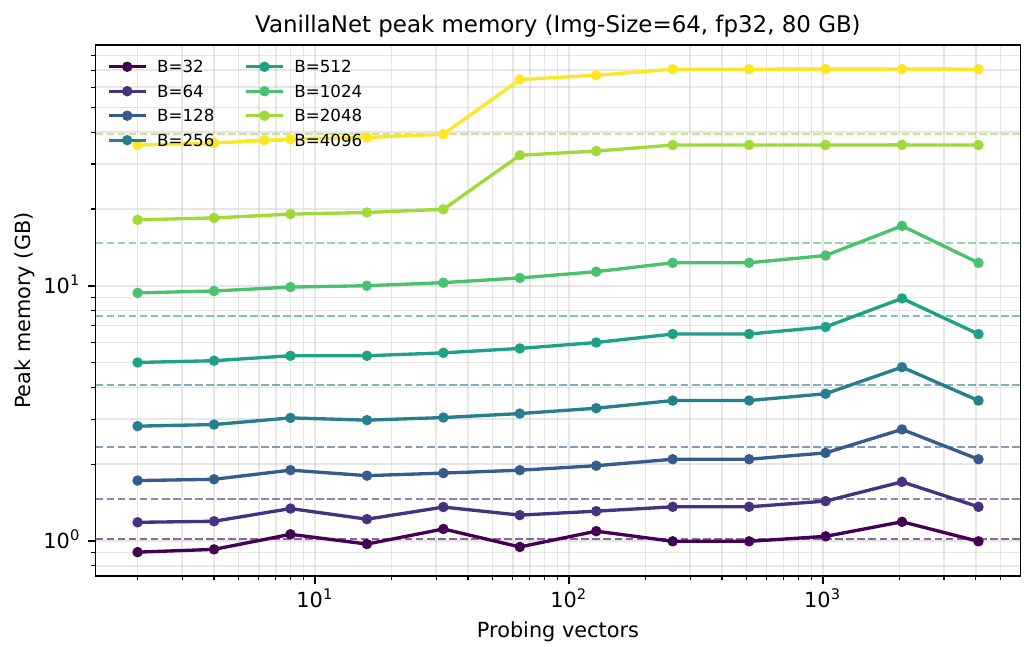}
        \caption{$N = 64$, fp32}
        \label{subfig:vanillanet_peak_mem_img64_fp32}
    \end{subfigure}
    \hfill
    \begin{subfigure}[t]{0.48\linewidth}
        \centering
        \includegraphics[width=\linewidth,height=0.17\textheight,keepaspectratio]{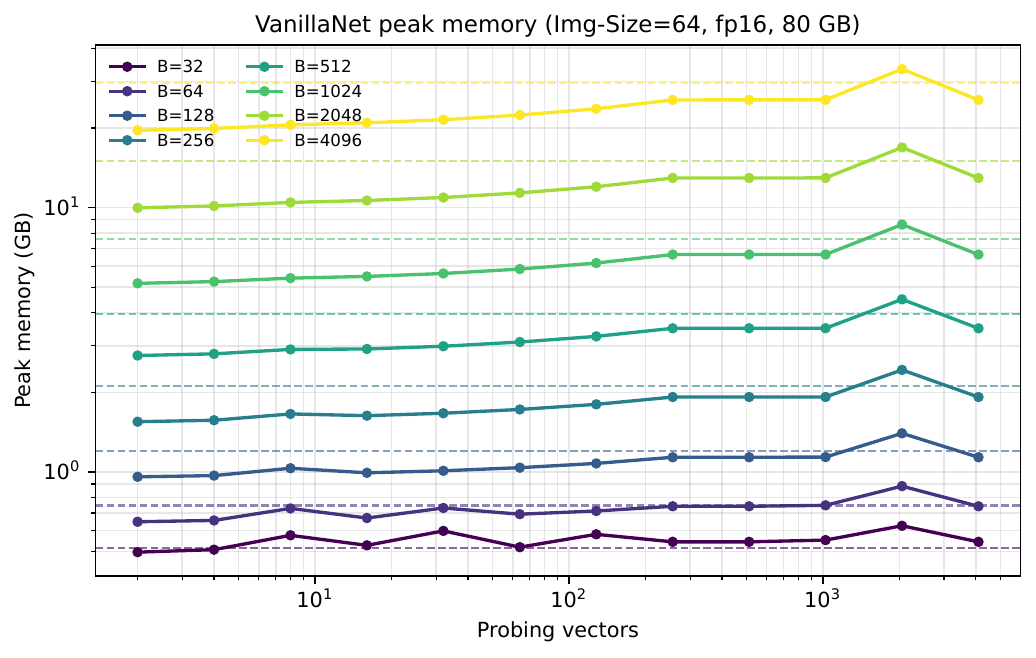}
        \caption{$N = 64$, fp16}
        \label{subfig:vanillanet_peak_mem_img64_fp16}
    \end{subfigure}
    \vspace{0.6em}

    \begin{subfigure}[t]{0.48\linewidth}
        \centering
        \includegraphics[width=\linewidth,height=0.17\textheight,keepaspectratio]{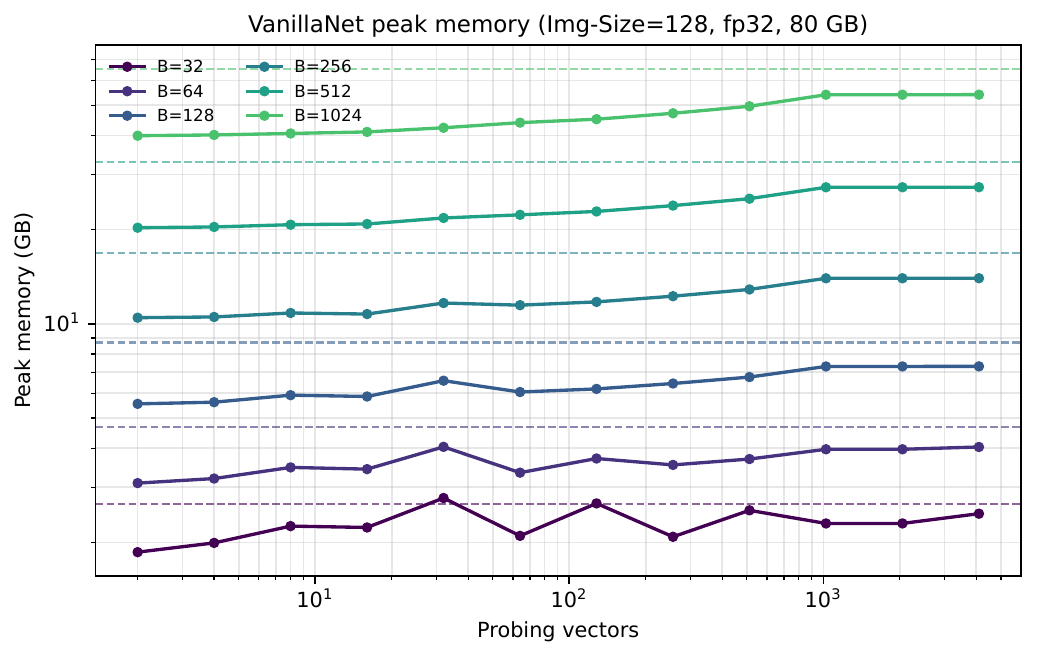}
        \caption{$N = 128$, fp32}
        \label{subfig:vanillanet_peak_mem_img128_fp32}
    \end{subfigure}
    \hfill
    \begin{subfigure}[t]{0.48\linewidth}
        \centering
        \includegraphics[width=\linewidth,height=0.17\textheight,keepaspectratio]{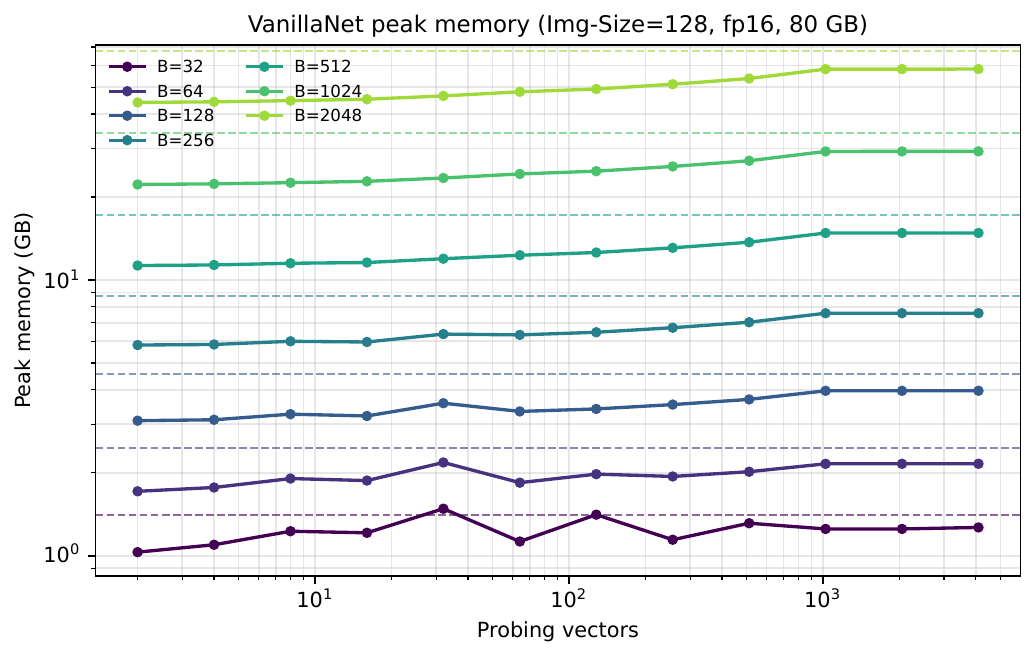}
        \caption{$N = 128$, fp16}
        \label{subfig:vanillanet_peak_mem_img128_fp16}
    \end{subfigure}
    \vspace{0.6em}

    \begin{subfigure}[t]{0.48\linewidth}
        \centering
        \includegraphics[width=\linewidth,height=0.17\textheight,keepaspectratio]{new_figures/80gb/peak/vanillanet_peak_memory_img256_fp32.pdf}
        \caption{$N = 256$, fp32}
        \label{subfig:vanillanet_peak_mem_img256_fp32}
    \end{subfigure}
    \hfill
    \begin{subfigure}[t]{0.48\linewidth}
        \centering
        \includegraphics[width=\linewidth,height=0.17\textheight,keepaspectratio]{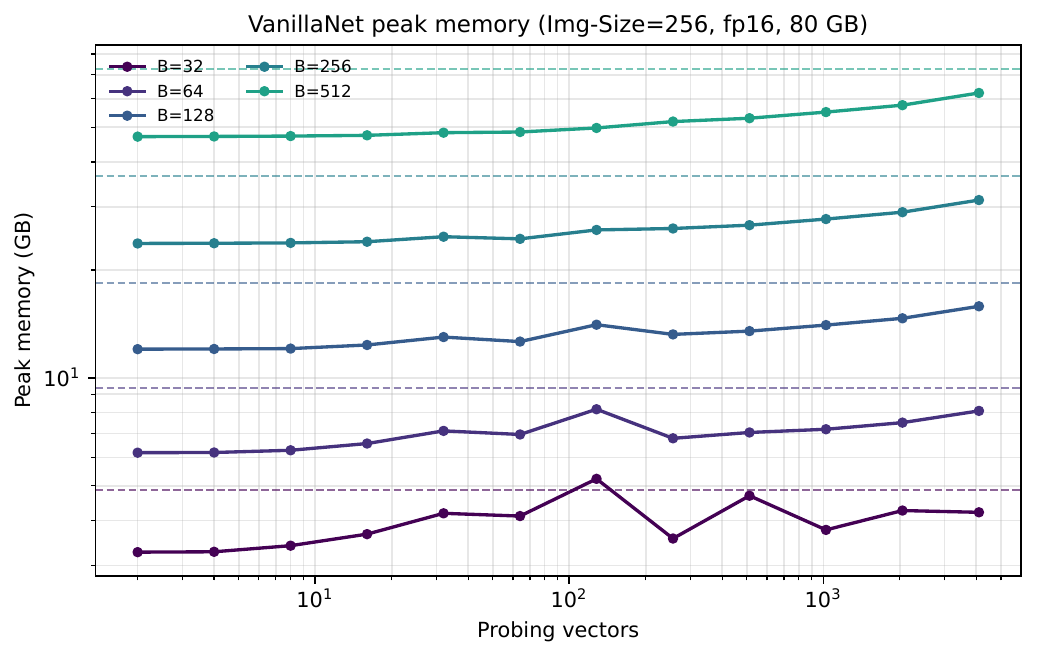}
        \caption{$N = 256$, fp16}
        \label{subfig:vanillanet_peak_mem_img256_fp16}
    \end{subfigure}
    \vspace{0.6em}

    \begin{subfigure}[t]{0.48\linewidth}
        \centering
        \includegraphics[width=\linewidth,height=0.17\textheight,keepaspectratio]{new_figures/80gb/peak/vanillanet_peak_memory_img512_fp32.pdf}
        \caption{$N = 512$, fp32}
        \label{subfig:vanillanet_peak_mem_img512_fp32}
    \end{subfigure}
    \hfill
    \begin{subfigure}[t]{0.48\linewidth}
        \centering
        \includegraphics[width=\linewidth,height=0.17\textheight,keepaspectratio]{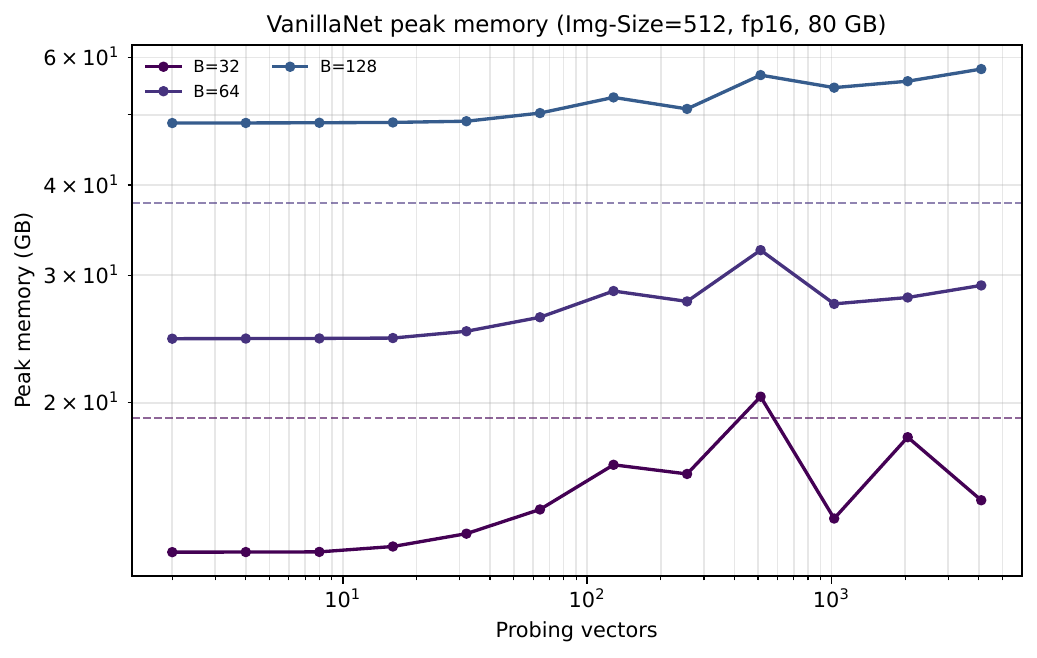}
        \caption{$N = 512$, fp16}
        \label{subfig:vanillanet_peak_mem_img512_fp16}
    \end{subfigure}
    \vspace{-0.5em}
    \caption{Peak memory curves for VanillaNet at image dimensions $N \in \{64, 128, 256, 512\}$, in single precision (left) and half precision (right), under the $80$ GB memory budget. Owing to the adaptive application of randomized trace estimation, peak memory does not vary monotonically with the number of probing vectors.}
    \label{fig:extra_peak_mem_results_vanillanet}
\end{figure}

\begin{figure}[t]
    \centering

    \begin{subfigure}[t]{0.48\linewidth}
        \centering
        \includegraphics[width=\linewidth,height=0.17\textheight,keepaspectratio]{new_figures/80gb/peak/unet_peak_curve_img128_fp32.pdf}
        \caption{$N = 128$, fp32}
        \label{subfig:unet_peak_mem_img128_fp32}
    \end{subfigure}
    \hfill
    \begin{subfigure}[t]{0.48\linewidth}
        \centering
        \includegraphics[width=\linewidth,height=0.17\textheight,keepaspectratio]{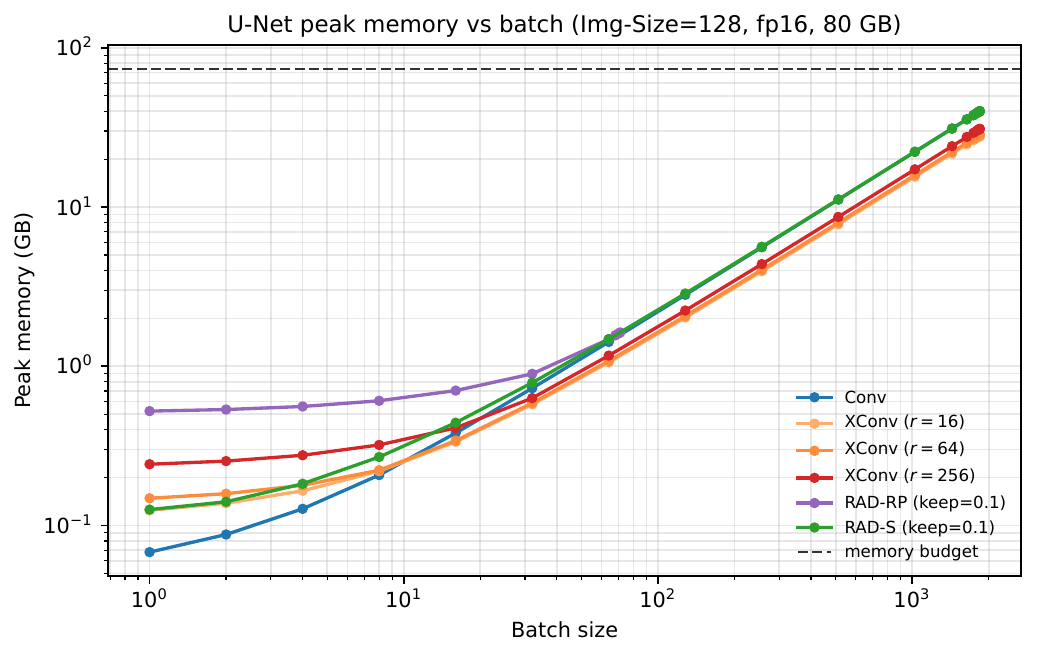}
        \caption{$N = 128$, fp16}
        \label{subfig:unet_peak_mem_img128_fp16}
    \end{subfigure}
    \vspace{0.6em}

    \begin{subfigure}[t]{0.48\linewidth}
        \centering
        \includegraphics[width=\linewidth,height=0.17\textheight,keepaspectratio]{new_figures/80gb/peak/unet_peak_curve_img256_fp32.pdf}
        \caption{$N = 256$, fp32}
        \label{subfig:unet_peak_mem_img256_fp32}
    \end{subfigure}
    \hfill
    \begin{subfigure}[t]{0.48\linewidth}
        \centering
        \includegraphics[width=\linewidth,height=0.17\textheight,keepaspectratio]{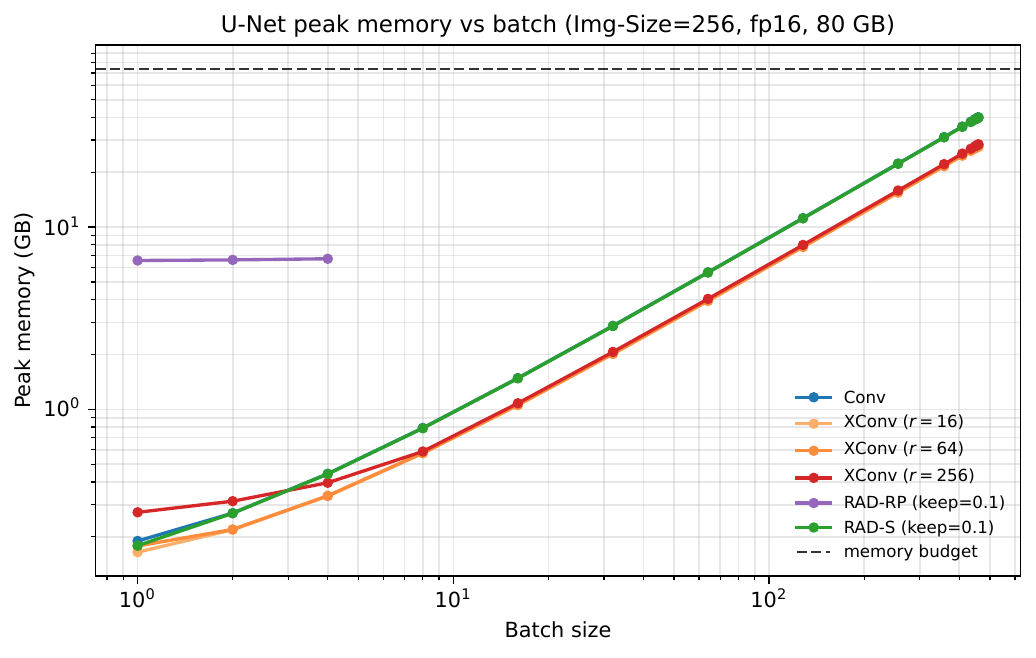}
        \caption{$N = 256$, fp16}
        \label{subfig:unet_peak_mem_img256_fp16}
    \end{subfigure}
    \vspace{0.6em}

    \begin{subfigure}[t]{0.48\linewidth}
        \centering
        \includegraphics[width=\linewidth,height=0.17\textheight,keepaspectratio]{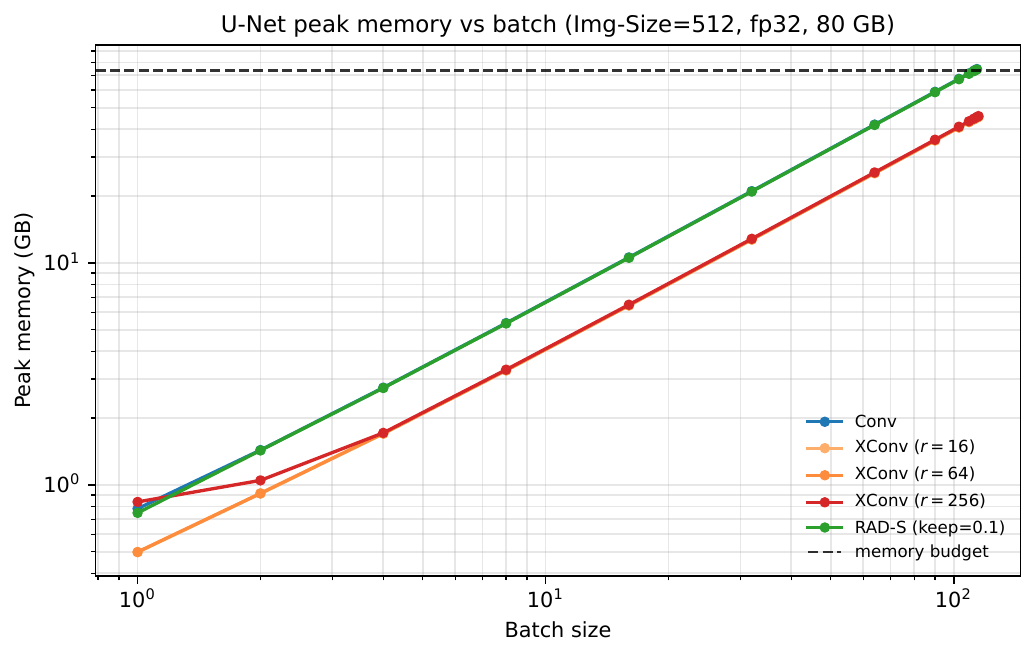}
        \caption{$N = 512$, fp32}
        \label{subfig:unet_peak_mem_img512_fp32}
    \end{subfigure}
    \vspace{-0.5em}
    \caption{Peak memory curves for U-Net as a function of batch size, comparing standard convolution, XConv at $r \in \{16, 64, 256\}$, and the RAD-RP and RAD-S baselines at keep fraction $0.1$; the dashed line is the $80$ GB memory budget. Panels cover $N = 128$ and $N = 256$ in both precisions and $N = 512$ in single precision; half precision was not re-run at $N = 512$, and RAD-RP is infeasible at $N = 512$ and is therefore dropped from that panel.}
    \label{fig:extra_peak_mem_results_unet}
\end{figure}

\subsection{MNIST Classification with SLS}
\label{app:mnist_sls}

Figure~\ref{mnist-sls} shows the full MNIST test accuracy curves when training with the Stochastic Line Search optimizer across a range of batch sizes and probing vectors.

\begin{figure}[h!]
\centering
\includegraphics[width=\linewidth]{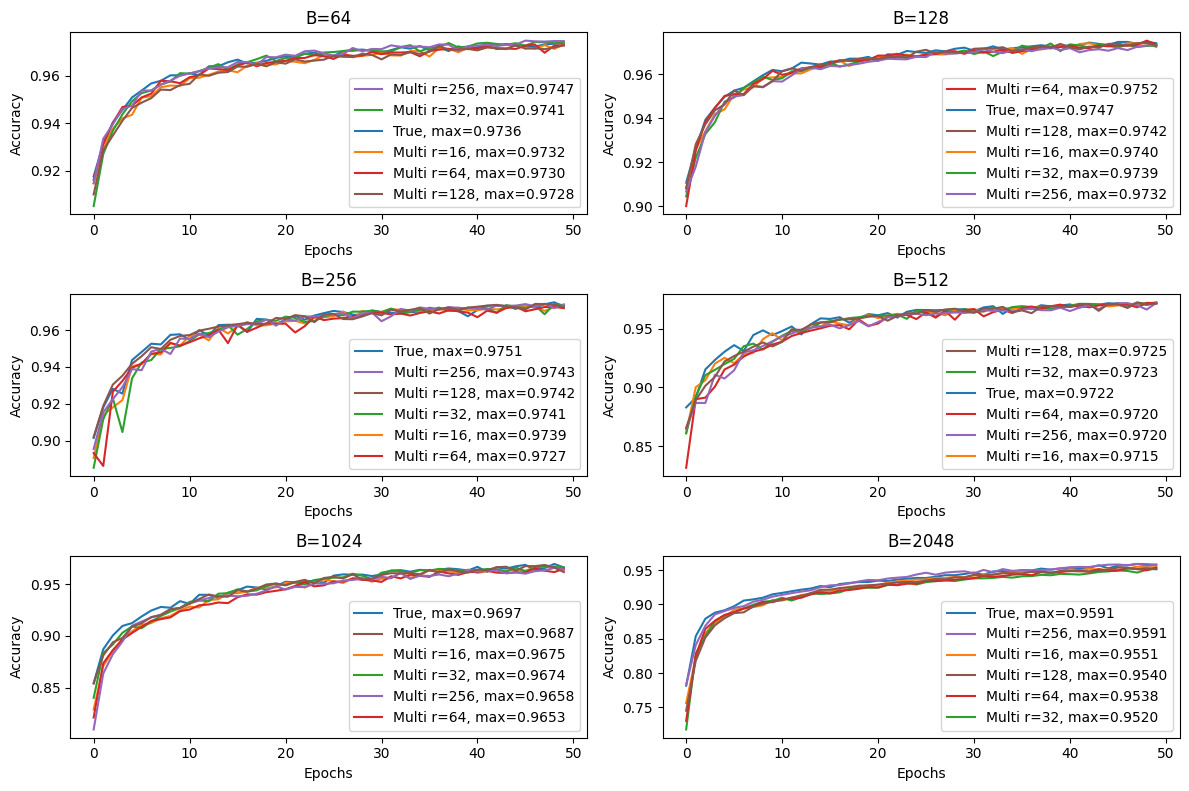}
\caption{MNIST test accuracy vs.\ epoch for varying batch sizes $B \in \{64, \ldots, 2048\}$ and probing vectors $r \in \{16, \ldots, 256\}$, trained with the Stochastic Line Search algorithm (SLS, \citep{vaswani2019painless}). XConv converges to competitive accuracy for $r \geq 16$ across all batch sizes.}\label{mnist-sls}
\end{figure}

\subsection{U-Net Generative Modeling}
\label{app:unet_training_results}

The training and validation U-Net loss curves for probing vectors $r \in \{32, 64, 128, 256\}$ are shown in Figure~\ref{fig:sips_train_val_loss_plots}.

\begin{figure}[t]
\centering
\captionsetup[subfigure]{labelformat=empty}
\subfloat[$r = 32$]{\includegraphics[width=0.42\linewidth]{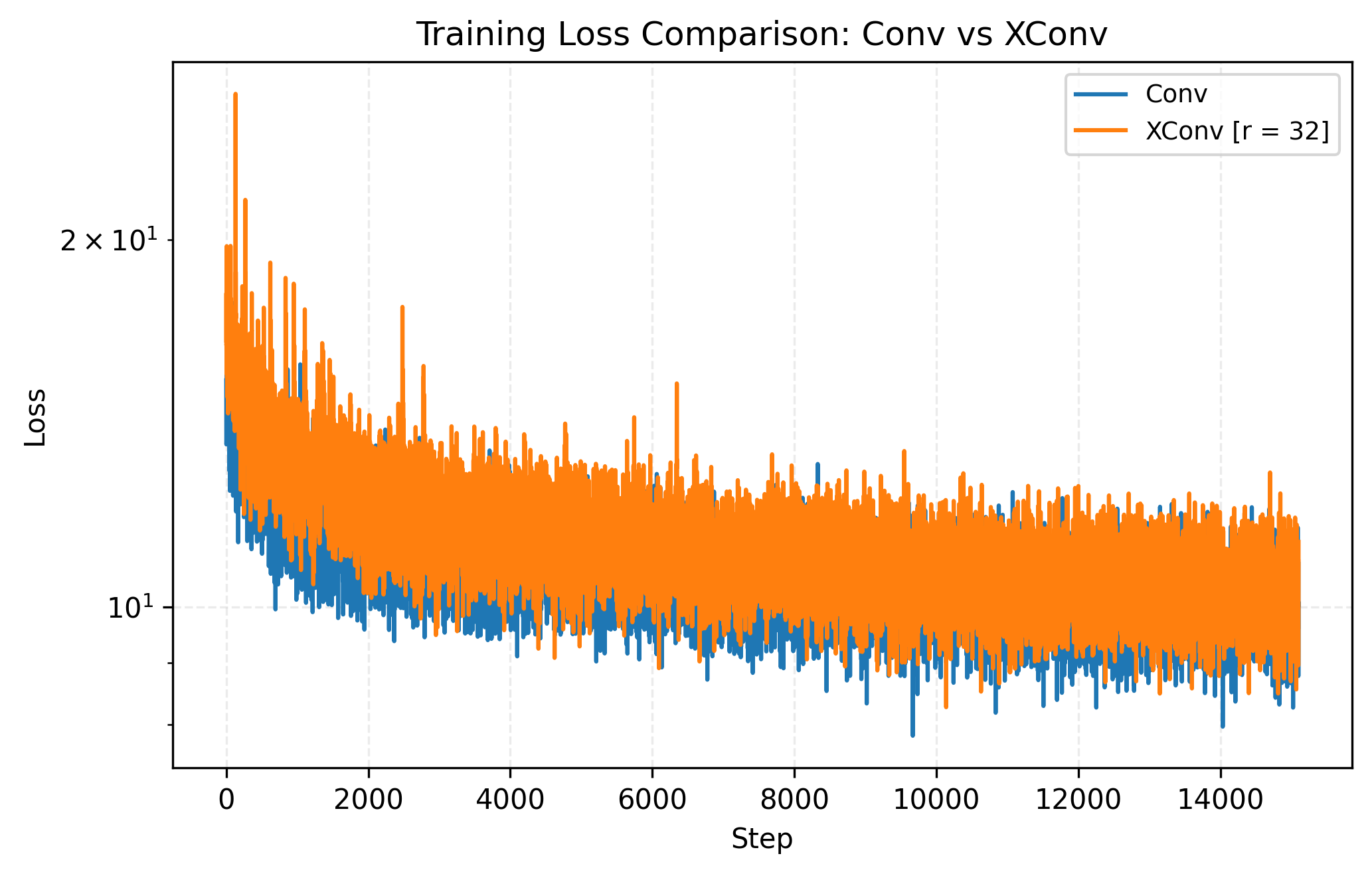}}
\subfloat[$r = 32$]{\includegraphics[width=0.42\linewidth]{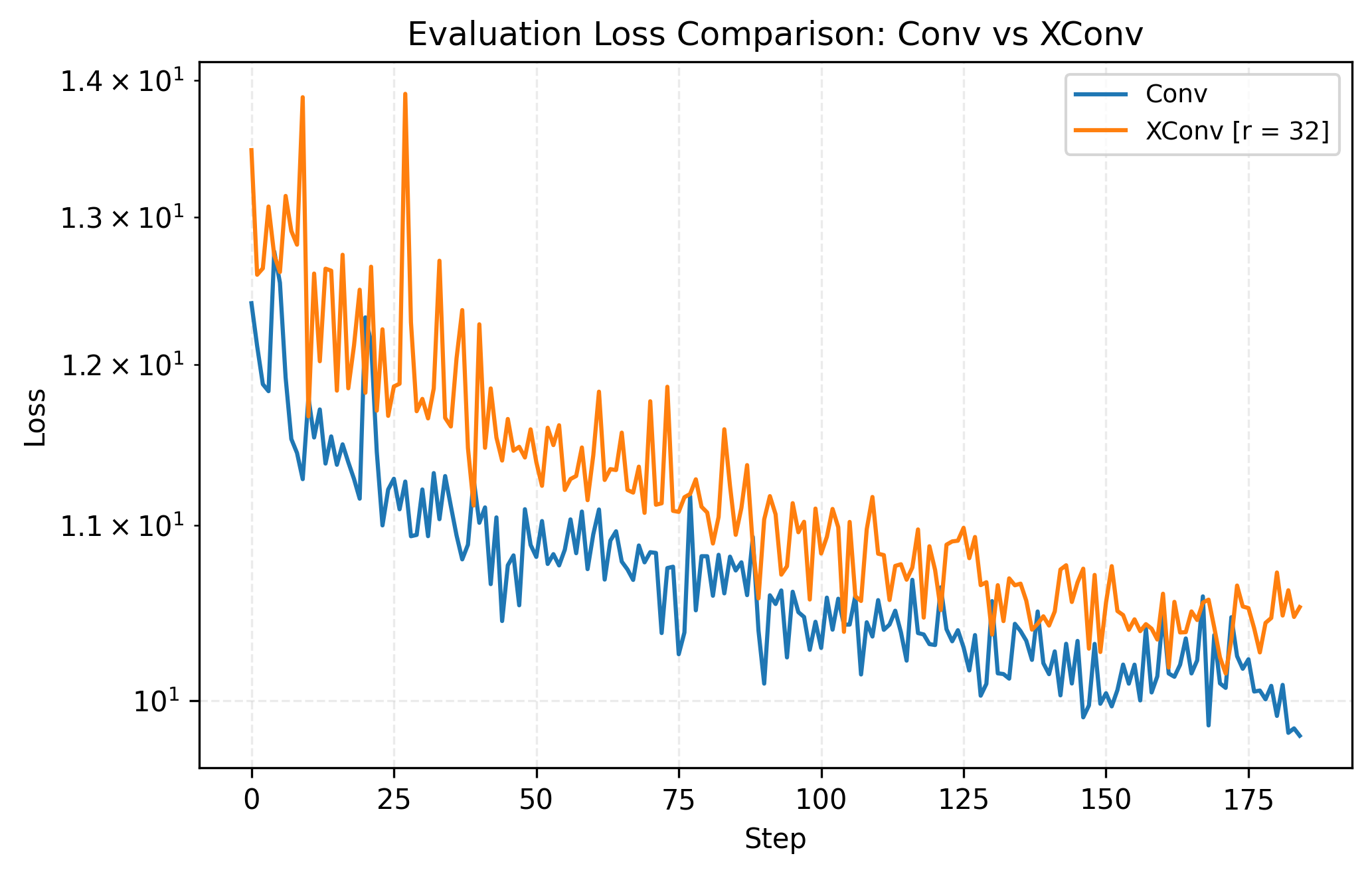}}
\\[-0.5ex]
\subfloat[$r = 64$]{\includegraphics[width=0.42\linewidth]{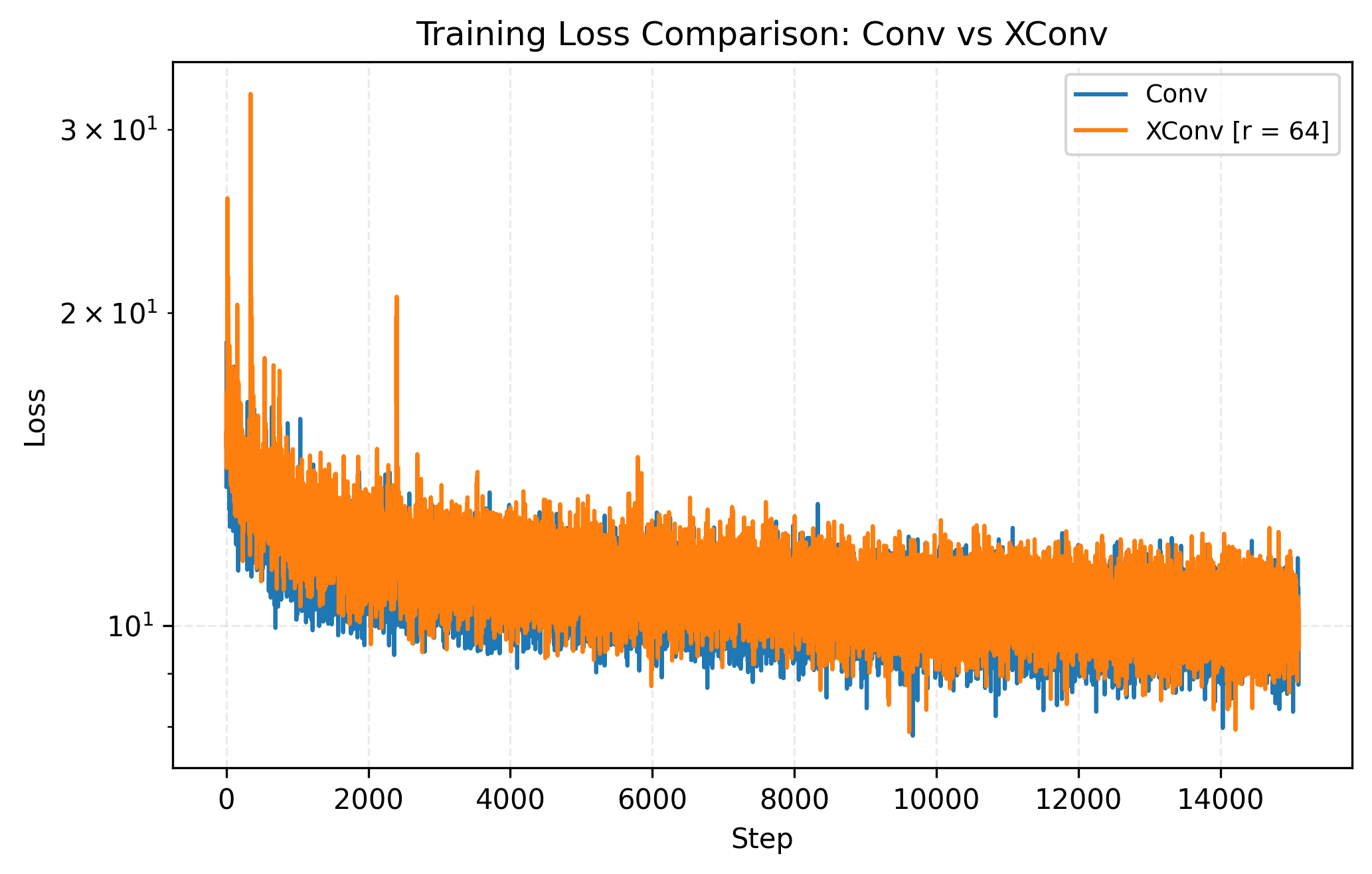}}
\subfloat[$r = 64$]{\includegraphics[width=0.42\linewidth]{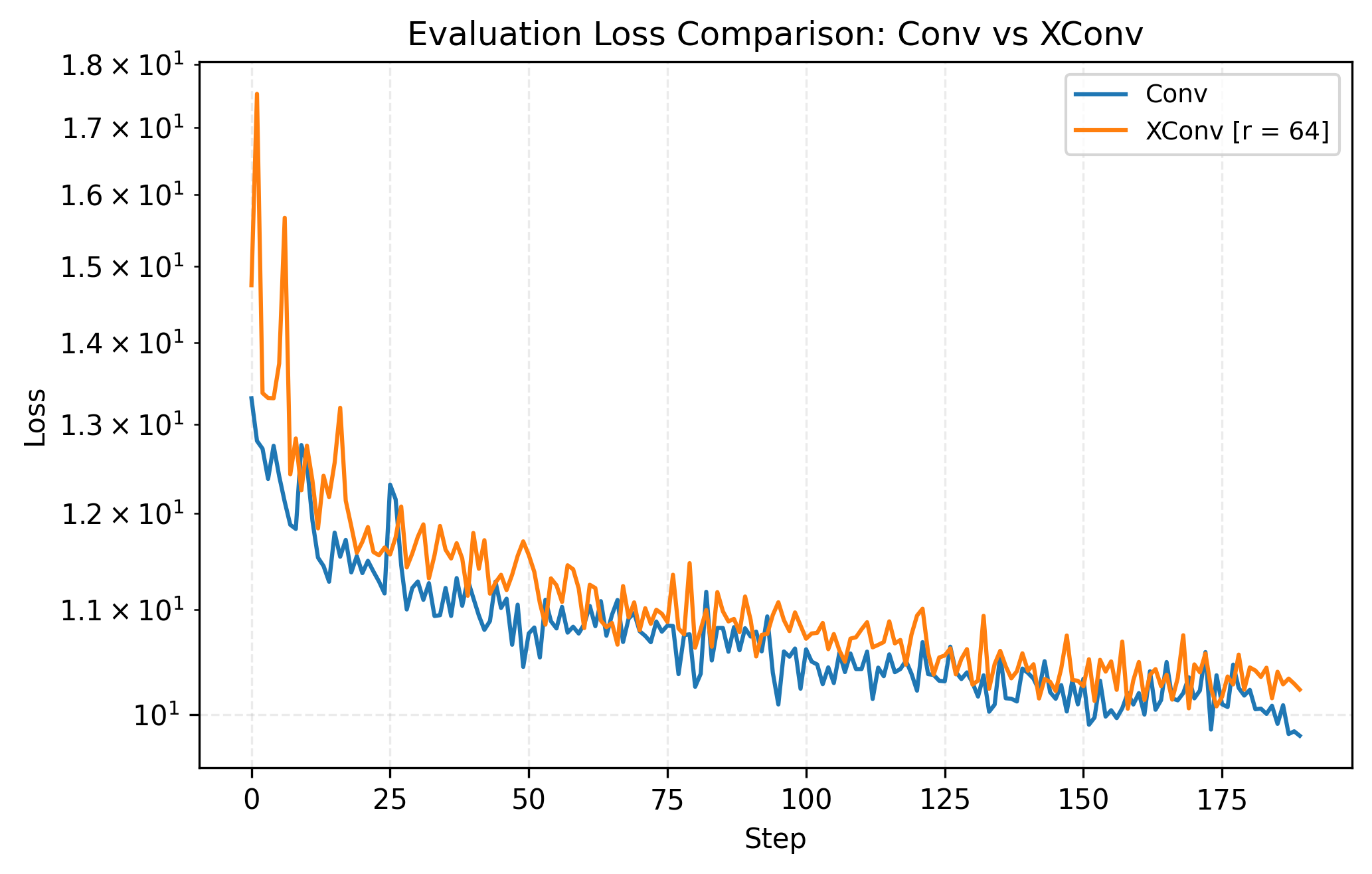}}
\\[-0.5ex]
\subfloat[$r = 128$]{\includegraphics[width=0.42\linewidth]{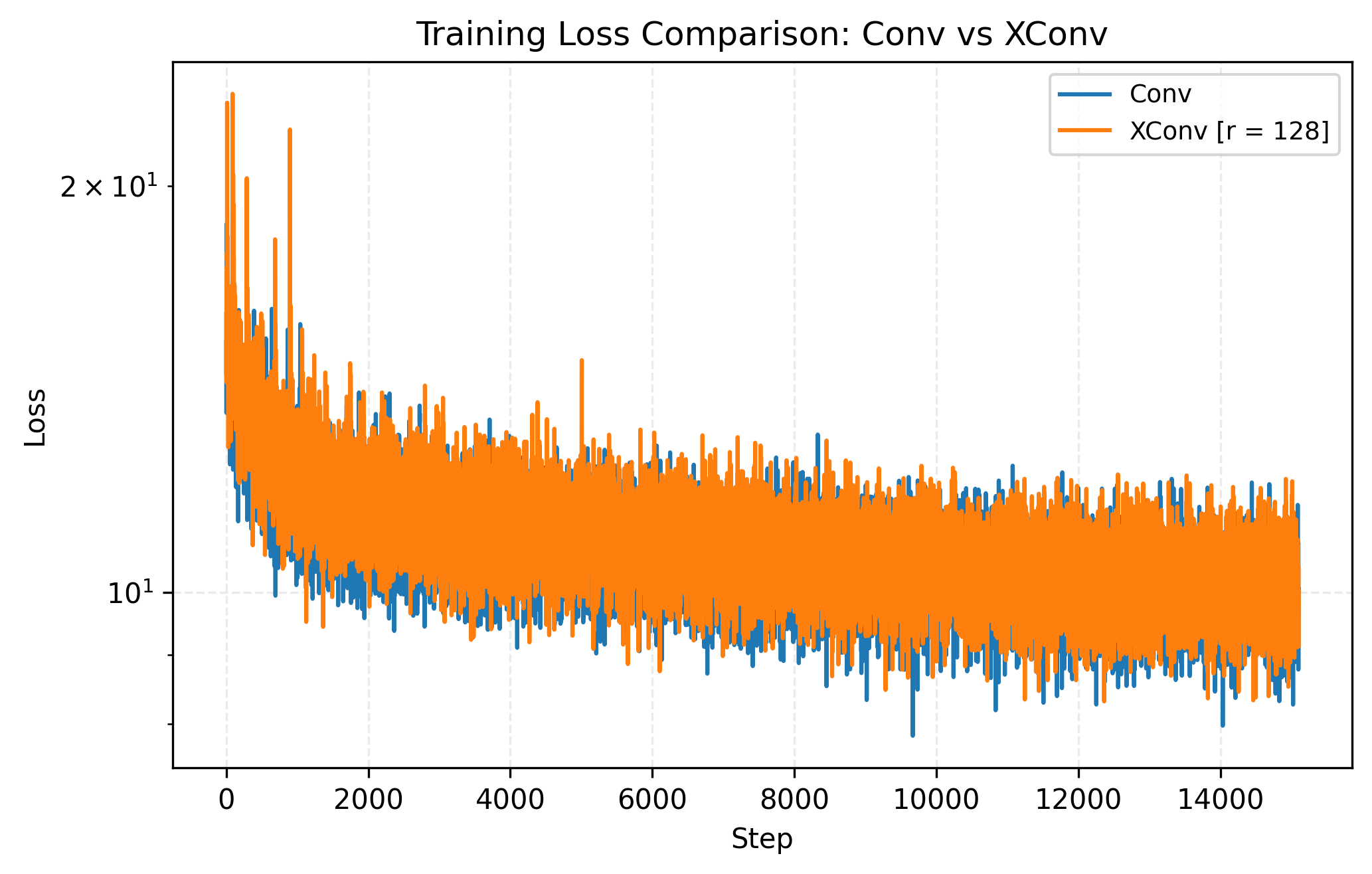}}
\subfloat[$r = 128$]{\includegraphics[width=0.42\linewidth]{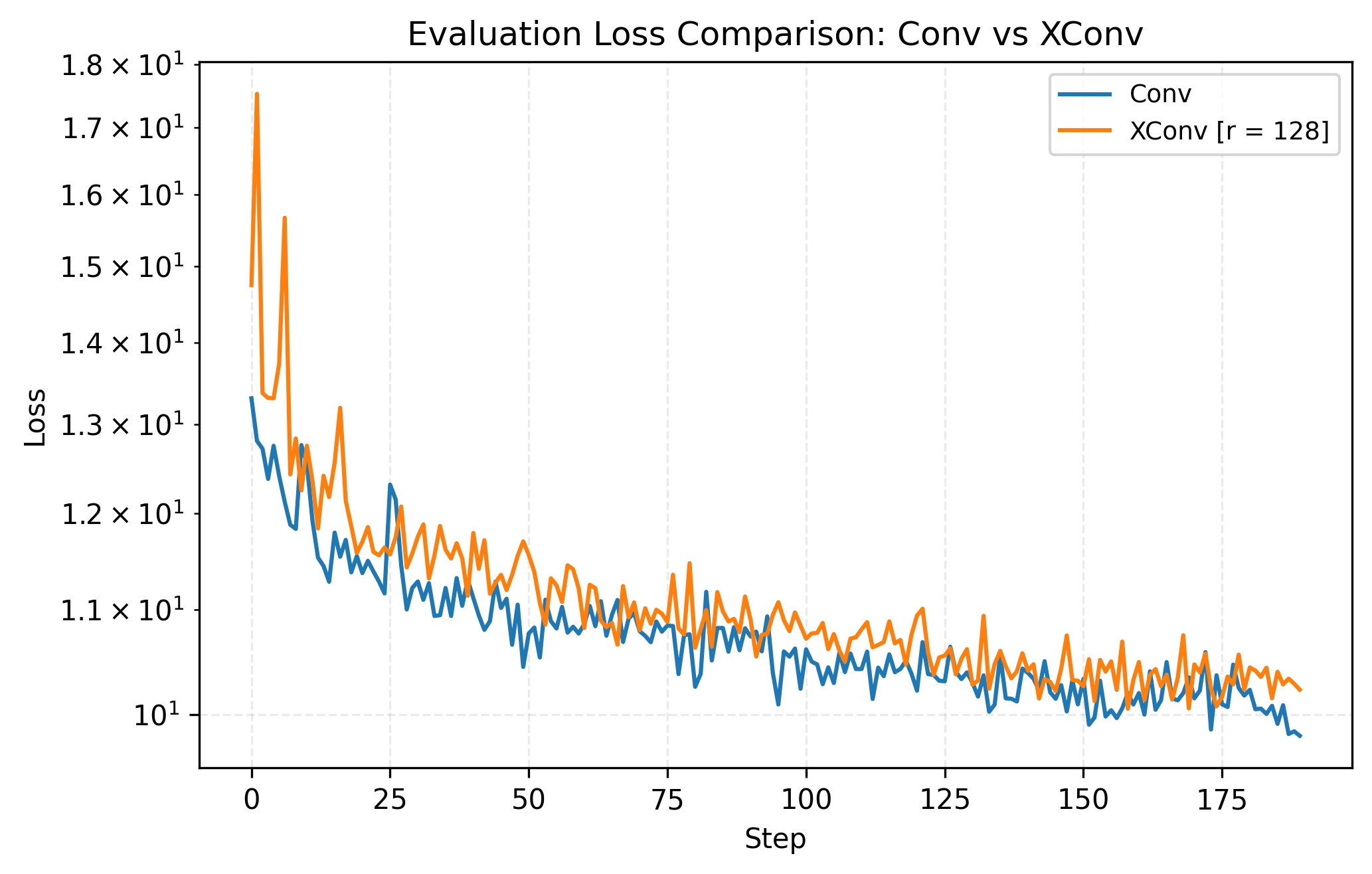}}
\\[-0.5ex]
\subfloat[$r = 256$]{\includegraphics[width=0.42\linewidth]{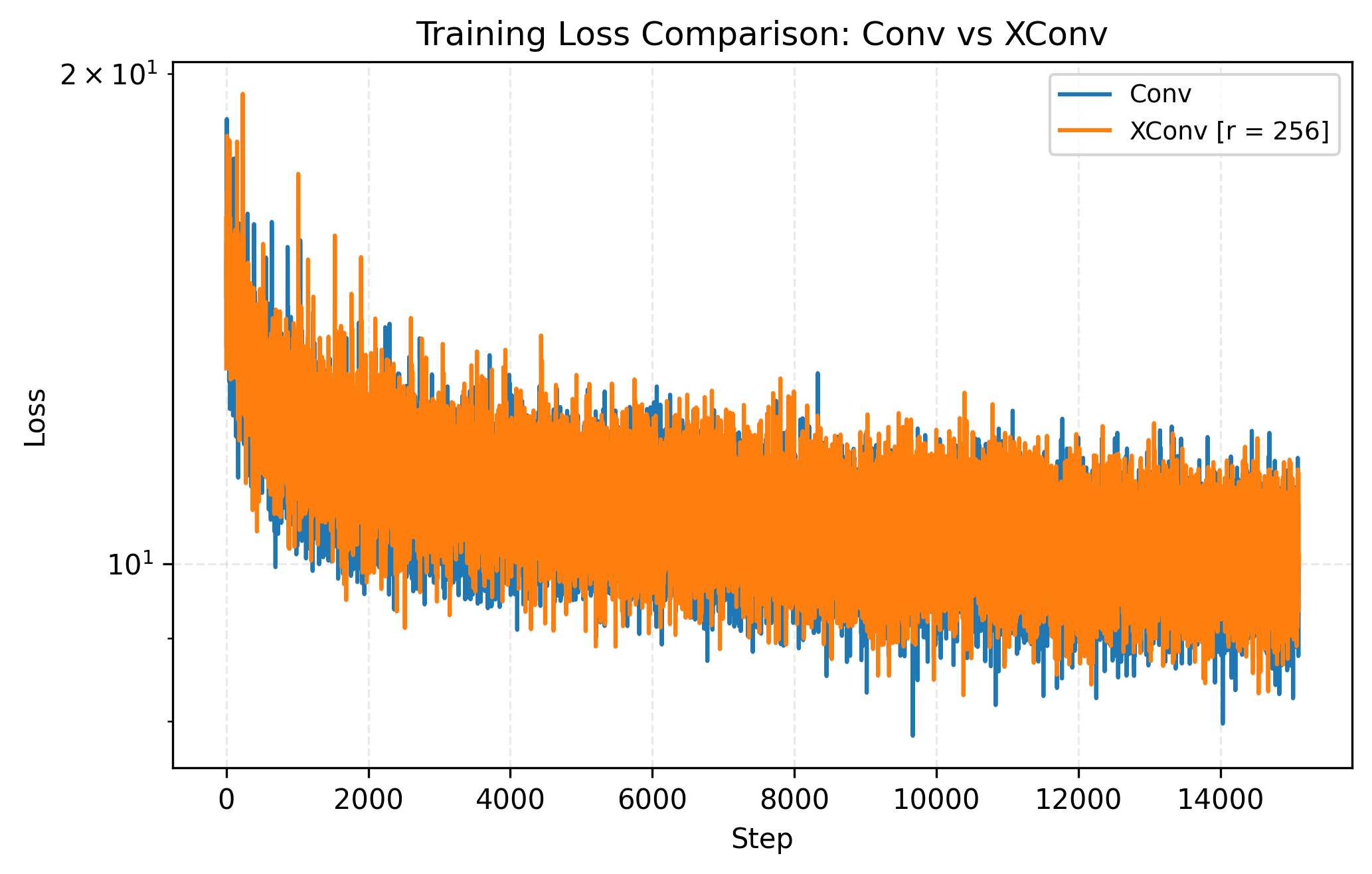}}
\subfloat[$r = 256$]{\includegraphics[width=0.42\linewidth]{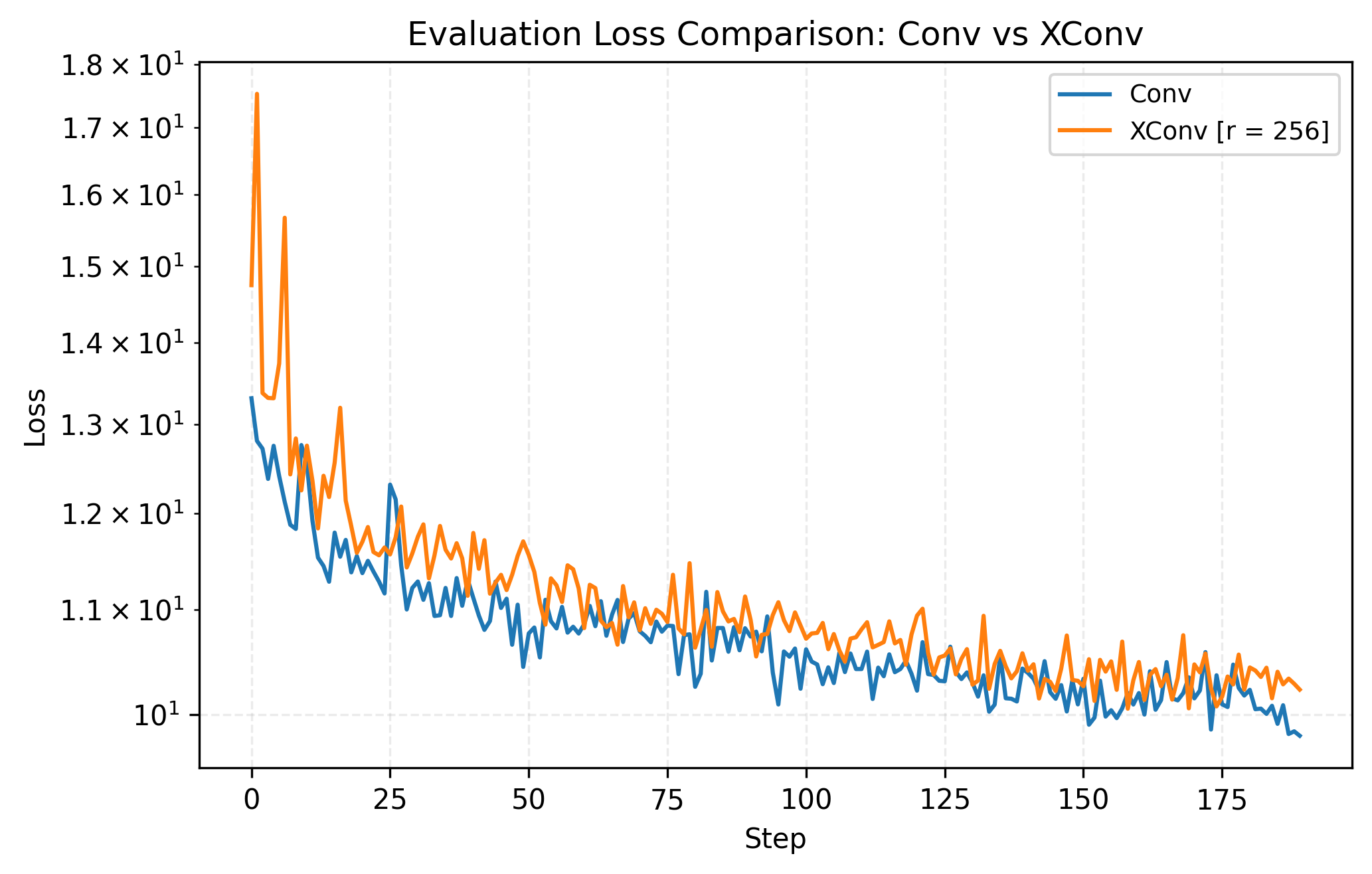}}
\\
\caption{U-Net training (left) and validation (right) loss curves comparing standard convolution and XConv across probing vectors $r \in \{32, 64, 128, 256\}$. While XConv exhibits noisier optimization dynamics during training, its validation loss closely matches standard convolution and converges to a comparable scale as $r$ increases.}\label{fig:sips_train_val_loss_plots}
\end{figure}

We further report the average gradient error of the U-Net diffusion model as a function of image dimension in Figure~\ref{fig:ddpm_unet_age}. As with the other architectures, the approximation error decreases with the number of probing vectors and remains within one order of magnitude of the exact gradient across resolutions.

\begin{figure}[t]
\centering
\captionsetup[subfigure]{justification=centering,labelformat=parens}
\subfloat[Probing vector $r{=}16$]{%
  \includegraphics[width=0.49\linewidth]{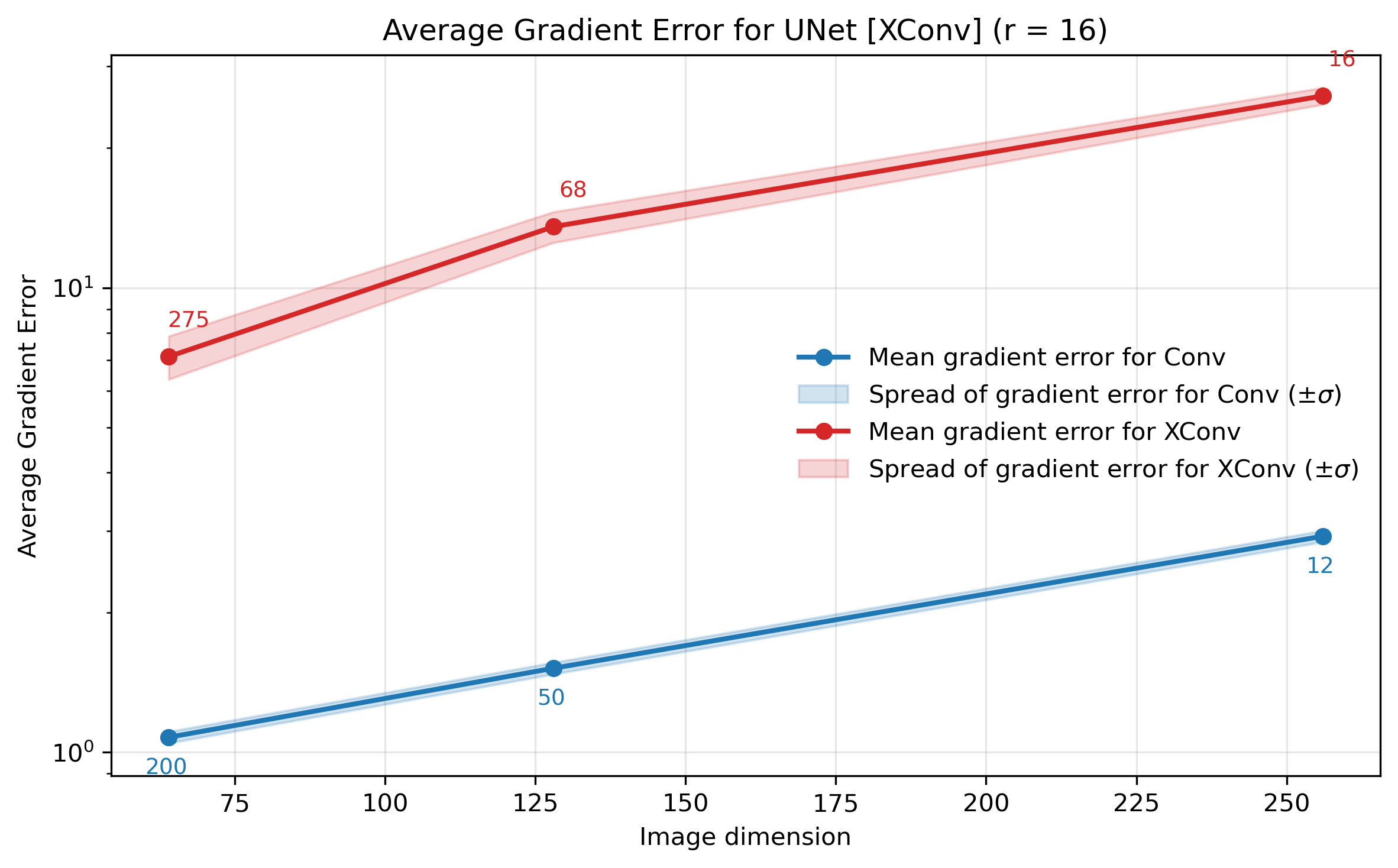}
}
\subfloat[Probing vector $r{=}32$]{%
  \includegraphics[width=0.49\linewidth]{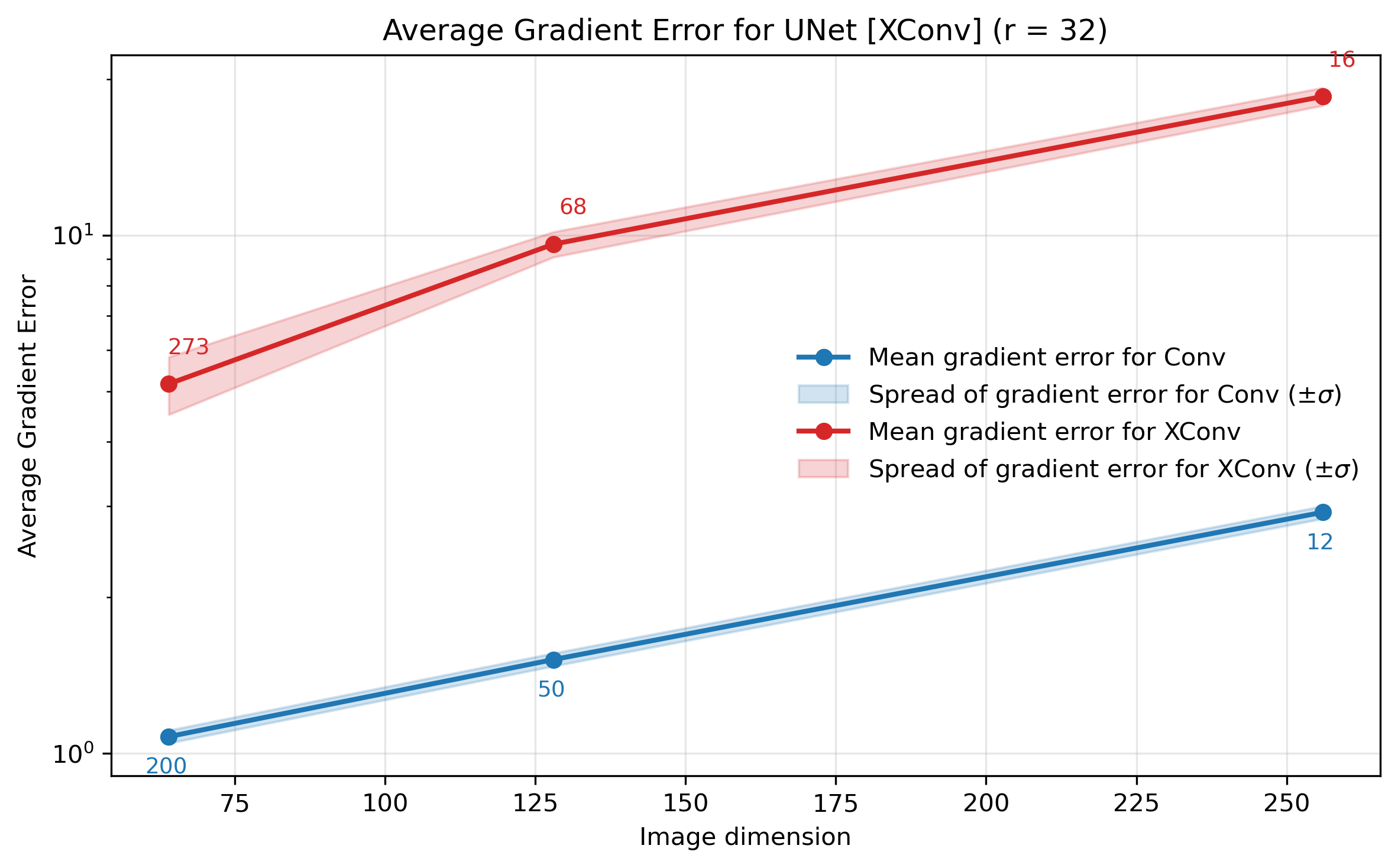}
}\\
\subfloat[Probing vector $r{=}64$]{%
  \includegraphics[width=0.49\linewidth]{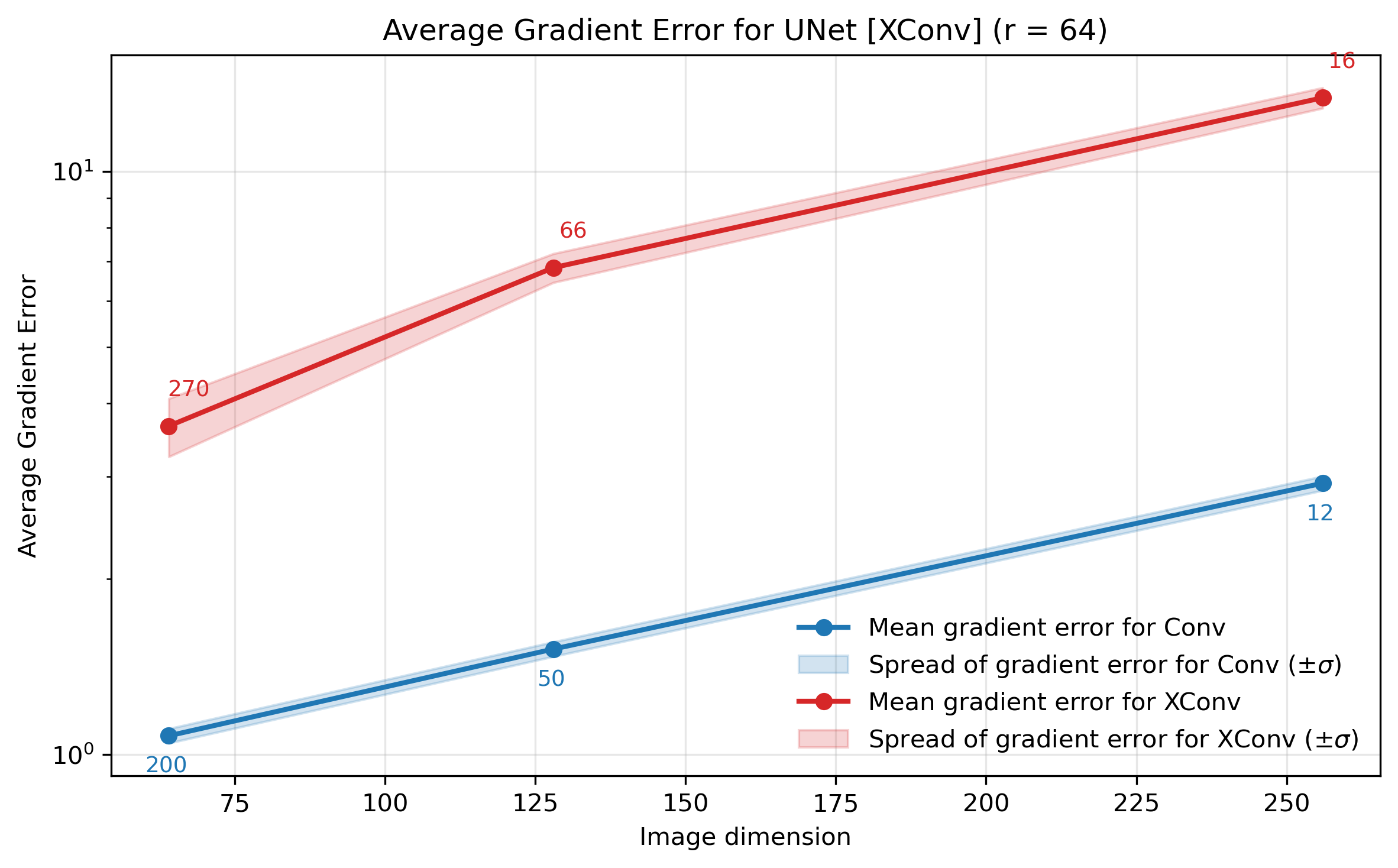}
}
\subfloat[Probing vector $r{=}128$]{%
  \includegraphics[width=0.49\linewidth]{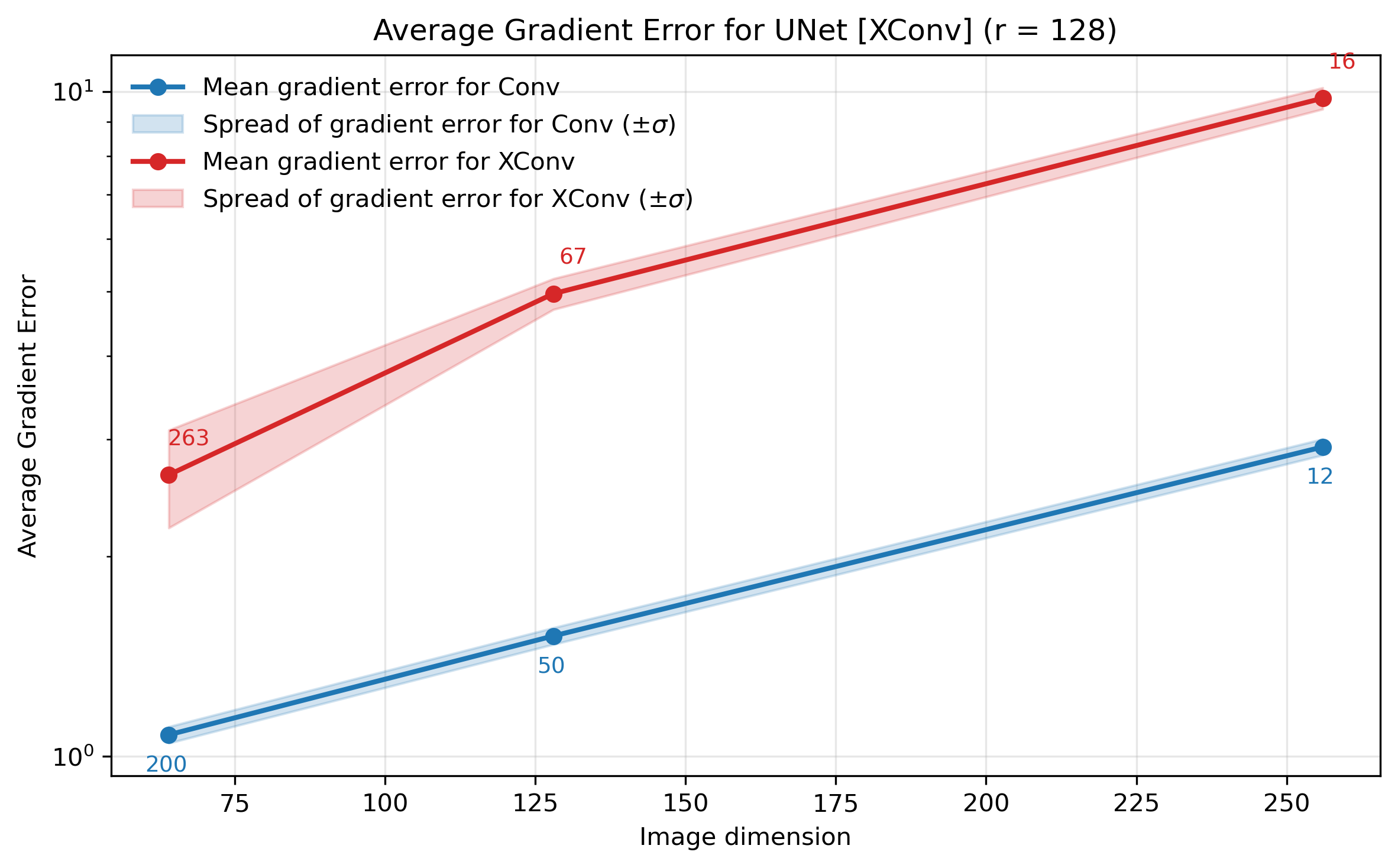}
}
\caption{Average gradient error versus image dimension for the U-Net diffusion model used in generative modeling, for probing vectors $r \in \{16, 32, 64, 128\}$. The annotated values indicate the maximum batch size that fits in memory for each method (blue: standard convolution, red: XConv). The approximation error decreases with the number of probing vectors and remains within one order of magnitude of the exact gradient.}
\label{fig:ddpm_unet_age}
\end{figure}

\end{document}